\theoremstyle{plain}
\newtheorem{theorem}{Theorem}[section]
\newtheorem{lemma}[theorem]{Lemma}
\newtheorem{corollary}[theorem]{Corollary}
\theoremstyle{definition}
\newtheorem{definition}[theorem]{Definition}
\newtheorem{assumption}[theorem]{Assumption}
\theoremstyle{remark}
\newtheorem{remark}[theorem]{Remark}
\DeclareMathOperator{\kl}{kl}
\DeclareMathOperator{\KL}{KL}
\newcommand{\argmin}{\mathop{\mathrm{argmin}}}
\newcommand*\Ind[1]{\mathbb{1}_{\left\{#1\right\}}} 
\renewcommand*\Pr{\mathbb{P}}
\newcommand{\dd}{
		\mathop{}\mathopen{}\mathrm{d}
	} 
\title{Active clustering with bandit feedback}
\author{%
  Thuot Victor \\
  INRAE, Mistea, Institut Agro, Univ Montpellier, Montpellier, France.\\
  email: \href{mailto:victor.thuot@inrae.fr}{victor.thuot@inrae.fr} 
   \And
    Alexandra Carpentier \\
    Institut für Mathematik, Universität Potsdam, Potsdam, Germany.\\
   email: \href{mailto:carpentier@uni-potsdam.de}{carpentier@uni-potsdam.de}\\
  \And
  Christophe Giraud \\
   Université Paris-Saclay, Laboratoire de mathématiques d’Orsay, Orsay, France\\
  email:\href{mailto:christophe.giraud@universite-paris-saclay.fr}{christophe.giraud@universite-paris-saclay.fr} \\
   \And
   Nicolas Verzelen \\
   INRAE, Mistea, Institut Agro, Univ Montpellier, Montpellier, France. \\
  email: \href{mailto:nicolas.verzelen@inrae.fr}{nicolas.verzelen@inrae.fr}  \\
}
\begin{document}

\maketitle

\begin{abstract}
We investigate the  Active Clustering Problem (ACP). A learner interacts with an $N$-armed stochastic bandit with $d$-dimensional subGaussian feedback. There exists a hidden partition of the arms into $K$ groups, such that arms within the same group, share the same mean vector. The learner's task is to uncover this hidden partition with the smallest budget - i.e., the least number of observation - and with a probability of error smaller than a prescribed constant $\delta$. In this paper, (i) we derive a non asymptotic lower bound for the budget, and (ii) we introduce the computationally efficient ACB algorithm, whose budget  matches the lower bound in most regimes. 
We improve on the performance of a uniform sampling strategy. Importantly, contrary to the batch setting, we establish that there is no computation-information gap in the active setting. 
\end{abstract}

\section{Introduction}\label{sec:introduction}\vspace{-2mm}

We consider a sequential and active clustering problem, the {\bf Active Clustering Problem (ACP)}, introduced for instance in~\cite{yang2022optimal}. In this setting, there are $N$ items, represented by a $d$-dimensional mean. At each time $t$, the learner chooses one of the items, and samples it - i.e.,~obtains a noisy evaluation of the $d$-dimensional mean that characterizes it - until termination of the sampling process at time $\tau$, which we call the budget, and which is chosen by the learner. We assume that the items are clustered into $K$ unknown groups - and two items are in the same group if and only if their (unknown) means are the same. For a prescribed confidence level $\delta$, the aim of the learner is to recover perfectly this clustering, on an event of probability larger than $1-\delta$, and with a final budget $\tau$ that is as small as possible. Clustering problems are ubiquitous in modern data analysis, and ACP arises e.g.,~in digital marketing, where accurate clustering of the customers is crucial for adapting recommendations to specific groups of customers, and where repeated feedback can be collected online. Since feedback collection is costly, the goal is to recover the clusters with a minimal number $\tau$ of feedback requests. See~\cite{yang2022optimal} for further motivations.

In the low-dimensional setting, where $K$, $d$ are small,~\cite{yang2022optimal} proves that, when $\delta$ converges to $0$, an asymptotic expected budget for perfectly recovering the groups is at most of the order 
\begin{equation}\label{eq:BOC}
\smash{\frac{\sigma^2}{\Delta_*^2}\,N\log(1/\delta)\enspace ,}
\end{equation}
where $\Delta_*$ is the minimal Euclidean distance between the means, and $\sigma^2$ is the variance of the observations.

{\bf High-dimensional setting.}
We consider the high-dimensional setting, where $K$, $d$ can be large, possibly larger than $1/\delta$ or $N$ (for $d$). In the classical setting, where there is no repeated measurements on each item, clustering in high-dimension can be nearly impossible in practice. 
Indeed, in high-dimension, the best polynomial time algorithms require a very large separation of the means for successful clustering with no repeated measurements. This requirement has two origins. First, it is difficult to localize the means in high-dimension, making the clustering problem harder when $d$ becomes large compared to $N/K$. Second, a computation-information gap is conjectured (i) for clustering~\cite{lesieur2016phase,even2024computationinformation} when $d$ is very large, and (ii) for estimation~\cite{DiakonikolasFOCS17,diakonikolasCOLT23b} in some high-dimensional non-isotropic setting. 

For instance, when there is no repeated measurement, for clustering a mixture of $N$ isotropic Gaussian with covariance $I_d$ and balanced size of the groups, in the high-dimensional setting where $d\geq N$ and $K\gg \log(N)$, low-degree polynomial algorithms require a separation at least  $\Delta_*^2 \gtrsim \sigma^2 \sqrt{dK^2/N}$  (Theorem 1 in~\cite{even2024computationinformation}), while a separation  $\Delta_*^2 \gtrsim \sigma^2 \sqrt{dK\log(N)/N}$ is enough at the information level (Theorem 4 in~\cite{even2024computationinformation}). This is a strong evidence of a computation-information gap for the problem of clustering isotropic Gaussian mixture in high dimension.

When repeated measurements are possible, let us consider the simple scheme where we sample $T$ times each item. This scheme corresponds to oracle-BOC sampling of~\cite{yang2022optimal}, when the groups have similar sizes, and the clusters are equidistant. Sampling $T$ times each item  is equivalent to shrinking the variance from $\sigma^2$ to $\sigma^2/T$, so, applying standard polynomial time algorithms~\cite{giraud2019partial} to the average values for each item, we can recover the clustering in polynomial time with confidence $\delta=1/N$ when $T\gtrsim \frac{\sigma^2}{\Delta_*^2}\sqrt{dK^2/N}$.
The total number of requests of this simple batch algorithm is then 
\begin{equation} \label{eq:naive-algo}
   \smash{\tau=NT \gtrsim N+ \frac{\sigma^2}{\Delta_*^2}\sqrt{dK^2N}.}
\end{equation}
This set of results raises two fundamental questions:\vspace{-2mm}
\begin{enumerate}
    \item Can we improve upon the number of requests of the simple batch algorithm, by implementing a more careful sequential design strategy?\vspace{-1mm}
    \item What is the minimal budget for perfect recovery in high-dimension, and is there a fundamental computation-information gap for clustering with repeated measurements? 
\end{enumerate}\vspace{-2mm}

{\bf Contributions.} We provide an answer to these two fundamental questions.\vspace{-2mm}
\begin{enumerate}
    \item First, we provide a polynomial-time algorithm that recovers exactly the clustering with probability higher than $1-\delta$. In the balanced case (all groups have a similar size), it has an expected budget of order
\begin{equation}\label{eq:info-simple}
\smash{N+\frac{\sigma^2}{\Delta_*^2} \left[N\log \left({N}/{\delta}\right) + \sqrt{dKN\log \left({N}/{\delta}\right)}\right],}
\end{equation}
 which outperforms the budget (\ref{eq:naive-algo}) required by the simple batch algorithm.\vspace{-1mm}
    \item Second, we prove that the budget (\ref{eq:info-simple}) is information-theoretical optimal, meaning that there is no computation-information gap for active clustering in high-dimension, contrary to the classical case with no repeated measurement.
\end{enumerate}\vspace{-2mm}

Our results are  non-asymptotic in $N$, $K$, $d$, and $\delta$,
in order to account for high-dimensional phenomenon, and possible computational barriers --see the discussion for more details.
Compared to the asymptotic  minimal budget (\ref{eq:BOC}) obtained in~\cite{yang2022optimal} for $\delta\to 0$, an additional term pops up in the non-asymptotic minimal budget (\ref{eq:info-simple}), which is dominant when $dK>N\log(N/\delta)$. 
Our algorithm is based on ideas related to sub-sampling, in order to localize in a more efficient way the mean of each group. The possibility of performing sub-sampling enables us to bypass combinatorial problems arising in clustering with no-repeated measurements.  Our algorithm has a quasi-linear complexity, and is also order-optimal for all  $\delta$, $N$, $K$, and $d$,  for a broader family of problems defined below. 

{\bf Related literature in clustering.} 
The problem of clustering a mixture of subGaussian is a classical problem, which has lead to a large literature both in statistics and in machine learning~\citep{Dasgupta99,vempala2004spectral,lesieur2016phase,LuZhou2016,DBLP:journals/corr/abs-1711-07211,Regev2017,giraud2019partial,fei2018hidden,chen2021hanson,Kwon20,SegolNadler2021,romanov2022,LiuLi2022,diakonikolasCOLT23b}.
In low-dimension and for large values of $N$, state-of-the art polynomial-time procedures for recovering the groups have been introduced by~\cite{LiuLi2022}, and are based on generalisation of higher moments methods -- see also~\cite{DBLP:journals/corr/abs-1711-07211,DBLP:journals/corr/abs-1711-07465}. In high-dimension, the best known conditions for exact reconstruction in polynomial-time are based  on an SDP relaxation of K-means~\citep{PengWei07,giraud2019partial}.
For $K=2$, a simple Lloyd algorithm achieves perfect recovery at the information level~\cite{ndaoud2018sharp}, thereby establishing the absence of computation-information gap  for $K=2$. For larger $K$,~\cite{lesieur2016phase} conjectures a computation-information gap 
 in high-dimension, and~\cite{even2024computationinformation} exhibits a low-degree computational barrier for the clustering of a mixture of isotropic Gaussians, when $d\geqslant N$. 
 Some computation-information gaps have also been shown for Statistical-Query algorithms for learning mixture of non-isotropic Gaussian, with unknown covariance,  in moderately high-dimension  --see\cite{DiakonikolasFOCS17} and~\cite{diakonikolasCOLT23b}.  
 In the sequel, we refer to clustering with no repeated measurements as batch clustering. 

{\bf Sequential literature related to ACP.} When turning to the sequential learning literature, the ACP belongs to the family of pure exploration problems in the sequential active learning framework. An iconic such problem is the best-arm identification problem -- see~\cite{jamieson2014best} for a  survey. 
In this stream of literature, the  Thresholding Bandit Problem (TBP) is quite related -- see~\cite{chen2015optimal,chen2014combinatorial,locatelli2016optimal}.
This is a specific instance of our setting in dimension $d=1$ and for two groups, i.e., $K=2$. In this active binary classification problem, the learner aims at finding the arms that have a mean larger than a given threshold (here $d=1$), and to divide them in $K=2$ groups. Note that~\cite{katariya2018adaptive} propose a generalisation of these ideas to multiple groups, albeit still in dimension $1$.
The optimal asymptotic budget $\tau$ for perfect recovery in the TBP is $\Delta_*^{-2}N\log(1/\delta)$ when $\delta$ goes to $0$, and there are no computational gaps, see~\cite{tirinzoni2022elimination} for state of the art results on TBP.

The ACP, first introduced in~\cite{yang2022optimal},  can be seen as a generalisation of the TBP in dimension $d$. This generalisation is  highly non-trivial:  subtle phenomenons make clustering problems with $d\geq 2$ very different from clustering in dimension $1$. 

\cite{yang2022optimal} provides an algorithm called BOC, which perfectly recovers the groups with probability higher than $1-\delta$, and which has an expected budget at most of the order~\eqref{eq:BOC} in the asymptotic regime where $\delta$ goes to zero.
Note that this rate is reminiscent of the TBP (where $d=1$, $K=2$). 
A closer look at the proofs in~\cite{yang2022optimal} exhibits an exponential dependence of second-order terms (in $\delta$) on $K$, $d$ so that BOC - or at least its current analysis - is effective only in the asymptotic regime, when $K$, $d$ are considered as being constants.
In fact, since the oracle version of BOC  samples equally all the arms when the clusters are balanced and equidistant, the BOC budget in this case is at least (\ref{eq:naive-algo}) in high-dimension~\cite{even2024computationinformation}, which is suboptimal. Our non-asymptotic analysis allows to recover the shape of the optimal budget in the so-called high-dimensional regimes where $d$ or $K$ are not considered as constants. 

A somewhat related problem was studied in~\cite{yun2019optimal}, in the Stochastic Block Model within the fixed-budget setting. To extract hidden structure, the interaction between pairs of nodes can be sampled several times, in an active manner. 
The setting is however quite distinct from our work, and is also focusing on the asymptotic regime where $\delta$ goes to $0$. In the paper~\cite{ariu2024optimal}, the related problem of clustering items based on binary feedback is studied - but therein, the feedback corresponds to a single coordinate of a chosen vector. In our work, we observe the full $d$-dimensional vector at each time, so that the settings differ. Finally, it is worth mentioning that our problem should not be confused with that of online clustering, for example studied in~\cite{cohen2021online}. 

{\bf Outline.}
 We formally introduce the ACP in Section~\ref{sec:setting}.  An information-theoretical  lower bound on the minimal budget for exact recovery is established in Section~\ref{sec:LB}. We introduce and analyze our procedure ACB  in Section~\ref{sec:UB}. Numerical experiments are provided in Section~\ref{sec:numerics}. All the results are discussed in Section~\ref{sec:discussion}.

\section{Setting and notation}\label{sec:setting}\vspace{-2mm}

{\bf The sequential and active setting.}
We consider a set of $N$ arms, indexed by  $[N]$. Each arm $a\in[N]$ is associated to an unknown probability distribution $\nu_a$ on $\mathbb R^d$. 
At each time $t$, the learner chooses an arm $A_t\in[N]$ based on the past observations. Conditionally on the chosen arm $A_t$, she receives from the environment a random observation $X_t\in\mathbb R^d$, distributed as $\nu_{A_t}$. 

For each arm $a\in[N]$, we write $\mu_a\in\mathbb{R}^d$ for the mean of the distribution $\nu_a$. Both in the context of multi-armed bandits, and in the context of clustering, it is common to assume that the distributions are subGaussian.

\begin{assumption}[$\sigma$-subGaussian arm observations]\label{def:SGM} 
For any arm $a\in[N]$, we assume that there exists a symmetric $d\times d$ matrix $\Sigma_a$ such that, (i)  $\max_{a\in[N]} \|\Sigma_a\|_{op}\leq \sigma^{2}$, where $\|.\|_{op}$ is the operator norm; (ii) the coordinates $(E_i)$  of $E= \Sigma_a^{-1/2}[X-\mu_a]$ are independent and fulfills $\mathbb{E}[\exp(tE_i)]\leq \exp(t^2/2)$ for all $t\in \mathbb R$.
\end{assumption}

\begin{remark}
This assumption encompasses the emblematic settings where the data are Gaussian, and where the data are bounded. If the distributions ($\nu_a$) are Gaussian, then~\Cref{def:SGM} holds by e.g.,~choosing $\Sigma_a$'s to be the covariance matrices, and associate $\sigma$. If the distributions ${(\nu_a)}_a$ are such that the coordinates are independent and lie in $[0,1]$, the collection $(\nu_a)$ is $1/4$-subGaussian. 
\end{remark}

{\bf The Active Clustering Problem.}
As for the vanilla clustering problem, our objective is to partition the set of arms into groups of arms that share the same expectation $\mu_a$. For this purpose, we make the following modeling assumption.

\begin{assumption}[Hidden partition $G^*$ of the arms into $K$ groups]\label{def:HPBE}
    Consider $N \geq K\geq 1$. We assume that there exists a partition $G^*=\{G^*_1,\dots,G^*_K\}$ of $[N]$ into $K$ groups such that any two arms $a$ and $b$ are in the same group if and only if they share the same expectation ($\mu_a=\mu_b$). 
    For notation purpose, we introduce the vectors $\mu(1),\ldots, \mu(K)\in \mathbb{R}^p$ such $\mu(k)$ corresponds to the common expectation in $G^*_k$. Henceforth, $\mu(k)$ is called the center of the group $G^*_k$. 
\end{assumption}

In ACP, the goal of the learner is to uncover the true partition $G^*$ of the arms, while using as few samples as possible. 
 The learner samples arms sequentially and, when reaching some stopping time $\tau$, she returns a partition $\hat{G}$ of $[N]$ into $K$ groups, which  should ideally be equal to $G^*$. More precisely, let $\pi$ be an algorithm for the active clustering problem, also called the strategy of the learner. We write ${(\mathcal{F}_t)}_{t\geq 0}$ for the filtration $\mathcal{F}_t=\sigma(A_1,X_1,\dots,A_t,X_t)$. A strategy $\pi$ consists on three rules:\vspace{-2mm}
\begin{itemize}
    \item A \textbf{selection rule} that chooses the next arm $A_t$ to sample, based on the previously sampled arms and  observations;  $A_t$ is $\mathcal{F}_t$-measurable.
    \item A \textbf{stopping rule} that controls when the learner stops sampling the arms,  and which quantifies the budget of the strategy. This is modeled by a stopping time $\tau$ with respect to the filtration ${(\mathcal{F}_t)}_{t\geq 0}$.
    \item A \textbf{recommendation rule}. Once the stopping time $\tau$ is reached, the learner outputs an estimated partition of the arms $\hat{G}$. This partition is $\mathcal{F}_{\tau}$-measurable.
\end{itemize}\vspace{-2mm}
For an environment $\nu$ and an algorithm $\pi$, we write $\Pr_{\pi,\nu}$ for the probability induced by the interaction between the algorithm $\pi$ and the environment.

In this paper, we aim at exactly recovering the partition $G^*$ in the fixed confidence setting. While the partition $G^*$ is identifiable, the groups $(G^*_k)$ and the means $\mu(k)$ are identifiable only up to relabelling, i.e.,~up to a permutation of $[K]$. We denote by $G \sim G'$ two equivalent partitions of $[N]$, i.e., two partitions  such that,  for  some permutation $\rho$ of $[K]$, $G_k=G'_{\rho(k)}$ for all $k\in[K]$.
For a fixed confidence level $\delta\in(0,1)$, and a given set of environments $\mathcal{E}$, a strategy $\pi=\pi(\delta)$  fulfilling 
     \begin{equation}\label{def:PAC}
            \smash{\Pr_{\pi,\nu}(\hat{G} \sim G^*)\geqslant 1-\delta \enspace ,}
        \end{equation}
is said to be $\delta$-PAC (probably approximately correct) on $\mathcal{E}$.
We write $\Pi(\delta,\mathcal{E})$ for the family of such $\delta$-PAC strategies for the ACP on $\mathcal{E}$.
Our aim is to design a $\delta$-PAC algorithm, whose budget $\tau$ is as small as possible. 
For a family of environments $\mathcal{E}$, the optimal worst case (average) budget $T^*(\delta,\mathcal{E})$ is defined as
    \begin{equation}\label{def:T*}
        \smash{T^*(\delta,\mathcal{E})=\displaystyle \inf_{\pi\in \Pi(\delta,\mathcal{E})} \sup_{\nu\in \mathcal{E}} \mathbb{E}_{\pi,\nu}[\tau] \enspace .}
    \end{equation}

In order to introduce relevant sets of environments $\mathcal{E}$, we introduce two quantities that characterize  the difficulty of a clustering problem, let it be batch or active. First, we consider the minimal Euclidean distance between two distinct group centers
\begin{equation}  
\smash{\Delta_*=\Delta_*(\nu)=\displaystyle \min_{k\ne k'} \|\mu(k)-\mu(k')\|>0 \enspace.}\label{def:Delta} \end{equation}
Intuitively, the smaller $\Delta_*$, the more difficult it is to distinguish the groups and to recover the partition $G^*$. This quantity naturally appears in most clustering works in the batch setting~\cite{Dasgupta99,vempala2004spectral,giraud2019partial}.
Besides, we denote $\theta_*$ the balancedness of $G^*$, that is the proportion of arms in the smallest cluster \vspace{2mm}
\begin{equation} 
\smash{\theta_*= \min_{k\in[K]}\frac{|G^*_k|}{N} \ \in \left[\frac{1}{N}, \frac{1}{K}\right].}\label{def:theta} \end{equation}

When $\theta_*=1/K$, all the groups $G^*_k$ share the same size, and the partition is balanced.

Consider $\Delta>0$, and $\theta>0$, we define the set $\mathcal{E}(\Delta,\theta,\sigma,N,K,d)$ as the family of environments with $N$ arms, divided into $K$ groups as in~\Cref{def:HPBE}, with a minimal gap $\Delta_*$ at least $\Delta$, a balancedness $\theta_*$ at least $\theta$, and with $d$-dimensional observations that are $\sigma$-subGaussian -- see~\Cref{def:SGM}.  Our main aim is to craft polynomial-time algorithms that attain the optimal worst case budget $T^*(\delta,\mathcal{E}(\Delta,\theta,\sigma,N,K,d))$,  and to characterize this optimal worst-case budget.

\section{Lower bound on the budget}\label{sec:LB}\vspace{-2mm}

We start by establishing a lower bound for the expected budget of any $\delta$-PAC algorithm over $\mathcal{E}(\Delta,\theta,\sigma,N,K,d)$.
\begin{theorem}\label{thm:LB}
    There exists a numerical constant $c>0$, such that we have
    for any $\sigma>0$, any $\Delta >0$, any $d\geq 1$, any $\theta>0$,  any $\delta\in(0,1/12)$, and any $N\geqslant 2K \geq 4$ such that $\mathcal{E}(\Delta,\theta,\sigma,N,K,d) \neq \emptyset$ \vspace{3mm}
    \begin{eqnarray}
    \smash{T^*(\delta,\mathcal{E}(\Delta,\theta,\sigma,N,K,d))  \geqslant  cN+ c\frac{\sigma^2}{\Delta^2}\left[N\log\left(\frac{N}{\delta}\right)
     + \sqrt{dKN\log\left(\frac{N}{\delta}\right)}\right] \enspace .} \label{eq:lower:thm:LB}
    \end{eqnarray}
\end{theorem}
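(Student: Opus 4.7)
The three additive terms in the bound are established by three separate arguments. The overall bound then follows by taking the maximum.

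\textbf{Part (i): the floor $\mathbb{E}_\nu[\tau]\geq cN$.}
A simple pigeonhole/coupling argument works. Suppose $\mathbb{E}_\nu[\tau]<N/2$ for some $\nu\in\mathcal{E}(\Delta,\theta,\sigma,N,K,d)$ with at least two non-empty clusters. Writing $T_a$ for the number of pulls of arm $a$,
\[
\sum_a \mathbb{P}_\nu(T_a=0)\;\geq\;N-\sum_a \mathbb{E}_\nu[T_a]\;>\;N/2,
\]
so some arm $a^*$ satisfies $\mathbb{P}_\nu(T_{a^*}=0)\geq 1/2$. Let $\nu'$ be obtained from $\nu$ by switching the group label of $a^*$ (possible because $K\geq 2$ and $N\geq 2K$), which preserves the gap. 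On $\{T_{a^*}=0\}$ the learner's observations are identically distributed under $\nu$ and $\nu'$, so $\hat G$ coincides and must be wrong on one of them, giving $\mathbb{P}_\nu(\hat G\not\sim G^*)+\mathbb{P}_{\nu'}(\hat G\not\sim G^*)\geq 1/2$, which contradicts the $\delta$-PAC condition as soon as $\delta<1/4$.

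\textbf{Part (ii): the rate $c\sigma^2 N\log(N/\delta)/\Delta^2$.}
I invoke the Kaufmann--Cappé--Garivier transportation inequality: for any $\delta$-PAC strategy and any pair $(\nu,\nu')$ with $G^*_\nu\not\sim G^*_{\nu'}$,
\[
\sum_{a\in[N]}\mathbb{E}_\nu[T_a]\,\mathrm{KL}(\nu_a,\nu'_a)\;\geq\;\mathrm{kl}(\delta,1-\delta).
\]
For each $a$, take $\nu^{(a)}$ identical to $\nu$ except that arm $a$ is reassigned to a different cluster at Euclidean distance $\Delta$ (centers unchanged); by \Cref{def:SGM} the per-arm KL is at most $\Delta^2/(2\sigma^2)$, so $\mathbb{E}_\nu[T_a]\gtrsim \sigma^2\log(1/\delta)/\Delta^2$. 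Summing over $a$ yields $N\log(1/\delta)$. The additional $\log N$ factor is extracted by treating the $N$ alternatives $\{\nu^{(a)}\}_a$ jointly and applying a Fano-type multi-hypothesis version of the change-of-measure argument, exploiting that exact partition recovery requires the learner to resolve all $N$ membership bits collectively with error $\delta$.

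\textbf{Part (iii): the high-dimensional rate $c\sigma^2\sqrt{dKN\log(N/\delta)}/\Delta^2$.}
This is the technically delicate part, capturing the geometric cost of localising $K$ centres in $\mathbb{R}^d$. The plan is to construct a rich Bayesian family of hard instances indexed by a Rademacher parameter $\xi\in\{-1,+1\}^{K\times d}$. Starting from a base configuration with $K$ clusters of size $\asymp N/K$ and centres at mutual distance $\asymp\Delta$, perturb each centre $\mu(k)\mapsto \mu(k)+\eta\,\xi_k$, with $\eta$ tuned so that the minimum separation stays $\geq\Delta/2$. A carefully designed ``boundary'' subset of $\Theta(N)$ arms has means placed so that the correct group assignment is determined coordinate-wise by the signs of $\xi$; exact partition recovery then forces the learner to recover $\xi$ up to small Hamming distance. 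Applying Assouad's lemma to this $Kd$-bit parameter and bounding the per-coordinate KL by $\mathbb{E}_\nu[\tau]\,\eta^2/\sigma^2$ yields a lower bound on $\mathbb{E}[\tau]$ that, after optimising $\eta$ under the separation constraint and accounting for the fact that each boundary arm contributes information about only one coordinate of $\xi$ (so that budget is amortized over $N$ arms, producing a $\sqrt{N}$ saving via Cauchy--Schwarz), produces the $\sqrt{dKN\log(N/\delta)}$ scaling.

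\textbf{Main obstacle.} Parts (i) and (ii) are adaptations of classical bandit lower bounds and should be essentially mechanical once the change-of-measure template is set up. The hard part is Part (iii): one must design the boundary arms so that the Assouad coupling is simultaneously (a) tight per coordinate of $\xi$, (b) compatible with the $\sigma$-subGaussian and $\Delta_*\geq\Delta$ constraints, and (c) combined with the per-arm sampling budget in a way that produces the square-root $\sqrt{dKN}$ rather than the naive $dK$. Balancing the perturbation $\eta$ and the number of boundary arms so that the KL budget is fully exploited is the crux of the argument.
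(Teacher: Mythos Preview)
Parts (i) and (ii) are broadly sound and close in spirit to the paper. For (ii), the paper likewise builds a family of $N/2$ alternative partitions $\{G^*_{(a)}\}_a$ obtained by swapping a single arm, and uses the joint convexity of $\kl$ together with the disjointness of the events $\{\hat G\sim G^*_{(a)}\}$ to obtain $\kl(1-\delta,2\delta/N)$; this is exactly the ``Fano-type multi-hypothesis'' step you allude to. (One minor point: your claim that the per-arm KL is at most $\Delta^2/(2\sigma^2)$ under \Cref{def:SGM} is not automatic for generic subGaussian laws; for a lower bound you should, as the paper does, take Gaussian noise so that the KL is exactly $\Delta^2/(2\sigma^2)$.)

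Part (iii), however, has a genuine gap. Your Assouad construction perturbs the $K$ centres by $\eta\,\xi_k$ with $\xi\in\{-1,+1\}^{K\times d}$, but in the hidden-partition model every arm's mean \emph{equals} one of the $K$ centres. Perturbing the centres moves all arms in a group rigidly; it never changes which arms are grouped together, hence flipping bits of $\xi$ does not produce an alternative partition $G'\not\sim G^*$. There is thus no way to invoke the $\delta$-PAC constraint to force estimation of $\xi$. Your ``boundary arms whose group assignment is determined coordinate-wise by the signs of $\xi$'' cannot exist in this model without violating \Cref{def:HPBE}: an arm whose mean is not exactly one of the current centres is simply outside the model class. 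Separately, Assouad on a $Kd$-bit hypercube naturally yields bounds linear in the number of bits; your sketch of how Cauchy--Schwarz would convert this to $\sqrt{dKN}$ is not a recognizable mechanism.

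The paper's route for this term is structurally different. It puts a \emph{continuous Gaussian prior} $\bar\mu(l)\sim\mathcal N(0,(\Delta^2/d)I_d)$ on the centres (so that the unknown direction in $\mathbb R^d$ must be learned), then uses a symmetrization argument to restrict to strategies that sample every arm equally often in expectation. The problem is then reduced to a \emph{batch two-sample testing} problem: given $t\asymp\tau/N$ samples from the target arm and $T\asymp\tau/K$ labelled samples from its block, decide a sign. The key computation is that for this Bayesian mixture the integrated KL is $\asymp tT\Delta^4/(d\sigma^4)$; combining with $\kl(1/3-2\delta,4\delta/N)$ gives $tT\gtrsim d\sigma^4\log(N/\delta)/\Delta^4$, and since $tT\asymp\tau^2/(NK)$ the square root emerges as $\tau\gtrsim(\sigma^2/\Delta^2)\sqrt{dKN\log(N/\delta)}$. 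The $\sqrt{\cdot}$ is thus a consequence of the \emph{product} of per-arm and per-block budgets appearing in the KL of the mixture, not of an Assouad-type amortization. This reduction to supervised two-sample testing with a Gaussian prior on the unknown direction is the missing idea in your Part (iii).
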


The lower bound in~\eqref{eq:lower:thm:LB} involves three different terms.  
As in any pure exploration problem, the first term $N$ is necessary because, when $\tau\leqslant N/2$, then the label of at least one arm has to be guessed randomly inducing a constant probability of error for the exact clustering. This term is only relevant for very large $\Delta$ and is not discussed further.
The second term is the largest in the low-dimensional regime where $d\leq N\log(N/\delta)/K$, whereas the third one is the largest in the high-dimensional regime where $d\geq N\log(N/\delta)/K$.  This dichotomy between low-dimensional and high-dimensional clustering problems also occurs in the batch problem. Together with the results of the next section, we will establish that it is intrinsic here --see the discussion and the proof sketch for further details.  Note that~\eqref{eq:lower:thm:LB} does not depend on $\theta$: we establish~\eqref{eq:lower:thm:LB} for environments where $\theta_*$ is close to $1/K$, that is for balanced partitions.  In fact, the total budget of our procedures $ACB$ and $ACB^*$ - see below - do not depend on $\theta_*$ except for extremely unbalanced partitions (very small $\theta_*$) so that the lower bound is tight even for mildly unbalanced partitions.
\vspace{-2mm}
\begin{proof}[Sketch of proof of Theorem~\ref{thm:LB}]
The first two  terms in the lower bound~\eqref{eq:lower:thm:LB} -- resp.~$\frac{\sigma^2}{\Delta^2}N\log\left(\frac{N}{\delta}\right)$ and $ \frac{\sigma^2}{\Delta^2}\sqrt{dKN\log\left(\frac{N}{\delta}\right)}$- are proved separately in Lemmas~\ref{lemma:LB1} and~\ref{lemma:LB2}.
Regarding the first term, we first observe that it depends neither on $d$, nor on $K$, nor on $\theta$. For the sake of this sketch, we can therefore restrict ourselves to a  one-dimensional ($d=1$) multi-armed bandit setting where each arm has $a\in [N]$ has either mean $\mu_a=0$ or $\mu_a=\Delta$, so that $K=2$. For this simplified toy problem, recovering the partition $G^*$ is equivalent to  a Thresholding Bandit Problem (TBP), where the goal is to find the set of arms whose mean is higher or equal to $\Delta$. By building upon some ideas introduced in~\cite{cheshire2020influence}, we establish the lower bound $\frac{\sigma^2}{\Delta^2}N\log\left(\frac{N}{\delta}\right)$. Note that one may easily interpret this quantity using the fact that, for a specific arm, deciphering whether the mean of a specific arm is $0$ or $\Delta$ with probability $1-\delta/N$, one needs to sample it at least $\frac{\sigma^2}{\Delta^2}\log\left(\frac{N}{\delta}\right)$ times.

The proof of the second term is both more challenging and more innovative. Again, for the purpose of this sketch, let us assume that $K=2$ and $\theta_*=1/2$. We use a Bayesian approach by putting a Gaussian prior distribution on $\mu(1)$ with variance $d^{-1/2}\Delta I_d$ and by fixing $\mu(2)=-\mu(1)$ so that, with high probability, $\|\mu(2)-\mu(1)\|\geq \Delta$. Introducing this prior distribution on $\mathbb{R}^d$ is instrumental to recover the dependency of the budget on the dimension $d$ of the problem. First, we use the symmetry of the problem to show that the optimal budget is achieved by a strategy $\pi$ which, in expectation, samples all the arms uniformly. Then, we use a series of reduction by first noting that identifying the group of any node $a$ is, in some sense, at least as difficult, as the supervised problem where we would know the group of all the arms, except that of $a$. In turn, we show that tackling this active supervised problem with a uniform strategy $\pi$ is as difficult as tackling a batch supervised learning problem where each arm is sampled $\tau/N$ times. Finally, we craft an impossibility result for the latter problem.
We emphasize that there is no computational restriction here, so that the lower bound for uniform sampling strategies is~\eqref{eq:lower:thm:LB}, and not the rate (\ref{eq:naive-algo}) which relates to polynomial-time algorithms~\cite{even2024computationinformation}.
\end{proof}\vspace{-4mm}

\section{ACB and Upper bound on the budget}\label{sec:UB}\vspace{-2mm}

To introduce the main ideas underlying our active clustering algorithm, we first assume in the next subsection that $\Delta$, $\theta$, $\sigma$, $N$, $K$ and $d$ are known quantities, and we construct an algorithm, ACB, that is $\delta$-PAC for environments such that $\Delta_*\leqslant \Delta$ and $\theta_*\leqslant \theta$. We introduce our main algorithm, ACB$^*$, adaptive to  $\Delta_*$ and $\theta_*$ in Subsection~\ref{ss:acbstar}. 
\vspace{-2mm}

\subsection{Warm-up: optimal active clustering with known $\Delta,\theta$}\label{ss:warmup}\vspace{-2mm}
The main recipe of ACB is to first identify a set $\hat S$ of $K$ arms, which are representative of each group, and then, to classify all the arms based on a precise estimation of the means of the $K$ arms in $\hat S$. The ACB algorithm built then on two subroutines:\\*
{\bf 1-} SRI (Sequential Representatives identification), which constructs a set $\widehat{S}$ that contains, with high probability, exactly one arm for each group, called the representatives of each group. To construct $\widehat{S}$, we use a sequential elimination technique, combined with high-dimensional two-sample tests.\\*
{\bf 2-} ADC (Active Distance-based classification), which computes precise estimates of the means of the arms in $\widehat{S}$, and classifies the remaining arms based on minimum estimated distance to the representatives.

{\bf Estimating distances.}
In order to detect whether two arms $a$ and $b$ are in the same group,
a key ingredient for both SRI and ADC is to get a good estimation of the square distance $\|\mu_a-\mu_b\|^2$ between the means. Computing  the empirical means $\hat \mu_a$ and $\hat\mu_b$ of collected samples of $a$ and $b$, we can estimate $\|\mu_a-\mu_b\|^2$ by $\|\hat \mu_a-\hat\mu_b\|^2$. Yet, this simple estimator suffers from an unknown bias depending on the noise covariance matrix. This issue can be circumvented in active sampling, by: \\*
{\bf (i)} computing independent empirical means $\hat \mu_a$, $\hat\mu'_a$, and $\hat \mu_b$, $\hat\mu'_b$ for the arms $a$ and $b$, based on repeated  measurements,\\*
{\bf (ii)} estimating $\|\mu_a-\mu_b\|^2$ with the unbiased estimator $\hat d^2_{ab}=\langle \hat\mu_a-\hat\mu_b,\hat\mu'_a-\hat\mu'_b\rangle$.

{\bf SRI subroutine (Sequential Representative Identification).}
The core idea underlying the SRI subroutine is to start from a set $S=\{a_0\}$ made of a single arm, chosen uniformly at random, and then to successively   sample new arms $a$, and to add them to $S$, if they  pass a sequence of tests ensuring that $a$ is not represented in $S$ with high-probability. The sequence of tests checks if $a$ is already represented in $S$, i.e., if  $\min_{b\in S}\|\mu_a-\mu_b\|^2=0$, by multiply  checking if $\min_{b\in  S}\hat d^2_{ab}\leq \Delta^2/2$, with a sequence of estimators  $\hat d^2_{ab}$ based on increasing sample sizes, ensuring increasing confidence. It is based on the call of the \textsc{RepresentedTest} subroutine described below, where $\mbox{empirical\_mean}(a,n)$ refers to the action of sampling $n$ times the $a$-th arm, and computing the empirical mean of the collected samples. This action is performed twice to compute $\hat{\mu}_{a}$ and $\hat{\mu}'_{a}$.

\begin{algorithm} 
\begin{algorithmic}[1] 
    \Function{RepresentedTest}{$a,(\bar\mu_b,\bar\mu'_b)_{b\in S},\Delta, n$} \Comment\textcolor{cyan}{Test if $a$ is represented in $S$}
    \State $\hat{\mu}_{a},\hat{\mu}'_{a}\gets \mbox{empirical\_mean}(a,n)$
    \State {\bf Return} \textsc{Is.True}$\left\{\min_{b\in S}\langle \hat{\mu}_{a}-\hat\mu_b,\hat{\mu}'_{a}-\hat\mu'_b\rangle \leq \Delta^2/2\right\}$ \label{algline:RepTest.3}
    \EndFunction
\end{algorithmic}
\end{algorithm}

More precisely, let us define
\begin{align}\label{eq:def:U}
U:=&\left\lceil {8}{\theta}^{-1}\log\left({8K}/{\delta}\right)\right\rceil\enspace ; 
    \quad \quad r:=\lceil\log_2(\log(4U/\delta))\rceil\, ; \\
      n_s :=   &  \left\lceil c_1\frac{\sigma^2}{\Delta^2}\left( 2^s+\log(12K)\right)   \vee  c_2\frac{\sigma^2}{\Delta^2}\sqrt{d (2^s+\log(6))} \right\rceil  \enspace ; \label{eq:def:n_s} \\
 s_0 := &  r\wedge \min\{s\geqslant 1 ; n_s\geqslant 2\}\enspace ;\label{eq:def:s_0} 
 \quad \quad 
 n_{\max}:=  n_r  \vee  \left\lceil c_3\frac{\sigma^2}{\Delta^2}\sqrt{d}\log(2K) \right\rceil  \enspace ,\\
    \label{eq:definition:Tmax_init}
    T_{\max}&= 2K\left(n_{\max}+\sum_{s=s_0+1}^{r}n_s\right) + 2U n_{s_0}+ 2 U \sum_{s=s_0+1}^r \frac{n_s}{2^{s-4}} \ , 
 \end{align}

with $c_1$, $c_2$, $c_3>0$ numerical constants, explicitly provided in the proof of Lemma~\ref{lemma:UB1'}. 
The SRI procedure successively samples candidate arms $a_u$ at random, and performs a sequence of \textsc{RepresentedTest} with (roughly) doubling sample size $n_{s}$ for $s=s_0, s_0+1,\ldots$, until either a \textsc{RepresentedTest} returns \textsc{True}, in which case the arm $a_u$ is rejected (Line~\ref{algline:reject}); or all tests up to $s=r$ have answered \textsc{False}, in which case the arm $a_u$ is added to $S$ (Line~\ref{algline:include}). The procedure SRI stops when $|S|=K$, or when a maximal budget has been spent ($T_{\max}$ is defined in~\eqref{eq:definition:Tmax_init}) and it returns $\widehat S=S$. 
 The minimal index $s_0$ ensures that the sample sizes $n_s$ are not smaller than 2. 
 
 \vspace{-0.2cm}
\begin{algorithm}
\begin{algorithmic}[1]
    \Procedure{SRI}{$\delta,\Delta, \theta$} \Comment\textcolor{cyan}{Sequential Representative Identification}
    \State Compute $U,r,s_0,n_s,n_{\max},T_{\max}$ according to (\ref{eq:def:U})--(\ref{eq:definition:Tmax_init}) and sample  $a_0\in [N]$
    \State  Set $S=\{a_0\}$; and $\hat{\mu}_{a_0},\hat{\mu}'_{a_0}\gets \mbox{empirical\_mean}(a_0,n_{\max})$ \Comment\textcolor{cyan}{Initialisation}
    \For{$u=1,\ldots,U$}\label{algline:2.5}
        \State Sample $a_u\in[N]$.\label{algline:2.6}
            \For{$s=s_0,\dots,r$} \label{algline:2.7}
                \If{\textsc{RepresentedTest}$\left(a_u,(\hat{\mu}_{b},\hat{\mu}'_{b})_{b\in S},\Delta,n_s\right)$}\label{algline:reptest}
                    \State \textsc{Break} \Comment\textcolor{cyan}{reject $a_u$}\label{algline:reject}
                \EndIf
                \If{$s=r$}\label{algline:isgood}\Comment\textcolor{cyan}{if $a_u$ has passed all tests}
                    \State $S\gets S\cup \{a_u\}$ \Comment\textcolor{cyan}{Add $a_u$ to $S$} \label{algline:include}
                    \State $\hat{\mu}_{a_u},\hat{\mu}'_{a_u}\gets \mbox{empirical\_mean}(a_u,n_{\max})$ \Comment\textcolor{cyan}{Estimate $\mu_{a_u}$}\label{algline:meanofl}
                \EndIf
            \EndFor
        \If{$|S|=K$ or budget $> T_{\max}$}  
            \State \textsc{Break} \Comment\textcolor{cyan}{Terminate $u$ loop} \label{algline:terminate}
        \EndIf
    \EndFor
    \State {\bf Return} $S$ \Comment\textcolor{cyan}{Return a representative for each group}
    \EndProcedure
\end{algorithmic}
\end{algorithm}
 
  The sequence of tests is designed in order to use few samples to reject arms already represented in $S$, while wrongly rejecting an unrepresented arm  with probability less than 1/2. Indeed, the choice of the sample sizes $n_s$  and $n_{\max}$ ensures that the probability to take a wrong decision at the $s$-th step is smaller than $2^{-s-1}$. Hence, the probability that an arm already represented in $S$ is rightly rejected before step $s$ is at least $1-2^{-s}$, leading to a quick rejection with high-probability. In addition, the maximum sample size $n_r$  is chosen large enough, to ensure a vanishingly small probability of (wrongly) not rejecting such an arm. As for unrepresented arms,
the probability to wrongly reject an arm $a_u$ not already represented in $S$ is smaller than $\sum_{s\geq 1}2^{-s-1}=1/2$, so that, with probability at least $1-\delta/4$, we need less than $U$ candidate arms to identify one representative of each group.

{\bf ADC subroutine (Active Distance-based Classification).}
Once a set $\widehat{S}=\{b_1,\dots,b_K\}$ of representatives of each group has been successfully  obtained with SRI, 
the mean of each group can be precisely estimated, and remaining arms can be classified based on distance estimation $\hat d_{ab}^2$ to these means. This classification is performed by the ADC subroutine. 

Let us define 
\begin{equation} \label{eq:def:J}
J:=\left\lceil c_4\frac{\sigma^2}{\Delta^2}L\vee c_5\frac{\sigma^2}{\Delta^2}\sqrt{\frac{dN}{K}L} \right\rceil  \enspace,\quad \quad 
I:=  \left\lceil c_4\frac{\sigma^2}{\Delta^2}L\vee c_5\frac{\sigma^2}{\Delta^2}\sqrt{\frac{dK}{N}L} \right\rceil \enspace , 
\end{equation}
with $L=\log(6NK/\delta)$, and $c_4$, $c_5$ two universal constants defined in the proof of~\Cref{lemma:UB2}.  Assume, without loss of generality, that $b_j\in G^*_j$ for all $j\in[K]$. Then, ADC first computes two precise estimations $\hat \mu(j),\hat\mu'(j)$ of the mean of arms in $G^*_j$ (Line~\ref{algline:3.6}), each based on $J$ samples of arm $b_j$. As these mean estimations are the references for the classification, the sample size $J$ is chosen large enough to ensure a small variance.   
 Then, for each arm $a$, two mean estimations $\hat \mu_a,\hat\mu'_a$ are computed based on $I$ samples, and the arm $a$ is classified Line~\ref{algline:3.11} according to the smallest estimated distance (\ref{eq:classifier}). The budget $I$ for individual mean estimation is much smaller than $J$ in high-dimension $d$, with $I=KJ/N$ for $d$ large. This budget ensures yet that the probability of misclassifying an arm is smaller than $\delta/N$.
\vspace{-1mm}
 
\begin{algorithm}
\begin{algorithmic}[1]
\Procedure{ADC}{$\delta,\Delta,S$} \label{algo:ADC}\Comment\textcolor{cyan}{Active Distance-based Classification}
    \If{$|S|\neq K$}
        \State {\bf return} Null
    \Else    
    \State Enumerate $S=\{b_1,\ldots,b_K\}$, and compute $J$, $I$ according to (\ref{eq:def:J}) \label{algline:3.5}
    \For{$j\in [K]$} \label{algline:3.4} \Comment\textcolor{cyan}{Estimate the mean of each group}
        \State $\hat{\mu}(j),\hat{\mu}'(j)\gets\mbox{empirical\_mean}(b_j,J)$ \label{algline:3.6}
        \State $\hat{G}_j\gets\{b_j\}$\label{algline:3.7}
        \EndFor
        \For{$a\in [n]\setminus S$} \label{algline:3.9} \Comment\textcolor{cyan}{Classify arm $a$} 
        \State $\hat{\mu}_a,\hat{\mu}'_a\gets\mbox{empirical\_mean}(a,I)$ \label{algline:3.10} \vspace{-0.3cm}
        \Statex\begin{equation}\label{eq:classifier}
            \text{Add $a$ to the group $\hat{G}_{k}$ such that}\quad k\in\argmin_{j=1,\dots,K} \Big\langle\hat{\mu}_a-\hat{\mu}(j),\hat{\mu}'_a-\hat{\mu}'(j)\Big\rangle
        \end{equation}  \vspace{-0.3cm} \label{algline:3.11}
        \EndFor
        \State \textbf{return} $\{\hat{G}_1,\dots,\hat{G}_K\}$ \label{algline:3.13} \Comment\textcolor{cyan}{Return a clustering}
    \EndIf    
\EndProcedure
\end{algorithmic}
\end{algorithm}

{\bf ACB algorithm.}
Combining the SRI and ADC subroutines, we get a simple active clustering algorithm ACB for the case where 
$\Delta_*$ and $\theta_*$ are known -- see Algorithm~\ref{algo:ACB}.
\vspace{-3mm}

\begin{minipage}{0.4\textwidth}
\begin{algorithm}[H]
    \caption{ACB ($\theta_*$ and $\Delta_*$ known)}\label{algo:ACB}
    \begin{algorithmic}[1]
        \State {\bf Input:} $\delta,\Delta,\theta$
        \State $\hat{S}\gets$ SRI($\delta/2,\Delta,\theta$) \label{algline:1.1}
        \State \textbf{return} $\hat{G}=$ADC($\delta/2,\Delta,\hat{S}$) \label{algline:1.3}
    \end{algorithmic}
\end{algorithm}
\end{minipage}
\hfill 
\begin{minipage}{0.55\textwidth}    
\begin{algorithm}[H]
    \caption{ACB$^*$ ($\theta_*$ and $\Delta_*$ unknown)}\label{algo:ACB:3}
    \begin{algorithmic}[1]
        \State {\bf Input:} $\delta$
        \For{$l=0,1,\ldots$}
        \For{$p=0,\dots,l$}
        \State Compute $S_{p,l}\gets$SRI$\left(\delta_{l},\Delta_{p},\theta_{p,l}\vee \frac{1}{N}\right)$ \label{algline:callSRI}
        \If{$|S_{p,l}|=K$}
        \For{$a\in S_{p,l}$} 
        \State $\bar{\mu}_a,\bar{\mu}'_a\gets $empirical\_mean$(a,n'_{p})$
        \EndFor
        \State $\hat{\Delta}^2\gets\inf_{a,b\in S_{p,l}} \langle \bar{\mu}_a-\bar{\mu}_b,\bar{\mu}_a'-\bar{\mu}_b'\rangle$ \label{line:hatDelta}
        \State \textbf{return} $\hat{G}=$ADC$(\delta/3,2^{-1/2}\hat{\Delta},S_{p,l})$ 
        \EndIf
        \EndFor
        \EndFor
    \end{algorithmic}
\end{algorithm}
\end{minipage}

\subsection{Main algorithm ACB$^*$}\label{ss:acbstar}\vspace{-2mm}
When the parameters $\Delta_*$ and $\theta_*$ are unknown, we cannot rely on a single call to SRI and ADC as in the ACB algorithm. Multiscale calls to SRI  are required, for different candidate levels $\Delta_p$ and  $\theta_{p,l}$ for $\Delta_*$ and $\theta_*$.
These levels,  related sample sizes $n'_{p}$, and confidence levels $\delta_l$ are defined by
\begin{align}\label{eq:Delta_0}
 \Delta_0^2&=\sigma^2[\log(K)+\sqrt{d}+\log\log(6N/\delta)],   \quad\quad       \delta_l=\frac{\delta}{6(l+1)^3} \\
                \theta_{p,l}&=\frac{1}{K2^{l-p}}, \quad 
 \Delta_{p}=\Delta_0\sqrt{\frac{1}{2^{p}}}\ ,\quad n'_{p}=\left \lceil c_6\frac{\sigma^2}{\Delta_p^2}\left(\log(3K^2/\delta)+\sqrt{d\log(3K^2/\delta)}\right)\right\rceil\ , \label{eq:Delta_p} 
\end{align}
 where $c_6$ is a numerical constant, whose value is given in~\eqref{eq:definition_c6}.

The main recipe in ACB$^*$, is to scan decreasing candidate values $\Delta_p$ and  $\theta_{p,l}$, until we find a scale where SRI returns a set $S_{p,l}$ of cardinality $K$, see Algorithm~\ref{algo:ACB:3}. 

Below, we provide 
upper bounds on $T^*(\delta,\mathcal{E}(\Delta,\theta,\sigma,N,K,d))$ for both ACB and ACB$^*$. We write $\tau_{ACB}$ and $\tau_{ACB^*}$ for the budget of the non-adaptive procedure ACB$(\delta,\Delta,\theta)$, and of the adaptive one ACB$^*(\delta)$. Define the quantities
   \begin{align*}
    A=&  \frac{\sigma^2}{\Delta^2}\left[N\log\left(N/\delta\right)+\sqrt{dNK\log\left(N/\delta\right)}+\sqrt{d}\frac{\log(K)}{\theta}\right]
   \\
    B =&  \frac{1}{\theta}\log(K/\delta)+\frac{\sigma^2}{\Delta^2}\frac{1}{\theta}\log\left(\frac{K}{\delta}\right)\left[\sqrt{d}+\log\log(N/\delta)\right]\enspace .
    \end{align*}
\begin{theorem}\label{thm:ACB:general} 
    Let $\delta>0$. Let $\Delta>0$, $\theta>0$ be any two parameters such that $\mathcal{E}(\Delta,\theta,\sigma,N,K,d)\ne\emptyset$. 
   Both the ACB (Algorithm~\ref{algo:ACB}) and its adaptive version  ACB$^*$ (Algorithm~\ref{algo:ACB:3})  are $\delta$-PAC on $\mathcal{E}(\Delta,\theta,\sigma,N,K,d)$.  
   There exist  numerical constants $c$, $c'$, $c''$, independent of all the parameters $\Delta,\theta,\sigma,N,K,d$ such that the following holds.  For any environment $\nu$ in $\mathcal{E}(\Delta,\theta,\sigma,N,K,d)$, such that $\theta\geq \log(K)/N$, we have
\begin{align*}
    \mathbb{E}_{ACB,\nu}[\tau_{ACB}]&\leqslant  cN+ c'A \ ; \quad \quad 
\tau_{ACB}\leqslant cN+ c'(A+B)\text{ a.s.}\\
\mathbb{P}_{ACB^*,\nu}&\left[\tau_{ACB^*} \leqslant  cN + c''L\log^2(L)(A+B)\right]\geq 1-\delta\ , 
\end{align*}
 where $L:=\log_2\left(\frac{1}{\theta K}\left(\frac{\Delta_0^2}{\Delta^2}\vee 1\right)\right)$. 
\end{theorem}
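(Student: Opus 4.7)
I would prove the theorem in four stages: (i) PAC correctness of ACB, (ii) budget bounds for ACB, (iii) PAC correctness of ACB$^*$, and (iv) budget bound for ACB$^*$. The unifying concentration tool is a Hanson--Wright-type inequality for the quadratic form $\langle \hat\mu_a-\hat\mu_b,\hat\mu'_a-\hat\mu'_b\rangle$, which is an unbiased estimator of $\|\mu_a-\mu_b\|^2$ and whose deviation splits into a sub-Gaussian part of order $\sigma^2\sqrt{d/n}$ driven by $\|\Sigma_a\|_{op}\leq \sigma^2$, and a sub-exponential part of order $\sigma^2/n$. The two terms in the sample size $n_s$ of (\ref{eq:def:n_s}) are matched to these two regimes, so that \textsc{RepresentedTest} errs at step $s$ with probability at most $2^{-s-1}$.

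\textbf{Correctness and budget of ACB.} For (i), the calibration of the $n_s$ ensures that an unrepresented arm is wrongly rejected with probability at most $1/2$, so a coupon-collector argument using $\theta_*\geq \theta$ shows that $U$ candidate arms populate $S$ with one representative per group with probability $\geq 1-\delta/4$; on the other hand an already-represented arm survives step $s$ with probability at most $2^{-s}$, and $n_{\max}$ is chosen so that its probability of surviving step $r$ is at most $\delta/(4UK)$. This delivers the SRI guarantee at level $\delta/2$, and ADC then assigns every arm correctly with probability $\geq 1-\delta/(2N)$ via the same quadratic-form bound at the larger scales $J$ and $I$ of (\ref{eq:def:J}); a union bound concludes. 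For (ii), the SRI budget decomposes as $Kn_{\max}$ plus up to $U$ candidates each costing $\sum_{s\geq s_0} n_s \mathbf{1}\{\text{survives to step } s\}$; in expectation the geometric rejection combined with the geometric growth of $n_s$ makes the per-arm cost of order $n_{s_0}$, whereas almost surely it is bounded by $n_r$ up to the safety cap $T_{\max}$. Adding the deterministic ADC cost $KJ+(N-K)I$ and plugging (\ref{eq:def:J}) produces the dominant contributions $(\sigma^2/\Delta^2)[N\log(N/\delta)+\sqrt{dKN\log(N/\delta)}]$ in $A$, while $Kn_{\max}$ accounts for the $\sqrt{d}\log(K)/\theta$ term of $A$ after using $\theta\geq \log(K)/N$. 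The $B$ term is exactly the worst-case excess $U(n_r-n_{s_0})$ that arises only in the almost-sure statement.

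\textbf{Analysis of ACB$^*$.} Let $p^*=\min\{p\geq 0:\Delta_p\leq \Delta_*\}$ and $l^*=\min\{l\geq p^*:\theta_{p^*,l}\leq \theta_*\}$, so that $p^*\leq 1+\log_2(\Delta_0^2/\Delta_*^2)$ and $l^*\leq p^*+\lceil\log_2(1/(\theta_* K))\rceil\lesssim L$. I build a global good event of probability $\geq 1-\delta$ by applying (i) together with a pilot concentration for $\hat\Delta^2$ to every pair $(p,l)$ with $l\leq l^*$, at the confidence levels $\delta_l=\delta/(6(l+1)^3)$, which sum to less than $\delta/3$. On this event, every SRI call at a scale with $\Delta_p\leq \Delta_*$ and $\theta_{p,l}\leq \theta_*$ returns a valid $K$-representative set, so scale $(p^*,l^*)$ triggers the return; moreover the $n'_p$ samples in line~\ref{line:hatDelta} force $\hat\Delta^2\in [\Delta_*^2,2\Delta_*^2]$, so the final ADC call with gap parameter $2^{-1/2}\hat\Delta$ is correct by (i). For the budget, each SRI call is deterministically bounded by $T_{\max}$ evaluated at $(\Delta_p,\theta_{p,l},\delta_l)$; the geometric series in $\Delta_p^{-2}$ and the harmonic series in $(l+1)^3$ over the $O(L^2)$ explored scales inflate the overall budget only by the factor $L\log^2(L)$ relative to $A+B$ evaluated at $(\Delta_*,\theta_*,\delta)$.

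\textbf{Main obstacle.} The hardest step is the expected-budget refinement in (ii): one must show that in expectation the cost of a candidate arm in SRI is $n_{s_0}$, not $n_r$. This requires (a) a uniform control that the survival probability of a represented arm at step $s$ decays like $2^{-s}$, which demands the sub-exponential tail of the Hanson--Wright bound to align with the doubling of $n_s$; and (b) a bound of order $O(\delta)$ on the probability that SRI hits the cap $T_{\max}$, so that its worst-case contribution is absorbed into the additive $cN$ term. A more localised, but still delicate, difficulty arises in (iv) with the pilot calibration: the sample size $n'_p$ of (\ref{eq:Delta_p}) must simultaneously be large enough to give $\hat\Delta^2\in[\Delta_*^2,2\Delta_*^2]$ at scale $p^*$, yet small enough not to dominate the budget beyond the stated $L\log^2(L)$ overhead.
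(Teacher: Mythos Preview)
Your architecture matches the paper's: SRI correctness via calibrated tests plus coupon collector, ADC via Hanson--Wright, and ACB$^*$ via a multiscale scan with summable $\delta_l$. There is, however, a genuine gap in the expected-budget bound for ACB. You compute the SRI cost as $Kn_{\max}$ plus ``up to $U$ candidates'' each with expected cost of order $n_{s_0}$, which yields a rejected-arm contribution of order $U\cdot n_{s_0}\sim \tfrac{\sigma^2}{\Delta^2}\tfrac{\log(K/\delta)}{\theta}(\log K+\sqrt d)$; the $\sqrt d$-piece $\tfrac{\sigma^2}{\Delta^2}\sqrt d\,\tfrac{\log(K/\delta)}{\theta}$ is a term of $B$, not of $A$, so your argument only reaches $\mathbb E[\tau_{ACB}]\leq cN+c'(A+B)$. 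To obtain $A$, the paper does \emph{not} use $U$: it bounds the random number $M$ of epochs by a coupon-collector argument, giving $\mathbb E[M]\lesssim\tfrac{\log K}{\theta}$, and then applies Wald's identity---not to the raw per-arm cost, which is not i.i.d.\ since it depends on whether the candidate is already represented, but to indicators of auxiliary events $\mathcal Z_{u,s}$ that \emph{are} conditionally i.i.d.\ across epochs. Relatedly, your ``survival probability at step $s$ decays like $2^{-s}$'' is too weak: with $n_s$ of order $2^s$ in the subexponential regime, $\sum_s n_s\,2^{-s}$ is of order $r\sim\log\log(1/(\theta\delta))$, which again produces a $B$-term. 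The proof needs and shows (Lemma~\ref{lemma:UB1aux}) the double-exponential decay $\Pr(\mathcal Z_{u,s})\leq e^{-2^s}$, because Hanson--Wright at sample size $n_s$ of order $2^s$ gives $\exp(-c\,n_s)$ tails; only then is $\sum_s n_s e^{-2^{s}}$ bounded independently of $r$.

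A smaller omission for ACB$^*$: you assume the loop returns exactly at $(p^*,l^*)$, but it may stop earlier at some $(p',l')$ with $\Delta_{p'}$ up to $4\Delta_*$. Correctness of $\hat S$ and concentration of $\hat\Delta$ at that scale require two properties you do not invoke: on the good event, (i) any SRI output with $|S|=K$ has one arm per group \emph{even when} $\Delta_{p'}>\Delta_*$, and (ii) SRI cannot return $|S|=K$ when $\Delta_{p'}>4\Delta_*$, which guarantees that $n'_{p'}$ is large enough for the pilot estimate. These are the first two points of Lemma~\ref{lemma:adapt1}.
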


\section{Numerical experiments}\label{sec:numerics}\vspace{-2mm}

In this section, we run experiments on synthetic data with standard Gaussian noise ($\sigma=1$). We consider environments with equidistant centers (with $\Delta_*=1$), and balanced groups ($\theta_*\approx 1/K$). We choose a high-dimensional setting with $N=200$, $d=1000$, and $K \in \{10,15,20,25\}$. We compare ACB with oracle-BOC~\cite{yang2022optimal} - in our balanced setup it simply performs uniform sampling, see below.

Regarding ACB, we assume that $\Delta_*$ is known, and we implement the non-adaptive version of ACB with $\delta=0.1$. 
In order to provide a tighter calibration of ACB, we slightly modify ACB algorithm in order to specialize it to the  Gaussian distribution --see Appendix~\ref{sec:appexp}. 
As the setting is perfectly symmetric (balanced clusters, equidistant means), the Oracle-BOC policy is equivalent to the Uniform Sampling strategy, with Loyd initialised by maximin. We implement instead a kmeans++ initialisation,  as it is known to outperform maximin~\cite{celebi2013comparative}.

\vspace{-1mm}
\begin{SCfigure}[0.8][h!]
\caption{Comparison of the necessary budget for ACB and oracle-BOC. \\ 
We represent (orange curve) the (empirical) budget of ACB computed with $100$ simulations, for $K=10,15,20,25$. The error bars are equal to twice the standard deviation.
In blue, we provide the smallest budget for which oracle-BOC (initialised  with kmeans++)  makes less than $10\%$ of error out of 100 experiments. As this budget is a numerical constant, there are no error bars. 
}\label{image:figure_K}
\centering
\includegraphics[width=0.47\textwidth]{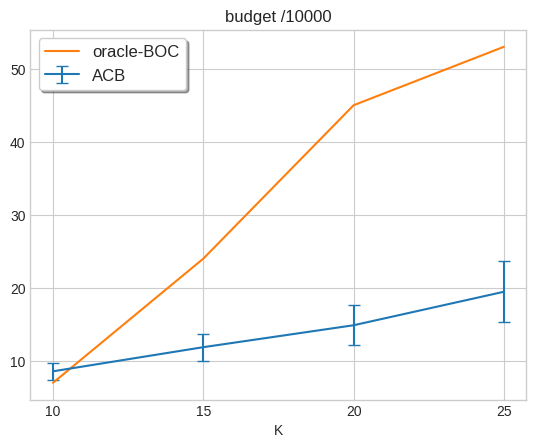}
\end{SCfigure}

In Figure~\ref{image:figure_K}, we plot the estimated mean budget of ACB as a function of $K$, as well as the budget of oracle-BOC,  with the budget chosen so that the procedure is exactly $\delta$-PAC with $\delta=0.1$. This figure confirms our theoretical findings that, in a high-dimensional setting $(d\gg N/K)$, ACB improves over oracle-BOC  - which is here equivalent to a state of the art batch clustering algorithm - when the number $K$ of groups increases.  
Also, we have checked that ACB is $\delta$-PAC. Fixing  $\delta=0.1$, we observe no more than $1$ error out of $100$ experiments.  We detail further the experimental setup (including compute resources) in Appendix~\ref{sec:appexp}.

\vspace{-3mm}

\section{Discussion}\label{sec:discussion}\vspace{-2mm}

{\bf Optimality of ACB.}
First, we discuss the budget of ACB, and we compare it to the information-theoretical lower bound of Theorem~\ref{thm:LB}.
To simplify the discussion, let us first consider the case where the partition $G^*$ is almost balanced, that is when $\theta$ is of the order of $1/K$, and assume that $\frac{\Delta^2}{\sigma^2}\lesssim \log(N/\delta)$.
According to Theorem~\ref{thm:ACB:general}, the $\delta$-PAC algorithm ACB  
has an expected budget upper bounded by (\ref{eq:info-simple}), as long as $K\leq N/\log(N)$. In light of Theorem~\ref{thm:LB}, we see that the expected budget is optimal with respect to all the quantities of the problem: the number of arms $N$, the minimum separation $\Delta$, the number of groups $K$, the probability $\delta$, and the subGaussian norm $\sigma$. The only restriction is that  the number of groups $K$ is smaller than $N/\log(N)$, but it is really mild as non-supervised learning problems are mostly relevant for dimension reduction, that is when $K$ is really small compared to $N$. In fact, for larger $K\in [\frac{N}{\log(N)}, N/2]$, the expected budget $\mathbb{E}_{ACB,\nu}[\tau_{ACB}]$ is optimal, up to a possible $\sqrt{\log(N)}$ multiplicative term. Theorem~\ref{thm:ACB:general} also states high probability controls of the budget $\tau_{ACB}$ and $\tau_{ACB^*}$ which again, are optimal (up to $\log$ terms for the latter), in most regimes. 

When the true partition $G^*$ is extremely unbalanced, so that $\theta_*\leq \frac{\log(K)}{\sqrt{\log(N/\delta)KN}}$, the bound $A$ on the expected budget may be larger than the lower bound of Theorem~\ref{thm:LB}. We conjecture that the upper bound could be improved in this extreme case, but we leave this for future work.

{\bf Further comparison with~\cite{yang2022optimal}.}
When $\delta$ goes to zero while $\sigma$, $\Delta$, $K$, and $d$ are fixed, the average budget of ACB in~(\ref{eq:info-simple}) is at most of the order of $\frac{\sigma^2}{\Delta^2}N\log(1/\delta)$, and is consistent with the BOC algorithm of~\cite{yang2022optimal}. Still, we mention that~\cite{yang2022optimal} manage to pinpoint the exact value of the asymptotic optimal budget, while our non-asymptotic bounds are only tight up to numerical constants. We however point out that this asymptotic expression hides dependencies on $K$, $d$, and $N$, which are not negligible unless $\delta$ is exponentially small with respect to $d,K$ - and in high dimension, such a high confidence regime is typically out of reach.

{\bf Comparison to batch clustering.}
We briefly come back to our fundamental questions on the comparison between the batch and active clustering problems. Contrary to the batch setting, we have established that the polynomial-time strategy ACB is information-theoretical optimal, thereby establishing the absence of computation-information gap. This is in contrast with the classical batch clustering problem, where strong evidence  of a computation-information gap were proved in~\cite{even2024computationinformation} in high dimension, when there are many groups. We therefore illustrate here that clustering is an unsupervised learning problem, where repeated active sampling breaks a computational barrier, which is interesting and opens perspectives for other unsupervised clustering problems where computation-information gap are conjectured.

{\bf Conclusion and limitations.}
In our paper, we characterized the non-asymptotic minimal budget for recovering the groups in a collection of environment $\mathcal{E}(\Delta,\theta,\sigma,N, K,d)$, where the minimum distance between the groups is higher or equal to $\Delta$, and all groups have a size larger than $\theta$. We also crafted a strategy adaptive to both $\theta_*$ and $\Delta_*$.
Unlike in batch clustering, our results prove that there is no computation-information gap. 

Our work still has  limitations, and raises several open questions: 
First, it remains to explore how sequential and active learning can be leveraged for adapting to heterogeneous distances between groups and heterogeneous group sizes. This has been investigated in~\cite{yang2022optimal} in the asymptotic regime, but not in the non-asymptotic regime. Second, when $\sigma$ is unknown, building a sampling strategy that is adaptive to it, would require to estimate the subGaussian norm of the noise, while at the same time estimating the distances between the means. We leave this question for a future work.  
Finally, as in most of the clustering literature, we assumed that the number $K$ of groups was known to the learner. Investigating the problem of estimating or testing the number of groups in an active setting is also an interesting research direction. 

\section{Acknowledgements}

The work of V. Thuot and N. Verzelen has been partially supported by grant ANR-21-CE23-0035 (ASCAI,ANR). The work of A. Carpentier has been partially supported by the DFG CRC 1294 'Data Assimilation', Project A03, by the DFG Forschungsgruppe FOR 5381 "Mathematical Statistics in the Information Age - Statistical Efficiency and Computational Tractability", Project TP 02, by the Agence Nationale de la Recherche (ANR) and the DFG on the French-German PRCI ANR ASCAI CA 1488/4-1 "Aktive und Batch-Segmentierung, Clustering und Seriation: Grundlagen der KI". The work of C. Giraud has been partially supported by grant ANR-19-CHIA-0021-01(BiSCottE, ANR) and ANR-21-CE23-0035 (ASCAI, ANR).

\bibliography{bibliographie}

\begin{thebibliography}{10}

\bibitem{ariu2024optimal}
Kaito Ariu, Jungseul Ok, Alexandre Proutiere, and Seyoung Yun.
\newblock Optimal clustering from noisy binary feedback.
\newblock {\em Machine Learning}, 113(5):2733--2764, 2024.

\bibitem{celebi2013comparative}
M~Emre Celebi, Hassan~A Kingravi, and Patricio~A Vela.
\newblock A comparative study of efficient initialization methods for the
  k-means clustering algorithm.
\newblock {\em Expert systems with applications}, 40(1):200--210, 2013.

\bibitem{chen2015optimal}
Lijie Chen and Jian Li.
\newblock On the optimal sample complexity for best arm identification.
\newblock {\em arXiv preprint arXiv:1511.03774}, 2015.

\bibitem{chen2014combinatorial}
Shouyuan Chen, Tian Lin, Irwin King, Michael~R Lyu, and Wei Chen.
\newblock Combinatorial pure exploration of multi-armed bandits.
\newblock {\em Advances in neural information processing systems}, 27, 2014.

\bibitem{chen2021hanson}
Xiaohui Chen and Yun Yang.
\newblock {Hanson–Wright inequality in Hilbert spaces with application to
  $K$-means clustering for non-Euclidean data}.
\newblock {\em Bernoulli}, 27(1):586 -- 614, 2021.

\bibitem{cheshire2020influence}
James Cheshire, Pierre M{\'e}nard, and Alexandra Carpentier.
\newblock The influence of shape constraints on the thresholding bandit
  problem.
\newblock In {\em Conference on Learning Theory}, pages 1228--1275. PMLR, 2020.

\bibitem{cohen2021online}
Vincent Cohen-Addad, Benjamin Guedj, Varun Kanade, and Guy Rom.
\newblock Online k-means clustering.
\newblock In {\em International Conference on Artificial Intelligence and
  Statistics}, pages 1126--1134. PMLR, 2021.

\bibitem{Dasgupta99}
Sanjoy Dasgupta.
\newblock Learning mixtures of gaussians.
\newblock In {\em 40th Annual Symposium on Foundations of Computer Science
  (Cat. No.99CB37039)}, pages 634--644, 1999.

\bibitem{diakonikolasCOLT23b}
Ilias Diakonikolas, Daniel~M. Kane, Thanasis Pittas, and Nikos Zarifis.
\newblock Sq lower bounds for learning mixtures of separated and bounded
  covariance gaussians.
\newblock In Gergely Neu and Lorenzo Rosasco, editors, {\em Proceedings of
  Thirty Sixth Conference on Learning Theory}, volume 195 of {\em Proceedings
  of Machine Learning Research}, pages 2319--2349. PMLR, 12--15 Jul 2023.

\bibitem{DiakonikolasFOCS17}
Ilias Diakonikolas, Daniel~M. Kane, and Alistair Stewart.
\newblock Statistical query lower bounds for robust estimation of
  high-dimensional gaussians and gaussian mixtures.
\newblock In {\em 2017 IEEE 58th Annual Symposium on Foundations of Computer
  Science (FOCS)}, pages 73--84, 2017.

\bibitem{DBLP:journals/corr/abs-1711-07211}
Ilias Diakonikolas, Daniel~M. Kane, and Alistair Stewart.
\newblock List-decodable robust mean estimation and learning mixtures of
  spherical gaussians.
\newblock In {\em Proceedings of the 50th Annual ACM SIGACT Symposium on Theory
  of Computing}, STOC 2018, page 1047–1060. Association for Computing
  Machinery, 2018.

\bibitem{even2024computationinformation}
Bertrand Even, Christophe Giraud, and Nicolas Verzelen.
\newblock Computation-information gap in high-dimensional clustering.
\newblock In Aaron~Roth Shipra~Agrawal, editor, {\em Proceedings of Thirty
  Seventh Conference on Learning Theory}, Proceedings of Machine Learning
  Research. PMLR, 30 Jun -- 3 Jul 2024.

\bibitem{fei2018hidden}
Yingjie Fei and Yudong Chen.
\newblock Hidden integrality of sdp relaxations for sub-gaussian mixture
  models.
\newblock In {\em Proceedings of the 31st Conference On Learning Theory},
  volume~75 of {\em Proceedings of Machine Learning Research}, pages
  1931--1965. PMLR, 2018.

\bibitem{gerchinovitz2020fano}
S{\'e}bastien Gerchinovitz, Pierre M{\'e}nard, and Gilles Stoltz.
\newblock Fano’s inequality for random variables.
\newblock {\em Statistical Science}, 35(2):178--201, 2020.

\bibitem{giraud2019partial}
Christophe Giraud and Nicolas Verzelen.
\newblock Partial recovery bounds for clustering with the relaxed $ k $-means.
\newblock {\em Mathematical Statistics and Learning}, 1(3):317--374, 2019.

\bibitem{jamieson2014best}
Kevin Jamieson and Robert Nowak.
\newblock Best-arm identification algorithms for multi-armed bandits in the
  fixed confidence setting.
\newblock In {\em 2014 48th Annual Conference on Information Sciences and
  Systems (CISS)}, pages 1--6. IEEE, 2014.

\bibitem{jamieson2016power}
Kevin~G Jamieson, Daniel Haas, and Benjamin Recht.
\newblock The power of adaptivity in identifying statistical alternatives.
\newblock {\em Advances in Neural Information Processing Systems}, 29, 2016.

\bibitem{katariya2018adaptive}
Sumeet Katariya, Lalit Jain, Nandana Sengupta, James Evans, and Robert Nowak.
\newblock Adaptive sampling for coarse ranking.
\newblock In {\em International Conference on Artificial Intelligence and
  Statistics}, pages 1839--1848. PMLR, 2018.

\bibitem{kaufmann2016complexity}
Emilie Kaufmann, Olivier Capp{\'e}, and Aur{\'e}lien Garivier.
\newblock On the complexity of best arm identification in multi-armed bandit
  models.
\newblock {\em Journal of Machine Learning Research}, 17:1--42, 2016.

\bibitem{DBLP:journals/corr/abs-1711-07465}
Pravesh~K. Kothari and Jacob Steinhardt.
\newblock {Better Agnostic Clustering Via Relaxed Tensor Norms}.
\newblock {\em CoRR}, abs/1711.07465, 2017.

\bibitem{Kwon20}
Jeongyeol Kwon and Constantine Caramanis.
\newblock The em algorithm gives sample-optimality for learning mixtures of
  well-separated gaussians.
\newblock In Jacob Abernethy and Shivani Agarwal, editors, {\em Proceedings of
  Thirty Third Conference on Learning Theory}, volume 125 of {\em Proceedings
  of Machine Learning Research}, pages 2425--2487. PMLR, 09--12 Jul 2020.

\bibitem{laurent2000adaptive}
Beatrice Laurent and Pascal Massart.
\newblock Adaptive estimation of a quadratic functional by model selection.
\newblock {\em Annals of Statistics}, pages 1302--1338, 2000.

\bibitem{lesieur2016phase}
Thibault Lesieur, Caterina De~Bacco, Jess Banks, Florent Krzakala, Cris Moore,
  and Lenka Zdeborov{\'a}.
\newblock Phase transitions and optimal algorithms in high-dimensional gaussian
  mixture clustering.
\newblock In {\em 2016 54th Annual Allerton Conference on Communication,
  Control, and Computing (Allerton)}, pages 601--608. IEEE, 2016.

\bibitem{LiuLi2022}
Allen Liu and Jerry Li.
\newblock Clustering mixtures with almost optimal separation in polynomial
  time.
\newblock In {\em Proceedings of the 54th Annual ACM SIGACT Symposium on Theory
  of Computing}, STOC 2022, page 1248–1261, New York, NY, USA, 2022.
  Association for Computing Machinery.

\bibitem{locatelli2016optimal}
Andrea Locatelli, Maurilio Gutzeit, and Alexandra Carpentier.
\newblock An optimal algorithm for the thresholding bandit problem.
\newblock In {\em International Conference on Machine Learning}, pages
  1690--1698. PMLR, 2016.

\bibitem{LuZhou2016}
Yu~{Lu} and Harrison~H. {Zhou}.
\newblock {Statistical and Computational Guarantees of Lloyd's Algorithm and
  its Variants}.
\newblock {\em ArXiv e-prints}, December 2016.

\bibitem{ndaoud2018sharp}
Mohamed Ndaoud.
\newblock Sharp optimal recovery in the two component gaussian mixture model.
\newblock {\em The Annals of Statistics}, 50(4):2096--2126, 2022.

\bibitem{PengWei07}
Jiming Peng and Yu~Wei.
\newblock {Approximating K-means-type Clustering via Semidefinite Programming}.
\newblock {\em SIAM J. on Optimization}, 18(1):186--205, February 2007.

\bibitem{Regev2017}
Oded Regev and Aravindan Vijayaraghavan.
\newblock On learning mixtures of well-separated gaussians.
\newblock In {\em Proceedings of 58th Annual IEEE Symposium on the Foundations
  of Computer Science}, 2017.

\bibitem{rigollet2023high}
Philippe Rigollet and Jan-Christian H{\"u}tter.
\newblock High-dimensional statistics.
\newblock {\em arXiv preprint arXiv:2310.19244}, 2023.

\bibitem{romanov2022}
Elad Romanov, Tamir Bendory, and Or~Ordentlich.
\newblock On the role of channel capacity in learning gaussian mixture models.
\newblock {\em Proceedings of Machine Learning Research vol 178:1–50}, 2022.

\bibitem{rudelson2013hansonwright}
Mark Rudelson and Roman Vershynin.
\newblock {Hanson-Wright inequality and sub-gaussian concentration}.
\newblock {\em Electron. Commun. Probab}, 18(82):1--9, 2013.

\bibitem{SegolNadler2021}
Nimrod Segol and Boaz Nadler.
\newblock {Improved convergence guarantees for learning Gaussian mixture models
  by EM and gradient EM}.
\newblock {\em Electronic Journal of Statistics}, 15(2):4510 -- 4544, 2021.

\bibitem{tirinzoni2022elimination}
Andrea Tirinzoni and R{\'e}my Degenne.
\newblock On elimination strategies for bandit fixed-confidence identification.
\newblock {\em Advances in Neural Information Processing Systems},
  35:18586--18598, 2022.

\bibitem{vempala2004spectral}
Santosh Vempala and Grant Wang.
\newblock A spectral algorithm for learning mixture models.
\newblock {\em Journal of Computer and System Sciences}, 68(4):841--860, 2004.

\bibitem{yang2022optimal}
Junwen Yang, Zixin Zhong, and Vincent~YF Tan.
\newblock Optimal clustering with bandit feedback.
\newblock {\em Journal of Machine Learning Research}, 25:1--54, 2024.

\bibitem{yun2019optimal}
Se-Young Yun and Alexandre Prouti{\`e}re.
\newblock Optimal sampling and clustering in the stochastic block model.
\newblock {\em Advances in Neural Information Processing Systems}, 32, 2019.

\end{thebibliography}
\bibliographystyle{plain}


\newpage
\appendix

\section{Details on the numerical experiments}\label{sec:appexp}

\paragraph*{Experimental setting\\}

We consider artificial data, generated with standard Gaussian noise ($\sigma=1$). 
We build environments with equidistant centers, and balanced groups. Precisely, we choose $\mu(k)= e_k/\sqrt 2$, where $\{e_1,\dots,e_K\}$ are the $K$ first vector of the canonical base of $\mathbb{R}^d$, so that the centers are equidistant, and $\Delta_*=1$. 

We choose a partition where each group has a size $\lfloor N/K \rfloor$ or $\lfloor N/K \rfloor+1 $, which makes the partition almost balanced, with $\theta_*=\frac{1}{N}\left\lfloor \frac{N}{K} \right\rfloor \sim \frac{1}{K}$.

Finally, we choose a large number of arms $N=200$, and a relatively large dimension $d=1000$. The number of clusters varies in $\{10,15,20,25\}$, and the experiments provided in this paper were ran on these different environments.

\paragraph*{Variant of the procedure and parametrization of ACB\\}

In the main paper, we introduced and calibrated ACB to allow for subGaussian noise. In particular, the quantities $n_s$, $n_{\max}$, $I$ and $J$, defined in~\cref{eq:def:n_s,eq:def:U,eq:def:s_0,eq:def:J} were calibrated by inverting concentration inequalities, at the cost of non-optimal numerical constants. 

In order to study numerically our procedure with sharper constants, we implement a variant of the algorithm whose tuning parameters are adjusted to the Gaussian setting. The test statistics and the classifier, so as the parameters are adjusted to specifically work  with Gaussian distribution. 

In SRI, we avoid dual sampling for computing $\hat d_{ab}$ in order to save a factor two in the budget. We modify the test statistic from~\cref{algline:RepTest.3} in the function \textsc{ RepresentedTest} used in SRI. We use
$\textsc{Is.True}\left\{\min_{b\in S}\left\| \hat{\mu}_{a}-\bar\mu_b\right\|^2 \leq \Delta^2/2+d\sigma^2\left(\frac{1}{n_s}+\frac{1}{n_{\max}}\right)\right\}$, so that there is no need to compute $\hat{\mu}'_a$ in \textsc{ RepresentedTest},  and neither $\bar{\mu}'_b$ in SRI . Observe that $\left\|\hat{\mu}_{a}-\bar\mu_b\right\|^2$ is an estimator of $\|\mu_a-\mu_b\|^2$ which is biased. As in the experiment, the variance is known, we debias it, using the shift $d\sigma^2\left(\frac{1}{n_s}+\frac{1}{n_{\max}}\right)$ in the statistics above.

In order to have a $\delta$-PAC algorithm, we take $n_{\max}=4\frac{\sigma^2}{\Delta^2}(x-d)$, where $x$ is the $1-\delta/K$ quantile of a $\chi^2$ distribution. This quantile is obtained with the library scipy.stats. As we implement the non-adaptive version of the algorithm, we do not limit the number of candidates, and the budget of SRI in our implementation. The condition from Line 12 in SRI is indeed not used. Now, we choose $n_0,\dots,n_r$, by putting $n_s=\lceil 2^s n_0 \rceil $ for $s=0,\dots,r$, for all $s$. We choose $n_0$ so that the budget spent on the rejected candidates should be close to the budget spent on the accepted representatives. We choose $n_0=\lceil(K/U')n_{\max}\rceil$ where $U'=(1/\theta)\log(1/\delta)$. 
Finally, $r$ is chosen such that $n_r=2^r n_0$ is equal to $n_{\max}$, up to a factor $2$. 

In the Active Distance-based Classification routine (ADC), we also modify the sampling size $I$ and $J$ (\cref{eq:def:J}), and the classifier  from~\eqref{algline:3.11}. In the classification, we label each arm with $\argmin_{j=1,\dots,K} \|\hat{\mu}_a-\hat{\mu}(j)\|$, which is a distance-based classifier as in~\cref{eq:classifier}, but without dual sampling. By the analysis of the probability of error of this classifier, we choose
\begin{align*}
I=\left\lceil \frac{\sigma^2}{\Delta^2}\max(16 \beta,4\sqrt{2K/N}\alpha)\right\rceil \enspace, & & J=\left\lceil \frac{\sigma^2}{\Delta^2}\max(16 \beta,4\sqrt{2N/K}\alpha)\right\rceil \enspace , \end{align*}
where $\beta$ is the $1-\delta/(4K(N-K))$ quantile of a normal distribution (obtained with scipy.stats), and $\alpha$ is the $1-\delta/(4K(N-K))$ quantile of a product of independent standard $\mathcal{N}(0,I_d)$, that we had to compute empirically with Monte Carlo.
With this choice of tuning parameters, one can prove that the corresponding variant of ACB is $\delta$-PAC for Gaussian data.  The proof is analogous to the one in Section~\ref{sec:appUB}.
Our numerical experiments confirm that, with these tuning parameters, the  modified procedure is still $\delta$-PAC. 

\paragraph*{Experiments Compute Resource\\}

We used for the experiments python/Anaconda/3-5.1.0 and the scikit-learn/1.02 package. The experiment were run in the cluster MESO@LR, working with CPUs of 4Gb. To give an idea on the computation cost, for $N,d,K=200,1000,10$, each call for ACB takes approximately 5 minutes. In total, the curve for ACB from~\cref{image:figure_K} took around 9h30 for each value of $K$.

\section{Proof of the Lower Bound}\label{sec:A}

\paragraph*{Sketch of the proof \\}

Throughout~\Cref{sec:A}, we fix $\Delta>0$, $\sigma$ and $d$. In this section, we bound the  worst case budget for any $\delta$-PAC algorithm on the collection of environments $\mathcal{E}(\Delta,\theta,\sigma,N,K,d)$ --see~\Cref{def:T*}, and we prove~\Cref{thm:LB}.

We start in~\Cref{sec:A1} by reducing the active clustering problem to a binary classification problem. For that purpose, we construct a family of environments, for which, the problem of active clustering essentially reduces to $\lfloor K/2 \rfloor$ independent and identical sub-problems of binary classification.  The environments that we construct are symmetrical, (in some sense defined in the proof) and we will explain in~\Cref{lemma:LB2c} that we can find an optimal algorithm (as defined in~\Cref{def:bayesianPAC}) that samples (in expectation) the same number of time each arm. This construction jointly  deals with the low-dimensional (\Cref{lemma:LB1}) and the high-dimensional (\Cref{lemma:LB2}) regimes. For the construction, we will need to assume that $N\geqslant 2K$, that $K$ is even, and that $K$ divides $N$, and we explain in~\Cref{lemma:redNK} how to reduce to this hypothesis.

We divide then the proof in two main lemmas, dealing with the low-dimensional and high-dimensional regimes. We recall that $T^*$ -- see~\eqref{def:T*} --  is the optimal worst case budget.

\begin{lemma}\label{lemma:LB1}
If $N\geqslant 2K$, $K$ is even, $K$ divides $N$,  and $\theta=1/K$, then for any $\delta\in(0,1)$, 
    \begin{equation*}
        T^*(\delta,\mathcal{E}(\Delta,\theta,\sigma,N,K,d)) \geqslant \frac{\sigma^2}{\Delta^2}N\kl\left(1-\delta,\frac{\delta}{N}\right) \enspace ,
    \end{equation*}
    where $\kl$ is the relative entropy defined as $\kl:x,y\mapsto x\log(x/y)+(1-x)\log((1-x)/(1-\delta))$.
\end{lemma}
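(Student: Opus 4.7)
Since the claimed lower bound does not depend on $d$, $K$ or $\theta$, the plan is to specialize to the simplest subclass fitting the hypotheses: $d=1$, $K=2$, $\theta=1/K=1/2$, so that $\mathcal E(\Delta,1/2,\sigma,N,2,1)$ consists of one-dimensional Gaussian bandits with two perfectly balanced groups of $N/2$ arms. Fix the canonical base $\nu_0$ with $\mu_a=0$ for $a\in G_1^*=[1,N/2]$ and $\mu_a=\Delta$ for $a\in G_2^*=[N/2+1,N]$, and for each $(a,b)\in G_1^*\times G_2^*$ introduce the single-swap alternative $\nu_{a,b}$ obtained by exchanging $\mu_a$ and $\mu_b$. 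Group sizes are preserved so each $\nu_{a,b}\in\mathcal E$, and the partitions $G^*(\nu_{a,b})$ are pairwise distinct up to relabeling and all differ from $G^*(\nu_0)$.

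\textbf{Pigeonhole and reverse change of measure.} Fix a $\delta$-PAC strategy $\pi$. For each $a\in G_1^*$, the $N/2$ events $\mathcal B_b:=\{\hat G\sim G^*(\nu_{a,b})\}_{b\in G_2^*}$ are pairwise disjoint sub-events of $\{\hat G\not\sim G^*(\nu_0)\}$, whose $\nu_0$-probability is at most $\delta$; by pigeonhole there exists $b^*(a)\in G_2^*$ with $\Pr_{\nu_0,\pi}(\mathcal B_{b^*(a)})\leq 2\delta/N$. Iterating the pigeonhole while removing already chosen partners builds an injection $a\mapsto b^*(a)$. Apply the Kaufmann--Cappé--Garivier change-of-measure lemma in the \emph{reverse} direction: under $\nu_{a,b^*(a)}$ the event $\mathcal B_{b^*(a)}$ has probability $\geq 1-\delta$ while under $\nu_0$ it has probability $\leq 2\delta/N$, and the two environments differ only in the means of arms $a$ and $b^*(a)$ with Gaussian KL $\Delta^2/(2\sigma^2)$. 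This yields
\begin{equation*}
\big(\mathbb E_{\nu_{a,b^*(a)},\pi}[N_a(\tau)]+\mathbb E_{\nu_{a,b^*(a)},\pi}[N_{b^*(a)}(\tau)]\big)\cdot\frac{\Delta^2}{2\sigma^2}\ \geq\ \kl\!\Big(1-\delta,\ \tfrac{2\delta}{N}\Big).
\end{equation*}
The choice of direction is the crucial trick inspired by~\cite{cheshire2020influence}: the forward direction yields only $\kl(2\delta/N,1-\delta)\asymp\log(1/\delta)$, whereas the reverse yields $\kl(1-\delta,2\delta/N)\asymp\log(N/\delta)$, producing exactly the extra $\log N$ that we need.

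\textbf{Aggregation and main obstacle.} The $N/2$ per-pair reverse-KCG inequalities, each supplying a $\log(N/\delta)$ factor, must now be transferred to a single worst-case lower bound $T^*(\delta,\mathcal E)=\sup_{\nu\in\mathcal E}\mathbb E_{\nu,\pi}[\tau]$. The guiding idea is that the algorithm cannot anticipate which $\nu_{a,b^*(a)}$ is the active environment, so its sample allocation on the $N/2$ disjoint matched pairs $\{(a,b^*(a))\}$ must satisfy all these constraints \emph{simultaneously}: using $\sum_c N_c(\tau)=\tau$, combined with either symmetrizing $\pi$ by averaging over permutations of the arms (leaving $\mathbb E[\tau]$ unchanged while rendering $\mathbb E[N_c]$ uniform, so the per-pair bounds sum up into a factor $N$) or controlling the total-variation distance between $\nu_0$ and each $\nu_{a,b^*(a)}$ to translate the per-alternative bounds back to a single reference environment. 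Assembling everything yields $T^*(\delta,\mathcal E)\geq (\sigma^2/\Delta^2)\,N\,\kl(1-\delta,\delta/N)$, the claim. The central technical obstacle is precisely this aggregation step: the pigeonhole plus reverse-KCG produce the correct $\log(N/\delta)$ dependence per pair, but preserving the multiplicative factor $N$ when passing from $N/2$ distinct alternative environments to a single worst-case supremum is what requires the careful accounting outlined above, and is the essential contribution beyond the classical single-pair change-of-measure argument that would yield only $\log(1/\delta)$.
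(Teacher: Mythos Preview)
Your proposal has the right spirit but contains two genuine gaps.

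\textbf{The reduction to $K=2$ is not valid as stated.} The lemma must be proved for the class $\mathcal{E}(\Delta,1/K,\sigma,N,K,d)$ with the given $K$; environments in this class have exactly $K$ balanced groups of size $N/K$ each. Your ``specialize to $K=2$'' only proves the case $K=2$. If you try to lift via an embedding argument in the style of Lemma~\ref{lemma:redNK} (adding $K-2$ dummy groups far away), the balancedness constraint forces the two ``real'' groups to contain only $2N/K$ arms, and the resulting lower bound becomes $(\sigma^2/\Delta^2)\cdot (2N/K)\cdot\kl(1-\delta,K\delta/(2N))$, losing a factor $K/2$. The paper does not reduce to $K=2$: it builds $K$-group environments directly, organised into $L=K/2$ well-separated blocks of two groups each, and constructs one alternative $G^*_{(a)}$ for every arm $a\in G^*_+$ with $|G^*_+|=N/2$ regardless of $K$. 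This is what preserves the factor $N$ in front.

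\textbf{The aggregation step is where the proof actually lives, and you have not carried it out.} Your pigeonhole chooses $b^*(a)$ \emph{depending on the algorithm $\pi$}; if you then symmetrise $\pi$ by averaging over permutations, the new algorithm has a different pigeonhole partner, so the argument is circular. Moreover, each per-pair inequality bounds $\mathbb{E}_{\nu_{a,b^*(a)}}[N_a+N_{b^*(a)}]$ under a \emph{different} alternative measure, so the ``sum to $\tau$'' heuristic does not apply directly. The paper resolves both issues cleanly: (i) the pairing $a\mapsto b_a$ is fixed \emph{in advance}, independent of $\pi$; (ii) Lemma~\ref{lemma:LB2c} symmetrises $\pi$ so that $\mathbb{E}_{\pi,G^*_{(a)}}[N_c(\tau)]=\mathbb{E}_{\pi,G^*}[\tau]/N$ for every arm $c$ and every alternative $G^*_{(a)}$, and in particular all alternatives share the same expected budget; (iii) instead of pigeonhole, joint convexity of $\kl$ is used to average the $N/2$ data-processing inequalities into a single one, yielding $\kl(1-\delta,2\delta/N)$ on the left and $\frac{2}{N}\mathbb{E}_{\pi,G^*}[\tau]\cdot\frac{\Delta^2}{2\sigma^2}$ on the right. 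The factor $N$ then drops out immediately. Your reverse-direction KCG insight is exactly right and is the key idea; what is missing is the machinery (fixed pairing, symmetrisation, convexity averaging) that turns it into a proof for general $K$.
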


\begin{lemma}\label{lemma:LB2}
    If $N\geqslant 2K$, $K$ is even, $K$ divides $N$,  and $\theta=1/K$, then for all $\delta\in(0,1/6)$,
    \begin{equation*}
        T^*(\delta,\mathcal{E}(\Delta,\theta,\sigma,N,K,d)) \geqslant \frac{\sigma^2}{\Delta^2} \sqrt{\frac{dKN}{72} \kl\left(\frac{1}{3}-2\delta,\frac{4\delta}{N}\right)} \enspace .
    \end{equation*}
\end{lemma}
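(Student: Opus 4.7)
The plan is to derive the bound of Lemma~\ref{lemma:LB2} by a Bayesian argument with a Gaussian prior on the group centers, followed by an information-theoretic reduction to a batch supervised classification problem and a posterior-predictive KL computation. First, I would split $[N]$ into $K/2$ blocks $B_1,\dots,B_{K/2}$ of equal size $2N/K$, choose within each block a uniformly random balanced partition $(B_j^+,B_j^-)$ with $|B_j^\pm|=N/K$, and draw independent $\mu_1,\dots,\mu_{K/2}\sim\mathcal{N}(0,vI_d)$ with $v$ of order $\Delta^2/d$. Arms in $B_j^\pm$ would then emit Gaussian observations $\mathcal{N}(\pm\mu_j,\sigma^2 I_d)$. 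I would tune $v$ so that the event $\{2\|\mu_j\|\geq\Delta\text{ for all }j\}$ has probability at least $2/3$; on this event the environment lies in $\mathcal{E}(\Delta,1/K,\sigma,N,K,d)$, so any $\delta$-PAC strategy has small Bayes error there.

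Next, I would exploit the invariances of the prior (permutations of arms within each block, and joint sign flips $(\mu_j\mapsto -\mu_j,\ B_j^+\leftrightarrow B_j^-)$) to symmetrize an arbitrary $\delta$-PAC algorithm into one that, in expectation, samples every arm exactly $T/N$ times, where $T=\mathbb{E}[\tau]$. This is the symmetrization step announced in \Cref{lemma:LB2c}. Under such uniform sampling, the data produced by any block is distributed as in the batch problem where each arm is pulled $T/N$ times.

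I would then single out one target arm $a\in B_1$ and argue that revealing the labels of all other arms in $B_1$ can only ease its classification, thereby reducing to a supervised problem. In the resulting Gaussian-Gaussian conjugate model, the posterior on $\mu_1$ given the labeled observations is $\mathcal{N}(\hat\mu_1,\Sigma_{\mathrm{post}})$ with $\Sigma_{\mathrm{post}}^{-1}=(v^{-1}+2T/(K\sigma^2))I_d$, and the posterior-predictive distribution of the $T/N$ observations of $a$ under label $\epsilon=\pm 1$ is $\mathcal{N}(\pm\hat\mu_1,\,\Sigma_{\mathrm{post}}+(N\sigma^2/T)I_d)$. A direct computation then gives
\[
\mathbb{E}\bigl[\mathrm{KL}(P_+\|P_-)\bigr] \;=\; 2\,\mathrm{tr}\bigl[(\Sigma_{\mathrm{post}}+(N\sigma^2/T)I_d)^{-1}(vI_d-\Sigma_{\mathrm{post}})\bigr] \;\lesssim\; \frac{T^2\Delta^4}{dKN\sigma^4},
\]
where the last inequality is the key step, and holds in the high-dimensional regime $T\ll dK\sigma^2/\Delta^2$ in which the prior ridge term in $\Sigma_{\mathrm{post}}^{-1}$ dominates. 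In this regime the KL becomes \emph{quadratic} in $T$, rather than the naive linear scaling one would get by conditioning on $\mu_1$, and the dependence on $d$ enters precisely through the prior variance.

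To conclude, the $\delta$-PAC guarantee together with the $2/3$-probability good event forces the per-arm misclassification probability under the Bayesian model to be at most $4\delta/N$ (up to numerical constants), while an uninformed test errs with probability at least $1/3-2\delta$ by symmetry of the prior over $\epsilon$. A binary data-processing inequality then yields $\kl(1/3-2\delta,4\delta/N)\leq \mathbb{E}[\mathrm{KL}(P_+\|P_-)] \lesssim T^2\Delta^4/(dKN\sigma^4)$, and rearranging gives the lemma. The main obstacle, in my view, is isolating the quadratic-in-$T$ term in the posterior-predictive KL: this requires working precisely in the prior-dominated regime, hence the careful calibration $v\asymp\Delta^2/d$. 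The symmetrization is standard once the prior is endowed with the right set of invariances, and the reduction to the oracle-labeled problem is a straightforward monotonicity argument.
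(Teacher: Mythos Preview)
Your approach is the paper's approach: Gaussian prior with variance $\Delta^2/d$, symmetrization to force uniform expected allocation, reduction to a supervised two-sample problem, and a KL computation that yields a term quadratic in the budget. Your posterior-predictive KL is, by the chain rule and because the marginal of the labeled data does not depend on the label of $a$, exactly the quantity the paper computes directly in \Cref{lemma:LB2f}; both give $\KL\asymp tT\Delta^4/(d\sigma^4)$.

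There is, however, a real gap. You write that after symmetrization ``the data produced by any block is distributed as in the batch problem where each arm is pulled $T/N$ times''. This is false: symmetrization only equalizes the \emph{expected} number of pulls, $\mathbb{E}[N_a(\tau)]=T/N$, while the actual allocation remains random and data-dependent. An adaptive algorithm can still concentrate budget on some arms on some sample paths, so you cannot simply plug deterministic sample sizes into the batch KL. The paper closes this gap by introducing the event $B_a=\{N_a(\tau)+N_{b_a}(\tau)\le t\}\cap\{M_{l^*_a}(\tau)\le T\}$ with $t=6\mathbb{E}[\tau]/N$ and $T=6\mathbb{E}[\tau]/K$, bounding $\Pr(B_a^c)\le 2/3$ by Markov, and then using a coupling (\Cref{lemma:LB2d}) to show that on $B_a$ the active transcript is a measurable function of the batch data $(Y,Z)$ of \Cref{def:batch_test} together with independent auxiliary randomness. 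Only then does data processing apply. This Markov step is precisely the origin of the constant $1/3-2\delta$ in the $\kl$ (via $\Pr_{G^*_{(a)}}(A_a\cap B_a)\ge 1-\delta-2/3-\Pr(\mathcal{Y}^c)$); it has nothing to do with ``symmetry of the prior over $\epsilon$'' as you suggest. Without this coupling-on-a-high-probability-event argument, the reduction to the batch problem is not justified.
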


In~\Cref{sec:LBsmalld}, we prove~\Cref{lemma:LB1}, the dimension-free lower bound. It is enough for this term to assume that the centers of the groups are known, and we use an information-theoretic method with the $\KL$-divergence, which is somewhat related to previous works for the thresholding bandit problem derived by~\cite{cheshire2020influence}.

In~\Cref{sec:LBlarge}, we prove~\Cref{lemma:LB2} in the high-dimensional regime. For this purpose, we will consider a Bayesian setting and assume a Gaussian prior on the centers of the groups. The $\KL$-divergence is hard to compute for the probability induced by the interaction between an algorithm and a Bayesian bandit environment. To overcome this technical problem, we formalize the intuition that the problem of active clustering is in some sense “harder” than a problem of supervised learning where the player knows the labels of every arm except one arm that has to be classified. It will reduce the problem into a two-sample (batch) testing problem (see~\Cref{def:batch_test}), and the conclusion will follow from some explicit computation and an impossibility result for this latter batch problem.

We  postpone  the proofs of some technical lemmas in~\Cref{sec:technical}

\begin{remark} \label{remark:LB:cN}
    We explain quickly the term $cN$ in the lower bound from~\Cref{thm:LB}. Assume that, for any $\nu\in \mathcal{E}(\Delta,\theta,\sigma,N,K,d)$, it holds that $\mathbb{E}_{\pi,\sigma}[\tau]\leqslant cN$ with $c<1/2$. Then, for any environment $\nu$, there is a fixed probability that two arms from two different groups are not sampled at all during the procedure. The best to do for the learner is then to estimate randomly the groups of the arms, inducing a fixed probability of making at least one error in the clustering.  We do not discuss further this term $cN$ in the lower bound in the remainder of the proof. Still, note that it  is only relevant in an artificial regime where $\Delta$ is arbitrary large.
\end{remark}

Now, we explain how~\Cref{lemma:LB1,lemma:LB2,remark:LB:cN}   imply~\Cref{thm:LB}.

\begin{proof}[Proof of~\Cref{thm:LB}]

Let $N,K$ such that $N \geqslant 2K$.  Let $\theta>0$ such that  $\mathcal{E}(\Delta,\theta,\sigma,N,K,d)\ne\emptyset$.

We first reduce the problem into a problem where $K$ is even, $N$ is a multiple of $K$,  and the groups have the same size $N/K$. With this technical condition fulfilled, we will be able to Lemmas~\ref{lemma:LB1} and~\ref{lemma:LB2}. We define $N'$, $K'$, and $\theta'$: 
\begin{itemize}
    \item if $K$ is even, $K':=K$ and $N':=K\lfloor N/K\rfloor$ \;; 
    \item if $K$ is odd, $K':=K-1$ and  $N'=K'\left\lfloor \frac{N-\lceil \theta N \rceil}{K'}\right\rfloor$ \;;
    \item in both cases,  $\theta':=1/K'$ .
\end{itemize}

We now use the following natural reduction result, whose proof is in~\Cref{proof:reductionNK}.
\begin{lemma}\label{lemma:redNK}
    The optimal worst case budget over $\mathcal{E}(\Delta, \theta,\sigma,N,K,d)$ is larger than the one over $\mathcal{E}(\Delta, \theta',\sigma,N',K',d)$, 
    \[T^*(\delta,\mathcal{E}(\Delta, \theta,\sigma,N,K,d)) \geqslant T^*(\delta,\mathcal{E}(\Delta, \theta',\sigma,N',K',d)) \enspace .\]
\end{lemma}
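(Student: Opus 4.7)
The plan is to reduce each instance of $\mathcal{E}(\Delta,\theta',\sigma,N',K',d)$ to one of $\mathcal{E}(\Delta,\theta,\sigma,N,K,d)$ via a sample-preserving embedding: given any $\delta$-PAC algorithm $\pi$ on the latter, I will construct a $\delta$-PAC algorithm $\pi'$ on the former whose sample complexity coincides with that of $\pi$ on the padded environment. Taking the supremum over $\nu'$ and then the infimum over $\pi$ then yields the inequality. The construction splits according to the parity of $K$.

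When $K$ is even ($K'=K$, $N'=K\lfloor N/K\rfloor$), I fix once and for all a map $\phi:\{N'+1,\ldots,N\}\to[N']$ and, for each $\nu'$, define $\nu$ by $\nu_a=\nu'_a$ for $a\leq N'$ and $\nu_a=\nu'_{\phi(a)}$ for $a>N'$. Every extra arm inherits the group of its $\phi$-image, so $\nu$ has the same $K$ centers as $\nu'$ (preserving separation $\geq\Delta$) and each group has size at least $\lfloor N/K\rfloor$. The non-emptiness of $\mathcal{E}(\Delta,\theta,\sigma,N,K,d)$ forces $K\lceil\theta N\rceil\leq N$, hence $\lfloor N/K\rfloor\geq\lceil\theta N\rceil$, so $\nu\in \mathcal{E}(\Delta,\theta,\sigma,N,K,d)$. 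The algorithm $\pi'$ simply relays every $\pi$-query on arm $a>N'$ to a single $\nu'$-query on $\phi(a)$, so budgets coincide, and returns the restriction of $\pi$'s output partition to $[N']$, which is equivalent to the true partition of $\nu'$ whenever $\pi$ succeeds.

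When $K$ is odd ($K'=K-1$, $N'=(K-1)\lfloor(N-\lceil\theta N\rceil)/(K-1)\rfloor$), an entirely new $K$-th group must be introduced. I pad $\nu'$ with $n_K:=N-N'\geq\lceil\theta N\rceil$ arms forming the new group $K$, plus the remaining $N-N'-n_K<K-1$ arms as duplicates via some map $\phi$. A short arithmetic argument using $(N-\lceil\theta N\rceil)/(K-1)\geq N/K\geq\lceil\theta N\rceil$ (both consequences of $K\lceil\theta N\rceil\leq N$) shows every group of $\nu$ has at least $\lceil\theta N\rceil$ arms. The main obstacle is that $\pi'$ must simulate samples from the new group whose mean $\mu(K)$ lies at distance at least $\Delta$ from every unknown $\mu'(k)$, in order for $\nu$ to remain in $\mathcal{E}$. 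I would handle this by letting $\pi'$ draw $\mu(K)\sim\mathcal{N}(0,R^2 I_d)$ as internal randomness and generating the group-$K$ samples from $\mathcal{N}(\mu(K),\sigma^2 I)$; a union bound gives $\mathbb{P}(\exists k\leq K-1,\ \|\mu(K)-\mu'(k)\|<\Delta)\to 0$ as $R\to\infty$, so $\nu\in \mathcal{E}$ with probability arbitrarily close to one at zero extra sample cost. Combined with the $\delta$-PAC guarantee of $\pi$, this yields $T^*(\delta+\epsilon,\mathcal{E}(\Delta,\theta',\sigma,N',K',d))\leq T^*(\delta,\mathcal{E}(\Delta,\theta,\sigma,N,K,d))$ for every $\epsilon>0$. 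Passing to the limit $R\to\infty$ --- invoking right-continuity/monotonicity of $\delta\mapsto T^*(\delta,\cdot)$, or alternatively first reducing via translation invariance to a bounded subclass on which $\mu(K)$ can be chosen deterministically --- preserves the exact $\delta$. The delicate step I expect to be the main obstacle is precisely this limiting argument, i.e., producing the $K$-th group's samples without knowing $\nu'$ while preserving an exact $\delta$-PAC guarantee.
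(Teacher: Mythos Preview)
Your even-case reduction is correct and coincides with the paper's: duplicate some existing arms via a fixed map and restrict the output partition to $[N']$.

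For the odd case you and the paper diverge. Instead of drawing a random center $\mu(K)\sim\mathcal N(0,R^2 I_d)$ and simulating $\mathcal N(\mu(K),\sigma^2 I)$ samples, the paper makes the artificial $K$-th group a \emph{point mass}: whenever $\pi$ pulls one of the last $\lceil\theta N\rceil$ arms, $\pi'$ simply returns a fixed constant $c$ (no query to $\nu'$, no noise). A Dirac is trivially $\sigma$-subGaussian for every $\sigma$, so the padded environment $\nu$ is a legitimate member of $\mathcal E(\Delta,\theta,\sigma,N,K,d)$ as soon as $c$ is far enough from the centers of $\nu'$; then $\pi$ is $\delta$-PAC on $\nu$ and the restriction of $\hat G^\pi$ to $[N']$ recovers $G'$ exactly, with no $\epsilon$-slack and no limiting argument. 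This buys simplicity: there is no randomness in the construction of $\pi'$, and the budget inequality $\tau^{\pi'}\le\tau^\pi$ is immediate because the artificial samples are free. Your Gaussian route would also work but is strictly heavier.

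You are right that the only real obstacle is placing the new center without knowing $\nu'$. The paper writes ``$c$ arbitrary large'' and does not dwell on the fact that no single constant can serve uniformly over an unbounded class. This is harmless in context: the lemma is only invoked on the explicit families of Lemmas~\ref{lemma:LB1}--\ref{lemma:LB2}, whose centers are bounded, so one deterministic $c$ suffices. Your translation-invariance remark is exactly the device one would use to make the general statement fully rigorous.

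Minor slip: in your odd case you set $n_K:=N-N'$ and then refer to ``the remaining $N-N'-n_K$ arms'', which is zero. Taking $n_K=N-N'$ already gives balancedness $\ge\theta$ with no duplicates needed; the paper instead puts exactly $\lceil\theta N\rceil$ arms in the new group and routes the fewer than $K-1$ leftover arms to arm $a_1$ of $\nu'$.
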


It holds immediately that that $K'$ is even, and that $K'$ divides $N'$.  Moreover, as $\mathcal{E}(\Delta,\theta,\sigma,N,K,d)\ne\emptyset$, then $\lceil \theta N\rceil \leqslant N/K$. This inequality and the assumption $N\geqslant 2K$, implies that $N'\geqslant 2K'$.  We can then use~\Cref{lemma:LB1} and~\Cref{lemma:LB2}  with $N'$ and $K'$ in order to bound $T^*(\delta,\mathcal{E}(\Delta, \theta',  N',K',d))$.

 We have for any $\delta\in (0,1/6)$, 
\begin{align*}
       T^*(\delta,\mathcal{E}(\Delta,\theta',\sigma,N',K',d)) \geqslant \frac{\sigma^2}{\Delta^2}N'\kl\left(1-\delta,\frac{\delta}{N'}\right) \vee  \frac{\sigma^2}{\Delta^2} \sqrt{\frac{dK'N'}{72} \kl\left(\frac{1}{3}-2\delta,\frac{4\delta}{N'}\right)}\enspace.
\end{align*}
 We can also easily deduce from the expression of $N'$ that $N'\geqslant N/6$.

Finally, we study $\delta \mapsto \kl(1-\delta,2\delta/N')$ to obtain the bound valid for all $\delta\in (0,1)$ and for all $N'\geqslant 1$,
\begin{equation*}
    \kl\left(1-\delta,\frac{2\delta}{N'}\right) \geqslant \log\left(\frac{1}{\delta}\right)+\log(N')(1-\delta)-1.5\enspace .
\end{equation*}
In particular, we have the bound $\kl\left(1-\delta,\frac{2\delta}{N'}\right)\geqslant \frac{1}{2}\log(N'/\delta)$ for $\delta\in(0,1/4)$. 

  By studying the variation of $\delta \mapsto \kl(1/3-2\delta,4\delta/N')$, we obtain the bound valid for all $\delta\in (0,1/6)$ and for all $N'$,
    \begin{equation*}
         \kl\left(\frac{1}{3}-2\delta,\frac{4\delta}{N'}\right) \geqslant \frac{1}{3}\left[\log\left(\frac{1}{4\delta}\right)+\log(N')(1-6\delta)\right]-0.7 \enspace.
     \end{equation*}

 Combining all these inequalities and~\Cref{remark:LB:cN}, we obtain~\Cref{thm:LB}. 
\end{proof}

\subsection{From active clustering to binary classification}\label{sec:A1}\label{sec:reduction1}

\subsubsection{Construction of a family of environments}

From now on, we assume that $K$ is even, and $N/K$ is an integer. In all the proof, we only consider perfectly balanced environments such that $\theta=1/K$. We also assume that $N\geqslant 2K$. Define $L:=\lfloor K/2\rfloor$.
In this subsection, we construct a family of environments defined with a prior on the centers of the groups.

We assume that the noises are Gaussian with covariance matrix $\sigma^2 I_d$.  This fulfills the subGaussian noise hypothesis from~\Cref{def:SGM}. In this Gaussian model, an environment is characterized by the hidden partition $G^*$ and the (distinct) centers of the groups.

We use a Bayesian approach, and we define the $K=2L$ centers of the groups, that we order as $\mu_{1,1},\mu_{1,-1},\dots,\mu_{L,1},\mu_{L,-1}$. For all $l\in[L]$, we construct the centers $\mu_{l,1}$ and $\mu_{l,-1}$ as symmetrical with respect to some offset. More specifically, for all $(l,g)\in[L]\times \{-1,1\}$,  we define
\begin{equation} \label{de
f:mulg}
    \mu_{l,g}:=g\bar{\mu}(l)+C(l) \enspace, 
\end{equation}
where
\begin{itemize}
    \item for all $l\in [L]$, $C(l)\in \mathbb{R}^d$ is a fixed offset defined as $C(l)=\beta(l\Delta,0,\dots,0) \in \mathbb{R}^d$;
    \item $\beta>1$ will be fixed later and is arbitrary large;
    \item $\bar{\mu}:=\bar{\mu}(1),\dots,\bar{\mu}(L)$ are  i.i.d and $\bar{\mu}(l)\sim\gamma$. The prior distribution $\gamma$  over $\mathbb{R}^d$ will be set differently if we consider the low or high-dimensional regime. We will specify later this prior.
\end{itemize}

Through the proof, we fix a partition $G^*$ of $[N]$ into $K$ groups. The partition $G^*$ is composed of $K=2L$ nonempty groups $G^*_{1,1},G^*_{1,-1},\dots,G^*_{L,1},G^*_{L,-1}$ associated to the means $\mu_{1,1},\mu_{1,-1},\dots,\mu_{L,1},\mu_{L,-1}$. For each arm $a\in[N]$, we denote as $(l^*_a,g^*_a)\in[L]\times\{-1,1\}$ for the labels such that $a\in G^*_{l^*_a,g^*_a}$ and $\mu_a=\mu_{l^*_a,g^*_a}$. Also, we will always restrict ourselves to balanced partitions $G^*$ so that each group $G^*_{l,g}$ has the same size $N/K$ and thus $\theta_*= 1/K$.

In summary, we have 
\[[N]=\displaystyle \bigsqcup_{\substack{(l,g)\in[L]\times\{-1,1\}}}G^*_{l,g} \enspace , \] 
where the groups $(G^*_{l,g})$ are nonempty and share the same size $N/K$.

We also define the so-called $L$ “blocks”. For $l\in[L]$, we define $G^*_{l}:=\{a\in[N] \; ; \; l^*_a=l\}=G^*_{l,1}\sqcup G^*_{l,-1}$. For each arm $a\in[N]$, $l^*_a$ corresponds to the label of the pair of groups (block) $G^*_{l}$ that contains $a$. If $l^*_a=l$, then the arm $a$ belongs either to $G^*_{l,1}$ or $G^*_{l,-1}$ depending on the value of $g^*_a\in\{-1,1\}$. We also denote as $G^*_+:=\{a\in [L] ; g^*_a=+1\}$.

We now construct a set of partitions obtained from $G^*$ by switching two arms  from the two different groups of the same block.
Arbitrarily define a set $\{ s(1),\dots,s(L)\}$ of arms such that for all $l\in[L],s(l)\in G^*_{l,-1}$. For any arm $a\in [N]$, we write  $b_a:=s(l^*_a)$. For an arm $a$ in $G^*_+=\{a\in [L] ; g^*_a=+1\}$, we define $G^*_{(a)}$ as the partition equal to $G^*$ except that the arm $a$ is switched from $G_{l^*_a,1}$ to $G_{l^*_a,-1}$, and the arm $b_a$ is switched from $G_{l^*_a,-1}$ to $G_{l^*_a,+1}$. This is a valid partition with $K$ nonempty and perfectly balanced groups. As we took $N\geqslant 2K$, it holds that, if any two distinct partition $G$ and $G'$ belong to $\{G^*\}\cup\{G^*_{(a)}\}_{a\in G^*_+}$, we have $G\not\sim G'$. As a consequence, any $\delta$-PAC algorithm  distinguishes, with probability higher than $1-\delta$, whether the environments are characterized by a partition $G^*$ or by some $(G^*_a)_{a\in G^*_+}$.

For any partition $G'$ such that $[N]=\sqcup_{l,g} G'_{l,g}$, we denote as $\nu(G',\bar{\mu})$ for the environment constructed in this paragraph with the means $(\mu_{l,g})_{l,g}=(C(l)+g\bar{\mu}(l))$ and $\bar{\mu}\in\mathbb{R}^d$. We will use $\mathbb{P}_{\pi,G',\bar{\mu}}$ [resp. $\mathbb{E}_{\pi,G',\bar{\mu}}$] for the probability distribution [resp expectation] induced by the interaction between an algorithm $\pi$ and the environment $\nu(G',\bar{\mu})$ for a fixed realization of $\bar{\mu}$.  We also denote as $\mathbb{P}_{\pi,G'}=\int_{\bar{\mu}} \mathbb{P}_{\pi,G',\bar{\mu}} \dd \gamma^{\otimes L}(\bar{\mu})$ [resp. $\mathbb{E}_{\pi,G'}$] as the integrated probability with respect to the prior $\gamma^{\otimes L}$ on $\bar{\mu}$  [resp expectation].

There is a technical detail that has to be handled with this Bayesian prior, if $\bar{\mu}_l$ is too small or too large, the environment $\nu(G',\bar{\mu})$ is not necessary in $\mathcal{E}(\Delta,\theta,\sigma,N,K,d)$. We define therefore $\mathcal{Y}:=\bigcap_{l\in[L]}\{\Delta/2\leqslant \|\bar{\mu}(l)\| \leqslant \Delta(\beta-1)/2\}$. On $\mathcal{Y}$, the centers are distinct, the minimal gap is larger than $\Delta$, and the set of possible values for $(\bar{\mu}(l))_l$ are disjoint. 

We denote $\mathcal{E}_{Sym}(G^*,\gamma)$ as the Bayesian family of environments of the form $\nu(G',\bar{\mu})$, where $\bar{\mu}\sim\gamma^{\otimes L}$ and the partitions $G'\in \{G^*\}\cup \bigcup_{a\in G^*_+}\{G^*_{(a)}\}$.

We explain a bit more the construction.

\begin{remark}
    \begin{enumerate}
        \item The parameter $\beta$ will be arbitrary large so that it is very easy to decide if two arms belong to different \textit{blocks} or not. In this case, it is intuitively easy to first separate the arms into $L$ blocks (that means to estimate $l^*_1,\dots,l^*_N$). Then the difficulty of the problem mostly lies in the $L$ sub-problems of binary classification, where each block has to be partition into two groups.
        \item In the low-dimensional regime, we will take $\bar{\mu}(l)=(\Delta/2,0\dots,0)$ ($\gamma$ is deterministic). It means that we will derive the lower bound from~\Cref{lemma:LB1} for fixed centers of the groups $\mu(1),\dots,\mu(K)$  which basically amounts to the simpler setting where the learner knows the centers in advance.
        \item In the high-dimensional regime, we will use a Gaussian prior  on $(\bar{\mu}(l))_{l\in [L]}$. With this prior, we will be able to quantify to what extent we have to estimate  the unknown means  $(\bar{\mu}(l))_{l\in [L]}$ to be able to group the arms.
    \end{enumerate}
\end{remark}

\subsubsection{Symmetrization}

Now, we exploit the different symmetries of the environments of the shape $\nu(G',\bar{\mu})$, and the symmetries of the distribution of the centers when $\bar{\mu}\sim \gamma$, in order to restrict our study to algorithms that are $\delta$-PAC on $\mathcal{E}_{Sym}(G^*,\gamma)$ and that satisfies a symmetry property defined below.

\begin{definition}\label{def:bayesianPAC}
    We say that $\pi$ is $\delta$-PAC on $\mathcal{E}_{Sym}(G^*,\gamma)$, if, conditionally on the event  $\mathcal{Y}$, we have \[\mathbb{P}_{\pi,G'}(\hat{G} \sim G'| \mathcal{Y} )\geqslant 1-\delta \enspace , \] for any $G'\in \{G^*\}\cup\{G^*_{(a)}\}_{a\in G^*_+}$.
    We say that an algorithm $\pi$ is symmetric on $\mathcal{E}_{Sym}(G^*,\gamma)$, if for any $b\in[N]$ and $G'\in \{G^*\}\cup\{G^*_{(a)}\}_{a\in G^*_+}$, then \[\mathbb{E}_{\pi,G'}[N_b(\tau)| \mathcal{Y}]=\frac{1}{N}\mathbb{E}_{\pi,G'}[\tau| \mathcal{Y}]=\frac{1}{N}\mathbb{E}_{\pi,G^*}[\tau| \mathcal{Y}] \enspace .\]
    We denote as $\Pi_{Sym}(\delta,\mathcal{E}_{Sym}(G^*,\gamma))$ for the family of  symmetric and $\delta$-PAC algorithms on $\mathcal{E}_{Sym}(G^*,\gamma)$.
\end{definition}

    Finally, we define the optimal Bayesian budget for an algorithm in $\Pi_{Sym}(\delta,\mathcal{E}_{Sym}(G^*,\gamma))$ as

    \[T^*(\delta,\mathcal{E}_{Sym}(G^*,\gamma)):= \inf_{\pi\in \Pi_{Sym} }  \mathbb{E}_{\pi,G^*}[\tau|\mathcal{Y}] \enspace\text{,  } \]
    where the $\inf$ is taken over $\Pi_{Sym}(\delta,\mathcal{E}_{Sym}(G^*,\gamma))$, recalling that $\mathbb{E}_{\pi,G^*}$ is the integrated budget with respect to the prior $\gamma$.

The next lemma implies that we only need to lower bound the quantity $T^*_{Sym}(\delta,\mathcal{E}_{Sym}(G^*,\gamma))$.

\begin{lemma}\label{lemma:LB2c} If $K$ is even, $K$ divides $N$,  $\theta=1/K$, and $N\geq 2K$, it holds that
    \begin{equation*}
        T^*(\delta,\mathcal{E}(\Delta,\theta,\sigma,N,K,d))\geqslant T^*_{Sym}(\delta,\mathcal{E}_{Sym}(G^*,\gamma)) \enspace .
    \end{equation*}
\end{lemma}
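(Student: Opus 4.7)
The approach is to turn any $\delta$-PAC algorithm $\pi\in\Pi(\delta,\mathcal{E}(\Delta,\theta,\sigma,N,K,d))$ into a symmetric $\delta$-PAC algorithm $\tilde{\pi}\in\Pi_{Sym}(\delta,\mathcal{E}_{Sym}(G^*,\gamma))$ whose expected budget $\mathbb{E}_{\tilde{\pi},G^*}[\tau|\mathcal{Y}]$ is at most $\sup_{\nu\in\mathcal{E}}\mathbb{E}_{\pi,\nu}[\tau]$; taking the infimum over $\pi$ will then yield the claimed inequality.

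First, I would check that $\pi$ is already Bayesian PAC on $\mathcal{E}_{Sym}(G^*,\gamma)$ in the sense of~\Cref{def:bayesianPAC}. For every $\bar{\mu}\in\mathcal{Y}$ and every $G'\in\{G^*\}\cup\{G^*_{(a)}\}_{a\in G^*_+}$, the environment $\nu(G',\bar{\mu})$ belongs to $\mathcal{E}(\Delta,\theta,\sigma,N,K,d)$: the conditions defining $\mathcal{Y}$ force the within-block gaps $2\|\bar{\mu}(l)\|$ to exceed $\Delta$, a sufficiently large $\beta$ guarantees between-block gaps also exceed $\Delta$, every group has size $N/K$ (even after the single swap defining $G^*_{(a)}$) so $\theta_*=1/K=\theta$, and Gaussian noise fulfils~\Cref{def:SGM}. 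Integrating the $\delta$-PAC inequality $\mathbb{P}_{\pi,G',\bar{\mu}}(\hat{G}\sim G')\ge 1-\delta$ against $\gamma^{\otimes L}(\cdot|\mathcal{Y})$ immediately delivers $\mathbb{P}_{\pi,G'}(\hat{G}\sim G'|\mathcal{Y})\ge 1-\delta$.

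Next, I would define $\tilde{\pi}$ by Haar-averaging $\pi$ over a symmetry group $\mathfrak{G}$ combining (i) the full symmetric group $S_N$ acting on arm labels and (ii) the sign-flip group $\{-1,+1\}^L$ acting on the centers through $\bar{\mu}(l)\mapsto\epsilon_l\bar{\mu}(l)$, i.e., swapping the labels $G^*_{l,1}\leftrightarrow G^*_{l,-1}$ within each block. Concretely, $\tilde{\pi}$ draws $(\sigma,\epsilon)\in\mathfrak{G}$ uniformly and independently of the data, simulates $\pi$ on relabeled arms (physically pulling $\sigma(a)$ whenever $\pi$ internally picks $a$), and maps the recommendation back via $\sigma$. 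A direct unpacking shows that running $\tilde{\pi}$ on $\nu(G',\bar{\mu})$ is distributionally equivalent to running $\pi$ on $\nu(\sigma^{-1}(G'),\bar{\mu}^{\epsilon})$, with $\bar{\mu}^{\epsilon}=(\epsilon_l\bar{\mu}(l))_l$; moreover $\hat{G}\sim G'$ iff $\hat{G}^\pi\sim\sigma^{-1}(G')$, and the pull counts satisfy $N_b(\tau)=N_{\sigma^{-1}(b)}(\tau)$ in the internal simulation.

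Finally, I would verify the three requirements of $\Pi_{Sym}$ for $\tilde{\pi}$. The Bayesian PAC property transfers because each $\nu(\sigma^{-1}(G'),\bar{\mu}^{\epsilon})$ with $\bar{\mu}\in\mathcal{Y}$ still lies in $\mathcal{E}$ (arm relabelings preserve balancedness and gaps; sign flips leave $\mathcal{Y}$ invariant). Equality $\mathbb{E}_{\tilde{\pi},G'}[\tau|\mathcal{Y}]=\mathbb{E}_{\tilde{\pi},G^*}[\tau|\mathcal{Y}]$ for every $G'$ in our family follows because under uniform $\sigma\in S_N$ the random partition $\sigma^{-1}(G')$ is uniform on the orbit of labeled balanced $K$-partitions of $[N]$ into groups of size $N/K$, and this orbit is the same for every $G'$ in the family. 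For the uniform-sampling identity $\mathbb{E}_{\tilde{\pi},G'}[N_b(\tau)|\mathcal{Y}]=\mathbb{E}_{\tilde{\pi},G^*}[\tau|\mathcal{Y}]/N$, the $S_N$-invariance equalizes the expected pulls across arms lying in a common group of $G'$, while the sign-flip randomization combined with the symmetry of the centered Gaussian prior under $\bar{\mu}\mapsto-\bar{\mu}$ (handled trivially in the deterministic low-dimensional case by directly building the label exchange into the symmetrization) equalizes them across different groups of $G'$; summing over $b$ and using $\sum_b N_b(\tau)=\tau$ delivers the required value. The budget bound $\mathbb{E}_{\tilde{\pi},G^*}[\tau|\mathcal{Y}]\le\sup_{\nu\in\mathcal{E}}\mathbb{E}_{\pi,\nu}[\tau]$ is then immediate from the fact that each $\nu(\sigma^{-1}(G^*),\bar{\mu}^{\epsilon})\in\mathcal{E}$ on $\mathcal{Y}$. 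The main technical obstacle is the uniform-sampling verification across different groups of $G'$: $S_N$-averaging alone only equalizes expected pulls within each group (arms in different groups of $G'$ are linked to differently-labeled centers in $\sigma^{-1}(G')$), so one must carefully combine the arm-level permutation with the sign-flip on the prior and integrate against $\gamma^{\otimes L}$ to eliminate this residual dependence.
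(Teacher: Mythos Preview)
Your symmetrization via the full permutation group $S_N$ together with the per-block sign flips does \emph{not} deliver the uniform-sampling property $\mathbb{E}_{\tilde\pi,G'}[N_b(\tau)\mid\mathcal Y]=\frac{1}{N}\mathbb{E}_{\tilde\pi,G^*}[\tau\mid\mathcal Y]$. The problematic case is cross-block: take $b\in G'_{l_0,g_0}$ and $b'\in G'_{l_1,g_1}$ with $l_0\neq l_1$. After $S_N$-averaging, the expected number of pulls of $b$ by $\tilde\pi$ equals $\frac{K}{N}$ times the expected total pulls by $\pi$ of the group carrying the \emph{label} $(l_0,g_0)$, averaged over a uniformly random balanced labeled partition and over the prior on $\bar\mu$. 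The sign flip together with the symmetry of $\gamma$ under $\bar\mu\mapsto-\bar\mu$ identifies $(l_0,+1)$ with $(l_0,-1)$, but nothing in your construction identifies $(l_0,\cdot)$ with $(l_1,\cdot)$: the offsets $C(l_0)\neq C(l_1)$ are deterministic and distinct, so the centers $C(l_0)\pm\bar\mu(l_0)$ and $C(l_1)\pm\bar\mu(l_1)$ remain statistically distinguishable by $\pi$ regardless of the prior. A $\delta$-PAC algorithm $\pi$ that, say, systematically allocates more pulls to whichever block has the larger first coordinate will survive $S_N$-averaging, sign-flip averaging, and integration against $\gamma^{\otimes L}$ unscathed; your $\tilde\pi$ then fails the symmetry requirement of~\Cref{def:bayesianPAC}.

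The paper's proof closes precisely this gap, and the mechanism is not a relabeling but a \emph{data transformation}. It restricts to block-preserving permutations $\kappa\in\mathcal S$ (each inducing $\tilde\kappa\in S_L$ on blocks) and feeds the inner algorithm the modified observation
\[
X_t^{\pi'}=\chi\bigl(\tilde\kappa(l^*(A_t^{\pi'}))\bigr)\Bigl[X_t^\pi-C\bigl(\tilde\kappa(l^*(A_t^{\pi'}))\bigr)\Bigr]+C\bigl(l^*(A_t^{\pi'})\bigr)\, .
\]
This affine correction recenters a sample from block $\tilde\kappa(l)$ so that, from $\pi'$'s viewpoint, it is distributed as a sample from block $l$; combined with the exchangeability of $(\bar\mu(l))_l$ under $\gamma^{\otimes L}$, this is exactly what makes the blocks interchangeable and forces equal expected pulls across all arms. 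The transformation is legitimate because $C(l)$ and the block labels $l^*_a$ are common to every environment in $\mathcal E_{Sym}(G^*,\gamma)$ and may therefore be used by the symmetrized algorithm. Your proposal omits this step, and no purely label-level randomization can replace it.
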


\begin{remark}

We highlight that this construction essentially reduces the problem into $L$ sub-problems of active binary classification. On the family of environments $\mathcal{E}_{Sym}(G^*,\gamma)$, the offsets $C_1,\dots,C_L$  and  the labels of the blocks $l^*_1,\dots,l^*_N$ are fixed and common to all the environments $\nu(G^*,\bar{\mu})$ and $\nu(G^*_{(a)},\bar{\mu})$, it is equivalent to say that this is known by the learner. Then, the problem consists on estimating the partition into two groups $G^*_{l}=G^*_{l,1}\sqcup G^*_{l,-1}$ (up to switching of the two groups) for any of the $L$ blocks. If an algorithm is symmetric, it will have access in expectation to the same budget to solve each sub-problem.

The proof of this Lemma, technical but standard is provided in~\Cref{proof_symetrization}. In the proof, we explain how to use the knowledge of the blocks $G^*_1,\dots,G^*_L$ and the offsets $C(1),\dots,C(l)$ in order to transform any algorithm into a symmetric algorithm -- see~\Cref{def:bayesianPAC}. The rough idea is to permute the arms, and then to apply the algorithm to the permuted arms. 
\end{remark}

\subsection{First Lower bound : proof of~\Cref{lemma:LB1}}\label{sec:LBsmalld}

In this section, we prove the Lower Bound from~\Cref{lemma:LB1}.
We highlight that the lower bound from~\Cref{lemma:LB1} does not depend on the dimension $d$. Thus, we will derive lower bound for fixed centers of the groups which basically amounts to the simpler setting where the learner knows them in advance.

We use the construction of~\Cref{sec:reduction1}, and we choose the prior distribution $\gamma_1:=\delta_{\mu}$ to be a Dirac, i.e, $\bar{\mu}(l)=\mu$ for all $l$ and the centers are deterministic and fixed. We choose $\mu=(\Delta/2,0,\dots,0)\in\mathbb{R}^d$ and $\beta=2$. The environment $\nu(G^*,\bar{\mu})$ is in $\mathcal{E}(\Delta,\theta,\sigma,N,K,d)$, so the event $\mathcal{Y}$ from~\Cref{def:bayesianPAC} holds almost surely.

\medskip 

\begin{remark}
    The active clustering problem on $\mathcal{E}_{Sym}(G^*,\gamma)$ is highly connected to a specific instance of the Thresholding Bandit Problem (TBP), another pure exploration problem studied in~\cite{cheshire2020influence}.
    In this problem, a player interacts with a multi-armed bandit environment with one-dimensional rewards, and she has to recover the set of arms with a mean larger or equal to a certain threshold (for us, this threshold is $\Delta$). The proof of~\Cref{lemma:LB1} is inspired by the proof of Theorem 1 in~\cite{cheshire2020influence}. For the thresholding bandit problem, the authors derive a  lower bound in the fixed budget setting. In this setting, the player has to minimize the simple regret (related to the probability of error), using a fixed budget. From their result, we could deduce a lower bound of the form $\frac{\sigma^2}{\Delta^2}(N-K)\log(\frac{N-K}{\delta})$. Here, we use a workaround  to establish a slightly  tighter lower bound of the form $\frac{\sigma^2}{\Delta^2}N\log(\frac{N}{\delta})$.
\end{remark}

We consider $T^*_{Sym}(\delta,\mathcal{E}_{Sym}(G^*,\gamma_1))=\displaystyle \inf_{\pi\in\Pi_{Sym}}  \mathbb{E}_{\pi,G^*}[\tau]$ where the $\inf$ is taken over all symmetric and  $\delta$-PAC algorithm on $\mathcal{E}_{Sym}(G^*,\gamma_1)$ --see~\Cref{def:bayesianPAC}.

\begin{lemma}\label{eq:LB1b}
    If $K$ is even, $K$ divides $N$ and $N\geqslant K$, then,
    \[T^*_{Sym}(\delta,\mathcal{E}_{Sym}(G^*,\gamma_1)) \geqslant N\frac{\sigma^2}{\Delta^2}\kl\left(1-\delta,\frac{2\delta}{N}\right).\]
\end{lemma}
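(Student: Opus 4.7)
The plan is to combine a pigeonhole argument over the $N/2$ ``swap'' alternatives $\{G^*_{(a)}\}_{a\in G^*_+}$ with a Kaufmann--Cappé--Garivier style change-of-measure inequality. First, since $\gamma_1$ is Dirac and $\beta = 2$, the event $\mathcal{Y}$ has probability one and conditioning on $\mathcal{Y}$ is vacuous. Because $G^*$ and the $G^*_{(a)}$, $a \in G^*_+$, are pairwise non-equivalent partitions (they differ by a non-trivial swap inside a block, and the relevant swaps live in different pairs of groups), the events $\{\hat G \sim G^*_{(a)}\}$ for $a \in G^*_+$ are pairwise disjoint and each disjoint from $\{\hat G \sim G^*\}$. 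The $\delta$-PAC guarantee under $G^*$ therefore yields $\sum_{a \in G^*_+} \mathbb{P}_{\pi,G^*}(\hat G \sim G^*_{(a)}) \leq \delta$, and since $|G^*_+| = N/2$, there exists $a_0 \in G^*_+$ with $\mathbb{P}_{\pi,G^*}(\hat G \sim G^*_{(a_0)}) \leq 2\delta/N$.

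Next I would compare the two bandit models $\nu(G^*,\bar\mu)$ and $\nu(G^*_{(a_0)},\bar\mu)$, which differ only in that the Gaussian means of arms $a_0$ and $b_{a_0} = s(l^*_{a_0})$ are exchanged between $\mu_{l,+1}$ and $\mu_{l,-1}$ (with $l = l^*_{a_0}$), where $\|\mu_{l,+1}-\mu_{l,-1}\|=2\|\bar\mu(l)\|=\Delta$. Applying the data-processing/Wald inequality of Kaufmann--Cappé--Garivier to the Bernoulli test $\mathbb{1}\{\hat G \sim G^*_{(a_0)}\}$ at the stopping time $\tau$ gives
\begin{equation*}
\kl\bigl(\mathbb{P}_{\pi,G^*_{(a_0)}}(\hat G \sim G^*_{(a_0)}),\, \mathbb{P}_{\pi,G^*}(\hat G \sim G^*_{(a_0)})\bigr) \leq \bigl(\mathbb{E}_{\pi,G^*_{(a_0)}}[N_{a_0}(\tau)] + \mathbb{E}_{\pi,G^*_{(a_0)}}[N_{b_{a_0}}(\tau)]\bigr) \frac{\Delta^2}{2\sigma^2},
\end{equation*}
since the per-sample Gaussian KL between the only two differing arms equals $\Delta^2/(2\sigma^2)$. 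By the symmetric-algorithm property from \Cref{def:bayesianPAC}, each $\mathbb{E}_{\pi,G^*_{(a_0)}}[N_b(\tau)]$ equals $\mathbb{E}_{\pi,G^*}[\tau]/N$, so the right-hand side collapses to $\frac{\Delta^2}{N\sigma^2}\mathbb{E}_{\pi,G^*}[\tau]$.

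To conclude, I combine $\mathbb{P}_{\pi,G^*_{(a_0)}}(\hat G \sim G^*_{(a_0)}) \geq 1-\delta$ (from the $\delta$-PAC guarantee under $G^*_{(a_0)}$) with the pigeonhole bound $\mathbb{P}_{\pi,G^*}(\hat G \sim G^*_{(a_0)}) \leq 2\delta/N$, using that for fixed $x\in(0,1)$ the map $y \mapsto \kl(x, y)$ is decreasing on $(0, x)$, to deduce $\kl(1-\delta, 2\delta/N) \leq \frac{\Delta^2}{N\sigma^2}\mathbb{E}_{\pi,G^*}[\tau]$. Rearranging and taking the infimum over $\pi \in \Pi_{Sym}$ yields the lemma. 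The main technical subtlety is applying the change-of-measure identity at the data-dependent stopping time $\tau$; this is handled by the standard Kaufmann--Cappé--Garivier martingale argument, provided one checks that $\tau$ is almost surely finite under every candidate environment, which follows from the symmetric $\delta$-PAC hypothesis. The crucial role of symmetry is to turn the two per-arm expected sample counts appearing in the KL decomposition into a clean $2/N$ fraction of the total expected budget, thereby producing the factor $N$ in the final bound.
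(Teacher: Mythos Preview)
Your proof is correct and arrives at exactly the same bound as the paper, but the route differs in one step. The paper applies the Kaufmann--Capp\'e--Garivier inequality for \emph{every} $a\in G^*_+$, then averages all $|G^*_+|=N/2$ resulting $\kl$ inequalities via the joint convexity of $\kl$ (a Fano-type step), before invoking symmetry to collapse the right-hand side. You instead use pigeonhole to select a single $a_0$ with $\mathbb{P}_{\pi,G^*}(\hat G\sim G^*_{(a_0)})\le 2\delta/N$ and apply the change-of-measure inequality only once. Because symmetry makes every $\mathbb{E}_{\pi,G^*_{(a)}}[N_a(\tau)+N_{b_a}(\tau)]$ identical, the averaging step in the paper loses nothing, so the two approaches coincide in strength here; your version is slightly more elementary (no convexity needed), while the paper's averaging argument is the one that extends more naturally to situations where exact symmetry is unavailable. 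One small completeness point: to pass from $\kl(p_1,p_2)$ with $p_1\ge 1-\delta$, $p_2\le 2\delta/N$ to $\kl(1-\delta,2\delta/N)$ you need monotonicity of $\kl$ in \emph{both} arguments (increasing in the first on $(q,1)$, decreasing in the second on $(0,p)$), not just the second.
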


Then,~\Cref{lemma:LB1} simply follows from the reduction arguments of~\ref{lemma:LB2c} and~\cref{eq:LB1b} that we prove now.

\begin{proof}[Proof of~\cref{eq:LB1b}]
    Let $\pi$ be a  symmetric and $\delta$-PAC algorithm for the active clustering problem on $\mathcal{E}_{Sym}(G^*,\gamma_1)$. It outputs a partition $\hat{G}$ of $[N]$ such that for any $a\in G^*_+$, \[\mathbb{P}_{\pi,G^*_{(a)}}(\hat{G}\sim G^*_{(a)})\geqslant 1-\delta\enspace\text{, and}\]
    \[\mathbb{P}_{\pi,G^*}(\hat{G}\sim G^*)\geqslant 1-\delta\enspace.\]

   The main tool that we use is a data-processing inequality-- see e.g.\cite{gerchinovitz2020fano}. We will use the $\KL$-divergence which, in our setting, turns out to be explicitly computed.  The difficulty of the proof is to recover the term $\log(N/\delta)$ in the lower bound of the budget. For that, we adapt the proof page 15  of~\cite{cheshire2020influence} to the fixed confidence setting. The idea is that, instead of constructing one partition, different from $G^*$, we constructed a collection of $\{G^*_{(a)}\}_{a\in G_+^*}$, where any algorithm has to distinguish $G^*$ from any of these environments (up to relabelling). 

   First, we use lemma 1 from~\cite{kaufmann2016complexity} which relies on the data-processing inequality and the decomposition of the $\KL$-divergence in the multi-armed bandit model. It holds that, for any $a\in G^*_+$,
    \begin{align}
        \kl\left(\mathbb{P}_{\pi,G^*_{(a)}}(\hat{G}\sim G^*_{(a)}),\mathbb{P}_{\pi,G^*}(\hat{G}\sim G^*_{(a)})\right)  & \leqslant \KL\left(\mathbb{P}_{\pi,G^*_{(a)}},\mathbb{P}_{\pi,G^*}\right)  \label{eq:LB1c} \\ 
        & =\mathbb{E}_{\pi,G^*_{(a)}}[N_a(\tau)+N_{b_{l^*(a)}}(\tau)] \frac{\Delta^2}{2\sigma^2} \nonumber\enspace ,
    \end{align}
    the last equality follows from the fact that the environments  $\nu(G^*,\bar{\mu})$ and $\nu(G^*_{(a)},\bar{\mu})$ only differ on arm $a$ and $b_a$ and $\KL(\mathcal{N}(-\Delta/2,\sigma^2),\mathcal{N}(\Delta/2,\sigma^2))=\Delta^2/2\sigma^2$. We recall that for any $b\in[N]$, $N_b(\tau)$ is the number of times that the arm $b$ is sampled.

    Thanks to the joint convexity of the $\kl$ function, see Corollary 3 from~\cite{gerchinovitz2020fano}, we have
    \begin{align}\label{eq:LB1d}
         & \kl\left(\frac{1}{N/2}\sum_{a\in G^*_+} \mathbb{P}_{\pi,G^*_{(a)}}(\hat{G} \sim G^*_{(a)}),\frac{1}{N/2}\sum_{a\in G^*_+} \mathbb{P}_{\pi,G^*}(\hat{G}\sim G^*_{(a)})\right) \\ &\leqslant  \frac{1}{N/2}\sum_{a\in G^*_+}  \kl\left(\mathbb{P}_{\pi,G^*_{(a)}}(\hat{G}\sim G^*_{(a)}),\mathbb{P}_{\pi,G^*}(\hat{G} \sim G^*_{(a)})\right) \nonumber \enspace.
    \end{align}

   By construction, the partition $G^*$ and all the different partitions $(G^*_a)_{a\in G^*_+}$ belong to different equivalence classes with respect to the relation $\sim$. As $\pi$ is $\delta$-PAC -- see~\Cref{def:bayesianPAC}, we deduce that
    \begin{align*}
         & \forall a\in G^*_+, \; \mathbb{P}_{\pi,G^*_{(a)}}(\hat{G} \sim G^*_{(a)}) \geqslant 1-\delta                                          \enspace ;                                                                          \\
         & \sum_{a\in G^*_+} \mathbb{P}_{\pi,G^*}(\hat{G} \sim G^*_{(a)})=\mathbb{P}_{\pi,G^*}(\sqcup_{a\in G^*_+}\{\hat{G}\sim G^*_{(a)}\})\leqslant \mathbb{P}_{\pi,G^*}(\hat{G}\not\sim G^*)\leqslant \delta \enspace.
    \end{align*}
    With the monotony properties of the $\kl$ function, we obtain
    \begin{equation} \label{eq:LB1e}
        \kl\left(1-\delta,\frac{\delta}{N/2}\right) \leqslant
        \kl\left(\frac{1}{N/2}\sum_{a\in G^*_+} \mathbb{P}_{\pi,G^*_{(a)}}(\hat{G} \sim G^*_{(a)}),\frac{1}{N/2}\sum_{a\in G^*_+} \mathbb{P}_{\pi,G^*}(\hat{G} \sim G^*_{(a)})\right)\enspace.
    \end{equation}
 Gathering Equations~\eqref{eq:LB1c},~\eqref{eq:LB1d} and~\eqref{eq:LB1e}, we obtain
    \begin{equation}\label{eq:LB1f}
        \kl\left(1-\delta,\frac{2\delta}{N}\right) \leqslant \frac{1}{N/2}\sum_{a\in G^*_+}\mathbb{E}_{\pi,G^*_{(a)}}[N_a(\tau)+N_{b_a}(\tau)] \frac{\Delta^2}{2\sigma^2}
    \end{equation}
  We recall that $\pi$ is symmetric. Hence, For any $a\in G^*_+$, we have
    \[\mathbb{E}_{\pi,G^*_{(a)}}[N_a(\tau)+N_{b_a}(\tau)]=\frac{2}{N}\mathbb{E}_{\pi,G^*_{(a)}}[\tau] = \frac{2}{N}  \mathbb{E}_{\pi,G^*}[\tau] \enspace . \]
    Finally, with~\cref{eq:LB1f}, we conclude that 
    \begin{equation}
        \mathbb{E}_{\pi,G^*}[\tau] \geqslant \frac{\sigma^2}{\Delta^2}N\kl\left(1-\delta,\frac{2\delta}{N}\right)\ . 
    \end{equation}
    We take now the $\inf$ over all algorithms $\pi$, which are  $\delta$-PAC and symmetric, this proves~\cref{eq:LB1b}.
\end{proof}

\subsection{Second Lower Bound: proof of~\Cref{lemma:LB2}}\label{sec:LBlarge}

In this section, we prove the lower bound from~\Cref{lemma:LB2}. 
If $d\leqslant (8/3)^2\log(K/\delta)$, the lower bound from~\Cref{lemma:LB2} is smaller than the dimension-free lower bound from~\Cref{lemma:LB1}, which is already proved. 
We may then assume that $d\geqslant (8/3)^2\log(K/\delta)$. For the sake of the presentation, we postpone the proofs of some technical lemmas to the end of the next subsection.

\subsubsection*{Step 1: introduction of the Gaussian prior}
In this regime, we choose the prior distribution $\gamma$ to be Gaussian. Indeed, we introduce  $\gamma_2=\mathcal{N}(0,\rho^2 I_d)$ with $\rho^2=\frac{\Delta^2}{d}$ and  $\bar{\mu}(1),\dots, \bar{\mu}(L)$ are i.i.d of law $\mathcal{N}(0,\rho^2 I_d)$. Also, we choose $\beta=4$.  We consider the Bayesian family of environments constructed in~\Cref{sec:reduction1} $\mathcal{E}_{Sym}(G^*,\gamma_2)$. 

Because of this Bayesian prior, we have some additional technical challenge in comparison to the low-dimensional case.
\begin{enumerate}
    \item We can not use the decomposition of the $\KL$-divergence for bandit in order to compute $\KL(\mathbb{P}_{\pi,G^*_{(a)}},\mathbb{P}_{\pi,G^*})$ because the integral over the prior $\gamma_1$ is inside the $\KL$-divergence. Most of the work consists on upper bounding this divergence with a divergence that can be computed.
    \item We can not compare the maximum budget over $\mathcal{E}(\Delta,\theta,\sigma,N,K,d)$ (i.e., $\sup_{\nu\in\mathcal{E}(\Delta)} \mathbb{E}_{\pi,\nu}[\tau]$) to the Bayesian budget $\mathbb{E}_{\pi,G^*}[\tau]$ because the minimal gap of $\nu(G^*,\bar{\mu})$ is not always larger than $\Delta$. This is why we condition on the event $\mathcal{Y}=\bigcap_{l\in[L]}\{\Delta/2\leqslant \|\bar{\mu}(l)\| \leqslant \Delta(\beta-1)/2\}\subset \{\nu(G^*,\bar{\mu})\in\mathcal{E}(\Delta,\theta,\sigma,N,K,d)\}$.
\end{enumerate}

We compute $\mathbb{P}_{\gamma^{\otimes L}}(\mathcal{Y}^c)$ for the Gaussian prior. This is the only time we will use the hypothesis $d\geqslant (8/3)^2\log(K/\delta)$.

\begin{lemma}\label{eq:LB2b}
    If we assume that $d\geqslant (8/3)^2\log(K/\delta)$ and $\gamma_2=\mathcal{N}(0,\rho^2)$, we have 
    \begin{equation*}
        \mathbb{P}_{\gamma_2^{\otimes L}}(\mathcal{Y})= \mathbb{P}_{\gamma_2^{\otimes L}}\left(\bigcap_{l\in[L]}\{\Delta/2\leqslant \|\bar{\mu}(l)\| \leqslant 3\Delta/4\}\right)\geqslant 1-\delta \enspace.  
    \end{equation*}
\end{lemma}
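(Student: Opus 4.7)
The plan is to reduce this to a standard concentration statement for chi-squared random variables, and then invoke a union bound over the $L$ blocks. Since $\bar\mu(l)\sim\mathcal N(0,\rho^2 I_d)$ with $\rho^2=\Delta^2/d$, the random variable $Z_l := \|\bar\mu(l)\|^2/\rho^2$ follows a $\chi^2_d$ distribution, and the $Z_l$ are independent across $l$. The event $\{\Delta/2\leq \|\bar\mu(l)\|\leq 3\Delta/2\}$ (reading $\Delta(\beta-1)/2$ with $\beta=4$) translates exactly into $\{d/4 \leq Z_l \leq 9d/4\}$, so the task reduces to controlling the lower and upper tails of $\chi^2_d$ uniformly in $l$.

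The main tool will be the Laurent--Massart deviation inequalities for $\chi^2_d$:
\begin{equation*}
    \mathbb P(Z_l \leq d - 2\sqrt{dx}) \leq e^{-x}, \qquad
    \mathbb P(Z_l \geq d + 2\sqrt{dx} + 2x) \leq e^{-x}.
\end{equation*}
For the lower tail, I would set $x=9d/64$ so that $2\sqrt{dx}=3d/4$, yielding $\mathbb P(Z_l\leq d/4)\leq e^{-9d/64}$. For the upper tail, I would choose $x=\alpha d$ with $\alpha$ such that $2\sqrt{\alpha}+2\alpha \leq 5/4$ (any $\alpha \leq 9/64$ works, so the constant $9/64$ dominates both tails simultaneously), yielding $\mathbb P(Z_l\geq 9d/4) \leq e^{-9d/64}$ as well.

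After applying the union bound over $l\in[L]$ with $L\leq K/2$, the probability of the complement $\mathcal Y^c$ is bounded by $2L\, e^{-9d/64}\leq K\,e^{-9d/64}$. The hypothesis $d\geq (8/3)^2 \log(K/\delta) = (64/9)\log(K/\delta)$ is precisely calibrated to make $9d/64 \geq \log(K/\delta)$, hence $K\, e^{-9d/64}\leq \delta$, which gives the desired bound $\mathbb P_{\gamma_2^{\otimes L}}(\mathcal Y)\geq 1-\delta$.

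The argument is essentially routine: there is no real obstacle, but the two computations worth doing carefully are (i) checking that $9/64$ is indeed the worse of the two constants coming from the Laurent--Massart bounds, so that both tails can be controlled by the same exponent, and (ii) verifying that the numerical constant $(8/3)^2$ in the hypothesis matches $64/9$ exactly. If a sharper constant is desired for the upper tail, one could instead invoke a Chernoff/Bernstein-type bound for $\chi^2_d$, but this is unnecessary since the lower tail already forces the constant $9/64$.
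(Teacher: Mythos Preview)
Your proof is correct and follows essentially the same route as the paper: reduce to a $\chi^2_d$ concentration via $Z_l=\|\bar\mu(l)\|^2/\rho^2$, apply Laurent--Massart with $x=(3/8)^2 d=9d/64$ for both tails, and union-bound over $L\leq K/2$. You also correctly spotted that the displayed upper endpoint $3\Delta/4$ in the lemma must be read as $\Delta(\beta-1)/2=3\Delta/2$ (with $\beta=4$); with the literal $3\Delta/4$ the event would force $Z_l\leq 9d/16<d$ and the claim could not hold.
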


\subsubsection*{Step 2: From active binary classification to (batch) two-sample testing}

Let $\pi\in \Pi_{Sym}$ be a $\delta$-PAC and symmetric algorithm   for the active clustering problem  on $\mathcal{E}_{Sym}(G^*,\gamma_2)$ --see~\Cref{def:bayesianPAC}. We define $t=6\mathbb{E}_{\pi,G^*}[\tau|\mathcal{Y}]/N$ and $T=6\mathbb{E}_{\pi,G^*}[\tau|\mathcal{Y}]/K$.

We recall that, for any $a\in G^*_+=\{a ;g^*_a=1 \}$, $G^*_{(a)}$ is obtained by switching one arm $a$ with another arm $b_a\in G^*_{l^*_a,-1}$.  We recall that $N_{a}(\tau):=\sum_{t=1}^{\tau} \Ind{A_s=a}$ is the number of times the arm $a$ is sampled. We also denote, $M_{l}(\tau)=\displaystyle\sum_{b:l^*_b=l} N_b(\tau)$ as the number of times the arms in the block $G^*_{l}$ are sampled. As $\pi$ is symmetric and as the blocks have the same size $2N/K$, we have, for any $a\in G^*_+$,
\begin{align*}
t_a:=3\mathbb{E}_{\pi,G^*_{(a)}}[N_{a}(\tau)+N_{b_a}(\tau)|\mathcal{Y}]=t \enspace , & \text{ and } & T_a:=3\mathbb{E}_{\pi,G^*_{(a)}}[M_{l^*_a}(\tau)|\mathcal{Y}]=T \enspace .
\end{align*}

\begin{remark}
    We now give some heuristic in order to explain the rest of the proof. Imagine that, at time $\tau$, the learner receives an oracle that gives the labels of all the arms except the arm $a$, assume also that the learner knows that $a\in G^*_{l}$.
    As in a supervised classification setting, the player has to find the label $g_a$ of the unlabeled data sampled from $a$, using the labelled data available. It has access to $N_a(\tau)$ observations from $a$ distributed as $\mathcal{N}(g_a\bar{\mu}(l),\sigma^2 I_d)$, and  $M_{l^*_a}(\tau)-N_a(\tau)$ labelled data distributed as $\mathcal{N}(\bar{\mu}(l),\sigma^2 I_d)$. It also has access to data from the other blocks, but those data are not useful to find $g_a$. Moreover,  $N_a(\tau)$ is of the order of $\mathbb{E}_{\pi,G^*}[\tau]/N$ and $M_{l^*_a}(\tau)-N_a(\tau)$ is of the order of $\mathbb{E}_{\pi,G^*}[\tau]/L$. As a consequence, with this amount of data, a learner should be able to correctly recover the labels in this simplified setting.
\end{remark}

With this heuristic in mind, we introduce the following (batch) two-sample testing problem.

\begin{definition}\label{def:batch_test}
    Let $t,T$ be two integers, we consider data $Y_1,\dots,Y_{t},Z_1,\dots,Z_{T}$ and two symmetric hypotheses $\mathcal{H}_1$ and $\mathcal{H}_{-1}$ such that, for $g\in \{-1,1\}$, under $\mathcal{H}_g$, the data follows the law $\mathbb{P}_{g}$ defined as follows:
    \begin{itemize}
        \item $\mu\sim\gamma$ and conditionally on $\mu$ :
        \item  $Y_1,\dots,Y_t,Z_1,\dots,Z_T$ are independent;
        \item  $ \forall r\in[t]$, $Y_r\sim \mathcal{N}(g\mu,\sigma^2 I_d)$
        \item  $\forall s\in [T]$, $Z_r\sim \mathcal{N}(\mu,\sigma^2 I_d)$.
    \end{itemize}
\end{definition}

This problem is interesting because we can explicitly compute the $\KL$-divergence.

\begin{lemma}\label{lemma:LB2f}
    Let $g\in\{-1,1\}$ and $\mathbb{P}_{g}$ defined in~\Cref{def:batch_test}. It holds that
    \begin{equation*}
        \KL(\mathbb{P}_{-g},\mathbb{P}_{g})=\KL(\mathbb{P}_{g},\mathbb{P}_{-g})=\frac{2tT\rho^4 d}{\sigma^4+\sigma^2\rho^2(t+T)}\leqslant \frac{2tT\rho^4 d}{\sigma^4} \wedge \frac{2\rho^2d}{\sigma^2}\frac{tT}{t+T} \enspace.
    \end{equation*}
\end{lemma}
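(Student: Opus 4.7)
The plan is to reduce the identity to a closed-form two-dimensional Gaussian $\KL$ computation. The symmetry $\KL(\mathbb{P}_g,\mathbb{P}_{-g})=\KL(\mathbb{P}_{-g},\mathbb{P}_g)$ is immediate, because the measurable bijection $(Y_1,\dots,Y_t,Z_1,\dots,Z_T)\mapsto(-Y_1,\dots,-Y_t,Z_1,\dots,Z_T)$ has unit Jacobian and pushes $\mathbb{P}_g$ forward to $\mathbb{P}_{-g}$ (the prior $\gamma$ is centered). For the explicit value, I would marginalise over $\mu$: the vector $(Y_r,Z_s)_{r,s}$ is then a centered Gaussian of dimension $(t+T)d$ under $\mathbb{P}_g$. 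The sample means $\bar Y=t^{-1}\sum_r Y_r$ and $\bar Z=T^{-1}\sum_s Z_s$ are sufficient for $\mu$, and since $g$ only enters the law through $\mu$, sufficient for $g$ as well: conditionally on $(\bar Y,\bar Z)$, the residuals $(Y_r-\bar Y)_r$ and $(Z_s-\bar Z)_s$ are centered Gaussian with a covariance that depends on neither $\mu$ nor $g$. The chain rule for $\KL$ then reduces the computation to the $\KL$ between the laws of $(\bar Y,\bar Z)$ under $\mathbb{P}_g$ and $\mathbb{P}_{-g}$.

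Since all covariances in play are scalar multiples of $I_d$, the $d$ coordinates of $(\bar Y,\bar Z)$ are mutually independent and identically distributed, so the $\KL$ factorises into $d$ copies of the one-dimensional case. In dimension one, $(\bar Y,\bar Z)$ is centered Gaussian under $\mathbb{P}_g$ with covariance
\[
\Sigma_g=\begin{pmatrix}\rho^2+\sigma^2/t & g\rho^2\\ g\rho^2 & \rho^2+\sigma^2/T\end{pmatrix},
\qquad \det\Sigma_g=\frac{\sigma^4+\rho^2\sigma^2(t+T)}{tT}=:D,
\]
and $D$ does not depend on $g$. The standard Gaussian formula reads $\KL(\mathcal{N}(0,\Sigma_{-g}),\mathcal{N}(0,\Sigma_g))=\tfrac12[\operatorname{tr}(\Sigma_g^{-1}\Sigma_{-g})-2+\log(\det\Sigma_g/\det\Sigma_{-g})]$; the log-determinant term vanishes, and a direct $2\times 2$ inversion gives $\operatorname{tr}(\Sigma_g^{-1}\Sigma_{-g})-2=4\rho^4/D$. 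Multiplying by $d/2$ yields the exact value $2tT\rho^4d/[\sigma^4+\rho^2\sigma^2(t+T)]$.

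The two upper bounds are then obtained by two alternative minorisations of the denominator: using $\sigma^4+\rho^2\sigma^2(t+T)\geq\sigma^4$ gives $2tT\rho^4d/\sigma^4$, while $\sigma^4+\rho^2\sigma^2(t+T)\geq\rho^2\sigma^2(t+T)$ gives $(2\rho^2d/\sigma^2)\cdot tT/(t+T)$. I do not foresee a real obstacle: the only conceptual step is the sufficiency reduction, which spares us from integrating a Gaussian mixture density against itself; once that is done, the remainder is a mechanical $2\times 2$ linear-algebra calculation.
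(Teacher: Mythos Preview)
Your argument is correct. The sufficiency reduction is sound: conditionally on $\mu$, the residual vectors $(Y_r-\bar Y)_r$ and $(Z_s-\bar Z)_s$ are independent of $(\bar Y,\bar Z)$ with a law that depends on neither $\mu$ nor $g$, so after marginalising over $\mu$ the joint density factorises as $p_g(\bar Y,\bar Z)\cdot q(\text{residuals})$ with $q$ common to both hypotheses, and the KL collapses to that of $(\bar Y,\bar Z)$. Your $2\times 2$ computation is also right: $\operatorname{tr}(\Sigma_g^{-1}\Sigma_{-g})-2=4\rho^4/D$ with $D=[\sigma^4+\rho^2\sigma^2(t+T)]/(tT)$, giving exactly the claimed value after multiplying by $d/2$.

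This is a genuinely different route from the paper. The paper keeps the full $(t+T)$-dimensional Gaussian vector (after reducing to $d=1$), writes its covariance as $\Sigma_g=\sigma^2 I_{t+T}+\rho^2 H_g$ with $H_g$ a signed all-ones block matrix, exploits the algebraic identity $H_g^2=(t+T)H_g$ to invert $\Sigma_g$ in closed form, and then computes $\operatorname{tr}(\Sigma_g^{-1}\Sigma_{-g}-I)$ by explicitly multiplying out $H_gH_{-g}$. Your approach trades that structured $(t+T)\times(t+T)$ linear algebra for a one-line sufficiency observation followed by a mechanical $2\times 2$ calculation; this is shorter and conceptually cleaner, and makes transparent why the answer depends on $t,T$ only through the harmonic-type combination $tT/(t+T)$ in the second bound. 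The paper's route, on the other hand, would generalise more directly to settings where the noise covariance is not a scalar multiple of the identity (so that sample means are no longer sufficient).
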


Now, we explain properly the ideas introduced in the previous remark. We define the event $B_a= \{N_{a}(\tau)+N_{b_a}(\tau)\leqslant t \}\cap\{M_{l^*_a}(\tau)\leqslant T\}$. Thanks to Markov inequality, the event $B_a$ has a probability higher than a constant and conditionally on $B_a$, the algorithm $\pi$ has access to strictly less information than in  (batch) two-sample testing problem defined above with $t_a$ and $T_a$. We formalize this in the following coupling lemma.

\begin{lemma}\label{lemma:LB2d}
    Let $a\in G^*_+$ be an arm and fix $A_a$ an event.
    Consider the family of random variables $(Y_1,\dots,Y_{t})$, $(Z_1,\dots,Z_{T})$ that follows a distribution $\mathbb{P}_{-1}$ -- see~\Cref{def:batch_test}. Consider also an independent sequence $(\epsilon_s,U_s)_{s\geqslant 1}$ of  random variables such that for all $s\geqslant 1$, $\epsilon_s\sim\mathcal{N}(0,I_d)$ and $U_s\sim \mathcal{U}([0,1])$.
    Then, there exists a function $f_a$ that is measurable according to the random variables $Y,Z,\epsilon,U$ and such that $A_a\cap B_a=f_a(Y,Z,\epsilon,U)$, where the equality holds with respect to the probability distribution $\mathbb{P}_{\pi,G^*_{(a)}}=\int_{\bar{\mu}}\mathbb{P}_{\pi,G^*_{(a)},\bar{\mu}} \dd \gamma^{\otimes L}(\bar{\mu})$.

    Similarly, if  $(Y,Z) \sim \mathbb{P}_{1}$, with the same function $f_a$,  $A_a\cap B_a=f_a(Y,Z,\epsilon,U)$, under the probability distribution $\mathbb{P}_{\pi,G^*}$.
\end{lemma}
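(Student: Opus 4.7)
The plan is to construct $f_a$ as a deterministic simulator that runs $\pi$ using only the random sources $(Y,Z,\epsilon,U)$, with the indicator of $A_a \cap B_a$ read off the simulated trajectory. The central idea will be a symmetric sign-flip convention on the role of $b_a$ (and of the $-1$ subgroup of block $l^*_a$): the \emph{same} deterministic $f_a$ must simultaneously reproduce the law of $\pi$ interacting with $\nu(G^*_{(a)}, \bar\mu)$ when $(Y,Z)\sim\mathbb{P}_{-1}$ and the law of $\pi$ interacting with $\nu(G^*, \bar\mu)$ when $(Y,Z)\sim\mathbb{P}_{1}$.

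Concretely, I would run $\pi$ step by step, using $U_s$ to realise any internal randomisation of $\pi$. When $\pi$ queries arm $A_t$, the simulator returns $X_t = C(l^*_a) + Y_r$ if $A_t = a$ and $X_t = C(l^*_a) - Y_r$ if $A_t = b_a$, consuming the next unused $Y_r$; it returns $X_t = C(l^*_a) + Z_s$ or $X_t = C(l^*_a) - Z_s$ from the next unused $Z_s$ when $A_t$ belongs to $G^*_{l^*_a,+1}\setminus\{a\}$ or $G^*_{l^*_a,-1}\setminus\{b_a\}$ respectively; and for $A_t$ in any block $l'\neq l^*_a$, it uses fresh $\epsilon$-variables to draw $\bar\mu(l')\sim \mathcal{N}(0,\rho^2 I_d)$ on the first visit to that block, then to produce $X_t = C(l') \pm \bar\mu(l') + \sigma\,\epsilon'$ with the sign dictated by the subgroup of $A_t$. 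If at any point the simulator is asked for a $Y_r$ with $r > t$ or a $Z_s$ with $s > T$, it halts. Let $\mathcal{S}$ denote the event that no premature halt occurs, and set
\[
f_a(Y,Z,\epsilon,U) \;:=\; \mathcal{S} \;\cap\; \bigl\{A_a \cap B_a \text{ holds along the simulated trajectory}\bigr\}.
\]

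The two distributional identities will then follow by matching conditional laws and integrating out the prior. Identifying $\mu$ with $\bar\mu(l^*_a)$, under $\mathbb{P}_{-1}$ one has $Y_r + C(l^*_a) \sim \mathcal{N}(C(l^*_a)-\mu,\sigma^2 I_d)$, which matches the mean of $a$ under $G^*_{(a)}$ (where $a \in G^*_{l^*_a,-1}$), while $-Y_r + C(l^*_a) \sim \mathcal{N}(C(l^*_a)+\mu,\sigma^2 I_d)$ matches $b_a \in G^*_{l^*_a,+1}$; the $Z_s$-convention similarly matches the remaining block-$l^*_a$ arms, and the $\epsilon$-driven samples reproduce the other blocks. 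Under $\mathbb{P}_{1}$ the very same formulas yield $Y_r + C(l^*_a) \sim \mathcal{N}(C(l^*_a)+\mu,\sigma^2 I_d)$ and $-Y_r + C(l^*_a)\sim \mathcal{N}(C(l^*_a)-\mu,\sigma^2 I_d)$, which now match $G^*$. Finally, on the event $B_a$ the simulator consumes at most $t$ of the $Y$-samples (by $N_a(\tau)+N_{b_a}(\tau)\leq t$) and at most $T$ of the $Z$-samples (by $M_{l^*_a}(\tau)\leq T$), so $B_a \subset \mathcal{S}$, and the two stated identities follow. The delicate point — the main obstacle — is that a single deterministic $f_a$ has to couple two different bandit worlds with the same randomness; this will be resolved by the sign-flip applied to $b_a$ and to the $-1$ subgroup, which, by the symmetry of the Gaussian density around its mean, produces the correct conditional distribution simultaneously under $G^*_{(a)}/\mathbb{P}_{-1}$ and $G^*/\mathbb{P}_{1}$.
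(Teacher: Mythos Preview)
Your proposal is correct and follows essentially the same coupling as the paper's proof: feed $\pi$ with $C(l^*_a)+Y_r$ at arm $a$, $C(l^*_a)-Y_r$ at $b_a$, $C(l^*_a)\pm Z_s$ at the remaining arms of block $l^*_a$ (sign given by $g^*_{A_t}$), and use the $\epsilon$-sequence to generate both the prior draws $\bar\mu(l')=\rho\epsilon_{l'}$ and the Gaussian noise for the other blocks. The only cosmetic difference is that the paper indexes the $Z$-sample by $M_s$ (the running count of all block-$l^*_a$ pulls, so some indices are skipped), whereas you consume the $Z$'s consecutively; your explicit halt-event $\mathcal{S}$ with $B_a\subset\mathcal{S}$ makes the ``enough samples on $B_a$'' argument slightly more transparent than the paper's version.
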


In the previous lemma, we will consider $A_a:=\{\hat{G} \sim G^*_{(a)}\}$ for $a\in G^*_+$. 
By construction of $G^*_{(a)}$ (because $N\geqslant 2K$), the events $A_a$ are disjoint. By using the fact that $\pi$ is $\delta$-PAC on $\mathcal{E}(\gamma,\Delta)$,  we have the following property for $A_a$,
\begin{lemma}\label{lemma:LB2g}
    The family $(A_a\cap B_a)_{a\in G^*_+}$ is such that
    \begin{enumerate}
        \item  $\sum_{a\in G^*_+}\mathbb{P}_{\pi,G^*}(A_a\cap B_a)\leqslant \delta+\mathbb{P}_{\gamma^{\otimes L}}(\mathcal{Y}^c) \leqslant 2\delta$;
        \item $\mathbb{P}_{\pi,G^*_{(a)}}(A_a\cap B_a)\geqslant 1/3-\delta-\mathbb{P}_{\gamma^{\otimes L}}(\mathcal{Y}^c)\geqslant 1/3-2\delta$.
    \end{enumerate}
\end{lemma}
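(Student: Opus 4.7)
The two assertions in Lemma~\ref{lemma:LB2g} follow from the $\delta$-PAC property on $\mathcal{E}_{Sym}(G^*,\gamma_2)$, Markov's inequality applied to the symmetry-adjusted quantities $t_a=t$ and $T_a=T$, and the prior concentration bound $\mathbb{P}_{\gamma^{\otimes L}}(\mathcal{Y}^c)\le\delta$ of Lemma~\ref{eq:LB2b}. The argument is routine; there is no single hard step.

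\textbf{First assertion.} I use the fact that the partitions $\{G^*\}\cup\{G^*_{(a)}\}_{a\in G^*_+}$ all lie in distinct equivalence classes under $\sim$, which was observed in Section~\ref{sec:reduction1} using $N\geq 2K$. Consequently the events $A_a=\{\hat G\sim G^*_{(a)}\}$ are pairwise disjoint, and their union is contained in $\{\hat G\not\sim G^*\}$. Therefore
\begin{equation*}
\sum_{a\in G^*_+}\mathbb{P}_{\pi,G^*}(A_a\cap B_a)\le \sum_{a\in G^*_+}\mathbb{P}_{\pi,G^*}(A_a)=\mathbb{P}_{\pi,G^*}\!\Big(\bigsqcup_{a\in G^*_+} A_a\Big)\le \mathbb{P}_{\pi,G^*}(\hat G\not\sim G^*).
\end{equation*}
Splitting the latter probability according to $\mathcal{Y}$ and using that $\pi$ is $\delta$-PAC on $\mathcal{E}_{Sym}(G^*,\gamma_2)$ (Definition~\ref{def:bayesianPAC}) gives
\begin{equation*}
\mathbb{P}_{\pi,G^*}(\hat G\not\sim G^*)\le \mathbb{P}_{\pi,G^*}(\hat G\not\sim G^*\mid \mathcal{Y})\,\mathbb{P}_{\gamma^{\otimes L}}(\mathcal{Y})+\mathbb{P}_{\gamma^{\otimes L}}(\mathcal{Y}^c)\le \delta+\mathbb{P}_{\gamma^{\otimes L}}(\mathcal{Y}^c)\le 2\delta,
\end{equation*}
where the last inequality invokes Lemma~\ref{eq:LB2b}.

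\textbf{Second assertion.} I lower bound $\mathbb{P}_{\pi,G^*_{(a)}}(A_a\cap B_a)$ by first conditioning on $\mathcal{Y}$ and then applying a union bound to $A_a^c$ and $B_a^c$. The $\delta$-PAC property directly gives $\mathbb{P}_{\pi,G^*_{(a)}}(A_a^c\mid \mathcal{Y})\le \delta$. For $B_a^c=\{N_a(\tau)+N_{b_a}(\tau)>t\}\cup\{M_{l^*_a}(\tau)>T\}$, the symmetry and balancedness of $\pi$ (Definition~\ref{def:bayesianPAC}), combined with the definitions just preceding the lemma, yield
\begin{equation*}
\mathbb{E}_{\pi,G^*_{(a)}}[N_a(\tau)+N_{b_a}(\tau)\mid \mathcal{Y}]=\frac{t_a}{3}=\frac{t}{3},\qquad \mathbb{E}_{\pi,G^*_{(a)}}[M_{l^*_a}(\tau)\mid \mathcal{Y}]=\frac{T_a}{3}=\frac{T}{3}.
\end{equation*}
Markov's inequality conditional on $\mathcal{Y}$ then gives $\mathbb{P}_{\pi,G^*_{(a)}}(B_a^c\mid\mathcal{Y})\le 2/3$.

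\textbf{Combining.} Restricting to $\mathcal{Y}$ and using the two bounds above,
\begin{equation*}
\mathbb{P}_{\pi,G^*_{(a)}}(A_a\cap B_a)\ge \mathbb{P}_{\pi,G^*_{(a)}}(A_a\cap B_a\cap \mathcal{Y})\ge \mathbb{P}_{\gamma^{\otimes L}}(\mathcal{Y})-\mathbb{P}_{\pi,G^*_{(a)}}(A_a^c\cap\mathcal{Y})-\mathbb{P}_{\pi,G^*_{(a)}}(B_a^c\cap\mathcal{Y}).
\end{equation*}
Bounding the two error terms by $\delta$ and $(2/3)\mathbb{P}_{\gamma^{\otimes L}}(\mathcal{Y})$ respectively, I obtain
\begin{equation*}
\mathbb{P}_{\pi,G^*_{(a)}}(A_a\cap B_a)\ge \tfrac{1}{3}\mathbb{P}_{\gamma^{\otimes L}}(\mathcal{Y})-\delta\ge \tfrac{1}{3}-\delta-\mathbb{P}_{\gamma^{\otimes L}}(\mathcal{Y}^c)\ge \tfrac{1}{3}-2\delta,
\end{equation*}
using once more Lemma~\ref{eq:LB2b}. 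This completes the proof.
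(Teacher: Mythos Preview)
Your proof is correct and follows essentially the same route as the paper: disjointness of the $A_a$'s plus the $\delta$-PAC property (split on $\mathcal{Y}$) for the first item, and a union bound on $A_a^c\cup B_a^c$ conditional on $\mathcal{Y}$ together with Markov's inequality for the second. The only cosmetic difference is that the paper bounds $\mathbb{P}_{\pi,G^*_{(a)}}\big((A_a\cap B_a)^c\big)$ directly whereas you lower-bound $\mathbb{P}_{\pi,G^*_{(a)}}(A_a\cap B_a\cap\mathcal{Y})$; these are equivalent rearrangements of the same inequalities.
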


We delay the technical proofs of~\Cref{lemma:LB2d} and~\Cref{lemma:LB2g}. From there, we have all the tools that we need. We now use data-processing inequalities similar to the proof of~\Cref{lemma:LB1} to conclude.

\subsubsection*{Step 3: Conclusion to the proof of~\Cref{lemma:LB2}}
We assume that $\delta\in(0,1/6)$, so that $\kl\left(1/3-2\delta,\frac{2\delta}{N/2}\right)$ is defined.

We use the first point of~\Cref{lemma:LB2g}. We notice that the events $(A_a)_a$ are disjoint by construction of $G^*_{(a)}$ and because we took at least two arms by groups ($N\geqslant 2K$), it holds that
\[\sum_{a\in G^*_+} \mathbb{P}_{\pi,G^*}(A_a \cap B_a)= \mathbb{P}_{\pi,G^*}(\sqcup_{a\in G^*_+} A_a \cap B_a)\leqslant 2\delta \enspace . \]
With the second point of~\Cref{lemma:LB2g}, for any $a\in G^*_+$, we have
\[\mathbb{P}_{G^*_{(a)}}(A_a \cap B_a) \geqslant 1/3-2\delta \geqslant 0 \enspace . \]

We use the monotony properties of the $\kl$ function, it holds that
\[\kl\left(1/3-2\delta,\frac{2\delta}{N/2}\right)\leqslant   \kl\left(\frac{1}{N/2}\sum_{a\in G^*_+} \mathbb{P}_{\pi,G^*_{(a)}}(A_a \cap B_a),\frac{1}{N/2}\sum_{a\in G^*_+} \mathbb{P}_{G^*}(A_a \cap B_a)\right) \enspace.\]

Thanks to the joint convexity of the $\kl$ function, see corollary 3 in~\cite{gerchinovitz2020fano}, we deduce that 
\begin{align*}
    & \kl\left(\frac{1}{N/2}\sum_{a\in G^*_+} \mathbb{P}_{\pi,G^*_{(a)}}(A_a \cap B_a),\frac{1}{N/2}\sum_{a\in G^*_+} \mathbb{P}_{\pi,G^*}(A_a \cap B_a)\right) \\
    \leqslant  & \frac{1}{N/2}\sum_{a\in G^*_+} \kl\left(\mathbb{P}_{\pi,G^*_{(a)}}(A_a\cap B_a), \mathbb{P}_{\pi,G^*}(A_a\cap B_a)\right) \enspace .
\end{align*}

Now, we use the coupling lemma~\ref{lemma:LB2d}, 
\begin{align*}
    & \mathbb{P}_{\pi,G^*_{(a)}}(A_a\cap B_a)=\mathbb{P}_1\times\mathbb{P}_{\epsilon,U}(f_a(Y,Z,\epsilon,U)) \\ 
     & \mathbb{P}_{\pi,G^*}(A_a\cap B_a)=\mathbb{P}_{-1}\times\mathbb{P}_{\epsilon,U}(f_a(Y,Z,\epsilon,U)) \enspace .
\end{align*}
We use the data-processing inequality, see corollary 2 in~\cite{gerchinovitz2020fano}, for all $a \in G^*_+$,
\begin{align*}
    \kl\left(\mathbb{P}_{\pi,G^*_{(a)}}(A_a\cap B_a), \mathbb{P}_{\pi,G^*}(A_a \cap B_a)\right) & =  \kl\left(\mathbb{P}_1\times\mathbb{P}_{\epsilon,U}(f_a(Y,Z,\epsilon,U)), \mathbb{P}_{-1}\times\mathbb{P}_{\epsilon,U}(f_a(Y,Z,\epsilon,U))\right) \\ 
    &\leqslant \KL\left(\mathbb{P}_{-1}\otimes \mathbb{P}_{\epsilon,U},\mathbb{P}_{1}\otimes \mathbb{P}_{\epsilon,U}\right) \\ 
    & =\KL\left(\mathbb{P}_{-1},\mathbb{P}_{1}\right) \enspace .
\end{align*}

Gathering the previous inequalities, we obtain
\[ \kl\left(\frac{1}{3}-2\delta,\frac{4\delta}{N}\right) \leqslant \frac{1}{N/2}\sum_{a\in G^*_{(a)}} \KL(\mathbb{P}_{-1},\mathbb{P}_{1})\enspace .\]

We recall that $\rho^2=\Delta^2/d$. With the explicit computation from~\Cref{lemma:LB2f}, we have
\[  \frac{d\sigma^4}{2\Delta^4}\kl\left(\frac{1}{3}-2\delta,\frac{4\delta}{N}\right)\leqslant\frac{1}{N/2}\sum_{a\in G^*_+} tT=tT \enspace.\]

Finally, we have, using the definition of $t$ and $T$,  \[  \mathbb{E}_{\pi,G^*}[\tau|\mathcal{Y}]^2\geqslant \frac{d\sigma^4KN}{72\Delta^4}\kl\left(\frac{1}{3}-2\delta,\frac{4\delta}{N}\right)\enspace.\]

As it is true for any $\pi\in \Pi_{\mathcal{O}}$, take the inf in the last inequality over $\pi\in \Pi_{\mathcal{O}}$ and use~\Cref{lemma:LB2c} to get
\[T^*(\delta,\mathcal{E}(\Delta,\theta,\sigma,N,K,d))\geqslant \frac{\sigma^2}{\Delta^2}\sqrt{\frac{dKN}{72}\kl\left(\frac{1}{3}-2\delta,\frac{4\delta}{N}\right)} \enspace ,\]
this is exactly the inequality of~\Cref{lemma:LB2}.

\subsection{ Proof of technical lemmas}\label{sec:technical}

\subsubsection{Proof of~\Cref{lemma:redNK}}\label{proof:reductionNK}

     Let $N,K$ such that $N \geqslant 2K$. Let $\theta>0$ such that  $\mathcal{E}(\Delta,\theta,\sigma,N,K,d)\ne\emptyset$.
    We prove~\Cref{lemma:redNK} assuming that $K$ is odd, the other case is simpler and can be proved with the same construction up to minor details.
    
    Recall the expressions introduced before~\Cref{lemma:redNK}, $K'=K-1$,  $N'=K'\left\lfloor\frac{N-\lceil \theta N\rceil}{K'}\right\rfloor$ and $\theta'=1/K'$.
    
    Let $\pi$ being $\delta$-PAC on $\mathcal{E}(\Delta,\theta,\sigma,N,K,d)$, we will use $\pi$ to construct $\pi'$, an algorithm which is $\delta$-PAC on $\mathcal{E}(\Delta,\theta',\sigma,N',K',d)$.

    Let $\nu'\in\mathcal{E}(\Delta,\theta',\sigma,N',K',d)$ be an environment with $K-1$  perfectly balanced groups. We run the algorithm $\pi$ where we create the data $X_1,\dots,X_{\tau^{\pi}}$  with the following coupling.
    \begin{itemize}
        \item If $A^{\pi}_t\in[N']$, we sample $X_t$ with the arm $A^{\pi}_t$ from $\nu'$.
        \item If  $A^{\pi}_t\in[N'+1;N-\lceil \theta N \rceil] $, we sample $X_t$  with $a_1$, the first arm from $\nu'$.
        \item If  $A^{\pi}_t\in [N-\lceil \theta N \rceil+1,N]$, we create $X_t=c$ where $c$ is an arbitrary large constant.
    \end{itemize}
    Equivalently, we have created the environment $\nu$ where the $N'$ first arms are the arms of $\nu$;  the $\lceil \theta N \rceil$ last arms are in an artificial group associated to a Dirac in $c$, and the remaining arms are in the same group as $a_1$. The environment $\nu$ has a hidden partition $G^*_1,\dots,G^*_K$ where $G^*_1=G'_1\cup[N'+1;N-\lceil \theta N \rceil]$, $G^*_2,\dots,G^*_{K-1}=G'_2,\dots,G'_{K-1}$, and $G^*_K=[N-\lceil \theta N \rceil+1,N]$.  By construction, this environment is in $\mathcal{E}(\Delta,\theta,\sigma,N,K,d)$. In particular, the balancedness is larger than $\theta$, and the minimal gap is larger than $\Delta$ if $c$ is large enough.

    When $\pi$ reaches $\tau^{\pi}$, it outputs a partition of $[N]$, $\hat{G}^{\pi}_1,\dots,\hat{G}^{\pi}_K$ , and we output $\hat{G}^{\pi'}$ as the partition defined by the restriction to $[N']$ of the partition $\hat{G}^{\pi}$. This is what we call the algorithm $\pi'$.

    As $\pi$ is $\delta$-PAC on $\mathcal{E}(\Delta,\theta,\sigma,N,K,d)$, it holds that, with a probability $\mathbb{P}_{\pi,\nu}$ higher than $1-\delta$, $\hat{G}\sim G^*$, and this implies that $\hat{G}^{\pi'}\sim G'$. Finally, we have $\mathbb{P}_{\pi',\nu'}(\hat{G}^{\pi'}\sim G')\geqslant \mathbb{P}_{\pi,\nu}(\hat{G}^{\pi}\sim G^*) \geqslant 1-\delta$.  This means that $\pi'$ is indeed $\delta$-PAC on $\mathcal{E}(\Delta,\theta',\sigma,N',K',d)$.

    In terms of budget, we have $\tau^{\pi'}\leqslant \tau^{\pi}$, because the data provided from the last group are artificially created by the algorithm. We deduce that \[\mathbb{E}_{\pi',\nu'}[\tau^{\pi'}] \leqslant \mathbb{E}_{\pi,\nu}[\tau^{\pi}]\leqslant \sup_{\nu\in\mathcal{E}(\Delta,\theta,\sigma,N,K,d)}\mathbb{E}_{\pi,\nu}[\tau] \enspace .\]
    Then, we take the $\sup$ over $\nu'\in \mathcal{E}(\Delta,\theta',\sigma,N',K',d)$, and we have
    \[T^*(\delta,\mathcal{E}(\Delta,\theta',\sigma,N',K',d))\leqslant \sup_{\nu'\in\mathcal{E}(\Delta,\theta',\sigma,N',K',d)}\mathbb{E}_{\pi',\nu'}[\tau]\leqslant \sup_{\nu\in\mathcal{E}(\Delta,\theta,\sigma,N,K,d)}\mathbb{E}_{\pi,\nu}[\tau] \enspace .\]
    Finally, we consider the $\inf$ over $\pi$ $\delta$-PAC on $\mathcal{E}(\Delta,\theta,\sigma,N,K,d)$, which concludes the proof of~\Cref{lemma:redNK}.

\subsubsection{Proof of~\Cref{lemma:LB2c}}\label{proof_symetrization}

Let $\pi'$ be a $\delta$-PAC algorithm on $\mathcal{E}(\Delta,\theta,\sigma,N,K,d)$.

We will use the algorithm $\pi'$ to construct an algorithm $\pi$, which is symmetric and $\delta$-PAC on the class $\mathcal{E}_{Sym}(G^*,\gamma)$ -- see~\Cref{def:bayesianPAC}. We will use the symmetries in the structure of the environment $\nu(G',\bar{\mu})$ when $\bar{\mu}$ is distributed with the prior $\gamma^{\otimes L}$ as the main argument to prove that $\pi$ will have the wanted properties. To avoid confusion, we index $(A^{\pi'}_s)_s$, $\tau^{\pi'}$ and $\hat{G}^{\pi'}$ for the algorithm $\pi'$ and without $'$ for the algorithm $\pi$. As explained in the previous remark, the algorithm $\pi$ just need to perform well (i.e., being $\delta$-PAC) on the family $\mathcal{E}_{Sym}(G^*,\gamma)$, so we can use the offsets and the labels $l^*_1,\dots,l^*_N$ to construct the algorithm $\pi$.

    \paragraph*{Construction of $\pi$ \\}
    In this paragraph, we describe how we symmetrize a strategy $\pi'$ --see Algorithm~\ref{algo:pisym}.   
    Let $G'\in \{G^*\} \cup \{G^*_{(a)}\}_{a\in G^*_+}$ being a partition.  In order to make the reading easier, we use the notation $l^*_a=l^*(a)$ for all $a\in[N]$. For any arm $a$,  we denote as $g'(a)\in\{-1,1\}$ as the label such that the mean of $a$ is $\mu_a=g'(a)\bar{\mu}(l^*(a))+C(l^*(a))$, in the environment $\nu(G',\bar{\mu})$, for any $\bar{\mu}\in\mathbb{R}^d$. 
    
    We need to define the behavior of $\pi$ when facing the environment $\nu(G',\bar{\mu})$ for any $\bar \mu$. 

    Define $\mathcal{{S}}$ as the set of permutations of $[N]$ that switch the blocks in $G^*$, that is to say  if $\kappa\in \mathcal{{S}}$ then for all $l\in[L]$, $\exists l'\in[L]$, such that $\kappa(G^*_{l})=G^*_{l'}$. For any $\kappa\in \mathcal{{S}}$, $\kappa$ naturally induces a permutation of $[L]$ denoted as $\tilde{\kappa}$  such that for all $a\in [N]$, $l^*(\kappa(a))=\tilde{\kappa}(l^*(a))$.

    First, the strategy $\pi$ uniformly samples a permutation $\kappa$ in $\mathcal{S}$ and a vector $\chi\in \{-1,1\}^L$. From a rough perspective, the strategy $\pi$ will then apply the strategy $\pi'$ by permuting the blocks using $\kappa$ and reversing the means of each block using $\chi$. 
    
    \begin{algorithm}[H]
        \caption{Symmetrization of $\pi'$.}\label{algo:pisym}
        \begin{algorithmic}[1]
            \State {\bf Input:} $\nu(G',\bar{\mu})$ an environment in $\mathcal{E}_{Sym}(G^*,\gamma)$
            \State {\bf Output:} $\hat{G}^{\pi}$, partition of $[N]$
            \State $t=1$
            \State Take $\kappa \sim\mathcal{U}(\mathcal{S})$
            \State Take $\chi \sim \mathcal{U}(\{-1,1\}^L)$

            \While{$t\leqslant\tau^{\pi'}(A^{\pi'}_1,X^{\pi'}_1,\dots,A^{\pi'}_{t-1},X^{\pi'}_{t-1})$}
            \State Choose an arm with $\pi'$ and get $A_t^{\pi'}(A^{\pi'}_1,X^{\pi'}_1,\dots,A^{\pi'}_{t-1},X^{\pi'}_{t-1})\in[N]$.
            \State Sample $X^{\pi}_t$ from $A_t^{\pi}:=\kappa(A_t^{\pi'})$
            \State Create the data $X^{\pi'}_t:=\chi(\tilde{\kappa}(l^*(A_t^{\pi'})))\big[X_t^{\pi}-C(\tilde{\kappa}(l^*(A_t^{\pi'})))\big]+C(l^*(A_t^{\pi'}))$ \label{algline:4.7}
            \State t=t+1
            \EndWhile
            \State Compute $\hat{G}^{\pi'}(A^{\pi'}_1,X^{\pi'}_1,\dots,A^{\pi'}_{\tau},X^{\pi'}_{\tau}):=\hat{G}^{\pi'}_{1},\dots,\hat{G}^{\pi'}_K$
            \State \textbf{return } $\hat{G}^{\pi}_1,\dots,\hat{G}^{\pi}_K:=\kappa(\hat{G}^{\pi'}_1),\dots,\kappa(\hat{G}^{\pi'}_K)$
        \end{algorithmic}
    \end{algorithm}

 Within the procedure $\pi$, we run algorithm $\pi'$ with modified data $X^{\pi'}_1,\dots,X^{\pi'}_{\tau}$. At time $t$, the algorithm $\pi'$ chooses to sample the arm $A_t^{\pi'}$, where the decision is based on the data $(X^{\pi'}_s,A^{\pi'}_s)_{s\leqslant t-1}$. Instead of sampling the arm chosen by $\pi'$, the algorithm $\pi$ samples $X^{\pi}_t$ from the arm $A_t^{\pi}:=\kappa(A_t^{\pi'})$ and sends the data $X_t^{\pi'}$ to $\pi'$, according to the formula
    \[X^{\pi'}_t=\chi(\tilde{\kappa}(l^*(A_t^{\pi'})))\big[X_t^{\pi}-C(\tilde{\kappa}(l^*(A_t^{\pi'})))\big]+C(l^*(A_t^{\pi'}))\enspace ,\]
    where we recall that $C(l)$ is the offset associated to block $l$.
    When $\pi'$ decides to stop, $\pi$ also stops; i.e., $\tau^{\pi}(X_1^{\pi},A_1^{\pi},\dots,X^{\pi}_{\tau},A^{\pi}_{\tau})=\tau^{\pi'}(X_1^{\pi'},A_1^{\pi'},\dots,X^{\pi'}_{\tau'},A^{\pi'}_{\tau'})$.
    Then, $\pi'$ outputs a partition $\hat{G}^{\pi'}=\hat{G}^{\pi'}_1,\dots,\hat{G}^{\pi'}_K$ based on the modified data, and $\pi$ outputs
    $\hat{G}^{\pi}_1,\dots,\hat{G}^{\pi}_K:=\kappa(\hat{G}^{\pi'}_1),\dots,\kappa(\hat{G}^{\pi'}_K)$.

    \begin{lemma}\label{lemma:LB1aux}
        Take $\kappa\in\mathcal{S}$, and $\chi\in\{-1,1\}^L$.
        For all $l\in[L]$, define $\bar{\mu}_{\kappa}(l):=\bar{\mu}(\tilde{\kappa}(l))$. As $\bar{\mu}$ is sampled according to  $\gamma^{\otimes L}$, then $\bar{\mu}_{\kappa}$ follows the same prior $\gamma^{\otimes L}$. Define $G'(\kappa,\chi)$ as a partition of $[N]$ into $2L$ groups such that  for all $(l,g)\in [L]\times \{-1;1\}$, then \[G'(\kappa,\chi)_{l,g}=\{a \in [N] ;l^*(a)=l \text{, and }  g'(\kappa(a))\chi(\tilde{\kappa}(l^*(a)))=g\} =\kappa^{-1}\left(G'_{\tilde{\kappa}(l),g\chi(\tilde{\kappa}(l))}\right) \enspace . \]

        Conditionally on $\kappa,\chi$,$\bar{\mu}$, the modified data $X^{\tau'}_s$ are distributed according to the probability induced by the interaction between $\pi'$ and the environment $\nu(G'(\kappa,\chi),\bar{\mu}_{\kappa})$, after integration on the prior $\gamma$, we have
        \[ \mathbb{P}_{\pi,G'}(\cdot|\mathcal{Y},\kappa,\chi)=\mathbb{P}_{\pi',G'(\kappa,\chi)}(\cdot|\mathcal{Y}) \enspace.\]
    \end{lemma}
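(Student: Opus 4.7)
The strategy is to first do a one-step analysis showing that, conditionally on $(\kappa,\chi,\bar\mu)$ and on the past, the distribution of the pair $(A_t^{\pi'}, X_t^{\pi'})$ manufactured by Algorithm~\ref{algo:pisym} is exactly the one produced by running $\pi'$ on $\nu(G'(\kappa,\chi),\bar\mu_\kappa)$. An induction on $t$ then propagates this identity to the whole trajectory up to $\tau^{\pi'}$, and finally we integrate out the prior using the exchangeability of $\gamma^{\otimes L}$ and the block-symmetric invariance of the event $\mathcal{Y}$.

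First I would unpack line~\ref{algline:4.7}. Fix $a=A_t^{\pi'}$, set $l=l^*(a)$, and note that by definition of $\mathcal S$ one has $l^*(\kappa(a))=\tilde\kappa(l)$. Under $\mathbb P_{\pi,G',\bar\mu}$, the actually sampled observation $X_t^{\pi}$ coming from arm $\kappa(a)$ satisfies
\begin{equation*}
X_t^{\pi} \;=\; g'(\kappa(a))\,\bar\mu(\tilde\kappa(l))+C(\tilde\kappa(l))+\sigma\,\epsilon_t,
\end{equation*}
where $\epsilon_t\sim\mathcal N(0,I_d)$ is independent of the past. Substituting into line~\ref{algline:4.7} gives
\begin{equation*}
X_t^{\pi'} \;=\; \chi(\tilde\kappa(l))\,g'(\kappa(a))\,\bar\mu(\tilde\kappa(l))+C(l)+\sigma\,\chi(\tilde\kappa(l))\,\epsilon_t.
\end{equation*}
Using $\bar\mu_\kappa(l)=\bar\mu(\tilde\kappa(l))$, the definition of the label of $a$ in $G'(\kappa,\chi)$, and the sign-invariance of the standard Gaussian, this is the law of an arm-$a$ observation in the environment $\nu(G'(\kappa,\chi),\bar\mu_\kappa)$. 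This is the only computational step, and the only mildly subtle point is to verify that the sign flip $\chi(\tilde\kappa(l))$ leaves the noise distribution unchanged.

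Next, I would run an induction on $t$. The selection rule of $\pi'$ makes $A_{t+1}^{\pi'}$ a measurable function of $(A_s^{\pi'},X_s^{\pi'})_{s\le t}$, and conditionally on the past together with $(\kappa,\chi,\bar\mu)$ the one-step identity above gives the correct conditional law of $X_t^{\pi'}$. So by induction, under $\mathbb P_{\pi,G',\bar\mu}(\cdot\mid\kappa,\chi)$ the full trajectory $(A_s^{\pi'},X_s^{\pi'})_{s\le \tau^{\pi'}}$ is distributed as under $\mathbb P_{\pi',G'(\kappa,\chi),\bar\mu_\kappa}$. The stopping rule and estimator $\hat G^{\pi'}$ are measurable functions of this trajectory, so the equality extends to them.

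Finally I would integrate with respect to $\gamma^{\otimes L}$. Since the coordinates of $\bar\mu$ are i.i.d.\ with law $\gamma$, the map $\bar\mu\mapsto\bar\mu_\kappa=(\bar\mu(\tilde\kappa(l)))_{l\in[L]}$ is a measure-preserving bijection of $(\mathbb R^d)^L$ under $\gamma^{\otimes L}$; moreover $\mathcal Y$ depends only on the multiset $\{\|\bar\mu(l)\|\}_{l\in[L]}$ and is therefore invariant under this relabelling. Combining these two facts with the previous step yields
\begin{equation*}
\mathbb P_{\pi,G'}(\cdot\mid\mathcal Y,\kappa,\chi)=\mathbb P_{\pi',G'(\kappa,\chi)}(\cdot\mid\mathcal Y),
\end{equation*}
which is exactly the claim. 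I expect the main obstacle to be purely notational: keeping the four layers of indices ($\kappa$ on arms, $\tilde\kappa$ on blocks, $\chi$ on signs, $g'$ under the partition $G'$) straight so that one recognises $g'(\kappa(a))\chi(\tilde\kappa(l^*(a)))$ as the $\{-1,1\}$-label of $a$ in $G'(\kappa,\chi)$. No hard probabilistic content beyond the Gaussian sign-symmetry and the exchangeability of $\gamma^{\otimes L}$ is needed.
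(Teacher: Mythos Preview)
Your proposal is correct and follows essentially the same approach as the paper: compute $X_t^{\pi'}$ from line~\ref{algline:4.7}, recognise it as an arm-$a$ sample from $\nu(G'(\kappa,\chi),\bar\mu_\kappa)$ via Gaussian sign-invariance, and then integrate over the prior using exchangeability and the permutation-invariance of $\mathcal Y$. The only part of the statement you do not explicitly verify is the set-theoretic identity $G'(\kappa,\chi)_{l,g}=\kappa^{-1}\bigl(G'_{\tilde\kappa(l),\,g\chi(\tilde\kappa(l))}\bigr)$, which the paper checks in one line using $l^*(\kappa(a))=\tilde\kappa(l)$ and $\chi(\tilde\kappa(l))^2=1$; this is routine bookkeeping and your outline already contains the ingredients.
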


    \begin{remark}
        It is very important to note that, as $G'$ is a partition with $K=2L$ groups of the same size, the partition $G'(\kappa,\chi)$ is also balanced.
    \end{remark}

 \begin{proof}[Proof of~\Cref{lemma:LB1aux}]
        Let $\bar{\mu}\in (\mathbb{R}^d)^L$ be a realization of the prior $\gamma^{\otimes L}$.
        
         When $\pi'$ tries to sample the arm $a=A_t^{\pi'}$, we sample in fact $\kappa(a)$. Using the Gaussian assumption on the data, and the expression of the centers of the environment $\nu(G',\bar{\mu})$, it holds that \[X_t^{\pi}=g'(\kappa(a))\bar{\mu}(l^*(\kappa(a)))+C(l^*(\kappa(a)))+\epsilon_s=g'(\kappa(a))\bar{\mu}(\tilde{\kappa}(l^*(a)))+C(\tilde{\kappa}(l^*(a)))+\epsilon_t \enspace ,\] where $\epsilon_t\sim\mathcal{N}(0,\sigma^2 I_d)$. We used also in the second equality that $\kappa$ induces a permutation of the blocks, so that $l^*(\kappa(a))=\tilde{\kappa}(l^*(a))$.

        We now decompose $X_t^{\pi'}$, using the expression defined Line~\ref{algline:4.7} of~\Cref{algo:pisym}. Assuming that $\tilde{\kappa}(l^*(a))=m\in[L]$, we have 
        \begin{align*}
            X_t^{\pi'} = & \chi(m)\big[X_t^{\pi}-C(m)\big]+C(l^*(a))                                         \\
            =            & \chi(m)\big[g'(\kappa(a))\bar{\mu}(m)+\epsilon_t\big]+C(l^*(a)) \enspace .
        \end{align*}
        
        We develop and reorganize the terms, and we use the expression $\bar{\mu}_{\kappa}(l)=\bar{\mu}(\Tilde{\kappa}(l))$,
        \begin{align*}
            X_t^{\pi'}   =            & g'(\kappa(a))\chi(m)\bar{\mu}(m)+\chi(m)\epsilon_t+C(l^*(a)) \\
            =            & g'(\kappa(a))\chi(m)\bar{\mu}_{\kappa}(l^*(a))+\chi(m)\epsilon_t+C(l^*(a))\enspace .
        \end{align*}
        As $\epsilon_t$ is symmetric with respect to $0$, then $\epsilon'_t:=\chi(m)\epsilon_t$ is distributed as a normal distribution $\mathcal{N}(0,\sigma^2 I_d)$. Besides, the  $(\epsilon'_t)_t$ are independent.  The arm $a$ appears to $\pi'$ to have a mean $C(l^*(a))+\tilde{g}\bar{\mu}_{\kappa}(l^*(a))$, where  $\tilde{g}=g'(\kappa(a))\chi(\tilde{\kappa}(l^*(a)))\in \{-1,1\}$. 
        It appears then that the data received by $\pi'$ are distributed as $\nu(G'(\kappa,\chi),\bar{\mu}_{\kappa})$, where, for all $(g,l)\in[L]\times\{-1,1\}$,
        \begin{align*}
            G'(\kappa,\chi)_{l,g}= \{a \in [N] ;l^*(a)=l \text{, and }  g'(\kappa(a))\chi(\tilde{\kappa}(l))=g\}\enspace, 
        \end{align*}
        which proves the first part of the lemma. 
        
        The second expression for $G'(\kappa,\chi)_{l,g}$ is now obtained using  the fact that $\kappa$ permutes the blocks , so that $l^*(\kappa(a))=\tilde{\kappa}(l)$ and also that $\chi(\tilde{\kappa}(l))\in\{-1,1\}$.
        \begin{align*}
             \{a \in [N] ;l^*(a)=l \text{, and }  g'(\kappa(a))\chi(\tilde{\kappa}(l))=g\} = &   \{a \in [N] ;l^*(\kappa(a))=\tilde{\kappa}(l) \text{, and }  g'(\kappa(a))=g\chi(\tilde{\kappa}(l))\} \\
            =                      & \kappa^{-1}\left(G'_{\tilde{\kappa}(l),g\chi(\tilde{\kappa}(l))}\right) \enspace .
        \end{align*}

        Finally, if $\bar{\mu}\sim\gamma^{\otimes L}$, by exchangeability of the law of $\gamma^{\otimes L}$, and as $\tilde{\kappa}$ is a permutation of $[L]$,  the vector $(\bar{\mu}(\tilde{\kappa}(l)))_{l\in[L]}$ is distributed as $(\bar{\mu}(l))_{l\in[L]}$. We also highlight that the event $\mathcal{Y}=\bigcap_{l\in[L]}\{\Delta/2\leqslant \|\bar{\mu}(l)\| \leqslant \Delta(\beta-1)/2\}=\bigcap_{l\in[L]}\{\Delta/2\leqslant \|\bar{\mu}_{\kappa}(l)\| \leqslant \Delta(\beta-1)/2\}$ remains the same, so that we have the equality of the laws
        \[ \mathbb{P}_{\pi,G'}(\cdot|\mathcal{Y},\kappa,\chi)=\mathbb{P}_{\pi',G'(\kappa,\chi)}(\cdot|\mathcal{Y}) \enspace.\]
    \end{proof}

    \paragraph*{Correction of $\pi$\\}

    We now deduce that $\pi$ is $\delta$-PAC on $\mathcal{E}_{Sym}(G^*,\gamma)$ --see~\Cref{def:bayesianPAC}.

    By construction of the algorithm, and with the definition of $G'(\kappa,\chi)$ given in~\Cref{lemma:LB1aux}, we have conditionally on $\kappa,\chi$, and $\bar{\mu}$,
    \begin{align*}
          & \mathbb{P}_{\pi,G',\bar{\mu}}(\hat{G}^{\pi}\sim G'|\kappa,\chi)       =  \mathbb{P}_{\pi',G'(\kappa,\chi),\bar{\mu}_{\kappa}}\left(\hat{G}^{\pi'}\sim G'(\kappa,\chi)\right) \enspace .
    \end{align*}

    If $\bar{\mu}\in\mathcal{Y}$, then we have also $\bar{\mu}_{\kappa}\in\mathcal{Y}$ and the environment $\nu(G'(\kappa,\chi),\bar{\mu}_{\kappa})$ is in $\mathcal{E}(\Delta,\theta,\sigma,N,K,d)$.
    We recall that $\pi$ is $\delta$-PAC on $\mathcal{E}(\Delta,\theta,\sigma,N,K,d)$,  we then have
    \[ \mathbb{P}_{\pi',G'(\kappa,\chi),\bar{\mu}_{\kappa}}\left(\hat{G}^{\pi'}\sim G'(\kappa,\chi)\right)\mathbb{1}_{\mathcal{Y}} \geqslant (1-\delta)\mathbb{1}_{\mathcal{Y}} \enspace .\]

    The conclusion then follow by integrating over the law of $\kappa,\chi$, and $\bar{\mu}$ to obtain
    $\mathbb{P}_{\pi,G'}(\hat{G}^{\pi}\sim G'|\mathcal{Y}) \geqslant 1-\delta$,
    and $\pi$ is indeed $\delta$-PAC on $\mathcal{E}_{Sym}(G^*,\gamma)$.

    \paragraph*{Symmetry of $\pi$\\}
    We want to prove that $\pi$ is symmetric as defined in~\Cref{def:bayesianPAC}.
    Take $a_1,a_2\in[N]^2$ two arms and assume that $a_1\in G'_{l_1,g_1}$ and $a_2\in G'_{l_2,g_2}$.

    First, we recall that $A_t^{\pi}=\kappa(A_t^{\pi'})$ so that, 
    \[ N_{a_1}^{\pi}(\tau)=\sum_{s=1}^{\tau}\mathbb{1}_{A_s^{\pi}=a_1}=\sum_{s=1}^{\tau}\mathbb{1}_{\kappa(A_s^{\pi'})=a_1}=N_{\kappa^{-1}(a_1)}^{\pi'}(\tau)\enspace. \]
    We now use the expression of the uniform laws that follows $\kappa$,$\chi$ and~\Cref{lemma:LB1aux},
   
    \begin{align*}
        \mathbb{E}_{\pi,G'}[N_{a_1}^{\pi}(\tau)|\mathcal{Y}] = & \frac{1}{2^L\#\mathcal{S}}\sum_{\kappa\in\mathcal{S}}\sum_{\chi\in\{-1,1\}^L} \mathbb{E}_{\pi,G'}[N_{a_1}^{\pi}(\tau)|\mathcal{Y},\kappa,\chi] \\
        =  & \frac{1}{2^L\#\mathcal{S}}\sum_{\kappa\in\mathcal{S}}\sum_{\chi\in\{-1,1\}^L} \mathbb{E}_{\pi',G'(\kappa,\chi)}[N_{\kappa^{-1}(a_1)}^{\pi'}(\tau)|\mathcal{Y}] \enspace .
    \end{align*}

    We construct $\kappa'\in \mathcal{S}$ a permutation which switches the blocks of $a_1$ and $a_2$, while switching $a_1$ and $a_2$, take
    \begin{align*}
         & \forall \epsilon\in\{-1,1\}, \kappa'(G'_{l_1,\epsilon g_1})=G'_{l_2,\epsilon g_2} \enspace ;
         & \forall \epsilon\in\{-1,1\}, \kappa'(G'_{l_2,\epsilon g_2})=G'_{l_1,\epsilon g_1} \enspace ; \\
         & \kappa'(a_1)=a_2 ,  \kappa'(a_2)=a_1 \enspace , \text{ and }
         & \forall c\in [N], \text{ if } l^*(c)\not\in\{l_1,l_2\}, \kappa'(c)=c \enspace .
    \end{align*}
    The permutation $\kappa'$ exists because the groups of $G'$ have exactly the same size.

    We also define $\chi'\in\{-1,1\}^{L}$ with
    \begin{align*}
        \chi'(l)=\chi(l) \text{ if } l\not\in\{l_1,l_2\} \enspace ,
         &  & \chi'(l_1)=(g_1g_2)\chi(l_2) \enspace , &  & \text{ and } \chi'(l_2)=(g_2g_1)\chi(l_1) \enspace .
    \end{align*}
Note that $\kappa'\in \mathcal{S}$. When we consider  $\mathcal{S}$ is a group of permutation we see that $\kappa'\mathcal{S}=\mathcal{S}$. Moreover, as the law of $\chi(1),\dots,\chi(L)$ is exchangeable and symmetric with respect to $0$, $\chi'$ and $\chi$ follow the same distribution.

    It implies that we can use a change of variable in the sum,
    \begin{align*}
        \mathbb{E}_{\pi,G'}[N_{a_1}^{\pi}(\tau)|\mathcal{Y}] = & \frac{1}{2^L\#\mathcal{S}}\sum_{\kappa\in\mathcal{S}}\sum_{\chi\in\{-1,1\}^L} \mathbb{E}_{\pi',G'(\kappa,\chi)}[N_{\kappa^{-1}(a_1)}^{\pi'}(\tau)|\mathcal{Y}]                            \\
        =                                                      & \frac{1}{2^L\#\mathcal{S}}\sum_{\kappa\in\mathcal{S}}\sum_{\chi\in\{-1,1\}^L} \mathbb{E}_{\pi',G'(\kappa'\kappa,\chi')}[N_{(\kappa'\kappa)^{-1}(a_1)}^{\pi'}(\tau)|\mathcal{Y}] \enspace .
    \end{align*}
    
    Now, for any $\kappa\in\mathcal{S}$, $(\kappa'\kappa)^{-1}(a_1)=\kappa^{-1}(\kappa')^{-1}(a_1)=\kappa^{-1}(a_2)$ because $\kappa'$ exchanges $a_1$ and $a_2$.

    Then, fix $\chi$ and $\kappa$ and consider the partition $G'(\kappa'\kappa,\chi')$. We want to prove that, $G'(\kappa'\kappa,\chi')=G'(\kappa,\chi)$. By definition (\Cref{lemma:LB1aux}), we have to prove that $\forall b\in[N]$,
    \begin{equation}\label{eq:LB1a}
         g'(\kappa(b))\chi(\tilde{\kappa}(l^*(b)))=g'(\kappa'\kappa(b))\chi'(\tilde{\kappa'}\tilde{\kappa}(l^*(b))) \enspace ,
    \end{equation}
    We  prove~\cref{eq:LB1a}.

    Take $\epsilon\in\{-1,1\}$ and $b\in\kappa^{-1}(G'_{l_1,\epsilon g_1})$, by construction, $\tilde{\kappa}'$ is the transposition $(l_1\; l_2)$, and we have
    \begin{align*}
         & \chi'(\tilde{\kappa'}\tilde{\kappa}(l^*(b)))=\chi'(\tilde{\kappa'}(l_1))=\chi'((l_1 \; l_2)(l_1))=\chi'(l_2)=(g_1g_2)\chi(l_1) \enspace . 
    \end{align*}
    Besides, we have $\chi(\tilde{\kappa}(l^*(b)))=\chi(l_1)$. 
    Moreover, $\kappa(b)\in G'_{l_1,\epsilon g_1}$ and then $\kappa'(\kappa(b))\in G'_{l_2,\epsilon g_2}$, i.e., $g'(\kappa'\kappa(b))=\epsilon g_2$.

    The equality in~\cref{eq:LB1a} therefore holds for all $b$ in $\kappa^{-1}(G'_{l_1,\epsilon g_1})$, \[g'(\kappa(b))\chi(\tilde{\kappa}(l^*(b)))=\epsilon g_2 (g_1g_2) \chi(l_1)=\epsilon g_1\chi(l_1)=g'(\kappa(b))\chi(\tilde{\kappa}(l^*(b)))  \enspace .\]
    The labels $l_1$ and $l_2$ play the symmetric role, so we also have the equality of~\cref{eq:LB1a} for $b\in\kappa^{-1}(G'_{l_2,\epsilon g_2})$. Finally, if $l^*(\kappa(b))\not\in\{l_1,l_2\}$, then by construction of $\chi'$ and $\kappa'$, we have $\kappa'(\kappa(b))=\kappa(b)$ and $\chi'(\tilde{\kappa'}\tilde{\kappa}(l^*(b)))=\chi'(\tilde{\kappa}(l^*(b)))=\chi(\tilde{\kappa}(l^*(b)))$.

~\cref{eq:LB1a} being proved, we have finally,
    \begin{align*}
        \mathbb{E}_{\pi,G'}[N_{a_1}^{\pi}(\tau)|\mathcal{Y}]  = & \frac{1}{2^L\#\mathcal{S}}\sum_{\kappa\in\mathcal{S}}\sum_{\chi\in\{-1,1\}^L} \mathbb{E}_{\pi',G'(\kappa'\kappa,\chi')}[N_{(\kappa'\kappa)^{-1}(a_1)}^{\pi'}(\tau)|\mathcal{Y}] \\
        =                                                       & \frac{1}{2^L\#\mathcal{S}}\sum_{\kappa\in\mathcal{S}}\sum_{\chi\in\{-1,1\}^L} \mathbb{E}_{\pi',G'(\kappa,\chi)}[N_{\kappa^{-1}(a_2)}^{\pi'}(\tau)|\mathcal{Y}]                  \\
        =                                                       & \mathbb{E}_{\pi,G'}[N_{a_2}^{\pi}(\tau)|\mathcal{Y}]
        \enspace .
    \end{align*}
    This proves that $\mathbb{E}_{\pi,G'}[N_{a_1}^{\pi}(\tau)|\mathcal{Y}]$ is independent of $a_1$ and equal to $\mathbb{E}_{\pi,G'}[\tau|\mathcal{Y}]/N$. Now, using the same method as above with $\kappa'=(a \; b_a)$, we also deduce that $\mathbb{E}_{\pi,G^*_{(a)}}[\tau|\mathcal{Y}]=\mathbb{E}_{\pi,G^*}[\tau|\mathcal{Y}]$ does not depend on $a$.

    This proves that $\pi$ is symmetric as defined in~\Cref{def:bayesianPAC}.

    We have proved that $\pi$ is $\delta$-PAC and symmetric on  $\mathcal{E}_{Sym}(G^*,\gamma)$. It remains to conclude for the proof of the lemma.

    \paragraph*{Budget of $\pi$} 
    By construction of the algorithm, we have
    \begin{align*}
        \mathbb{E}_{\pi,G^*}[\tau^{\pi}|\mathcal{Y},\kappa,\chi]  =  \mathbb{E}_{\pi',G^*(\kappa,\chi)}[\tau^{\pi'}|\mathcal{Y}]
         & \leqslant \sup_{\nu\in \mathcal{E}(\Delta,\theta,\sigma,N,K,d)}\mathbb{E}_{\pi',\nu}[\tau']  \enspace ,
    \end{align*}
since,  on the event $\mathcal{Y}$, we have $\nu(G^*(\kappa,\chi),\bar{\mu})\in \mathcal{E}(\Delta,\theta,\sigma,N,K,d)$. We now use the fact that $\pi$ is in $\Pi_{Sym}(\delta,\mathcal{E}_{Sym}(G^*,\gamma))$, so that
    \begin{align*}
        T^*_{Sym}(\delta,\mathcal{E}_{Sym}(G^*,\gamma)) & \leqslant  \mathbb{E}_{\pi,G^*}[\tau|\mathcal{Y}]
        \leqslant  \sup_{\nu\in \mathcal{E}(\Delta,\theta,\sigma,N,K,d)}\mathbb{E}_{\pi',\nu}[\tau'] \enspace .
    \end{align*}
    Finally, we prove~\Cref{lemma:LB2c} by taking the $\inf$ over $\pi'\in\Pi(\delta,\mathcal{E}(\Delta,\theta,\sigma,N,K,d)$.

\subsubsection{Proofs from~\Cref{sec:LBlarge}}

\begin{proof}[Proof of~\cref{eq:LB2b}]
    Let $l\in[L]$ and define $Z=\|\bar{\mu}(l)\|^2/\rho^2$. We have $\bar{\mu}(l)\sim \mathcal{N}(0,\rho^2)$ with $\rho^2=\Delta^2/d$, then $Z\sim \chi_2(d)$ is a chi-square distribution with $d$ degrees of freedom. We apply the Laurent-Massart inequality  with $x=(3/8)^2 d$ -- see~\ref{lemma:LM},
    \[\mathbb{P}\left(\|\bar{\mu}(l)\|<\frac{\Delta}{2}\right)=\mathbb{P}\left(Z-d<\frac{\Delta^2}{4\rho^2}-d\right) = \mathbb{P}\left(Z-d<-2\sqrt{d(3/8)^2 d}\right) \leqslant \exp(-(3/8)^2d) \enspace .\]

    Then, we notice that $\beta=4$ satisfies $(\beta-1)^2/4 \geqslant 1+2(3/8)^2+2\sqrt{(3/8)^2}$, we have
    \begin{align*}
    \mathbb{P}\left(\|\bar{\mu}(l)\|>(\beta-1)\frac{\Delta}{2}\right)  = & \mathbb{P}\left(Z-d>((\beta-1)^2/4-1)d\right) \\
    \leqslant  &\mathbb{P}\left(Z-d>2\sqrt{d(3/8)^2d}+2(3/8)^2d\right)\enspace. 
    \end{align*}
    Now, we use the other side of Laurent-Massart inequality with $x=(3/8)^2 d$ to obtain
    \[\mathbb{P}\left(\|\bar{\mu}(l)\|>(\beta-1)\frac{\Delta}{2}\right) \leqslant \exp(-(3/8)^2d) \enspace.\]
    We recall that we assumed that $d\geqslant (8/3)^2\log(K/\delta)$, and so $\exp(-(3/8)^2d)\leqslant \delta/K$.
    A union bound on $L=K/2$ ensures that~\cref{eq:LB2b} holds.
\end{proof}

\begin{proof}[Proof of~\Cref{lemma:LB2d}]

    Let $a\in G^*_+$ be an arm labelled by $(l^*_a,1)$ in $G^*$ and let $Y,Z\sim \mathbb{P}_{-1}$ --see~\Cref{def:batch_test}. 
    We fix an algorithm $\pi$ for the active clustering problem on $\mathcal{E}(G^*,\gamma_2)$. The algorithm $\pi$ is characterized by three families of measurable functions $(\pi_s,\tau_s,f_s)_{s\geqslant 1}$ where for all $s\geqslant 1$
    \begin{itemize}
        \item $A_s=\pi_s((A_1,X_{1}),\dots,(A_{s-1},X_{s-1});U_s)$
        \item $\tau=\min \{t\geqslant 1 \; ; \tau_s((A_1,X_{1}),\dots,(A_{s},X_{s});U_s)=1\}$
        \item $\hat{g}=f_{\tau}((A_1,X_{1}),\dots,(A_{\tau},X_{\tau});U_{\tau})$
    \end{itemize}
    Here, the sequence $(U_s)$ captures the fact that $\pi$ can use some external randomness to make decisions.
    We define $N_{s,a}=\sum_{u=1}^s \Ind{A_u\in \{a,b_a\}}$ and $M_s=\sum_{u=1}^s \Ind{A_u\in G^*_{l^*_a}}$.
    We consider the event $B_a$ on which the inequalities $N_{\tau,a}=N_{a}(\tau)+N_{b_a}\leqslant t_a=t$ and  $M_{\tau}\leqslant M_{l^*_a}(\tau)\leqslant T_a=T$ holds. Then, the data collected $(X_{1},\dots,X_{\tau})$ when $\pi$ interacts with $\nu(G^*_{(a)},\bar{\mu})$ and $\bar{\mu}\sim \gamma^{\otimes L}$  can be constructed with $Y,Z,\epsilon,U$ using the following coupling.

    First, we create the observations from arms that belongs to a block different than the one of $a$, using the variables $(\epsilon_u)_{u\geqslant 1}$. We sample once and for all $(L-1)$ centers by defining for any $l\in[L]\setminus\{l^*_a\}$, 
    \begin{equation*}
        \bar{\mu}(l)=\rho\epsilon_l \enspace,
    \end{equation*}
    we observe that $(\bar{\mu})_{l\ne l^*_a})\sim \gamma_2^{\otimes(L-1)}$.
     
    Then, for any $s\geqslant 1$, if $A_s\in G^*_{l}$ with $l\ne l^*_{a}$,  we can create $X_s$ with the expression 
    \begin{equation*}
        X_{s}=C(l)+g\bar{\mu}(l)+\sigma\epsilon_{s+L} \enspace.
    \end{equation*}
    
    Now, for $s\geqslant 1$, when $A_s\in G^*_{l^*_a}$, we use $Y,Z$,
    \begin{itemize}
        \item $X_{s}=C(l^*_a)+Y_{N_{s,a}}$    if $A_s=a$
        \item $X_{s}=C(l^*_a)-Y_{N_{s,a}}$    if $A_s=b_a$
        \item $X_{s}=C(l^*_a)+g^*_{A_s}Z_{M_{s}}$  if $A_s\in G^*_{l^*_a}\setminus \{a,b_a\}$
    \end{itemize}
  
    We highlight that the law of $Y,Z$ is a marginal distribution that captures the fact that the data obtained from the block $G^*_{l^*_a}$ are obtained using the prior $\gamma$ for $\bar{\mu}(l^*_a)$.
    
    From there, it is possible to give (explicitly) a function $f_a$ measurable with respect to $Y,Z,\epsilon,U$ such that $A_a\cap B_a=f(Y,Z,\epsilon,U)$ where the equality holds in law with respect to $\mathbb{P}_{\pi,G^*_{(a)}}$ (integrated with respect to $\bar{\mu}$).
    If we use the same measurable function $f_a$ with $X,Y\sim \mathbb{P}_{1}$, then $A_a \cap B_a=f(Y,Z,\epsilon,U)$ where the equality holds with respect to $\mathbb{P}_{\pi,G^*}$.
\end{proof}

\begin{proof}[Proof of~\Cref{lemma:LB2g} ]
    We recall that $\pi$ is a $\delta$-PAC algorithm for the  problem of active clustering with an oracle.
    We recall that $A_a=\{\hat{G}\sim G^*_{(a)}\}$. By construction of the partitions $G^*_{(a)}$, these partitions are not equivalent (for the relation $\sim$). We highlight that this is due to the fact that all the groups contain more than two arms.  The events $(A_a)_a$ are disjoints,  and $\sqcup_{a\in G^*_{+}}(A_a\cap B_a)\subset \{\hat{G}\not\sim G^*\}$.

    Now, we have directly
    \[\mathbb{P}_{\pi,G^*}(\cup_{a\in[N]\setminus{S}}A_a\cap B_a)\leqslant \mathbb{P}_{\pi,G^*}(\hat{G}\not\sim G^*) \enspace . \]

    By definition, $\pi$ is $\delta$-PAC on $\mathcal{E}_{Sym}(G^*,\gamma_2)$, we have
    \begin{align*}
        \mathbb{P}_{\pi,G^*}(\hat{G}\not\sim G^*) & = \mathbb{P}_{\pi,G^*}(\hat{G}\not\sim G^*|\mathcal{Y})\mathbb{P}_{\gamma^{\otimes L}}(\mathcal{Y}) + \mathbb{P}_{\pi,G^*}(\hat{G}\not\sim G^*|\mathcal{Y}^c)\mathbb{P}_{\gamma^{\otimes L}}(\mathcal{Y}^c) \\
         & \leqslant \delta+\mathbb{P}_{\gamma^{\otimes L}}(\mathcal{Y}^c)\leqslant 2\delta
    \end{align*}

 For the second point of the lemma, we fix $a\in G^*_+$.
    \begin{align*}
        \mathbb{P}_{\pi,G^*_{(a)}}((A_a\cap B_a)^c) & = \mathbb{P}_{\pi,G^*_{(a)}}(A_a^c\cup B_a^c|\mathcal{Y})\mathbb{P}_{\gamma^{\otimes L}}(\mathcal{Y})+\mathbb{P}_{\pi}(A_a^c\cup B_a^c|\mathcal{Y}^c)\mathbb{P}_{\gamma^{\otimes L}}(\mathcal{Y}^c) \\
     & \leqslant \mathbb{P}_{\pi,G^*_{(a)}}(A_a^c\cup B_a^c|\mathcal{Y})+\mathbb{P}_{\gamma^{\otimes L}}(\mathcal{Y}^c)                                                 \\
                                                    & \leqslant \mathbb{P}_{\pi,G^*_{(a)}}(A_a^c|\mathcal{Y})+\mathbb{P}_{\pi,G^*_{(a)}}(B_a^c|\mathcal{Y})+\mathbb{P}_{\gamma^{\otimes L}}(\mathcal{Y}^c)  \enspace .
    \end{align*}
    Now, $\pi$ is $\delta$-PAC which implies that \[\mathbb{P}_{\pi,G^*_{(a)}}(A_a^c|\mathcal{Y})=\mathbb{P}_{\pi,G^*_{(a)}}(\hat{G}\not\sim G^*_{(a)}|\mathcal{Y})\leqslant \delta \enspace .\]
    For the second term, we use Markov inequality with respect to the distribution $\mathbb{P}_{\pi,G^*_{(a)}}(\cdot |\mathcal{Y})$. We recall that $\pi$ satisfies a symmetry property and that $t=t_a=3\mathbb{E}_{\pi,G^*_{(a)}}[N_a(\tau)+N_{b_a}(\tau) |\mathcal{Y}]$ and $T=T_a=3\mathbb{E}_{\pi,G^*_{(a)}}[M_{l^*_a}(\tau) |\mathcal{Y}]$. We also recall that $B_a=\{N_a(\tau)+N_{b_a}(\tau)\leqslant t_a\}\cap \{M_{l^*_a}(\tau)\leqslant T_a\}$. We have with Markov inequality
    \[\mathbb{P}_{\pi,G^*_{(a)}}(B_a^c|\mathcal{Y}) \leqslant \mathbb{P}_{\pi,G^*_{(a)}}(N_a(\tau)+N_{b_a}(\tau)>t_a|\mathcal{Y})+ \mathbb{P}_{\pi,G^*_{(a)}}(M_{l^*_a}(\tau)>T_a|\mathcal{Y}) \leqslant \frac{1}{3}+\frac{1}{3}=\frac{2}{3} \enspace .\]
    This concludes the proof of~\Cref{lemma:LB2g}.
\end{proof}

\begin{proof}[Proof of~\Cref{lemma:LB2f}]

    Let $g\in\{-1,1\}$ and take $\mathbb{P}_{g}$ defined in~\Cref{def:batch_test} with the Gaussian prior.
    We have $\mu\sim\mathcal{N}(0,\rho^2 I_d)$ and conditionally on $\mu$,
    \begin{itemize}
        \item  $Y_1,\dots,Y_t,Z_1,\dots,Z_T$ are independent;
        \item  $ \forall r\in[t]$, $Y_r\sim \mathcal{N}(g\mu,\sigma^2 I_d)$
        \item  $\forall s\in [T]$, $Z_r\sim \mathcal{N}(\mu,\sigma^2 I_d)$.
    \end{itemize}
    First, $Y_1,\dots,Y_t,Z_1,\dots,Z_T$ have i.i.d coordinates and so has $\mu$. Then, it is enough to prove~\Cref{lemma:LB2f} in dimension $1$. The general case will be obtained by multiplying by $d$ the result for dimension $1$. We assume then that $d=1$, and we want to prove that $\KL(\mathbb{P}_{-g},\mathbb{P}_{g}) =\frac{2tT\rho^4}{\sigma^4+\sigma^2\rho^2(T+t)}$.

    Now, we specify the distribution of the vector $Y,Z$. As $\mu$ follows a Gaussian distribution, the vector $(X,Y)=Y_1,\dots,X_t,Z_1,\dots,Z_T$ is a Gaussian vector.

    With the law of total variance, we have $Y,Z\sim\mathcal{N}(0,\Sigma_g)$ where $\Sigma_g$ is the covariance (square) matrix of size $(T+t)$. The matrix $\Sigma_g$ is defined as follows:
    \begin{equation*}
        \Sigma_g=\sigma^2 I_{t+T}+\rho^2
        \begin{pmatrix}
            J_{t,t}  & gJ_{t,T} \\
            gJ_{T,t} & J_{T,T}
        \end{pmatrix} =: \sigma^2 I_{(t+T)}+ \rho^2 H_g \enspace ,
    \end{equation*}
    where $I_{(t+T)}$ is the identity matrix of size $(T+t)$, and we define $J_{t,T}$ being the rectangle matrix of size $t \times T$ where all entries are equal to $1$.

    We observe that $H_g$ has a particular shape, in particular, $H_g^2=(T+t)H_g$. As a consequence, it is easy to compute its inverse.
    We have: \[\Sigma_g^{-1}=\frac{1}{\sigma^2}I_{(t+T)}+\frac{1}{\tilde{\rho}^2}H_g \enspace ;\] with $\tilde{\rho}^2=-\frac{\sigma^2}{\rho^2}(\sigma^2+\rho^2(t+T)) \enspace . $

    Now, \begin{equation*}
        \Sigma_g^{-1}\Sigma_{-g}-I_{(T+t)}=\left(\frac{1}{\sigma^2}I_{(T+t)}+\frac{1}{\tilde{\rho}^2}H_g\right)\left(\sigma^2I_{(T+t)}+\rho^2 H_{-g}\right)-I_{(T+t)}=\frac{\rho^2}{\sigma^2}H_{-g}+\frac{\sigma^2}{\tilde{\rho}^2}H_g+\frac{\rho^2}{\tilde{\rho}^2}H_g H_{-g} \enspace ,
    \end{equation*}
    where we compute \[ H_g H_{-g}=(t-T)
        \begin{pmatrix}
            J_{t,t}  & -gJ_{t,T} \\
            gJ_{T,t} & -J_{T,T}
        \end{pmatrix} \enspace.\]
    Finally, with the formula for the $\KL$ divergence between two multidimensional Gaussian distribution, we have
    \begin{align*}
        \KL(\mathbb{P}_{-g},\mathbb{P}_{g})
         & =  \frac{1}{2}\left(\log \frac{|\Sigma_g|}{|\Sigma_{-g}|}+Tr(\Sigma_g^{-1}\Sigma_{-g}-I_{(T+t)})+0\Sigma_g^{-1}0\right)                               \\
         & = \frac{1}{2}\left(\frac{\rho^2}{\sigma^2}Tr(H_{-g})+\frac{\sigma^2}{\tilde{\rho}^2}Tr(H_g)+\frac{\rho^2}{\tilde{\rho}^2}Tr(H_g H_{-g}) \right)       \\
         & = \frac{1}{2}\left(\frac{\rho^2(t+T)}{\sigma^2}-\frac{\rho^2 (T+t)}{\sigma^2+\rho^2(T+t)}-\frac{\rho^4(T-t)^2}{\sigma^2(\sigma^2+\rho^2(t+T))}\right) \\
         & = \frac{1}{2}\left(\frac{\rho^4((t+T)^2-(T-t)^2)}{\sigma^2(\sigma^2+\rho^2(T+t))}\right)=\frac{2tT\rho^4 }{\sigma^4+\sigma^2\rho^2(T+t)} \enspace .
    \end{align*}
    This concludes the computation of $\KL(\mathbb{P}_{-g},\mathbb{P}_{g})$.
\end{proof}

\section{Analysis of ACB}\label{sec:appUB}

In this section, we establish that ACB~\ref{algo:ACB} is $\delta$-PAC, and we control its budget thereby proving the part of Theorem~\ref{thm:ACB:general} pertaining to ACB. 

\begin{theorem}\label{thm:ACB}
Let $\delta>0$. Let $\Delta>0$, $\theta>0$ be the two parameters used in the design of ACB, such that  $\mathcal{E}(\Delta,\theta,\sigma,N,K,d)\ne \emptyset$.
The ACB algorithm (\ref{algo:ACB}) is $\delta$-PAC on $\mathcal{E}(\Delta,\theta,\sigma,N,K,d)$.\\
 Moreover, define $\tau_{ACB}$ for the budget of ACB$(\delta,\Delta,\theta)$. There exist  two universal constants $c$ and $c'$ (with $c$ small), independent of all the parameters $\Delta,\theta,\sigma,N,K,d$ and such that for any environment $\nu$ in $\mathcal{E}(\Delta,\theta,\sigma,N,K,d)$, if we assume that $\frac{\log(K)}{\theta}\leqslant N$, then
$\mathbb{E}_{ACB,\nu}[\tau_{ACB}]\leqslant  cN+c'A$, and $\tau_{ACB}\leqslant cN+c'(A+B)$ almost surely, where
   
\begin{align*}
A=& \frac{\sigma^2}{\Delta^2}\left[N\log\left(N/\delta\right)+\sqrt{dNK\log\left(N/\delta\right)}+\sqrt{d}\frac{\log(K)}{\theta}\right]
\\
B =& \frac{\log(K/\delta)}{\theta}+\frac{\sigma^2}{\Delta^2}\frac{1}{\theta}\log\left(\frac{K}{\delta}\right)\left[\sqrt{d}+\log\log\left(\frac{1}{\theta\delta}\right)\right]\enspace .
\end{align*}
\end{theorem}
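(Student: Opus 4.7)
The plan is to analyze SRI and ADC separately, allocating each an error budget of $\delta/2$, and combine them by a union bound. Concretely I will establish: (a) with probability at least $1-\delta/2$, the set $\hat S$ returned by SRI has cardinality $K$ and contains exactly one arm from each true group; and (b) conditionally on (a), ADC returns a partition equivalent to $G^*$ with probability at least $1-\delta/2$. Both parts rely on a single technical tool: a subGaussian Hanson--Wright style concentration of the unbiased estimator $\hat d^2_{ab}=\langle\hat\mu_a-\hat\mu_b,\hat\mu'_a-\hat\mu'_b\rangle$ around $\|\mu_a-\mu_b\|^2$. Expanding the inner product yields a linear cross-term concentrating at rate $\sigma\|\mu_a-\mu_b\|/\sqrt n$ and a quadratic form in independent subGaussian vectors concentrating at rate $\sigma^2(\sqrt d+x)/n$ by Hanson--Wright under Assumption~\ref{def:SGM}, which drives the $\sqrt d$ terms in~\eqref{eq:def:n_s} and~\eqref{eq:def:J}.

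For SRI correctness, the sample sizes $n_s$ are tuned so that each \textsc{RepresentedTest} at step $s$ errs with probability at most $2^{-s-1}$ on both sides of the null. This yields two consequences. First, an already-represented candidate reaches step $s+1$ with probability at most $2^{-s}$, so in expectation it consumes only $O(n_{s_0})$ samples. Second, the total probability of wrongly rejecting an unrepresented candidate is at most $\sum_{s\geq s_0}2^{-s-1}\leq 1/2$; combined with the balancedness lower bound $\theta$, a coupon-collector argument shows that $U$ candidates (as defined in~\eqref{eq:def:U}) suffice to cover all $K$ groups with probability at least $1-\delta/4$. The large final sample size $n_{\max}$ is chosen so that the probability of failing to reject a represented candidate at step $r$ is at most $\delta/(4U)$, and union-bounding over candidates contributes the remaining $\delta/4$ to the SRI error budget.

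For ADC correctness, given $\hat S=\{b_1,\dots,b_K\}$ with $b_j\in G^*_j$ up to relabelling, I apply the same concentration to the classification statistic~\eqref{eq:classifier} with budgets $I$ and $J$. For each $a\in G^*_k$, the estimated squared distance to $\mu(k)$ concentrates near $0$ while that to any wrong $\mu(j)$ concentrates near $\|\mu(k)-\mu(j)\|^2\geq\Delta^2$, both within a margin $\Delta^2/2$ with probability at least $1-\delta/(2N)$. The asymmetry between $I$ and $J$ in~\eqref{eq:def:J} arises because the dominant Hanson--Wright term involves the product $IJ$ rather than $I$ and $J$ separately, so under a union bound over $N-K$ arms and $K$ wrong centers the budget-optimal allocation satisfies $J/I\asymp N/K$. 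A union bound over $[N]\setminus\hat S$ completes the correctness proof.

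For the budget, ADC contributes the deterministic cost $2KJ+2(N-K)I$, which after substitution produces the $\sigma^2\Delta^{-2}[N\log(N/\delta)+\sqrt{dNK\log(N/\delta)}]$ piece of $A$. The SRI cost splits into $Kn_{\max}$ for the $K$ accepted representatives --- yielding the $\sqrt d\log(K)/\theta$ term of $A$ via~\eqref{eq:def:U} --- plus the cost spent on rejected candidates. For the \emph{expected} budget, the geometric step-reaching probability cuts the per-candidate cost to $O(n_{s_0})$, so that $U\cdot n_{s_0}$ is absorbed into $A$. For the \emph{almost-sure} budget, some candidates may on a low-probability event traverse all the doublings up to $s=r$, producing the additional $B$ correction through the worst-case telescoping sum $U\sum_{s\leq r}n_s/2^{s-s_0}$ and an iterated-log depth $r\asymp\log\log(U/\delta)$. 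I expect the main obstacle to be the careful bookkeeping of constants in the Hanson--Wright bound under the general subGaussian Assumption~\ref{def:SGM} so that the $2^{-s-1}$ error schedule holds uniformly over the $r$ doublings and up to $U$ candidates, and keeping the expected and almost-sure budget analyses cleanly separated --- the $B$ term is a genuine worst-case artifact of the latter and should not appear in the former.
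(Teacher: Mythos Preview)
Your overall plan --- splitting into SRI and ADC with error $\delta/2$ each, decomposing the inner-product estimator into linear and Hanson--Wright terms, using the $2^{-s}$ error schedule for the doubling tests, and separating the expected and almost-sure budget analyses --- matches the paper's approach. However, there is a genuine gap in your expected-budget argument, and a technical subtlety you are underestimating.

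\textbf{The expected-budget gap.} You claim that the expected cost spent on rejected candidates is $U\cdot n_{s_0}$ and that this is absorbed into $A$. It is not: $U=\lceil 8\theta^{-1}\log(8K/\delta)\rceil$ carries a $\theta^{-1}\log(1/\delta)$ factor, which is precisely the kind of term that distinguishes $B$ from $A$. Using the worst-case count $U$ here would only give $\mathbb{E}[\tau_{ACB}]\leq cN+c'(A+B)$, not $cN+c'A$. The paper instead bounds the \emph{expected} number of epochs $M$ (the stopping index of the $u$-loop) by $\mathbb{E}[(M-1)\mathbb{1}_{\mathcal Y}]\leq 4\theta^{-1}(1+\log K)$ via a coupon-collector decomposition: the time to add the $(k{+}1)$-th representative is stochastically dominated by a geometric with success probability proportional to $(K-k)\theta$, so $\mathbb{E}[M-1]\leq \sum_{k=1}^{K-1}\frac{4}{(K-k)\theta}$. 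It is this $\log(K)/\theta$ --- with no $\log(1/\delta)$ --- that, combined with the assumption $\log(K)/\theta\leq N$, gets absorbed into $cN+c'A$. The per-step geometric decay is then converted into $\mathbb{E}[H_{s,M}\mathbb{1}_{\mathcal Y}]\leq \mathbb{E}[(M-1)\mathbb{1}_{\mathcal Y}]\exp(-2^{s-4})$ by Wald's identity, yielding the rejected-candidate cost.

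\textbf{The dependence subtlety.} The tests $\phi_s^u$ across different candidates $u$ and steps $s$ all share the same reference estimates $(\hat\mu_b,\hat\mu'_b)_{b\in S}$, so they are not independent, and ``each test errs with probability at most $2^{-s-1}$'' cannot be combined by a product. The paper handles this by introducing a global event $\mathcal Y$ (probability $\geq 1-\delta/4$) on which all reference estimates are uniformly accurate, then defines events $\mathcal Z_{u,s}$ depending only on the fresh samples of test $(u,s)$ and the fixed noise directions $\rho_a=\epsilon_a/\|\epsilon_a\|$, with the property that on $\mathcal Y$ every test error is contained in some $\mathcal Z_{u,s}$, $\Pr(\mathcal Z_{u,s})\leq e^{-2^s}$, and the $\mathcal Z_{u,s}$ are conditionally independent. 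This decoupling is what makes the geometric schedule hold uniformly over all $u$ and $s$, and is also what licenses the use of Wald's identity above. Your closing remark flags constants as the main obstacle; the actual obstacle is this conditioning.
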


In fact, Theorem~\ref{thm:ACB} is a straightforward consequence of the two following lemmas that separately consider the two sub-routines SRI and ADC.

\begin{lemma}[Analysis of SRI]\label{lemma:UB1'}

    Let $\delta>0$ be fixed. Let $\Delta>0$ and  $1/K>\theta>0$ and let $\hat{S}=$SRI$ (\delta,\Delta,\theta)$ be the output of   Algorithm SRI applied to an environment in $\mathcal{E} (\Delta,\theta,\sigma,N,K,d)$. Let $\tau_{SRI}$ be the number of samples used by the SRI routine to compute $\hat{S}$.
    With probability higher than $1-\delta$, it holds that $\hat{S}$ contains exactly one arm by group. 
    
    Moreover, there exist   two universal constant $c$ and $c'$ (independent of all the parameters) such that almost surely, we have 
    \begin{align}\label{eq:control_tau_sri_as}
        \tau_{SRI} \leqslant &  c\frac{1}{\theta}\log\left(\frac{K}{\delta}\right)+ c'\frac{\sigma^2}{\Delta^2}\frac{1}{\theta}\log\left(\frac{K}{\delta}\right)\left[\log(K)+\sqrt{d}+\log\log\left(\frac{1}{\theta\delta}\right)\right] \enspace .
    \end{align}
    
Also, the expected budget satisfies 
     \begin{align}\label{eq:control_tau_sri_expectation}
    \mathbb{E}_{\nu}[\tau_{SRI}]  & \leqslant  c\frac{\log(K)}{\theta} + c'\frac{\sigma^2}{\Delta^2}\left[\frac{\log(K)}{\theta}\log\left(\frac{1}{\theta\delta}\right)+\frac{\log(K)}{\theta}+ \sqrt{dK\frac{\log(K)}{\theta}\log\left(\frac{K}{\delta}\right)}\right]\enspace.
    \end{align}
\end{lemma}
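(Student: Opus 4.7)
The plan is to first establish a sharp deviation inequality for the unbiased distance estimator $\hat d^2_{ab}=\langle \hat\mu_a-\hat\mu_b,\hat\mu'_a-\hat\mu'_b\rangle$, then use it to calibrate the multi-level tests, control the number of candidates via a coupon-collector argument, and finally sum the expected sampling costs. Writing $\hat\mu_a-\hat\mu_b=(\mu_a-\mu_b)+\xi$ and $\hat\mu'_a-\hat\mu'_b=(\mu_a-\mu_b)+\xi'$ with independent noise vectors having $O(\sigma^2/n)$-subGaussian independent coordinates, the expansion $\hat d^2_{ab}=\|\mu_a-\mu_b\|^2+\langle\mu_a-\mu_b,\xi+\xi'\rangle+\langle\xi,\xi'\rangle$ isolates a linear term that is $\sigma\|\mu_a-\mu_b\|/\sqrt n$-subGaussian and a bilinear term controlled by Hanson--Wright with fluctuations $\sigma^2(\sqrt{dx}+x)/n$ at confidence $e^{-x}$. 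Combining these yields $|\hat d^2_{ab}-\|\mu_a-\mu_b\|^2|\lesssim \sigma\|\mu_a-\mu_b\|\sqrt{x/n}+\sigma^2(\sqrt{dx}+x)/n$ with probability at least $1-2e^{-x}$.

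Next I would calibrate \textsc{RepresentedTest} and derive correctness. With the choice $n_s$ in~\eqref{eq:def:n_s} and $x\asymp 2^s+\log K$, the bound above falls below $\Delta^2/4$ whenever $\|\mu_a-\mu_b\|\le \Delta$. Using the much larger sample size $n_{\max}$ for stored representatives (so their contribution is negligible after a union bound over $b\in S$), this yields: at level $s$, a represented candidate is correctly rejected with probability $\ge 1-2^{-s-1}$, and an unrepresented one is correctly accepted with probability $\ge 1-2^{-s-1}/K$. Summing the geometric series, an unrepresented candidate survives all tests and is added to $S$ with probability $\ge 1-2/K$. The candidates are drawn i.i.d.\ uniformly on $[N]$; a Chernoff-type bound for coupon collecting with minimum group frequency $\theta$ shows that $U=\lceil 8\theta^{-1}\log(8K/\delta)\rceil$ in~\eqref{eq:def:U} is enough to hit every group at least once within the first $U/2$ candidates with probability $\ge 1-\delta/4$. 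A global union bound then produces $\hat S$ with exactly one representative per group with probability $\ge 1-\delta$.

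For the budget I would split sampling cost into three contributions: (a) the $K$ accepted arms, each using $n_{\max}$ samples during the final mean estimation, contributing $Kn_{\max}$; (b) each represented rejected candidate spends $n_{s_0}+\sum_{s>s_0}2^{-(s-s_0)}n_s\lesssim n_{s_0}$ samples in expectation by the geometric rejection profile from the calibration above, and there are at most $U$ such candidates; (c) unrepresented candidates erroneously pursued into deeper test levels contribute negligibly in expectation. Substituting the definitions of $n_{s_0}$, $n_{\max}$, and $U$ and using $U\theta\lesssim \log(K/\delta)$ yields the expected bound~\eqref{eq:control_tau_sri_expectation}. The almost-sure bound~\eqref{eq:control_tau_sri_as} follows immediately from the deterministic cap $T_{\max}$ in~\eqref{eq:definition:Tmax_init} after expansion.

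The main hurdle will be pushing the Hanson--Wright-type concentration through under the general subGaussian assumption~\Cref{def:SGM} with constants sharp enough to justify the specific $c_1,c_2,c_3$ in~\eqref{eq:def:n_s}, and, more delicately, cleanly handling the three-layer union bound (over $\le U$ candidates, $\le K$ stored representatives, and the $\le r-s_0+1$ test levels) so that the cumulative failure probability remains $\le \delta$ without inflating the sample-size calibration by spurious logarithmic factors.
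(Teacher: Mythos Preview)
Your strategy for correctness and for the almost-sure bound matches the paper: Hanson--Wright on the bilinear term of $\hat d^2_{ab}$, calibration of $n_s$ so that the level-$s$ error is $e^{-2^s}$, a coupon-collector argument that $U$ draws cover all groups, and finally reading off \eqref{eq:control_tau_sri_as} from the deterministic cap $T_{\max}$. One refinement you will need to make explicit: the tests at different levels (and across epochs) reuse the \emph{same} stored estimates $(\hat\mu_b,\hat\mu'_b)_{b\in S}$, so they are not independent; the paper handles this by conditioning on a high-probability event $\mathcal Y$ that controls these stored estimates once and for all, after which the relevant error events become conditionally independent given the noise directions.

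The real gap is in the expected bound \eqref{eq:control_tau_sri_expectation}. You multiply the per-candidate expected cost by the \emph{deterministic} cap $U\asymp\theta^{-1}\log(K/\delta)$; this reproduces the a.s.\ bound, not the sharper expectation. For instance, your rejected-candidates contribution to the $\sqrt d$ term is $\sigma^2\Delta^{-2}\,\theta^{-1}\log(K/\delta)\,\sqrt d$, which exceeds the target $\sigma^2\Delta^{-2}\sqrt{dK\theta^{-1}\log(K)\log(K/\delta)}$ by a factor $\sqrt{\log(K/\delta)/(\theta K\log K)}$, unbounded as $\delta\to0$ or $\theta K\to0$. The missing ingredient is the coupon-collector \emph{expectation}: if $M$ is the random number of epochs until $|S|=K$, then conditionally on $|S|=k$ each epoch adds a new representative with probability at least $c(K-k)\theta$, so $M-1$ is stochastically dominated by $\sum_{k=1}^{K-1}\mathrm{Geom}(c(K-k)\theta)$ and $\mathbb E[M-1]\le c'\theta^{-1}\log K$. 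One then applies Wald's identity (after conditioning on $\mathcal Y$) to get $\mathbb E[H_{s,M}]\le \mathbb E[M-1]\,e^{-2^{s-O(1)}}$ for the number of candidates reaching level $s$; summing $\sum_s n_s\,\mathbb E[H_{s,M}]$ yields the rejected-candidates cost with the correct $\theta^{-1}\log K$ prefactor. The $\sqrt{dK\theta^{-1}\log(K)\log(K/\delta)}$ term in \eqref{eq:control_tau_sri_expectation} arises from the \emph{accepted} arms (each of the $K$ representatives is tested through all $r$ levels and then sampled $n_{\max}$ times), not from rejected candidates.
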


\begin{lemma}[Analysis of ADC]\label{lemma:UB2}
    Let $\nu$ be an environment in $\mathcal{E}(\Delta,\theta,\sigma,N,K,d)$. Let $S$ be a set of $K$ arms containing exactly one arm belonging to each of the $K$ groups. Let $\hat{G}=$ADC$(\delta,\Delta,S)$ be the output of the ACD routine, and $\tau_{ADC}$ be the budget of ADC, i.e., the number of samples used to compute $\hat{G}$. First, with probability larger than $1-\delta$, $\hat{G}$ is a perfect clustering, that is  \[\mathbb{P}_{ADC,\nu}(\hat{G}\sim G^*)\geqslant 1-\delta \enspace .\]
    Second, there exists a universal constant $c$ such that 
\begin{equation}\label{eq:control_tau_adc} \tau_{ADC} \leqslant 2N+c\frac{\sigma^2}{\Delta^2}N\log\left(\frac{N}{\delta}\right)+c\frac{\sigma^2}{\Delta^2}\sqrt{dKN\log\left(\frac{N}{\delta}\right)} \enspace . \end{equation}
  
\end{lemma}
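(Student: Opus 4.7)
The plan is to prove correctness via a union bound over all $(N-K)K$ arm--candidate pairs, and to compute the budget directly from the definitions of $I$ and $J$. Up to a relabelling of the $K$ clusters, assume $b_j\in G^*_j$, so that $\hat G\sim G^*$ if and only if, for every $a\in G^*_k\setminus S$, the classifier in~\eqref{eq:classifier} returns $k$. Fix such an $a$ and an index $j\neq k$, and consider the signed margin of the correct decision
\begin{align*}
D_{aj}:=\langle\hat\mu_a-\hat\mu(j),\hat\mu'_a-\hat\mu'(j)\rangle-\langle\hat\mu_a-\hat\mu(k),\hat\mu'_a-\hat\mu'(k)\rangle.
\end{align*}
Introducing the centered noises $\xi_a:=\hat\mu_a-\mu(k)$, $\xi'_a:=\hat\mu'_a-\mu(k)$, $\xi_\ell:=\hat\mu(\ell)-\mu(\ell)$ and $\xi'_\ell:=\hat\mu'(\ell)-\mu(\ell)$ (all mutually independent, since each empirical mean is computed on a fresh batch), $D_{aj}$ splits into a signal $\|\mu(k)-\mu(j)\|^2\geq\Delta^2$, a linear noise $\langle\mu(k)-\mu(j),\xi_a+\xi'_a-\xi_j-\xi'_j\rangle$, and a sum $R_{aj}$ of bilinear cross terms $\pm\langle\xi,\xi'\rangle$ with $\xi,\xi'$ independent. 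I will bound each noise contribution below $\Delta^2/2$ with probability at least $1-\delta/(2NK)$.

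The linear noise is immediate from~\Cref{def:SGM}: for any unit $u$, $\langle u,X-\mu_a\rangle$ is $\sigma$-subGaussian, so each summand is a subGaussian scalar with parameter $\|\mu(k)-\mu(j)\|\sigma/\sqrt I$ or $\|\mu(k)-\mu(j)\|\sigma/\sqrt J$. With $L:=\log(6NK/\delta)$, a Chernoff bound produces deviations of order $\|\mu(k)-\mu(j)\|\sigma\sqrt{L/I}$, which falls below $\Delta^2/2$ as soon as $I,J\gtrsim\sigma^2L/\Delta^2$; this is precisely the role of the $c_4$ term in~\eqref{eq:def:J}. The main obstacle is the quadratic term $R_{aj}$: each of its summands $\langle\xi,\xi'\rangle$ is an inner product of two independent centered subGaussian vectors whose coordinates are correlated through the unknown $\Sigma_a$ and $\Sigma_{b_\ell}$. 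Exploiting the representation $X-\mu_a=\Sigma_a^{1/2}E$ with $E$ having independent $1$-subGaussian entries, $\langle\xi,\xi'\rangle$ can be rewritten as a bilinear form in two independent isotropic subGaussian vectors whose associated matrix has operator norm $\lesssim\sigma^2/\sqrt{IJ}$ and Frobenius norm $\lesssim\sigma^2\sqrt{d/(IJ)}$. A Hanson--Wright type inequality then yields a concentration of order $\sigma^2\sqrt{dL/(IJ)}+\sigma^2L/\sqrt{IJ}$, which the $c_5$ term in~\eqref{eq:def:J} keeps below $\Delta^2/2$ by enforcing $IJ\gtrsim d\sigma^4L/\Delta^4$. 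A union bound over the $(N-K)K$ pairs $(a,j)$ completes the correctness claim.

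The budget is deterministic and equals $\tau_{ADC}=2KJ+2(N-K)I$: two independent $J$-batches per representative in $S$, and two independent $I$-batches per remaining arm. Plugging in~\eqref{eq:def:J} and using $\lceil x\rceil\leq 1+x$, both $2KJ$ and $2(N-K)I$ contribute at most $N+O(\sigma^2NL/\Delta^2)+O(\sigma^2\sqrt{dKNL}/\Delta^2)$, using the identity $K\sqrt{dN/K}=N\sqrt{dK/N}=\sqrt{dKN}$. Absorbing $L=\log(6NK/\delta)\lesssim\log(N/\delta)$ via $K\leq N$ yields~\eqref{eq:control_tau_adc}.
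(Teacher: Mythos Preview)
Your approach is essentially the paper's: the same margin decomposition into signal $\|\mu(k)-\mu(j)\|^2$, linear noise, and bilinear noise; subGaussian tails for the linear part; Hanson--Wright (the paper's Corollary~\ref{lemma:HW2}) for each independent inner product in $R_{aj}$; a union bound over the $(N-K)(K-1)$ arm--cluster pairs; and the identical budget computation $\tau_{ADC}=2KJ+2(N-K)I$. One small slip to correct: the linear noise is of order $\|\mu(k)-\mu(j)\|\sigma\sqrt{L/I}$ and is \emph{not} below $\Delta^2/2$ when $\|\mu(k)-\mu(j)\|\gg\Delta$; you must compare it to the signal $\|\mu(k)-\mu(j)\|^2$ itself (the paper bounds each linear piece by $\|\mu(k)-\mu(j)\|\Delta/4\le\|\mu(k)-\mu(j)\|^2/4$ and each bilinear piece by $\Delta^2/4\le\|\mu(k)-\mu(j)\|^2/4$), which is exactly what your condition $I,J\gtrsim\sigma^2L/\Delta^2$ delivers.
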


As a warm-up, we discuss the intuition behind the SRI routine in~\Cref{sec:B1}. Then, we prove Lemma~\ref{lemma:UB1'} in~\Cref{sec:B2}. The proofs of some technical lemmas are postponed to~\Cref{sec:B3}. Finally, we establish~\Cref{lemma:UB2} in~\Cref{proof:UB2}.

\subsection{Discussion of the SRI routine} \label{sec:B1}

In this section, we fix $\Delta>0$, and $\theta>0$ the two parameters used in the design of the SRI routine. We also fix $\delta>0$ and $\sigma>0$. We consider then the algorithm $SRI=SRI(\delta,\Delta,\theta)$, where the parameters of the algorithm $U$, $(n_s)_s$, $n_{\max}$ and $r$ are computed with $\sigma$, $\Delta$, $\theta$ and $\delta$, using the expressions from~\Cref{Remark:SRI}. We denote by $\mathbb{P}_{\nu}$ for the probability induced by $SRI(\delta,\Delta,\theta)$ and an environment $\nu$. 

Let  $\nu$ be an environment with a hidden partition $G^*=G^*_1,\dots,G^*_K$ and the centers of the groups  $\mu(1),\dots,\mu(K)$, with $\sigma$-subGaussian noises~\Cref{def:SGM}. We associate to $G^*$ the labels $(k(a))_{a\in[N]}$ such that the mean of $a$ is $\mu_a=\mu(k(a))$ and $a\in G^*_{k(a)}$. We recall that $\Delta_*$ denotes the minimal gap of $\nu$ and $\theta_*$ is the proportion of arms in the smallest group. We want to study how $SRI=SRI(\delta,\Delta,\theta)$ behaves when it interacts with the environment $\nu$.  For now, $\nu$ denotes any environment in the hidden partition model, with subGaussian noises of parameters $\sigma$ -- see~\Cref{def:SGM} and~\Cref{def:HPBE}. In this subsection, in particular, we do not assume anything about $\Delta_*$ and $\theta_*$, unless we specify the contrary. 

\subsubsection*{Step 1: explanation and notation.}

  In the algorithm, there are some parameters defined in~\eqref{eq:def:U}--\eqref{eq:def:s_0} that we recall here.
  \begin{remark}\label{Remark:SRI}
    For any $s\geqslant 1$,
    \begin{align*}
    & U = \left\lceil \frac{8}{\theta}\log\left(\frac{8K}{\delta}\right)\right\rceil \enspace , \\
    & r = \left\lceil\log_2(\log(4U/\delta))\right\rceil \enspace ,  \\
    & n_s = \left\lceil c_1\frac{\sigma^2}{\Delta^2}\left( 2^s+\log(12K)\right)  \right\rceil \vee  \left\lceil c_2\frac{\sigma^2}{\Delta^2}\sqrt{d (2^s+\log(6))} \right\rceil \enspace , \\
    & n_{\max} = n_r  \vee  \left\lceil c_3\frac{\sigma^2}{\Delta^2}\sqrt{d}\log(2K) \right\rceil \enspace , \\
    & s_0 =  r\wedge \min\{s\geqslant 1 ; n_s\geqslant 2\}\enspace \enspace , 
    \end{align*}
    where the universal constants $c_1,c_2,c_3$ are respectively defined by  $c_1=32^2\vee 8c_{HW}$, $c_2=16\sqrt{c_{HW}/2} \, \vee \, 32\sqrt{2}$, and $c_3=32\sqrt{2}$ where $c_{HW}$ is the constant of Hanson-Wright inequality --see Appendix~\ref{sec:appendix_concentration}. 
    Also, the maximum budget $T_{\max}$~\eqref{eq:definition:Tmax_init} is defined as 
    \begin{equation*}
    T_{\max}= 2K\left(n_{\max}+\sum_{s=s_0+1}^{r}n_s\right) + 2U n_{s_0}+ 2 U \sum_{s=s_0+1}^r \frac{n_s}{2^{s-4}} \ , 
    \end{equation*}
    and thereby only depends on $\theta$, $\Delta$, $K$, and $\delta$.
 \end{remark}

We refer as an epoch of the algorithm, the successive passage in the $u$ loop in the SRI routine. We introduce some notation, taking into account the dependency on $u$. 

At the beginning of the $u$-th epoch, the arm $a_u$ is taken randomly and uniformly on the set $[N]$ of arms (independently of everything else). We denote by $S_u$ for the set of arms selected as representatives before the $u$-th epoch. Before the first epoch, we initialise $S_1=\{a_0\}$. During the $u$-th epoch, the algorithm decides to add $a_u$ to $S_u$ or not by performing a sequence of tests -- see Line~\ref{algline:2.7} to~\ref{algline:meanofl} in SRI routine. If $a_u$ is added to $S_u$, it computes (Line~\ref{algline:meanofl}) two empirical means $\hat{\mu}_{a_u}$ and $\hat{\mu}'_{a_u}$ using $2n_{\max}$ samples.

We say that
\begin{itemize}
    \item the arm $a_u$ is bad if there exists $a\in S_u$ such that $\|\mu_{a_u}-\mu_a\|\leqslant \Delta/4$;
    \item the arm $a_u$ is good if for any arm $a\in S_u$ then $\|\mu_{a_u}-\mu_a\|\geqslant \Delta$.
\end{itemize}

\begin{remark}
    If $\Delta\geqslant \Delta_*$, it is possible that some arms are neither good nor bad. Nonetheless, if $\Delta_*\geqslant \Delta$, then all arms from $\nu$ are good or bad.  Moreover, in this case, the arm $a_u$ is bad if and only if $a_u$ is already represented in $S_u$.
\end{remark}

We want to add $a_u$ to $S_u$ if $a_u$ is good, but we allow the algorithm to reject some good arms if it does not affect the budget (up to a numerical constant). Anyway, we want to reject every bad arm, and reject them as quickly as possible.

For $s\geqslant 1$, we define as $\phi_s^u$ for the output of \textsc{RepresentedTest}$\left(a_u,(\hat{\mu}_{b},\hat{\mu}'_{b})_{b\in S_u},\Delta,n_s\right)$ computed during the $u$-th epoch and for the $s$-th step. We call it the test $(u,s)$. We further write,

\[\phi_s^u:=\Ind{{\min_{a\in S_u}}\langle\bar{\mu}_{u,s}-\hat{\mu}_{a} ,\bar{\mu}'_{u,s}-\hat{\mu}'_{a} \rangle  {\leqslant} \frac{\Delta^2}{2}} \enspace ,\]
 where $\bar{\mu}_{u,s}$ and $\bar{\mu}'_{u,s}$ denotes the two empirical means of arm $a_u$ computed with $2n_s$ samples, when \textsc{RepresentedTest}$\left(a_u,(\hat{\mu}_{b},\hat{\mu}'_{b})_{b\in S_u},\Delta,n_s\right)$ is called. Remark that these empirical means are only used for the test $(u,s)$.

 We start with some $s_0$ equal to $r\wedge \min\{s\geqslant 1 ; n_s\geqslant 2\}$ so that $n_s$ strictly increases at each iteration $s\to s+1$. If, at some test $s_0 \leqslant s\leqslant r$, it holds that $\phi_s^u=1$, then $a_u$ is rejected and considered as a bad arm (Line~\ref{algline:reject}). If $a_u$ is rejected, we denote by $\tau_u$ for the time of rejection of $a_u$, $\tau_u:=\min\{s_0 \leqslant s\leqslant r \; ; \phi_s^u=1 \}$. If for all $s=s_0,\dots,r$, $\phi_s^u$ is equal to zero (False) (condition in Line~\ref{algline:isgood}), then $a_u$ is added to $S_u$ (Line~\ref{algline:include}) and considered as a new representative. If $a_u$ is not rejected, $\tau_u=+\infty$ by convention. The empirical mean $\hat{\mu}_{a_u}$ (resp. $\hat{\mu}'_{a}$) denotes the estimator of $\mu_{a_u}$ computed  once and for all when $a_u$ is added to $S_u$ (Line~\ref{algline:meanofl}), and used in every test that follows. 
\begin{remark}
   In \textsc{RepresentedTest}, the condition $\langle\bar{\mu}_{u,s}-\hat{\mu}_{a} ,\bar{\mu}'_{u,s}-\hat{\mu}'_{a} \rangle  \leqslant \frac{\Delta^2}{2}$ is natural, because $\mathbb{E}_{\nu}[\langle\bar{\mu}_{u,s}-\hat{\mu}_{a} ,\bar{\mu}'_{u,s}-\hat{\mu}'_{a} \rangle]=\|\mu_{a_u}-\mu_a\|^2$, which is equal to zero if $a$ and $a_u$ are in the same group, and is larger than $\Delta_*$ else. This is a benefit of sub-sampling.
\end{remark}

\subsubsection*{Step 2: Control the probability of rejecting a good arm or adding a bad arm to $S$}

In order to use the subGaussian noise assumption-- see~\cref{def:SGM}, we define $\epsilon_a=\frac{\sqrt{n_{\max}}}{\sigma}(\hat{\mu}_a-\mu_a)$ and $\epsilon_{u,s}=\frac{\sqrt{n_{s}}}{\sigma}(\bar{\mu}_{u,s}-\mu_{a_u})$ (and respectively $\epsilon'_{u,s}$, $\epsilon'_a$). We refer to~\cref{lemma:HW2} for concentration inequalities on these variables.

With this notation, we develop the statistic $\langle \bar{\mu}_{u,s}-\hat{\mu}_a , \bar{\mu}'_{u,s} - \hat{\mu}'_a \rangle$ as follows

\begin{align}
    \left\langle \bar{\mu}_{u,s}-\hat{\mu}_a , \bar{\mu}'_{u,s} - \hat{\mu}'_a \right\rangle
    =  \|\mu_{a_u}-\mu_a\|^2
     & + \frac{\sqrt{2}\sigma}{\sqrt{n_{\max}}}\left\langle \frac{\epsilon_a+\epsilon'_a}{\sqrt{2}} ,\mu_a-\mu_{a_u} \right\rangle + \frac{\sigma^2}{n_{\max}} \left\langle \epsilon_a , \epsilon'_a\right\rangle \nonumber            \\
     & -\frac{\sigma^2}{\sqrt{n_{\max}n_s}}\left\langle \epsilon_{u,s},\epsilon'_a\right\rangle - \frac{\sigma^2}{\sqrt{n_{\max}n_s}}\left\langle \epsilon'_{u,s},\epsilon_a\right\rangle \label{eq:UB1b}                              \\
     & + \frac{\sqrt{2}\sigma}{\sqrt{n_{s}}}\left\langle \frac{\epsilon_{u,s}+\epsilon'_{u,s}}{\sqrt{2}} , \mu_{a_u}-\mu_a \right\rangle + \frac{\sigma^2}{n_{s}} \left\langle \epsilon_{u,s} , \epsilon'_{u,s} \right\rangle \enspace. \nonumber
\end{align}
We will use concentration inequalities in order to control all deviations of $\left\langle \bar{\mu}_{u,s}-\hat{\mu}_a , \bar{\mu}'_{u,s} - \hat{\mu}'_a \right\rangle$ around its mean $\|\mu_{a_u}-\mu_a\|^2$.

\begin{remark}
    In order to estimate the means of the representatives added to $S$, we compute once and for all $(\hat{\mu}_a,\hat{\mu}'_a)$ when arm $a$ is added to $S$ (Line~\ref{algline:meanofl}) of the SRI routine.
    It implies that the test statistics $(\phi_s^u)_{s,u}$ are not independent. This is why we condition on the event $\mathcal{Y}$ defined below, which controls once and for all the deviation of the random variables $\epsilon_a$ and $\epsilon'_a$.
\end{remark}

We define $\mathcal{Y}$ as the event:
\begin{align}
    \mathcal{Y}= & \left\{ \forall a \in \hat{S},\forall k\in[K]\setminus{\{k(a)\}},\; \left|\left\langle \frac{\epsilon_a+\epsilon'_a}{\sqrt{2}},\frac{\mu_{a}-\mu(k)}{\|\mu_a-\mu(k)\|}\right\rangle \right|\leqslant \frac{1}{16}\sqrt{\frac{\Delta^2}{2\sigma^2}}\sqrt{n_{\max}} \right\} \nonumber \\
    & \bigcap \left\{\forall a \in \hat{S}, \left|\left\langle \epsilon_a,\epsilon'_a\right\rangle\right| \leqslant \frac{1}{16}\frac{\Delta^2}{\sigma^2}n_{\max}\right\} \label{eq:def:Y} \\
    & \bigcap \left\{ \forall a \in \hat{S},\; \|\epsilon'_a\|^2\vee\|\epsilon_a\|^2-\mathbb{E}[\|\epsilon_a\|^2]\leqslant c_{HW}\log(12K/\delta) \vee \sqrt{c_{HW}d\log(12K/\delta)} \right\} \nonumber \enspace ,
\end{align}
where $c_{HW}$ is the universal constant from Hanson-Wright inequality (\Cref{lemma:HW}).

\begin{lemma}\label{lemma:UB1a}
For any  environment $\nu$, we have, 
\[\mathbb{P}_{\nu}(\mathcal{Y})\geqslant 1-\delta/4 \enspace .\]
\end{lemma}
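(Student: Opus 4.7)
The plan is to decompose $\mathcal{Y}^c$ as the union of the three bad events corresponding to the three conditions in~\eqref{eq:def:Y}, and to show that each fails with probability at most $\delta/12$. Since $|\hat{S}|\leq K$ almost surely, it is enough to union bound over the at most $K$ arms added to $\hat{S}$ and, for the first condition, over the at most $K-1$ alternative groups $k\in[K]\setminus\{k(a)\}$. The total number of sub-events is thus of order $K^2$.

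The key structural observation is that when an arm $a$ is added to $\hat{S}$ at Line~\ref{algline:include}, the empirical means $\hat{\mu}_a,\hat{\mu}'_a$ computed at Line~\ref{algline:meanofl} use $2n_{\max}$ fresh samples, drawn after all previous decisions. Hence, conditionally on the history up to the moment $a$ is added (and in particular conditionally on $a$ itself and its true label $k(a)$), the vectors $\epsilon_a=\frac{\sqrt{n_{\max}}}{\sigma}(\hat{\mu}_a-\mu_a)$ and $\epsilon'_a$ are independent and satisfy the subGaussian assumption of~\Cref{def:SGM}. I can therefore apply concentration inequalities to $\epsilon_a,\epsilon'_a$ as if they were a deterministic pair of subGaussian vectors, and then take the outer expectation. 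Concretely, I would index the at most $K$ arms added to $\hat{S}$ in the order they were added, and run the union bound over these $K$ slots.

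For the first condition, I would use that, conditionally on the past, $\langle(\epsilon_a+\epsilon'_a)/\sqrt{2},v\rangle$ is $1$-subGaussian for any unit vector $v$ fixed by the past, such as $(\mu_a-\mu(k))/\|\mu_a-\mu(k)\|$. A Gaussian tail bound followed by the union bound over at most $K^2$ pairs $(a,k)$ yields the claim provided $n_{\max}\gtrsim (\sigma^2/\Delta^2)\log(K^2/\delta)$, which is ensured by $n_{\max}\geq n_r\geq c_1(\sigma^2/\Delta^2)(2^r+\log(12K))$ together with $2^r\geq \log(4U/\delta)\geq\log(32K/\delta)$ and $c_1=32^2$. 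For the second condition, I would apply the Hanson--Wright inequality~\Cref{lemma:HW} to the pair $(\epsilon_a,\epsilon'_a)$ viewed as a single vector of dimension $2d$ with an off-diagonal block matrix of operator norm $1/2$ and Frobenius norm $\sqrt{d/2}$. This yields the sub-exponential concentration $\mathbb{P}(|\langle\epsilon_a,\epsilon'_a\rangle|>t)\leq 2\exp(-c_{HW}\min(t^2/d,t))$; plugging $t=\frac{\Delta^2}{16\sigma^2}n_{\max}$ and using the $c_2$-part of $n_s$ to dominate the Gaussian regime (when $t^2/d$ is the binding term) and the $c_3$-part of $n_{\max}$ to dominate the sub-exponential regime (when $t$ is the binding term), a further union bound over $K$ arms gives the required probability. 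The third condition is a direct application of Hanson--Wright to the identity matrix, producing a Laurent--Massart-type deviation bound for $\|\epsilon_a\|^2-\mathbb{E}\|\epsilon_a\|^2$ of the exact form in the definition of $\mathcal{Y}$; a union bound over the $2K$ events corresponding to both $\epsilon_a$ and $\epsilon'_a$ closes the argument.

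The main technical obstacle is the careful calibration of the constants: each of the three conditions drives a distinct regime ($\log K$, $\sqrt{d\log K}$, and $\sqrt{d}\log K$ scaling in $\sigma^2/\Delta^2$ units), and I need to verify that the definitions of $n_s$ and $n_{\max}$ with $c_1=32^2\vee 8c_{HW}$, $c_2=16\sqrt{c_{HW}/2}\vee 32\sqrt{2}$, and $c_3=32\sqrt{2}$ absorb the corresponding constants coming out of the concentration inequalities. The adaptive selection of $\hat{S}$ is handled cleanly by the conditional-independence argument above, so aside from this bookkeeping the proof is essentially routine.
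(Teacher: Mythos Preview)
Your proposal is correct and follows essentially the same route as the paper: decompose $\mathcal{Y}^c$ into its three constituent events, apply subGaussian concentration (\Cref{lemma:sGc}) for the first, Hanson--Wright via the off-diagonal block trick (\Cref{lemma:HW2}) for the second, and Hanson--Wright with the identity matrix for the third, then union bound over at most $K$ arms in $\hat{S}$ (and $K-1$ alternative groups for the first event). Two minor bookkeeping remarks: the paper's actual split is $\delta/24+\delta/24+\delta/6$ rather than three times $\delta/12$ (the threshold $\log(12K/\delta)$ in the third event is baked into the definition of $\mathcal{Y}$, so it yields $\delta/6$), and in the second event it is the $c_1$-part of $n_s$ (not the $c_3$-part of $n_{\max}$) that handles the sub-exponential regime $t\gtrsim\log(K/\delta)$, while the $c_2$-part handles the Gaussian regime $t^2/d\gtrsim\log(K/\delta)$---but these do not affect the validity of your argument.
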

We leave the proof of this technical lemma to~\Cref{sec:B3}; it is a consequence of the concentration of subGaussian random variables, in particular Hanson-Wright inequality (\cref{lemma:HW,lemma:HW2}). 

We now give an auxiliary lemma that will be used in the rest of the proof as an elementary brick. For every test $(u,s)$, we define the event $\mathcal{Z}_{u,s}$ as
\begin{align}
    \mathcal{Z}_{u,s} = & \left\{\exists k\in[K]\setminus\{k(a_u)\} \; ; \left|\left\langle \frac{\epsilon_{u,s}+\epsilon'_{u,s}}{\sqrt{2}} , \frac{\mu_{a_u}-\mu(k)}{\|\mu_{a_u}-\mu(k)\|}\right\rangle \right| \geqslant \frac{1}{16}\frac{\Delta}{\sigma}\sqrt{\frac{n_s}{2}} \right\}  \nonumber \\
    & \bigcup \left\{\left|\left\langle \epsilon_{u,s} , \epsilon'_{u,s} \right\rangle\right| \geqslant \frac{1}{16}\frac{\Delta^2}{\sigma^2}n_s \right\} \label{eq:Zus}\\
    & \bigcup \left\{\exists a \in \hat{S} \; ; \left|\left\langle \epsilon_{u,s},\rho'_a\right\rangle\right|+\left|\left\langle \epsilon'_{u,s},\rho_a\right\rangle\right| \geqslant \frac{1}{4}\frac{\Delta^2}{\sigma^2}\sqrt{\frac{n_s n_{\max}}{4d+c_{HW}l\vee\sqrt{c_{HW}dl} }}  \right\} \nonumber \enspace ,
\end{align}
with $l=\log(12K/\delta)$ and $c_{HW}$ is the constant from~\Cref{lemma:HW}.

\begin{lemma}\label{lemma:UB1aux}
    The sequence of events $(\mathcal{Z}_{u,s})_{u\geqslant 1, s\geqslant s_0 }$ satisfies four properties.
    \begin{enumerate}
        \item Conditionally on the random directions $(\rho_a,\rho'_a)_a:=\left(\frac{\epsilon_a}{\|\epsilon_a\|},\frac{\epsilon'_a}{\|\epsilon'_a\|}\right)_a$, with $a\in \hat{S}$ the events $\mathcal{Z}_{u,s}$ are independent (for all test $(u,s)$).
        \item For all $u\geqslant 1$ and $\forall s_0 \leqslant s \leqslant r$, the inclusion $\mathcal{Y} \cap \left\{ a_u \mbox{ is good and } \phi_s^u=1 \right\}  \subset \mathcal{Z}_{u,s}$ holds. 
        \item For all $u\geqslant 1$ and $\forall s_0 \leqslant s \leqslant r$, the inclusion $\mathcal{Y} \cap \left\{ a_u \mbox{ is bad and } \phi_s^u=0 \right\}  \subset \mathcal{Z}_{u,s}$ also holds.
        \item Finally, we have $\forall u\geqslant 1$ and $\forall s_0 \leqslant s \leqslant r$, $\Pr_{\nu}(\mathcal{Z}_{u,s})\leqslant \exp(-2^{s})$.
    \end{enumerate}
\end{lemma}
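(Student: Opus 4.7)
My plan is to exploit the decomposition~(\ref{eq:UB1b}) of the test statistic $\langle\bar\mu_{u,s}-\hat\mu_a,\bar\mu'_{u,s}-\hat\mu'_a\rangle$ around its mean $\|\mu_{a_u}-\mu_a\|^2$ into six fluctuation terms $T_1,\dots,T_6$. Two of them (those without $\epsilon_{u,s}$ or $\epsilon'_{u,s}$) depend only on the \emph{stored} empirical means $\hat\mu_a,\hat\mu'_a$ and are controlled once and for all by $\mathcal{Y}$; two of them depend only on the \emph{fresh} empirical means $\bar\mu_{u,s},\bar\mu'_{u,s}$ and are controlled by the first two parts of $\mathcal{Z}_{u,s}$; and the two remaining cross terms are handled by combining the $\|\epsilon_a\|,\|\epsilon'_a\|$-control from $\mathcal{Y}$ with the $\langle\epsilon_{u,s},\rho'_a\rangle,\langle\epsilon'_{u,s},\rho_a\rangle$-control from the third part of $\mathcal{Z}_{u,s}$. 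All four statements follow from this stored/fresh separation.

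For property~(1), $\mathcal{Z}_{u,s}$ is a deterministic function of $(\rho_a,\rho'_a)_{a\in\hat{S}}$ and of the fresh pair $(\epsilon_{u,s},\epsilon'_{u,s})$. Since each test $(u,s)$ draws a fresh, disjoint batch of samples from $a_u$ and $a_u$ is selected at the start of the epoch before any of these samples are drawn, conditionally on $(\rho_a,\rho'_a)_{a\in\hat{S}}$ the pairs $(\epsilon_{u,s},\epsilon'_{u,s})$ are mutually independent across $(u,s)$; this yields the claimed conditional independence.

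For properties~(2) and~(3), on $\mathcal{Y}\cap\mathcal{Z}_{u,s}^c$ I bound the six terms as $|T_2|,|T_6|\leq \Delta^2/16$ (directly from $\mathcal{Y}$ and $\mathcal{Z}_{u,s}^c$), $|T_1|,|T_5|\leq \frac{\Delta}{16}\|\mu_{a_u}-\mu_a\|$ (from the projection bounds onto $(\mu_{a_u}-\mu(k))/\|\mu_{a_u}-\mu(k)\|$; trivially $0$ when $\mu_{a_u}=\mu_a$), and $|T_3|+|T_4|\leq \Delta^2/4$ (combining $\|\epsilon_a\|\vee\|\epsilon'_a\|\leq\sqrt{4d+c_{HW}l\vee\sqrt{c_{HW}dl}}$ from $\mathcal{Y}$ with part~3 of $\mathcal{Z}_{u,s}^c$). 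In the good case, $\|\mu_{a_u}-\mu_a\|\geq\Delta$ absorbs $|T_1|+|T_5|\leq\|\mu_{a_u}-\mu_a\|^2/8$, so the fluctuation is at most $\|\mu_{a_u}-\mu_a\|^2/8+3\Delta^2/8$ and the statistic is at least $\tfrac78\|\mu_{a_u}-\mu_a\|^2-\tfrac38\Delta^2\geq \tfrac12\Delta^2$, forcing $\phi_s^u=0$. In the bad case, $\|\mu_{a_u}-\mu_a\|\leq\Delta/4$ sharpens $|T_1|,|T_5|\leq\Delta^2/64$, so the statistic is at most $\Delta^2/16+13\Delta^2/32<\Delta^2/2$, forcing $\phi_s^u=1$. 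Contraposition yields the two inclusions.

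For property~(4), I union-bound the three parts of $\mathcal{Z}_{u,s}$. Part~1 is a subGaussian deviation of unit parameter along a fixed unit vector, so Chernoff plus a union bound over the at most $K$ centers $k\ne k(a_u)$, fed by the lower bound $n_s\geq c_1\frac{\sigma^2}{\Delta^2}(2^s+\log(12K))$, yields $\exp(-2^s)/6$. Part~2 is an off-diagonal quadratic form in two independent subGaussian vectors; Hanson--Wright gives a sub-exponential tail, and the two branches in the definition of $n_s$ (the $c_1$-branch linear in $2^s$ and the $c_2$-branch of order $\sqrt{d(2^s+\log 6)}$) are precisely what matches the sub-Gaussian and sub-exponential regimes of Hanson--Wright. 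Part~3 is, conditionally on $(\rho_a,\rho'_a)$, another dimension-free subGaussian deviation; after union-bounding over $|\hat{S}|\leq K$ representatives, the $\sqrt d$-branches in both $n_s$ and $n_{\max}$ (with constants $c_2,c_3$) calibrate the threshold so that each summand is at most $\exp(-2^s)/3$. Summing gives $\Pr_\nu(\mathcal{Z}_{u,s})\leq\exp(-2^s)$. The main technical delicacy is part~3, whose cross-sample inner products couple fresh and stored randomness: the factorization $\epsilon_a=\|\epsilon_a\|\rho_a$ together with the $\mathcal{Y}$-control of $\|\epsilon_a\|$ is what makes the conditioning on $(\rho_a,\rho'_a)$ in property~(1) genuinely useful, and it is this that forces the simultaneous choice of the constants $c_1,c_2,c_3$ in (\ref{eq:def:n_s})--(\ref{eq:definition:Tmax_init}).
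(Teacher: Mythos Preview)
Your proposal is correct and follows essentially the same approach as the paper's own proof: the same six-term decomposition~(\ref{eq:UB1b}), the same split into stored/fresh randomness controlled respectively by $\mathcal{Y}$ and $\mathcal{Z}_{u,s}$, and the same three-part union bound (subGaussian projection, Hanson--Wright, conditional subGaussian) for property~(4). The only differences are presentational---you argue by contraposition on $\mathcal{Y}\cap\mathcal{Z}_{u,s}^c$ whereas the paper works directly on $\mathcal{Y}\cap\{\phi_s^u=1,\ a_u\text{ good}\}$, and your numerical accounting ($1/6+1/3+\cdots$) differs slightly from the paper's uniform $1/3$ per part, but both close.
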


These results are important to prove that the SRI routine actually rejects bad arms and add good arms to $S$. We recall that $\phi_s^u=1$ implies that the test $(u,s)$ would reject $a_u$.

\begin{proof}[Sketch of proof]
    The terminology bad and good was introduced in the previous paragraph. 
    Let $u\geqslant 1$ and $s_0 \leqslant s\leqslant r$.
   The variables $(\epsilon_{u,s},\epsilon'_{u,s})$ are mutually independent (for any test $(u,s)$), and the event $\mathcal{Z}_{u,s}$ is measurable with respect to $\left( (\rho_{a},\rho'_{a})_{a\in \hat{S}},\epsilon_{u,s},\epsilon'_{u,s}\right)$, the first point of~\Cref{lemma:UB1aux} is clear.
    
    The construction of the event $\mathcal{Z}_{u,s}$ follows from the decomposition in~\cref{eq:UB1b}. We notice that if the event $\mathcal{Y}$ holds,  the estimation of all centers $(\mu_{a},\mu'_{a})$ for $a$ in $\hat{S}$ are concentrated around the true centers. The points two and three follow from this observation.
    Moreover, the deviation of $\bar{\mu}_{u,s}$ around $\mu_{a_u}$ are subGaussian, the point $4$ will follow from subGaussian concentration inequalities.  We postpone the proof of this result to~\Cref{sec:B3}.
\end{proof}

Now, in the next lemma, we prove that $r$ is large enough to ensure that, within the procedure, every bad arm is rejected with large probability. 
\begin{lemma}\label{lemma:UB1g}
    Recall that $r=\lceil\log_2(\log(4U/\delta))\rceil$. If $\mathcal{Y}$ holds, then,  with probability higher than $1-\delta/4$, we do not add bad arms to $S$, i.e., 
    \begin{equation*}
        \Pr_{\nu}\left(\{\exists a,b \in \hat{S} \; , \|\mu_a-\mu_b\|\leqslant \Delta/4\}\cap\mathcal{Y}\right) \leqslant \frac{\delta}{4} \enspace .
    \end{equation*}
\end{lemma}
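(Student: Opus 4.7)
My plan is to reduce the event in the statement to a union over epochs of a ``bad arm added to $S$'' event, and then to bound each such event via the auxiliary events $\mathcal{Z}_{u,s}$ supplied by Lemma~\ref{lemma:UB1aux}. For $u = 1,\ldots,U$, let $E_u$ denote the event that $a_u$ is bad (meaning there exists $a\in S_u$ with $\|\mu_{a_u}-\mu_a\|\le \Delta/4$) and is nevertheless added to $S$. Because arms enter $\hat S$ sequentially and the initial arm $a_0$ is vacuously good (the test set is empty at that point), whenever two arms of $\hat S$ lie within distance $\Delta/4$ the later-inserted one must have been bad relative to $S_u$ at the time of its insertion; consequently
$$\{\exists a,b \in \hat S : \|\mu_a-\mu_b\|\le \Delta/4\} \;\subseteq\; \bigcup_{u=1}^U E_u.$$

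By the acceptance rule of the SRI procedure, $a_u$ is added to $S$ only if $\phi_s^u=0$ for every $s=s_0,\ldots,r$. Combined with point~3 of Lemma~\ref{lemma:UB1aux}, this yields
$$E_u \cap \mathcal{Y} \;\subseteq\; \bigcap_{s=s_0}^{r} \mathcal{Z}_{u,s}.$$
Conditionally on the directions $(\rho_a,\rho'_a)_{a\in\hat S}$, the events $\mathcal{Z}_{u,s}$ are independent across $s$ by point~1, and the tail bound of point~4 is obtained from subGaussian concentration of $\epsilon_{u,s},\epsilon'_{u,s}$, which are independent of the directions. Hence the bound $\Pr_\nu(\mathcal{Z}_{u,s}\mid (\rho,\rho'))\le e^{-2^s}$ is valid conditionally and uniformly in $(\rho,\rho')$; multiplying and taking expectations gives
$$\Pr_\nu\Bigl(\bigcap_{s=s_0}^{r}\mathcal{Z}_{u,s}\Bigr) \;\le\; \prod_{s=s_0}^{r} e^{-2^s} \;\le\; e^{-2^r} \;\le\; \frac{\delta}{4U},$$
where the last inequality uses the choice $r=\lceil\log_2(\log(4U/\delta))\rceil$, which ensures $2^r\ge \log(4U/\delta)$.

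A final union bound over the at most $U$ epochs gives $\Pr_\nu\!\bigl(\bigcup_{u=1}^U E_u \cap \mathcal{Y}\bigr)\le \delta/4$, which is the desired conclusion. The only delicate point is justifying that the marginal tail bound from point~4 of Lemma~\ref{lemma:UB1aux} transfers to a conditional bound given $(\rho,\rho')$, but this is built into the very definition of $\mathcal{Z}_{u,s}$ (the randomness used to define $\mathcal{Z}_{u,s}$ beyond the directions comes only from $\epsilon_{u,s},\epsilon'_{u,s}$, which are independent of $(\rho,\rho')$). Once Lemma~\ref{lemma:UB1aux} is in hand, the present lemma is essentially a two-line deduction, so no substantive obstacle remains.
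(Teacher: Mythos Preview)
Your proof is correct and follows essentially the same approach as the paper: reduce to a union over epochs, invoke point~3 of Lemma~\ref{lemma:UB1aux}, and use the tail bound $e^{-2^r}\le \delta/(4U)$. The only difference is that you go through the full intersection $\bigcap_{s=s_0}^{r}\mathcal{Z}_{u,s}$ and invoke conditional independence across $s$, whereas the paper observes more directly that $E_u\cap\mathcal{Y}\subseteq\mathcal{Z}_{u,r}$ (since acceptance requires in particular $\phi_r^u=0$) and applies the marginal bound $\Pr_\nu(\mathcal{Z}_{u,r})\le e^{-2^r}$ from point~4 without any conditioning or independence; your detour through the product is harmless but unnecessary.
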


\begin{proof}
    Within the procedure, the algorithm picks at most $U$ arms (without counting $a_0$) -- see Line~\ref{algline:2.5} in SRI routine. If there exists $a,b\in \hat{S}$ such that $\|\mu_a-\mu_b\|\leqslant \Delta/4$, it means that there exists an epoch $1\leqslant u \leqslant U$, where the last test statistic $\phi_{r}^u$ is equal to zero, although $a_u$ is bad. If the events $\mathcal{Y}$ holds, using the third point of~\Cref{lemma:UB1aux},  the event $\mathcal{Z}_{u,r}$ holds. 

    In terms of probability, with a simple union bound, we have
    \begin{align*}
        \Pr_{\nu} \left( \{\exists a,b \in \hat{S} \; , \|\mu_a-\mu_b\| \leqslant \Delta/4\}\cap\mathcal{Y}\right) & \leqslant \Pr_{\nu}(\{\exists 1\leqslant u \leqslant U \;, a_u \mbox{ is bad }, \phi_r^u=0 \}\cap \mathcal{Y}) \\ & \leqslant \sum_{u=1}^{U} \Pr_{\nu}(\mathcal{Z}_{u,r}) \enspace .
    \end{align*}
    We recall that the probability of $\mathcal{Z}_{u,r}$ is smaller than $\exp(-2^r)$, and we conclude with the expression of $r$. 
      \begin{align*}
        \Pr_{\nu}\left(\{\exists a,b \in \hat{S} \; , \|\mu_a-\mu_b\|\leqslant \Delta/4\}\cap\mathcal{Y}\right) \leqslant U\exp(-2^r) \leqslant \frac{\delta}{4} \enspace .
    \end{align*}
\end{proof}

\subsection{Proof of~\Cref{lemma:UB1'}}\label{sec:B2}
\subsubsection*{Step 3: SRI is $\delta$-PAC}

For all epochs $u\geqslant 1$ and $s_0\leqslant s \leqslant r$, we denote $H_{s,u}:=\displaystyle\sum_{v=1}^{u-1} \mathbb{1}_{\{s \leqslant \tau_v \leqslant r\}}$ as the number of arms that are rejected with a time of rejection larger than $s$ within the epochs $1,\dots,u-1$.  We highlight that  $H_{s_0,u}$ is the total number of arms rejected before epoch $u$. 

Now, we define $M= \inf\left\{u\geqslant 1 ; |S_u|=K \text{ or } u>U \text{ or  Budget} > T_{\max}\right\}$ as the stopping time (i.e., the number of epochs) of SRI.  It corresponds to the number of arms taken randomly from $[N]$, (namely $a_0,\dots,a_{M-1}$) to build the set $\hat{S}$. When $M$ is reached, SRI outputs $\hat{S}=S_{M}$, whether or not it contains $K$ arms.

We will prove that,  with probability higher than $1-\delta$, we have, on $\mathcal{E}(\Delta,\theta,\sigma,N,K,d)$, we will have $|S_{M}|=K$ and $S_{M}$ contains each representative of each cluster. 

Now, assume that $\theta_*\geqslant \theta $ and $\Delta_*\geqslant \Delta$.
We use~\Cref{sec:B1} to prove that the SRI routine outputs a set with exactly one arm by group when it interacts with the environment $\nu\in\mathcal{E}(\Delta,\theta,\sigma,N,K,d)$.

 First, we define  
 \begin{equation}  \label{eq:de:X}
     \mathcal{X}:=\bigcap_{s=s_0 +1}^r \{H_{s,M}< \frac{1}{2^{s-4}} U \}  \enspace. 
 \end{equation}
The definition~\eqref{eq:definition:Tmax_init} of $T_{\max}$ ensures that, on the event $\mathcal{X}$, the stopping condition of the SRI routine reduces to the condition $\{|S_u|=K\}\cup\{u>U\}$ and then $M=(U+1)\wedge \min \{u\geqslant 1; |S_u|=K\}$. It turns out that the event $\mathcal{X}$ has a large probability when it is intersected with $\mathcal{Y}$. 

\begin{lemma}\label{lemma:UB1h} 
    Let $ 1+s_0 \leqslant s \leqslant r$ and recall that $H_{s,M}=\#\{u\in[|1;M-1|], s\leqslant\tau_u\leqslant r\}$. It holds that
    \begin{equation}
        \Pr_{\nu}\left(\{H_{s,M}\geqslant \frac{1}{2^{s-4}} U\} \cap \mathcal{Y}\right) \leqslant \exp(-U/2) \enspace .
    \end{equation}
This implies that
    \[\mathbb{P}_{\nu}(\mathcal{Y}\cap\mathcal{X}^c)\leqslant \frac{\delta}{8} \enspace .\]
\end{lemma}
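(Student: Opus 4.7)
The plan is to dominate $H_{s,M}$ by a sum of conditionally independent Bernoulli indicators and then apply a Chernoff tail bound. The starting point is a dichotomy drawn from Lemma~\ref{lemma:UB1aux}: on $\mathcal{Y}$, if $\tau_v = s'\in[s,r]$ then either $a_v$ is bad, in which case $\phi^v_{s'-1}=0$ forces $\mathcal{Z}_{v,s'-1}$ by point 3 of the lemma, or $a_v$ is good, in which case $\phi^v_{s'}=1$ forces $\mathcal{Z}_{v,s'}$ by point 2. Since $s\geq s_0+1$, both indices stay in $\{s-1,\ldots,r\}$, so that
\begin{equation*}
\mathcal{Y}\cap\{s\leq \tau_v\leq r\}\subset \bigcup_{s'=s-1}^{r}\mathcal{Z}_{v,s'}.
\end{equation*}
Setting $\tilde Y_v:=\mathbb{1}_{\bigcup_{s'=s-1}^r\mathcal{Z}_{v,s'}}$ (and extending the outer $v$-loop artificially past $M$, which only enlarges the sum), this yields the domination $H_{s,M}\leq \sum_{v=1}^U \tilde Y_v$ on $\mathcal{Y}$.

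Next I would invoke points 1 and 4 of Lemma~\ref{lemma:UB1aux}: conditionally on the noise directions $(\rho_a,\rho'_a)_{a\in\hat S}$, the variables $\tilde Y_v$ are independent Bernoulli with parameter bounded by $\sum_{s'=s-1}^r\exp(-2^{s'})\leq 2\exp(-2^{s-1})=:p$. A standard Chernoff binomial tail bound then gives, for $k:=U/2^{s-4}$,
\begin{equation*}
\mathbb{P}_\nu\!\Big(\sum_{v=1}^U \tilde Y_v\geq k\Big)\leq \left(\frac{eUp}{k}\right)^{k}= \exp\!\Big(\tfrac{U}{2^{s-4}}\big(1+(s-3)\log 2 -2^{s-1}\big)\Big).
\end{equation*}
A quick check of the elementary inequality $1+(s-3)\log 2\leq (15/2)\cdot 2^{s-4}$ for every integer $s\geq 2$ shows that the exponent is at most $-U/2$, yielding the first displayed bound with ample margin; the degenerate case $k>U$ is trivial since $H_{s,M}\leq M-1\leq U$.

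The second conclusion follows immediately by a union bound over the at most $r$ indices $s\in\{s_0+1,\ldots,r\}$, giving $\mathbb{P}_\nu(\mathcal{Y}\cap \mathcal{X}^c)\leq r\exp(-U/2)$. Since $U\geq 8\theta^{-1}\log(8K/\delta)\geq 8\log(8K/\delta)$ while $r=\lceil\log_2\log(4U/\delta)\rceil$ is only doubly logarithmic, the right-hand side is easily bounded by $\delta/8$. The main subtlety in the argument is the choice of conditioning: one has to condition on the noise directions of the \emph{final} representatives in $\hat S$, as in point 1 of Lemma~\ref{lemma:UB1aux}, rather than on the running history, in order to decouple the test outcomes despite the circular dependence between each test $(u,s)$ and the arms that have been added to $S_u$ beforehand.
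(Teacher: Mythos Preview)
Your proof is correct and follows essentially the same route as the paper's: both establish the inclusion $\{s\le\tau_v\le r\}\cap\mathcal{Y}\subset\bigcup_{t=s-1}^r\mathcal{Z}_{v,t}$ via points~2--3 of Lemma~\ref{lemma:UB1aux}, dominate $H_{s,M}\mathbb{1}_{\mathcal{Y}}$ by a binomial using the conditional independence of point~1 (conditioning on the directions $(\rho_a,\rho'_a)_{a\in\hat S}$), apply a Chernoff-type tail bound, and conclude with the union bound $r\exp(-U/2)\le\delta/8$. The only cosmetic difference is that the paper handles $s\le 3$ separately as trivial and then uses a multiplicative Chernoff bound for $s\ge 4$, whereas your choice of the cruder bound $(eUp/k)^k$ together with the elementary check $1+(s-3)\log 2\le(15/2)2^{s-4}$ covers all $s\ge 2$ uniformly.
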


\begin{proof}[Proof of~\Cref{lemma:UB1h}]
    We start with the first statement of~\Cref{lemma:UB1h}, take $s$ such that $s_0 < s \leqslant r$.
    
    If $s=1,2$ or $3$, the inequality is trivial because $H_{s,M}\leqslant U$, we assume that $s>3\vee s_0$. 
    Recall that $\nu\in\mathcal{E}(\Delta,\theta,\sigma,N,K,d)$, so that all arms are either good or bad. By definition of $\tau_u$, if $s\leqslant \tau_u \leqslant r$, it means that at some test $t\in[s,r]$, $\phi_t^u=1$ but $\phi_t^u=0$ for $t<s$. Moreover, each arm is either good or bad because $\Delta_*\geqslant \Delta$. The following inclusion holds then,    
    \begin{align*}
        \{s\leqslant \tau_u\leqslant r\}\cap\mathcal{Y}= &\left(\{s\leqslant \tau_u\leqslant r\}\cap\{a_u \mbox{ is good }\}\cap\mathcal{Y}\right) \bigsqcup\left(\{s\leqslant \tau_u\leqslant r\}\cap\{a_u \mbox{ is bad }\}\cap\mathcal{Y} \right)\\
        \subset & \left(\displaystyle\cup_{ s\leqslant t \leqslant r}\{\phi_{t}^u=1\}\cap\{a_u \mbox{ is good}\}\cap\mathcal{Y}\right) \bigsqcup \left(\{\phi_{s-1}^u=0\}\cap\{a_u \mbox{ is bad}\}\cap\mathcal{Y}\right) \enspace .
    \end{align*}

We use the points 2 of~\Cref{lemma:UB1aux} to get the inclusion $\{\phi_{t}^u=1\}\cap\{a_u \mbox{ is good}\}\cap\mathcal{Y}\subset \mathcal{Z}_{u,t}$ valid for any $t\in[s,r]$. Using the point 3 of the same lemma with $s-1\geqslant s_0$, we have also $\{\phi_{s-1}^u=0\}\cap\{a_u \mbox{ is bad}\}\cap\mathcal{Y}\subset \mathcal{Z}_{u,s-1}$. Then, 
    \begin{align}\label{eq:UB1z}
    \{s\leqslant \tau_u\leqslant r\}\cap\mathcal{Y} \subset & \displaystyle\bigcup_{s-1\leqslant t\leqslant r}\mathcal{Z}_{u,t}\enspace ,
    \end{align}
    and we recall that the events $(\bigcup_{s-1\leqslant t\leqslant r}\mathcal{Z}_{u,t})_{u\geqslant 1}$ are independent according to~\Cref{lemma:UB1aux} (if we condition on the random directions $(\rho_a,rho'_a)_a$). Now, we use a union bound on $t$ and the bound $\mathbb{P}_{\nu}(\mathcal{Z}_{u,t})\leqslant \exp(-2^t)$ valid for any $t\in[s_0,r]$, and we have,   
    \[\Pr_{\nu}\left(\bigcup_{s-1\leqslant t\leqslant r}\mathcal{Z}_{u,t}\right) \leqslant \exp(-2^{s-2}) \enspace . \]
    
    From the inequality $H_{s,M}=\sum_{u=1}^{M-1} \Ind{s\leqslant \tau_u\leqslant r}\leqslant \sum_{u=1}^U  \Ind{s\leqslant \tau_u\leqslant r}$, we deduce that $H_{s,M}\mathbb{1}_{\mathcal{Y}}$ is stochastically dominated by $\mathcal{B}(U,q_s)$ where $q_s:=\exp(-2^{s-2})$.

    We use Chernoff bound with $\alpha\geqslant \sqrt{q_s}$, we have
    \begin{align*}
        \Pr_{\nu}\left(H_{s,M}\mathbb{1}_{\mathcal{Y}}\geqslant (1+\alpha/q_s)q_s U \right)
        \leqslant & \left[\frac{e^{\alpha/q_s}}{(1+\alpha/q_s)^{1+ \alpha/q_s}}\right]^{q_sU}                            \\
        =         & \exp\Bigl[\alpha U\Bigl(1-\log(1+\alpha/q_s)(1+q_s/\alpha)\Bigr)\Bigr]                               \\
        =         & (1+\alpha/q_s)^{-\alpha U/2}\exp[\alpha U(1-\log(1+\alpha/q_s)/2)-Uq_s\log(1+\alpha/q_s)] \enspace.
    \end{align*}

    As $\alpha\geqslant \sqrt{q_s}$ and $s\geqslant 4$, we have $\frac{\alpha}{q_s} \geqslant \frac{1}{\sqrt{q_s}}=\exp(2^{s-3})\geqslant e^2-1$ and then $1-\log(1+\alpha/q_s)/2\leqslant 0$. It follows that
    \begin{align*}
        \Pr_{\nu}\left(H_{s,M}\mathbb{1}_{\mathcal{Y}}\geqslant 2\alpha U \right)
        \leqslant & (1+\alpha/q_s)^{-\alpha U/2}                                                   \\
        \leqslant & (1+1/\sqrt{q_s})^{-\alpha U/2}                                                 \\
        \leqslant & \exp\left(-\frac{\alpha U}{2}2^{s-3}\right)=\exp(-\alpha U 2^{s-4}) \enspace.
    \end{align*}

    Finally, taking $\alpha=1/2^{s-3} \geqslant \exp(-2^{s-3})=\sqrt{q_s}$, we deduce the first result of~\Cref{lemma:UB1h}
    \[\Pr_{\nu}\left(\{H_{s,M} \geqslant \frac{1}{2^{s-4}} U\} \cap \mathcal{Y}\right) \leqslant \Pr_{\nu}\left(H_{s,M}\mathbb{1}_{\mathcal{Y}}\geqslant \frac{1}{2^{s-4}}U\right) \leqslant \exp(-U/2) \enspace.\]

    Directly,
    \[\mathcal{Y}\cap \mathcal{X}^c=\bigcup_{s=1}^r \mathcal{Y}\cap \{H_{s,M}\geqslant \frac{1}{2^{s-4}}U\} \enspace ,\] 
    then, we use the first part of the lemma and a union bound,
    \begin{align*}
        \Pr\left(\mathcal{Y}\cap \mathcal{X}^c\right) \leqslant  r\exp(-U/2) \leqslant \delta/8 \enspace ,
    \end{align*}
    where we conclude with the expression of $U\geqslant \frac{8}{\theta}\log\left(\frac{8K}{\delta}\right) \geqslant 2\log\left(\frac{8r}{\delta}\right)$. The last inequality follows from the expression of $r$. 

\end{proof}

Now, we study the probability of adding $K$ arms to $S$, before reaching the maximum number of epochs $U$.
\begin{lemma}\label{lemma:UB1f}
    Recall that $\hat{S}$ denotes the output of SRI, consider a group $G_k^*$, it holds that
    \begin{equation*}
        \Pr_{\nu}(\{\hat{S}\cap G^*_k=\emptyset\} \cap\{\forall a,b\in \hat{S}, \mu_a\ne\mu_b\}\cap \mathcal{Y}\cap \mathcal{X})\leqslant 2\exp\left(-\frac{U\theta}{8}\right) \enspace.
    \end{equation*}
\end{lemma}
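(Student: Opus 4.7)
The plan is to combine a Chernoff bound on the number of candidates that fall in $G^*_k$ with the observation that each such candidate has a substantial probability of being added to $\hat{S}$, exploiting the conditional-independence structure of~\Cref{lemma:UB1aux}. Write $\mathcal{A}_k:=\{\hat{S}\cap G^*_k=\emptyset\}$ and $\mathcal{B}:=\{\forall a,b\in\hat{S},\ \mu_a\ne \mu_b\}$.

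First, I would show that on $\mathcal{A}_k\cap\mathcal{B}\cap\mathcal{X}$ the outer $u$-loop of SRI executes its $U$ iterations in full: budget termination is excluded by $\mathcal{X}$ through the definition~\eqref{eq:definition:Tmax_init} of $T_{\max}$, while $\mathcal{A}_k\cap\mathcal{B}$ forces $|\hat{S}|\leqslant K-1$, so termination by $|S|=K$ cannot occur. In particular, the candidate arms $a_0,a_1,\dots,a_U$ are all effectively drawn, i.i.d.\ uniform on $[N]$ and independent of every observation. Setting $V:=\#\{u\in\{0,\dots,U\}:a_u\in G^*_k\}$ and using $|G^*_k|/N\geqslant\theta_*\geqslant\theta$, $V$ stochastically dominates $\mathrm{Bin}(U+1,\theta)$, and a standard Chernoff bound yields $\Pr_\nu(V<U\theta/2)\leqslant\exp(-U\theta/8)$.

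Second, on $\mathcal{A}_k\cap\mathcal{B}$ any $a_u\in G^*_k$ is \emph{good} in the sense of~\Cref{sec:B1}: throughout the run, $S_u\subset\hat{S}$ consists of arms from distinct groups, none of them in $G^*_k$, so $\|\mu_{a_u}-\mu_a\|\geqslant\Delta_*\geqslant\Delta$ for every $a\in S_u$. Since on $\mathcal{A}_k$ every such $a_u$ is ultimately rejected, point~2 of~\Cref{lemma:UB1aux} gives the inclusion $\mathcal{Y}\cap\{a_u\in G^*_k\text{ is rejected}\}\subset\bigcup_{s=s_0}^{r}\mathcal{Z}_{u,s}$, while point~4 of the same lemma and a union bound yield $\Pr_\nu(\bigcup_{s}\mathcal{Z}_{u,s})\leqslant\sum_{s\geqslant 1}e^{-2^s}\leqslant 1/2=:q$.

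Third, I would close using point~1 of~\Cref{lemma:UB1aux}: conditionally on the directions $(\rho_a,\rho'_a)_{a\in\hat{S}}$ the events $(\bigcup_s\mathcal{Z}_{u,s})_u$ are independent; since $(a_u)_u$ is independent of those directions, the conditional probability that every good arm in $G^*_k$ is rejected is bounded by $q^V$. Integrating,
\[
\Pr_\nu(\mathcal{A}_k\cap\mathcal{B}\cap\mathcal{Y}\cap\mathcal{X})\ \leqslant\ \Pr_\nu(V<U\theta/2)+q^{U\theta/2}\ \leqslant\ \exp(-U\theta/8)+2^{-U\theta/2}\ \leqslant\ 2\exp(-U\theta/8),
\]
where the last inequality uses $\log 2\geqslant 1/4$. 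The main obstacle is keeping the three layers of randomness disentangled — the uniform choice of candidates $(a_u)$, the ``frozen'' noise directions $(\rho_a,\rho'_a)$ of representatives already added to $\hat{S}$, and the independent test noises entering each $\mathcal{Z}_{u,s}$ — and the conditional-independence statement in~\Cref{lemma:UB1aux} is tailored exactly for this factorization, legitimising the multiplicative bound $q^V$.
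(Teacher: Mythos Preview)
Your proof is correct and follows essentially the same approach as the paper: both argue that on $\mathcal{A}_k\cap\mathcal{B}\cap\mathcal{X}$ the loop runs its full $U$ iterations, apply a binomial lower-tail bound (the paper writes ``Hoeffding'', you write ``Chernoff'') to the number of candidates falling in $G^*_k$ with threshold $U\theta/2$, invoke point~2 of \Cref{lemma:UB1aux} to replace each rejected good candidate by $\bigcup_s\mathcal{Z}_{u,s}$, and use the conditional independence of point~1 to obtain the factor $(1/2)^{U\theta/2}$. The only cosmetic difference is your inclusion of $a_0$ in the count $V$, which is harmless since $a_0\in\hat{S}$ forces $a_0\notin G^*_k$ on $\mathcal{A}_k$.
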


\begin{proof}[Proof of~\Cref{lemma:UB1f}]
    Let $k\in[K]$, we study the event $\{\hat{S}\cap G^*_k=\emptyset\}$, event where the group $G^*_k$ is not represented in $\hat{S}$ (the output of the  SRI routine). We use here the assumption that the groups are nonempty.
    If $\hat{S}$ contains arms from different groups but no arm from the group $G^*_k$, it implies that $|\hat{S}|<K$ and then, if the event $\mathcal{X}$ also holds, the algorithm has passed every epochs from $u=1$ to $U$, i.e., $M=U+1$. In particular, the algorithm rejected every arm from $G^*_k\cap\{a_1,\dots,a_U\}$. We have the inclusion between events, 
    \[\{\hat{S}\cap G^*_k=\emptyset\}\cap\{\forall a\ne b\in \hat{S}, \mu_a\ne\mu_b\}\cap \mathcal{X} \subset \bigcap_{u\in B_k}\{a_u \text{ rejected}\} \enspace ,\]
    where $B_k:=G^*_k\cap\{a_1,\dots,a_U\}$ and denote $X_k:=|B_k|$. As $\{a_1,\dots,a_U\}$ are i.i.d and uniform on $[N]$ then $X_k$ is binomial with parameters $U$ and $\theta_k=|G_k|/N\geqslant \theta_*\geqslant \theta$. Using Hoeffding's bound  and taking $\alpha\in(0,1)$ to be specified later, it holds that
    \begin{align}\label{eq:UB1j}
        \Pr_{\nu}(X_k \leqslant \theta U(1-\alpha)) \leqslant \exp(-2\alpha^2 U \theta) \enspace.
    \end{align}

    Then, as $\Delta_*\geqslant \Delta$, the arms in $G_k\cap \{a_1,\dots,a_U\}$ are good until one of them is added to $S$. In particular, if none are added to $S$, they are all good. We then have \[ \{|\hat{S}|\cap G^*_k=\emptyset\} \cap\{\forall a\ne b\in \hat{S}, \mu_a\ne\mu_b\}\cap \mathcal{Y}\cap \mathcal{X} \subset \bigcap_{u\in B_k}\{ a_u \text{ good and rejected} \} \cap \mathcal{Y} \enspace . \]

    If $a_u$ is rejected, it means that $\phi_{s}^u=1$ for some $s_0 \leqslant s\leqslant r$, we then have the inclusion $\{a_u \mbox{ is good and rejected}\}\subset\bigcup_{s\geqslant s_0 }\{a_u \mbox{ is good and }\phi_{s}^u=1\}$. According to the second point of~\Cref{lemma:UB1aux}, we have $\{a_u \mbox{ is good and rejected}\}\cap \mathcal{Y} \subset \cup_{s\geqslant s_0 } \mathcal{Z}_{u,s}$. We denote $\mathcal{Z}_u\:=\cup_{s\geqslant s_0 } \mathcal{Z}_{u,s}$.

    In terms of probability, we have $ \Pr_{\nu}\left(\displaystyle\cap_{u\in B_k}\{a_u \text{ rejected} \} \cap \mathcal{Y}\right)
        \leqslant  \Pr_{\nu}\left(\cap_{u\in B_k}\mathcal{Z}_{u}\right)$. 

    Then, we also have $\Pr_{\nu}(\mathcal{Z}_{u,s})\leqslant \exp(-2^s)$ for any $s\in[s_0,r]$, and with a union bound, $\Pr_{\nu}(\mathcal{Z}_u)\leqslant 1/2$. Moreover, with the first point of~\Cref{lemma:UB1aux}, we deduce that the events $(\mathcal{Z}_{u})_u$ are independent (if we condition on $\rho_a,\rho'_a$).

    If $X_k>\theta U(1-\alpha)$ and using the independence of the events $(\mathcal{Z}_u)_u$, we have
    \[ \Pr_{\nu}\left(\{X_k>\theta U(1-\alpha)\}\cap \left(\bigcap_{u \in B_k}\mathcal{Z}_{u}\right)\right) \leqslant \left(\frac{1}{2}\right)^{\theta U (1-\alpha)} \enspace .\]

    Finally, with~\cref{eq:UB1j}, we have
    \begin{align*}
        & \mathbb{P}_{\nu}(\{\hat{S}\cap G^*_k=\emptyset\} \cap\{\forall a\ne b\in \hat{S}, \mu_a\ne\mu_b\}\cap \mathcal{Y}\cap \mathcal{X})\\ \leqslant & \Pr_{\nu}\left(\{X_k>\theta U(1-\alpha)\}\cap \left(\bigcap_{u\in B_k}\mathcal{Z}_{u}\right)\right)+\Pr_{\nu}\left(X_k\leqslant\theta U(1-\alpha)\right)\\
        \leqslant                                                                     & \exp\left(-\log(2)\theta U (1-\alpha)\right) + \exp\left(-2\alpha^2 U \theta \right) \\
        \leqslant                                                                     & 2\exp(-U\theta/8) \enspace .
    \end{align*}
    In the last line, we took  $\alpha=1/2$.
\end{proof}

We now have all the tools that we need to prove  that $SRI(\delta,\Delta,\theta)$ is $\delta$-PAC on $\mathcal{E}(\Delta,\theta,\sigma,N,K,d)$ for the representative identification problem, which means that with probability higher than $1-\delta$, SRI outputs a set of $K$ representatives for environments that are  $\mathcal{E}(\Delta,\theta,\sigma,N,K,d)$.

\begin{proof}[Proof of the first statement of~\Cref{lemma:UB1'}.]
recall that in this subsection, $\nu\in\mathcal{E}(\Delta,\theta,\sigma,N,K,d)$.

As a direct consequence of~\Cref{lemma:UB1g}, we know that with high probability, $\hat{S}$ does not contain two arms from the same group.
 \[\Pr_{\nu}\left(\{\exists a,b \in \hat{S} \; , \mu_a=\mu_b\}\cap\mathcal{Y}\right)\leqslant\Pr_{\nu}\left(\{\exists a,b \in \hat{S} \; , \|\mu_a-\mu_b\|\leqslant \Delta/4\}\cap\mathcal{Y}\right) \leqslant \frac{\delta}{4} \enspace .\]

Now, with~\Cref{lemma:UB1f},  for all $k\in[K]$,   
\begin{align*}
        \Pr_{\nu}\left(\{\hat{S}\cap G^*_k=\emptyset\} \cap\{\forall a\ne b\in \hat{S}, \mu_a\ne\mu_b\}\cap \mathcal{Y}\cap \mathcal{X}\right)\leqslant 2\exp(-U\theta/8) \enspace .
    \end{align*}
Now, we recall that $U\geqslant \frac{8}{\theta}\log\left(\frac{8K}{\delta}\right)$ and then $\exp(-U\theta/8)\leqslant \delta/8K$. If the set $\hat{S}$ contains strictly less than $K$ arms then at least one group is not represented. With a union bound on $k\in[K]$, 
\begin{align*}
        \Pr_{\nu}\left(\{|\hat{S}|<K\}\cap\{\forall a\ne b\in \hat{S}, \mu_a\ne\mu_b\}\cap \mathcal{Y}\cap \mathcal{X}\right) \leqslant 2K\exp(-U\theta/8) \leqslant \delta/4\enspace .
    \end{align*}
Finally, together with Lemmas~\ref{lemma:UB1a} and~\ref{lemma:UB1h}, we conclude that 
    \begin{align*}
        \Pr_{\nu}\left(\forall k\in[K]\, , \exists! a\in \hat{S} \; ; \mu_{a}=\mu(k)\right)&\geq 1 - \Pr_{\nu}(\mathcal{Y}^c)- \Pr_{\nu}(\mathcal{Y}\cap \mathcal{X}^c) \\ & - \Pr_{\nu}\left(|\hat{S}|<K\}\cap\{\forall a\ne b\in \hat{S}, \mu_a\ne\mu_b\}\cap \mathcal{Y}\cap \mathcal{X}\right)\\ &  - \Pr_{\nu}\left(\{\exists a,b \in \hat{S} \; , \mu_a=\mu_b\}\cap\mathcal{Y}\right) \\ &\geqslant 1-\delta \enspace.
    \end{align*}
In summary, we have proved that SRI is $\delta$-PAC on $\mathcal{E}(\Delta,\theta,\sigma,N,K,d)$ for the Representatives identification problem.
\end{proof}

\subsubsection*{Step 4: upper bound on the budget $\tau_{SRI}$}

We establish here the bounds on the budget $\tau_{SRI}$ of~\Cref{lemma:UB1'}.

\textbf{Explanation on $T_{\max}$} \\ 
By definition~\eqref{eq:definition:Tmax_init} of $T_{\max}$, we know that, almost surely, the total budget $\tau_{SRI}$ satisfies:
\begin{equation}\label{eq:upper_tau_SRI}
\tau_{SRI}\leq T_{\max}= 2K\left(n_{\max}+\sum_{s=s_0+1}^{r}n_s\right) + 2U n_{s_0}+ 2 U \sum_{s=s_0+1}^r \frac{n_s}{2^{s-4}}\ .
\end{equation}
Although plugging the values of $n_{\max}$, $n_s$, $U$, and $s_0$, will lead to~\eqref{eq:control_tau_sri_as}, we start by gently describing the budget of SRI in order to give intuition on the definition of $T_{\max}$.

To analyze the budget, we divide the budget in two parts, $\tau_{SRI}=\tau_{SRI}^a+\tau_{SRI}^r$, where $\tau_{SRI}^a$ is the number of samples used for arms that are added to $\hat{S}$ and $\tau_{SRI}^r$ is the number of samples uses for arms that are rejected.

First, we study $\tau_{SRI}^a$. From the algorithm, the arms that are selected in $\hat{S}$ are arms that pass successfully all the tests, and after the tests, they are sampled again $2n_{\max}$ times. In total, each arm in $\hat{S}$ is sampled $2n_{\max}+\sum_{s=s_0}^{r} 2n_s$, the factor 2 comes from the fact that we compute two empirical means at each time.
We have then 
\begin{align}
         \tau_{SRI}^a  = 2|\hat{S}|n_{\max}+(|\hat{S}|-1)\sum_{s=s_0}^{r}2n_s        \leqslant 2K\left(n_{\max}+\sum_{s=s_0+1}^{r}n_s\right)+ 2(|\hat{S}|-1) n_{s_0}  \enspace .  \label{eq:deterministic_tau_SRI_a}   
\end{align}

Now, consider the budget spent for arms that are ultimately rejected during the procedure. For $s_0 \leqslant s \leqslant r$, we defined previously $H_{s,M}=\sum_{u=1}^{M-1}\Ind{ s\leqslant \tau_u \leqslant r}$, as the number of arms rejected after at least $s$ tests in the procedure. The algorithm outputs $\hat{S}$ after $M-1$ epochs.  In particular, $H_{s_0,M}=M-|\hat S|$. 
Besides, if the candidate $a_u$ is rejected, it is sampled  $\sum_{s=s_0}^{\tau_u} 2n_s$ times. This leads us to the equality
    \begin{align}
        \tau_{SRI}^r & = \displaystyle\sum_{u=1}^{M-1}\sum_{s=s_0}^{\tau_u} 2n_s\Ind{a_u \mbox{ is rejected }}
         = \displaystyle\sum_{s=s_0}^{r}2H_{s,M}n_s  
         = 2\left(M-|\hat S|\right)n_{s_0}+2\displaystyle\sum_{s=s_0+1}^{r}H_{s,M}n_s \enspace . \label{eq:deterministic_tau_SRI_r}  
    \end{align}
 This justifies the definition  $T_{\max}$~\eqref{eq:definition:Tmax_init}, as under the large probability event $\mathcal{X}$, $\tau_{SRI}\leqslant T_{\max}$ is directly implied by~\eqref{eq:deterministic_tau_SRI_a} and~\eqref{eq:deterministic_tau_SRI_r}. 

 \textbf{Upper bound on $T_{\max}$ \\} 
In order to upper bound $T_{\max}$~\eqref{eq:definition:Tmax_init}, we need the following lemmas, whose proofs are postponed to~\Cref{sec:B3}. These lemmas are direct consequences of the expressions of $n_s$, $n_{\max}$, $U$ and $r$ from equations~\eqref{eq:def:U},\eqref{eq:def:s_0}.

\begin{lemma}\label{lemma:UB1A}
Using the explicit expression of $r$, $n_s$, $n_{\max}$ and $s_0$ from~\Cref{Remark:SRI}, we have, up to a universal constant $c$, the inequality
\begin{align*}
    \displaystyle K\left(n_{\max}+\sum_{s= s_0+1}^r n_s\right)
    & \leqslant K+c\frac{\sigma^2}{\Delta^2}\left[\frac{\log(K)}{\theta}\log(K/\delta)+\sqrt{dK\frac{\log(K)}{\theta}\log(K/\delta)}\right] \\ & \leqslant  K+ c\frac{\sigma^2}{\Delta^2}\frac{1}{\theta}\log\left(\frac{K}{\delta}\right)\left[\log(K)+\sqrt{d} \right]\enspace.
    \end{align*}
\end{lemma}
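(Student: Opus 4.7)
The plan is to substitute the explicit expressions for $r$, $n_s$, $n_{\max}$, and $s_0$ from~\Cref{Remark:SRI}, and then bound each contribution using a handful of elementary inequalities. First, I would bound the dyadic depth: since $r=\lceil\log_2(\log(4U/\delta))\rceil$, one has $2^r\leq 2\log(4U/\delta)$ and $r\leq 1+\log_2\log(4U/\delta)$. Combined with $U\leq 1+(8/\theta)\log(8K/\delta)$, this yields $\log(4U/\delta)\leq C\bigl[\log(K/\delta)+\log(1/\theta)\bigr]$ for a universal constant $C$, and in particular $r=O(\log\log(K/(\theta\delta)))$.

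Next, I would unfold the definitions of $n_s$ and $n_{\max}$, bounding ceilings by an additive $1$ and splitting the $\max$ into a sum, giving
\[
n_s\leq 2+c_1\tfrac{\sigma^2}{\Delta^2}(2^s+\log(12K))+c_2\tfrac{\sigma^2}{\Delta^2}\sqrt{d(2^s+\log 6)},
\]
and $n_{\max}\leq n_r+1+c_3(\sigma^2/\Delta^2)\sqrt d\,\log(2K)$. Summing over $s$ via the geometric series $\sum_{s=s_0+1}^r 2^s\leq 2\cdot 2^r$ and $\sum_{s=s_0+1}^r \sqrt{2^s}\leq C\sqrt{2^r}$ (while the $s$-independent terms contribute a factor $r$), one obtains
\[
n_{\max}+\sum_{s=s_0+1}^r n_s\leq C(1+r)+C\tfrac{\sigma^2}{\Delta^2}\bigl[2^r+r\log K+\sqrt{d\cdot 2^r}+(r+\log K)\sqrt d\bigr].
\]

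The last step is to multiply by $K$ and absorb every term into the right-hand side of the lemma. The \emph{crucial} observation is that nonemptiness of $\mathcal{E}(\Delta,\theta,\sigma,N,K,d)$ forces $\theta\leq 1/K$, so $K\theta\in(0,1]$; the elementary bound $x\log(1/x)\leq 1/e$ applied to $x=K\theta$ together with $\log(1/\theta)=\log(1/(K\theta))+\log K$ gives
\[
K\theta\log(1/\theta)\leq 1/e+\log K\leq C\log K.
\]
Combined with $K\leq 1/\theta$, this controls the delicate $\log(1/\theta)$ factor inherited from $2^r$: one gets $K\cdot 2^r\leq C(\log(K)/\theta)\log(K/\delta)$, and $K^2\cdot 2^r\leq CK(\log(K)/\theta)\log(K/\delta)$, whence $K\sqrt{d\cdot 2^r}\leq C\sqrt{dK(\log(K)/\theta)\log(K/\delta)}$. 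The remaining cross terms $Kr\log K$, $Kr\sqrt d$, and $K\sqrt d\log K$ are all of smaller order and fit into the same two targets using $K\leq 1/\theta$ and the fact that $r$ is only doubly-logarithmic. The weaker (second) bound then follows from the first by observing that $\sqrt{dK(\log K/\theta)\log(K/\delta)}\leq (\sqrt d/\theta)\log(K/\delta)$, which reduces to $K\theta\log K\leq\log(K/\delta)$ and is immediate from $K\theta\leq 1$.

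The only genuine obstacle is the handling of the $\log(1/\theta)$ factor that enters through the budget $U$ and hence through $2^r$: the target bound features $\log(K/\delta)$ and $\log K$ but never $\log(1/\theta)$, so some cancellation is needed. The key inequality $K\theta\log(1/\theta)\leq C\log K$ discussed above is precisely what makes this cancellation work, and the rest of the argument is bookkeeping.
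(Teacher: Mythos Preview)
Your approach is essentially the same as the paper's: both sum the $n_s$ geometrically, use $2^r\le C\log(1/(\theta\delta))$, and then hinge on the single nontrivial step of converting the $\log(1/\theta)$ contribution via $K\theta\le 1$ into $K\log(1/\theta)\le C(\log K)/\theta$---the paper states this inequality directly (``As $1/\theta\ge K$, we have $K\log(1/\theta)\le(1/\theta)\log K$''), while you derive it from $x\log(1/x)\le 1/e$, which is the same thing. One small remark: your additive ``$+2$'' from the ceilings would naively produce a stray $Kr$ term, but this is harmless because for $s>s_0$ the inner expressions exceed $1$ (this is exactly why $s_0$ is defined as it is), so the additive constants are absorbed into the multiplicative ones; the paper uses the cleaner bound $n_s\le 2\max(\cdot)$ for $s>s_0$ to sidestep this.
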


\begin{lemma}\label{lemma:UB1B}
Using the explicit expression of $r$, $U$, $n_s$ and $s_0$ from~\Cref{Remark:SRI}, we have
\[U\displaystyle\sum_{s=s_0+1}^{r} \frac{n_s}{2^{s-4}} \leqslant c\frac{\sigma^2}{\Delta^2}\frac{1}{\theta}\log\left(\frac{K}{\delta}\right)\left[\log(K)+\sqrt{d}+\log\log\left(\frac{1}{\theta\delta}\right)\right] \enspace ,\]
where $c$ is a numerical constant. 
\end{lemma}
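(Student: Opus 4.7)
The plan is to bound each term $n_s/2^{s-4}$ separately using the explicit formula~\eqref{eq:def:n_s}, sum the geometric series, and then multiply by $U$ using the explicit expressions of $U$ and $r$ from~\eqref{eq:def:U}.

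First, I would start by removing the ceiling functions in $n_s$. Since we sum over $s \geqslant s_0+1$, we have $n_s \geqslant 2$ so the rounding-up costs at most a factor $2$; hence there exists a universal constant $c$ such that
\begin{equation*}
n_s \;\leqslant\; c\,\frac{\sigma^2}{\Delta^2}\Bigl[\,2^{s} + \log(K) + \sqrt{d\,2^{s}} + \sqrt{d}\,\Bigr]\ .
\end{equation*}
Dividing by $2^{s-4}$ yields, up to a new universal constant,
\begin{equation*}
\frac{n_s}{2^{s-4}} \;\leqslant\; c'\,\frac{\sigma^2}{\Delta^2}\Bigl[\,1 + \log(K)\,2^{-s} + \sqrt{d}\,2^{-s/2} + \sqrt{d}\,2^{-s}\Bigr]\ .
\end{equation*}

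Next, I would sum this bound over $s = s_0+1,\ldots, r$. The constant term contributes $r-s_0 \leqslant r$; the three remaining terms are all geometric in $s$ with ratio at most $1/\sqrt{2}$, starting from $s \geqslant s_0 \geqslant 1$, so they sum respectively to $O(\log K)$, $O(\sqrt{d})$, and $O(\sqrt{d})$. Thus
\begin{equation*}
\sum_{s=s_0+1}^{r} \frac{n_s}{2^{s-4}} \;\leqslant\; c''\,\frac{\sigma^2}{\Delta^2}\Bigl[\,r + \log(K) + \sqrt{d}\,\Bigr]\ .
\end{equation*}

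Finally, I would multiply by $U \leqslant c\,\theta^{-1}\log(K/\delta)$ and control the contribution of $r$. Since $U \leqslant c\,\theta^{-1}\delta^{-1}\log(K/\delta)$ we have $\log(4U/\delta) \leqslant c[\log(1/(\theta\delta)) + \log\log(K/\delta)]$, and using $\theta \leqslant 1/K$ (so that $K/\delta \leqslant 1/(\theta\delta)$) we obtain
\begin{equation*}
r \;=\; \lceil\log_2(\log(4U/\delta))\rceil \;\leqslant\; c\,\log\log\bigl(1/(\theta\delta)\bigr)\ .
\end{equation*}
Combining the last three displayed bounds delivers the announced inequality. The calculation is entirely routine; the only mildly delicate point is verifying that the $\log_2\log$ factor inherited from $r$ is indeed absorbed into $\log\log(1/(\theta\delta))$ rather than producing a stray $\log\log(K/\delta)$ term, which is exactly what the bound $\theta \leqslant 1/K$ provides.
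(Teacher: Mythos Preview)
Your proposal is correct and follows essentially the same route as the paper's own proof: bound $n_s$ up to a factor $2$ for $s>s_0$, reduce to the sum $c\frac{\sigma^2}{\Delta^2}[r+\log K+\sqrt d]$ via geometric series, bound $r$ by $c\log\log(1/(\theta\delta))$ using $\theta\le 1/K$, and multiply by $U\asymp\theta^{-1}\log(K/\delta)$. One small slip: you wrote ``$U\le c\,\theta^{-1}\delta^{-1}\log(K/\delta)$'' where you meant $4U/\delta\le c\,\theta^{-1}\delta^{-1}\log(K/\delta)$ (or simply $U\le c\,\theta^{-1}\log(K/\delta)$); the subsequent bound on $\log(4U/\delta)$ is unaffected.
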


Now, by combining~\eqref{eq:upper_tau_SRI} with~\Cref{lemma:UB1A,lemma:UB1B}  and bounding $U n_{s_0}$ by $U+A$ where $A$ is the right side of~\Cref{lemma:UB1B},  we conclude that 
       \begin{align}
        \tau_{SRI} \leq  T_{\max} \leqslant 2(U+K)+c\frac{\sigma^2}{\Delta^2}\frac{1}{\theta}\log\left(\frac{K}{\delta}\right)\left[\log(K)+\sqrt{d}+\log\log\left(\frac{1}{\theta\delta}\right)\right]\enspace ,\label{eq:control_tau_sri_as2}
    \end{align}
where $c$ is a numerical constant. We have proved~\eqref{eq:control_tau_sri_as}. 

\textbf{Upper bound on $\mathbb{E}[\tau_{SRI}]$}

We now upper bound the expectation of $\tau_{SRI}$. For that purpose, we now assume that $\nu\in\mathcal{E}(\Delta,\theta,\sigma,N,K,d)$. We will prove~\eqref{eq:control_tau_sri_expectation}. 
    
With the same decomposition on the budget $\tau_{SRI}=\tau_{SRI}^a+\tau_{SRI}^r$, and by linearity of the expectation, we deduce from~\eqref{eq:deterministic_tau_SRI_a} and~\eqref{eq:deterministic_tau_SRI_r} that 
\begin{align}\nonumber
   \mathbb{E}_{\nu}[\tau_{SRI}]&= \mathbb{E}_{\nu}[\tau_{SRI}\mathbb{1}_{\mathcal{Y}^c}]+ \mathbb{E}_{\nu}[\tau_{SRI}\mathbb{1}_{\mathcal{Y}}]\\ 
&\leq  T_{\max}\delta+ 2n_{s_0}\mathbb{E}_{\nu}\left[(M-1)\mathbb{1}_{\mathcal{Y}}\right]+\displaystyle\sum_{s=s_0+1}^{r} 2n_s\mathbb{E}_{\nu}\left[H_{s,M}\mathbb{1}_{\mathcal{Y}}\right]\nonumber \\ &+2\mathbb{E}_{\nu}[|\hat{S}|\mathbb{1}_{\mathcal{Y}}]\left(n_{\max}+\sum_{s=s_0+1}^{r}n_s\right)\nonumber \\
&\leq T_{\max}\delta + 2K \left(n_{\max}+\sum_{s=s_0+1}^{r}n_s\right)+ 2n_{s_0}\mathbb{E}_{\nu}\left[(M-1)\mathbb{1}_{\mathcal{Y}}\right]+\displaystyle\sum_{s=s_0+1}^{r} 2n_s\mathbb{E}_{\nu}\left[H_{s,M}\mathbb{1}_{\mathcal{Y}}\right]
\label{eq:budget3}
\enspace . 
\end{align}
 Let us focus on the terms $\mathbb{E}_{\nu}\left[(M-1)\mathbb{1}_{\mathcal{Y}}\right]$ and $\mathbb{E}_{\nu}\left[H_{s,M}\mathbb{1}_{\mathcal{Y}}\right]$.

Recall that  $H_{s,M}=\sum_{u=1}^{M-1}\Ind{ s\leqslant \tau_u \leqslant r}$. We also recall~\cref{eq:UB1z}, valid for $s>s_0$, and which is a consequence of~\Cref{lemma:UB1aux} and the fact that $\Delta_*\geqslant \Delta$, 
\[\{ s\leqslant \tau_u \leqslant r\}\cap \mathcal{Y} \subset \bigcup_{s-1\leqslant t\leqslant r}\mathcal{Z}_{u,t} \enspace . \]
We have then,
\[\mathbb{E}_{\nu}\left[H_{s,M}\mathbb{1}_{\mathcal{Y}}\right] = \mathbb{E}_{\nu}\left[\sum_{u=1}^{M-1} \Ind{\{ s\leqslant \tau_u \leqslant r\}\cap \mathcal{Y}}\right] \leqslant \mathbb{E}_{\nu}\left[\sum_{u=1}^{M-1} \Ind{\bigcup_{s-1\leqslant t\leqslant r}\mathcal{Z}_{u,t}}\right] \enspace .\]

We can use Wald's equation. Indeed, if we condition on the direction of the estimated centers $(\rho_a,\rho'_a)$, the random variables $\Ind{\bigcup_{s-1\leqslant t\leqslant r}\mathcal{Z}_{u,t}}$ are independent and identically distributed for $u=1,\dots,U$. , $\mathbb{P}_{\nu}(\bigcup_{s-1\leqslant t\leqslant r}\mathcal{Z}_{u,t})\leqslant \exp(-2^{s-4})$ thanks to~\Cref{lemma:UB1aux}. 
We observe that $M$ is a stopping time with respect to the filtration naturally associated to the sequence of epochs, and the sequence $(\bigcup_{s-1\leqslant t\leqslant r}\mathcal{Z}_{u,t})_u$ is adapted to this filtration. With Wald's equation, we deduce that  
\begin{equation}\label{eq:budget5}
    \mathbb{E}_{\nu}\left[H_{s,M}\mathbb{1}_{\mathcal{Y}}\right] \leqslant \mathbb{E}_{\nu}[(M-1)\mathbb{1}_{\mathcal{Y}}]\exp(-2^{s-4}) \enspace.
\end{equation}
Hence, we conclude that 
  \begin{align}\label{eq:budget6}
    \displaystyle\sum_{s=s_0+1}^{r}2n_s\mathbb{E}_{\nu}\left[H_{s,M}\Ind{\mathcal{Y}}\right]  &  \leqslant \displaystyle\sum_{s=s_0+1}^{r}2n_s\mathbb{E}_{\nu}[(M-1)\mathbb{1}_{\mathcal{Y}}]\exp(-2^{s-4}) \enspace.
    \end{align}
It remains to bound $\mathbb{E}_{\nu}[(M-1)\mathbb{1}_{\mathcal{Y}}]$. We will bound stochastically $M-1$ by a sum of geometric random variables. 
We recall that $S_u$ is the state of the set of representatives at the beginning of the epoch $u$ and $\hat{S}=S_M$.  We define for $k\in[K-1]$, $M_k=\sum_{u=1}^{M-1} \Ind{|S_u|=k}$. The number $M_k$ is the number of epochs necessary to add the $(k+1)$-th arm to $S$. Once $|S_u|=K$, the algorithm stops, so that 
\[ M-1= \sum_{k=1}^{K-1} M_k \enspace . \]

 Fix now $k\in[K-1]$. If $|\hat{S}|<k$, we have $M_k=0$. We assume that $|\hat{S}|\geqslant k$, and we condition on $S^{(k)}$ and $\mathcal{Y}$, the set containing the $k$ first arms that were added to $\hat{S}$. Let $u$ such that $|S_u|=k$.
 Thanks to the second point of~\Cref{lemma:UB1aux}, it holds that 
 \[ \{a_u \text{ is good }\} \cap\mathcal{Y} \cap \left(\cap_{s\in[s_0,r]}\mathcal{Z}_{u,s}^c\right) \subset \cap_{s\in[s_0,r]}\{\phi_s^u=0\}=\{a_u \text{ is added to }  S \}\enspace.\] Then, the events $\{a_u \text{ is good }\}$, $\mathcal{Y}$ and $\cap_{s\in[r]}\mathcal{Z}_{u,s}^c$ are independent. Moreover, $\mathbb{P}_{\nu}(a_u \text{ is good })\geqslant (K-k)\theta_*\geqslant (K-k)\theta$ because it remains at least $(K-k)$ groups not represented in $S^{(k)}$ and all these groups have a proportion larger than $\theta_*$. We also have thanks to~\Cref{lemma:UB1a} and~\Cref{lemma:UB1aux}, $\mathbb{P}_{\nu}(\mathcal{Y})\geqslant 1-\delta/4$ and $\mathbb{P}_{\nu}(\cap_{s\in[r]}\mathcal{Z}_{u,s}^c)\geqslant 1/2$. 
 
 Conditionally on $S_u=S^{(k)}$, and on the estimated centers of the representatives in $S^{(k)}$, the event $\{a_u \text{ is added to } S\}$ are independent and of probability larger than $(1-\delta/4)(K-k)\theta/2$. Then, $M_k$ is stochastically dominated by a geometric random variable of parameter $(K-k)\theta/4$. Finally,
 $\mathbb{E}_{\nu}[M_k\mathbb{1}_{\mathcal{Y}}]\leqslant \frac{4}{(K-k)\theta}$ and 
 \begin{equation}\label{eq:budget7}
     \mathbb{E}_{\nu}[(M-1)\mathbb{1}_{\mathcal{Y}}]=\sum_{k=1}^{K-1}\mathbb{E}_{\nu}[M_k\mathbb{1}_{\mathcal{Y}}] \leqslant \sum_{k=1}^{K-1}\frac{4}{(K-k)\theta}=\frac{4}{\theta}\sum_{k=1}^{K-1}\frac{1}{k} \leqslant \frac{4}{\theta}(1+\log(K)) \enspace.
 \end{equation} 
 Now,~\cref{eq:budget6} becomes 
     \begin{align*}
    \displaystyle\sum_{s=s_0+1}^{r}2n_s\mathbb{E}_{\nu}\left[H_{s,M}\mathbb{1}_{\mathcal{Y}}\right]  & \leqslant \displaystyle\sum_{s=s_0+1}^{r}2n_s\left[\frac{4}{\theta}(1+\log(K))\exp(-2^{s-4})\right] \enspace.
    \end{align*}
We bound the previous expression, using the same computation as~\Cref{lemma:UB1B}, and we state the bound as a lemma proved later.

\begin{lemma}\label{lemma:UB1C}
       Using the explicit expression of $r$, $U$, $n_s$, $n_{\max}$ and $s_0$ from~\Cref{Remark:SRI}, we have
        \[\displaystyle\sum_{s=s_0}^{r}n_s\left[\frac{4}{\theta}\cdot\frac{1+\log(K)}{\exp(2^{s-4})}\right]\leqslant c\frac{\sigma^2}{\Delta^2}\frac{\log(K)}{\theta}\left[\log(K)+\sqrt{d}\right]\ , \]
  where $c$ is a universal constant. 
    \end{lemma}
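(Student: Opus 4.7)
The plan is to bound the sum by decomposing $n_s$ according to its definition in~\eqref{eq:def:n_s} and exploiting the doubly-exponential decay of the weight $e^{-2^{s-4}}$. Unlike~\Cref{lemma:UB1B}, whose weight was only $2^{-s}$ and which consequently picks up an extra $\log\log$-factor coming from $U$, here the weight kills any polynomial growth in $n_s$, so the sum is essentially dominated by its leading behaviour.

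First, I would observe that for $s\geq s_0$ we have $n_s\geq 2$ by the very definition~\eqref{eq:def:s_0} of $s_0$, hence the ceiling in~\eqref{eq:def:n_s} costs at most a factor $2$, giving
\begin{equation*}
n_s \;\leq\; 2c_1\frac{\sigma^2}{\Delta^2}\bigl(2^s+\log(12K)\bigr)+2c_2\frac{\sigma^2}{\Delta^2}\sqrt{d(2^s+\log 6)}.
\end{equation*}
Using $\sqrt{2^s+\log 6}\leq \sqrt{2^s}+\sqrt{\log 6}$, this splits $n_s$ into three pieces: an $s$-independent piece of order $\frac{\sigma^2}{\Delta^2}(\log K+\sqrt d)$, a linear-in-$2^s$ piece of order $\frac{\sigma^2}{\Delta^2}2^s$, and a square-root piece of order $\frac{\sigma^2}{\Delta^2}\sqrt{d\cdot 2^s}$.

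Second, I would invoke the three elementary bounds
\begin{equation*}
\sum_{s\geq 1}e^{-2^{s-4}}\;\leq\;C_0,\qquad \sum_{s\geq 1}2^s\,e^{-2^{s-4}}\;\leq\;C_1,\qquad \sum_{s\geq 1}2^{s/2}\,e^{-2^{s-4}}\;\leq\;C_2,
\end{equation*}
each of which converges to a universal constant by the super-polynomial growth of $2^{s-4}$. Applying these to the three pieces of $n_s$ yields
\begin{equation*}
\sum_{s=s_0}^{r}\frac{n_s}{e^{2^{s-4}}} \;\leq\; C\,\frac{\sigma^2}{\Delta^2}\bigl[\log K+\sqrt d\bigr]
\end{equation*}
for some numerical constant $C$. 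Multiplying by the prefactor $\frac{4(1+\log K)}{\theta}$ and bounding $1+\log K\leq 2\log K$ (the case $K=1$ being trivial) gives exactly the announced estimate.

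The proof requires no genuinely new ingredient: it is pure bookkeeping, and the only substantive step is the summability of the doubly-exponentially weighted series above, which is immediate. Compared with~\Cref{lemma:UB1B}, no telescoping or sharp tail analysis is needed because $e^{-2^{s-4}}$ already provides more than enough decay; I do not anticipate any serious obstacle.
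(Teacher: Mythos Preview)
Your proposal is correct and follows essentially the same route as the paper: split $n_s$ according to~\eqref{eq:def:n_s} and sum against the weight, exactly as the paper indicates by writing ``with the same computation as in~\Cref{lemma:UB1B}''. Your treatment is in fact a little cleaner, since the doubly-exponential decay lets you bound $\sum_s n_s e^{-2^{s-4}}$ directly by $C\frac{\sigma^2}{\Delta^2}(\log K+\sqrt d)$ without producing the intermediate cross-term $c\frac{\sigma^2}{\Delta^2}\sqrt{dK\tfrac{\log K}{\theta}\log K}$ that the paper then absorbs via $K\leq 1/\theta$; two minor caveats are that $n_{s_0}\geq 2$ can fail in the degenerate case $s_0=r$ (harmless, single term), and $1+\log K\leq 2\log K$ fails for $K=2$, but both are absorbed in the universal constant.
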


We come back to~\cref{eq:budget3}, using~\cref{eq:budget6,eq:budget7}, we have
    \begin{align*}
    \mathbb{E}_{\nu}[\tau_{SRI}]  \leqslant & T_{\max}\delta+2K\left(n_{\max}+\sum_{s=s_0+1}^{r}n_s\right) \\ 
    & +\frac{8n_{s_0}}{\theta}(1+\log(K))+\displaystyle\sum_{s=s_0+1}^{r}2n_s\left[\frac{4}{\theta}(1+\log(K))\exp(-2^{s-4})\right]\enspace.
    \end{align*}

Now, if we define $A'=\frac{\sigma^2}{\Delta^2}\left[\frac{\log(K)}{\theta}\log(1/(\theta\delta))+\frac{\log(K)}{\theta}\sqrt{d}+\sqrt{dK\frac{\log(K)}{\theta}\log(K/\delta)}\right]$,~\Cref{lemma:UB1A} and~\Cref{lemma:UB1C} implies that we can choose $c$ large enough (and universal) such that  
\[ \displaystyle\sum_{s=s_0+1}^{r}2n_s\left[\frac{4}{\theta}(1+\log(K))\exp(-2^{s-4})\right]+2K\left(n_{\max}+\sum_{s=s_0+1}^{r}n_s\right) \leqslant 2K+ cA'  \enspace.\]
By definition of $n_{s_0}$, we can also see that 
\[ \frac{8n_{s_0}}{\theta}(1+\log(K)) \leqslant 16\frac{\log(K)}{\theta}+c A' \enspace.\]
Finally, it follows from~\eqref{eq:control_tau_sri_as2} and $\delta\log(1/\delta)\leq 1$ that 
\[
T_{\max}\delta\leq c'\frac{\log(K)}{\theta}+c A'\enspace . 
\]
In summary, we have the desired bound in expectation,
 \begin{align*}
    \mathbb{E}_{\nu}[\tau_{SRI}]  \leqslant  c'\frac{\log(K)}{\theta}+ c\frac{\sigma^2}{\Delta^2}\left[\frac{\log(K)}{\theta}\log(1/(\theta\delta))+\frac{\log(K)}{\theta}\sqrt{d}+\frac{\log^2(K)}{\theta} \right] \enspace.
\end{align*}

\subsection{Proofs of technical lemmas}\label{sec:B3}

\begin{proof}[\bf{Proof of~\Cref{lemma:UB1a}}]
  We want to prove that $\Pr_{\nu}(\mathcal{Y}^c)\leqslant \delta/4$, where $\mathcal{Y}^c$ is equal to
    \begin{align*}
    \mathcal{Y}^c=    & \left\{ \exists a \in \hat{S},\exists k\in[K]\setminus{\{k(a)\}},\; \left|\left\langle \frac{\epsilon_a+\epsilon'_a}{\sqrt{2}},\frac{\mu_{a}-\mu(k)}{\|\mu_a-\mu(k)\|}\right\rangle \right|> \frac{1}{16}\sqrt{\frac{\Delta^2}{2\sigma^2}}\sqrt{n_{\max}} \right\} \\
    & \bigcup \left\{\exists a \in \hat{S}, \left|\left\langle \epsilon_a,\epsilon'_a\right\rangle\right| > \frac{1}{16}\frac{\Delta^2}{\sigma^2}n_{\max}\right\} \\
    & \bigcup \left\{ \exists a \in \hat{S},\; \|\epsilon'_a\|^2\vee\|\epsilon_a\|^2-\mathbb{E}[\|\epsilon_a\|^2]> c_{HW}\log(12K/\delta)\vee\sqrt{c_{HW}d\log(12K/\delta)} \right\} \enspace.
    \end{align*}

    From the definition of $n_s$ and $n_{\max}$ as given in~\eqref{eq:def:n_s},~\eqref{eq:def:s_0}, it holds that $n_{\max}\geqslant n_r$, with 
    
\begin{align*}
    n_s  & \geqslant  c_1\frac{\sigma^2}{\Delta^2}\left( 2^s+\log(12K)\right) \vee c_2\frac{\sigma^2}{\Delta^2}\sqrt{d (2^s+\log(6))}
    \enspace.
\end{align*}
    With the definition of $r$ in~\eqref{eq:def:U}, we have also $2^r \geqslant \log(4U/\delta)$ and $U\geqslant 8K\log(K/\delta)$. We prove that, for $c_1=32^2\vee 8c_{HW}$ and $c_2=16\sqrt{c_{HW}/2}\vee c_3$,  then $n_{\max}$ is large enough to ensure~\Cref{lemma:UB1a} (the value of $c_3$ will be useful later).

    We start with a union bound on $\hat{S}\times[K]$, where $|\hat{S}|\leqslant K$. We have
    \begin{align*}
        & \Pr_{\nu}\left( \exists a \in \hat{S},\exists k\in[K]\setminus{\{k(a)\}}  ,\;\left|\left\langle \frac{\epsilon_a+\epsilon'_a}{\sqrt{2}},\frac{\mu_{a}-\mu(k)}{\|\mu_a-\mu(k)\|}\right\rangle \right|> \sqrt{\frac{n_{\max}\Delta^2}{2\cdot 16^2\sigma^2}}\right)  \\ \leqslant & K^2\Pr_{\nu}\left(\left|\left\langle \frac{\epsilon_a+\epsilon'_a}{\sqrt{2}},\frac{\mu_{a}-\mu(k)}{\|\mu_a-\mu(k)\|}\right\rangle \right|> \sqrt{\frac{n_{\max}\Delta^2}{2\cdot 16^2\sigma^2}}\right)\enspace .
    \end{align*}

    From the assumption on the noise (\Cref{def:SGM}), it is easy to see that  $\left\langle \frac{\epsilon_a+\epsilon'_a}{\sqrt{2}},\frac{\mu_{a}-\mu(k)}{\|\mu_a-\mu(k)\|}\right\rangle$ is $1$-subGaussian,  proceeding as in the same proof of~\cref{lemma:HW2}. With the standard concentration inequality~\cref{lemma:sGc} for subGaussian variables, we have
    \begin{align*}
        \Pr_{\nu}\left(\left|\left\langle \frac{\epsilon_a+\epsilon'_a}{\sqrt{2}},\frac{\mu_{a}-\mu(k)}{\|\mu_a-\mu(k)\|}\right\rangle \right|> \sqrt{\frac{n_{\max}\Delta^2}{2\cdot 16^2\sigma^2}}\right)  \leqslant & 2\exp\left(-\frac{n_{\max}\Delta^2}{(32\sigma)^2}\right)\enspace ,
    \end{align*}
    Now, $c_1\geqslant 32^2$ and $2^r\geqslant \log(4K/\delta)$ so that $n_{\max}\geqslant  c_1\frac{\sigma^2}{\Delta^2}(2^r+\log(12K))\geqslant 32^2 \frac{\sigma^2}{\Delta^2}\log\left(\frac{48K^2}{\delta}\right)$. Finally, with the union bound above, we have 
    \begin{align}\label{eq:Ya}
        \Pr_{\nu}\left( \exists a \in \hat{S},\exists k\in[K]\setminus{\{k(a)\}}  ,\;\left|\left\langle \frac{\epsilon_a+\epsilon'_a}{\sqrt{2}},\frac{\mu_{a}-\mu(k)}{\|\mu_a-\mu(k)\|}\right\rangle \right|> \sqrt{\frac{n_{\max}\Delta^2}{2\cdot 16^2\sigma^2}}\right)  \leqslant \frac{\delta}{24} \enspace .
    \end{align}

    Now, as proved in~\Cref{lemma:HW2}, Hanson Wright inequality imply a bound for $\langle\epsilon_a,\epsilon'_a\rangle$ and then
    with the constant $c_{HW}$ from~\Cref{lemma:HW},  we have:
    \begin{align}\label{eq:Yb}
        \Pr_{\nu}\left( \exists a \in \hat{S}, ,\;\left|\left\langle \epsilon_a,\epsilon'_a\right\rangle\right| > \frac{\Delta^2 n_{\max}}{16\sigma^2}\right)  \leqslant & 2K\exp\left(-\frac{2}{c_{HW}}\left(\frac{\Delta^2n_{\max}}{16\sigma^2}\wedge \frac{\Delta^4n_{\max}^2}{16^2\sigma^4d}\right)\right) \leqslant \frac{\delta}{24} \enspace ,
    \end{align}
    because the definition of $c_1$, $c_2$ and $r$, implies that \begin{equation*} n_{\max}\geqslant 8c_{HW}\frac{\sigma^2}{\Delta^2}\log\left(\frac{48K}{\delta}\right) \vee 16\sqrt{\frac{c_{HW}}{2}}\frac{\sigma^2}{\Delta^2}\sqrt{d\log\left(\frac{48K}{\delta}\right)} \enspace .\end{equation*}

    Finally, a direct application of Hanson-Wright inequality (\Cref{lemma:HW}) ensures that
    \begin{align}
     & \Pr_{\nu}\left( \exists a \in \hat{S} ,\; \|\epsilon'_a\|^2\vee\|\epsilon_a\|^2-\mathbb{E}[\|\epsilon_a\|^2]\geq c_{HW}\log(12K/\delta)\vee\sqrt{c_{HW}d\log(12K/\delta)}\right) \nonumber \\ & \leqslant  2K\exp(-\log(12K/\delta))\leqslant \frac{\delta}{6} \label{eq:Yc} \enspace ,
    \end{align}
    This concludes the proof of~\Cref{lemma:UB1a}, using a union bound and inequalities~\eqref{eq:Ya} to~\eqref{eq:Yc}.
\end{proof}

\begin{proof}[\bf{Proof of~\Cref{lemma:UB1aux}} ]

 We recall that in this lemma, $\nu$ is an environment with minimal gap $\Delta_*$ and balancedness $\theta_*$, and $\nu$ is not necessary in $\mathcal{E}(\Delta,\theta,\sigma,N,K,d))$. Let $u\geqslant 1$ and $s\in[s_0,r]$. 
 
 The first point of the lemma is a direct consequence of the expression of $\mathcal{Z}_{u,s}$ and the mutual independence of the series of empirical means $(\bar{\mu}_{u,s},\bar{\mu}'_{u,s})$.

\underline{Second point of~\Cref{lemma:UB1aux}} \\
We assume that $a_u$ is a good arm rejected by the test $(u,s)$, which by definition of the test statistic means that for all $a\in S_u$, then $|\mu_{a_u}-\mu_a|\geqslant \Delta$, while the test statistic $\phi_s^u$ is equal to $1$. 
It implies that there exists $a\in S_u$ such that
    \[\left \langle \bar{\mu}_{u,s}-\hat{\mu}_{a},\bar{\mu}'_{u,s}-\hat{\mu}'_a\right\rangle \leqslant \frac{\Delta^2}{2}\enspace.\]
    From the decomposition of~\ref{eq:UB1b}, and conditionally on the event $\mathcal{Y}$~\ref{eq:def:Y}, we have:
    \begin{align}
        \frac{\Delta^2}{2} \geqslant & \|\mu_{a_u}-\mu_a\|^2-\frac{\Delta}{16}\|\mu_{a_u}-\mu_a\|- \frac{\Delta^2}{16} \nonumber\\
   & -\frac{\sigma^2}{\sqrt{n_{\max}n_s}}\left\langle \epsilon_{u,s},\rho'_a\right\rangle\|\epsilon'_a\|-\frac{\sigma^2}{\sqrt{n_{\max}n_s}}\left\langle \epsilon'_{u,s},\rho_a\right\rangle \|\epsilon_a\|                     \label{eq:zusa}       \\
   & + \frac{\sqrt{2}\sigma}{\sqrt{n_{s}}}\left\langle \frac{\epsilon_{u,s}+\epsilon'_{u,s}}{\sqrt{2}} , \mu_{a_u}-\mu_a \right\rangle + \frac{\sigma^2}{n_{s}} \left\langle \epsilon_{u,s} , \epsilon'_{u,s} \right\rangle   \label{eq:zusb}  \enspace.
    \end{align}

    We also have on  $\mathcal{Y}$, $\|\epsilon_a\|^2\leqslant \mathbb{E}[\|\epsilon_a\|^2]+c_{HW}l\vee\sqrt{c_{HW}dl} $  with $l:=\log\left(\frac{12K}{\sigma}\right)$. , $\mathbb{E}[\|\epsilon_a\|^2]\leqslant 4d$ is a direct consequence of the subGaussian assumption -- see Lemma 1.4 from~\cite{rigollet2023high}. We have
    \begin{align*}
      ~\eqref{eq:zusa} 
        \geqslant -\frac{\sigma^2}{\sqrt{n_{\max}n_s}}\left[\left|\left\langle \epsilon'_{u,s},\rho_a\right\rangle\right|+ \left|\left\langle \epsilon_{u,s},\rho'_a\right\rangle\right|  \right]\sqrt{4d+ c_{HW}l\vee\sqrt{c_{HW}dl}} \enspace.
       \end{align*}
       We also have directly, 
       \begin{align*}
     ~\eqref{eq:zusb} \geqslant -\sqrt{\frac{2\sigma^2\|\mu_{a_u}-\mu_a\|^2}{n_s}}\left|\left\langle \frac{\epsilon_{u,s}+\epsilon'_{u,s}}{\sqrt{2}} , \frac{\mu_{a_u}-\mu_a}{\|\mu_{a_u}-\mu_a\|}\right\rangle \right|-\frac{\sigma^2}{n_{s}} \left|\left\langle \epsilon_{u,s} , \epsilon'_{u,s} \right\rangle\right| \enspace. 
    \end{align*}
    We recall that $\Delta\leqslant \|\mu_{a_u}-\mu_a\|$, because $a_u$ is a good arm. Hence, we have $\|\mu_{a_u}-\mu_a\|^2-\frac{\Delta}{16}\|\mu_{a_u}-\mu_a\|- \frac{\Delta^2}{16}-\frac{\Delta^2}{2}\geqslant \frac{3}{8}\|\mu_{a_u}-\mu_a\|^2\geq \frac{3}{8}\Delta^2$. 

    From there, we state that if $a_u$ is good and $\phi_s^u=1$ then there exists $a\in \hat{S}$ such that at least one of the these three inequalities holds:
    \begin{align*}
         & \left|\left\langle \epsilon_{u,s},\rho'_a\right\rangle\right|+\left|\left\langle \epsilon'_{u,s},\rho_a\right\rangle\right| \geqslant \frac{1}{4}\frac{\Delta^2}{\sigma^2}\sqrt{\frac{n_s n_{\max}}{4d+\sqrt{c_{HW}dl}\vee c_{HW}l}}  \enspace , \\
         & \left|\left\langle \epsilon_{u,s} , \epsilon'_{u,s} \right\rangle\right| \geqslant \frac{1}{16}\frac{\Delta^2}{\sigma^2}n_s  \enspace ,                                                                                                  \\
         & \left|\left\langle \frac{\epsilon_{u,s}+\epsilon'_{u,s}}{\sqrt{2}} , \frac{\mu_{a_u}-\mu_a}{\|\mu_{a_u}-\mu_a\|}\right\rangle \right| \geqslant \frac{1}{16}\frac{\Delta}{\sigma}\sqrt{\frac{n_s}{2}}\enspace .
    \end{align*}
    By definition, $\mathcal{Z}_{u,s}$ is the event where one of these three inequalities hold for some $a\in \hat{S}$, so the inclusion $\mathcal{Y} \cap \left\{ a_u \mbox{ is good and } \phi_s^u=1 \right\}  \subset \mathcal{Z}_{u,s}$ is proved.

\underline{Third point of~\Cref{lemma:UB1aux}} \\
    We now assume that $a_u$ is bad, but the $s$-th test accept $a_u$, i.e., $\phi_s^u=0$. 
    As $a_u$ is bad, there exists $a\in S_u$ such that $\|\mu_{a_u}-\mu_a\|\leqslant \Delta/4$. As $\phi_s^u=0$, for this specific arm $a$, we have:
    $\left \langle \bar{\mu}_{u,s}-\hat{\mu}_{a},\bar{\mu}'_{u,s}-\hat{\mu}'_a\right\rangle > \frac{\Delta^2}{2}$.
  
    Assume that $0< \|\mu_{a_u}-\mu_a\|\leqslant \Delta/4$, with the same computation as in the first case, we have 
    \begin{align*}
    \frac{\Delta^2}{2} \leqslant & \|\mu_{a_u}-\mu_a\|^2+\frac{\Delta}{16}\|\mu_{a_u}-\mu_a\|+\frac{\Delta^2}{16} \\ & +\frac{\sigma^2}{\sqrt{n_{\max}n_s}}\sqrt{4d+c_{HW}l\vee \sqrt{c_{HW}dl} }\left(\left|\left\langle \epsilon_{u,s},\rho'_a\right\rangle\right|+\left|\left\langle \epsilon'_{u,s},\rho_a\right\rangle\right|\right)          \\
    & +\sqrt{\frac{2\sigma^2\|\mu_{a_u}-\mu_a\|^2}{n_s}}\left|\left\langle \frac{\epsilon_{u,s}+\epsilon'_{u,s}}{\sqrt{2}} , \frac{\mu_{a_u}-\mu_a}{\|\mu_{a_u}-\mu_a\|}\right\rangle \right|+\frac{\sigma^2}{n_{s}} \left|\left\langle \epsilon_{u,s} , \epsilon'_{u,s} \right\rangle\right| \enspace. 
    \end{align*}

    In the last line, we upper bound $\|\mu_{a_u}-\mu_a\|$ by $\Delta/4$. Now, consider the constant terms, $ \frac{\Delta^2}{2} - \|\mu_{a_u}-\mu_a\|^2-\frac{\Delta}{16}\|\mu_{a_u}-\mu_a\|-\frac{\Delta^2}{16}\geqslant \frac{\Delta^2}{2} - \frac{\Delta^2}{16}-\frac{\Delta^2}{64}-\frac{\Delta^2}{16}\geqslant \Delta^2\left(\frac{1}{4}+\frac{1}{16}+\frac{1}{4\cdot16}\right)$. 
    As above, we deduce that at least one of the three inequalities defining $\mathcal{Z}_{u,s}$ holds.
    If  $\|\mu_{a_u}-\mu_a\|=0$, there are simply fewer terms in the equality.  
    This proves the inclusion $\mathcal{Y} \cap \left\{ a_u \mbox{ is bad and } \phi_s^u=0 \right\}  \subset \mathcal{Z}_{u,s}$.

    \underline{Probability of $\mathcal{Z}_{u,s}$ } \\
    We prove now that the probability of $\mathcal{Z}_{u,s}$ decrease exponentially fast with $s$. Fix $s_0\leqslant s \leqslant r$. 

    We recall the expression of $n_s$, and $n_{\max}$
    \[n_s  \geqslant c_1\frac{\sigma^2}{\Delta^2}\left( 2^s+\log(12K)\right) \vee c_2\frac{\sigma^2}{\Delta^2}\sqrt{d (2^s+\log(6))}\enspace , \]
    where $c_1=32^2\vee 8c_{HW}$, $c_2=16\sqrt{c_{HW}/2}\vee 32\sqrt{2} $, $c_3=32\sqrt{2}$,  and  $n_{\max}\geqslant n_r \vee c_3\frac{\sigma^2}{\Delta^2}\sqrt{d}\log(2K)$.
    
    Let $k\in[K]$ such that $\mu_{a_u}\ne \mu(k)$, as a consequence of~\cref{def:SGM}, the one-dimensional variable $\left\langle \frac{\epsilon_{u,s}+\epsilon'_{u,s}}{\sqrt{2}} , \frac{\mu_{a_u}-\mu(k)}{\|\mu_{a_u}-\mu(k)\|}\right\rangle$ is $1$-subGaussian. With standard concentration of subGaussian variables (\Cref{lemma:sGc}), we have
    \[\Pr_{\nu}\left(\left|\left\langle \frac{\epsilon_{u,s}+\epsilon'_{u,s}}{\sqrt{2}} , \frac{\mu_{a_u}-\mu(k)}{\|\mu_{a_u}-\mu(k)\|}\right\rangle\right| \geqslant \frac{1}{16}\frac{\Delta}{\sigma}\sqrt{\frac{n_s}{2}}\right) \leqslant 2\exp\left(-\frac{\Delta^2}{\sigma^2}\frac{n_s}{32^2}\right) \leqslant \frac{1}{3K}\exp(-2^s) \enspace ,\]
    because $n_s\geqslant 32^2\frac{\sigma^2}{\Delta^2}(2^s+\log(6K))$.

    Now, with a union bound over $k\in[K]$, it holds that 
    \begin{align} \mathbb{P}\left(\exists k\in[K]\setminus\{k(a_u)\} \; ; \left|\left\langle \frac{\epsilon_{u,s}+\epsilon'_{u,s}}{\sqrt{2}} , \frac{\mu_{a_u}-\mu(k)}{\|\mu_{a_u}-\mu(k)\|}\right\rangle \right| \geqslant \frac{1}{16}\frac{\Delta}{\sigma}\sqrt{\frac{n_s}{2}} \right) \leqslant \frac{1}{3}\exp(-2^s) \enspace , \label{eq:PZ1} \end{align}
    
    Then, $\left\langle \epsilon_{u,s} , \epsilon'_{u,s} \right\rangle$ is the inner product of two independent vectors, for which the assumptions from~\Cref{lemma:HW2} holds. We use this corollary of Hanson-Wright inequality, and obtain   
\begin{align} \Pr_{\nu}\left(\left|\left\langle \epsilon_{u,s} , \epsilon'_{u,s} \right\rangle\right| \geqslant \frac{\Delta^2}{\sigma^2}\frac{n_s}{16} \right) \leqslant 2 \exp\left(-\frac{2}{c_{HW}}\left(\frac{\Delta^2}{\sigma^2}\frac{n_s}{16}\wedge\frac{1}{d}\frac{\Delta^4}{\sigma^4}\frac{n_s^2}{16^2}\right)\right) \leqslant \frac{1}{3}\exp(-2^s) \enspace , \label{eq:PZ2} \end{align}
where the last inequality follows from the definition of $n_s$ (and $c_1,c_2$) where $n_s\geqslant 8c_{HW}\frac{\sigma^2}{\Delta^2}(2^s+\log(6))\vee 16\sqrt{\frac{c_{HW}}{2}}\frac{\sigma^2}{\Delta^2} \sqrt{d(2^s+\log(6))}$.

    Finally, we want to upper bound the probability that the cross term between $\epsilon_a$ and $\epsilon_{u,s}$ is too large. By conditioning with respect to the random variables $(\rho_a,\rho_a')_a$, we  consider these variables as constants. We start with a union bound and the inequality $a+b\leqslant 2 a\vee b$.
    \begin{align*}
        & \Pr_{\nu}\left(\exists a \in \hat{S} \; ; \left|\left\langle \epsilon_{u,s},\rho'_a\right\rangle\right|+\left|\left\langle \epsilon'_{u,s},\rho_a\right\rangle\right| \geqslant \frac{1}{4}\frac{\Delta^2}{\sigma^2}\sqrt{\frac{n_s n_{\max}}{4d+\sqrt{c_{HW}dl}\vee c_{HW}l}}\right) \\
          \leqslant & 2K\Pr_{\nu}\left(|\left\langle \epsilon_{u,s},\rho\right\rangle| \geqslant \frac{1}{8}\frac{\Delta^2}{\sigma^2}\sqrt{\frac{n_s n_{\max}}{4d+\sqrt{c_{HW}dl}\vee c_{HW}l}}\right)\enspace,
    \end{align*}
    with $\rho$ of norm $1$. Then $\left\langle \epsilon_{u,s},\rho\right\rangle$ is a $1$-dimensional subGaussian random variable. We use therefore the concentration inequality in~\Cref{lemma:sGc}, and we state that

    \begin{align*}
        & \Pr_{\nu}\left(\exists a \in \hat{S} \; ; \left|\left\langle \epsilon_{u,s},\rho'_a\right\rangle\right|+\left|\left\langle \epsilon'_{u,s},\rho_a\right\rangle\right| \geqslant \frac{1}{4}\frac{\Delta^2}{\sigma^2}\sqrt{\frac{n_s n_{\max}}{4d+\sqrt{c_{HW}dl}\vee c_{HW}l}}\right)  \\
          \leqslant & 4K\exp\left(- \frac{1}{2\cdot 8^2}\frac{\Delta^4}{\sigma^4}\frac{n_s n_{\max}}{4d+\sqrt{c_{HW}dl}\vee c_{HW}l}\right)                                                           \\
         \leqslant & 4K\exp\left(- \frac{1}{16^2}\frac{\Delta^4}{\sigma^4}\frac{n_s n_{\max}}{4d\vee\sqrt{c_{HW}dl}\vee c_{HW}l}\right)  \enspace ,
    \end{align*}
    in the last line, we use again the inequality $a+b\leqslant 2a\vee b$. 
    
    Now, we need to bound this last expression by $\frac{1}{3}\exp(-2^s)$ by using the definition of $n_s$ and $n_{\max}$  We recall that $l=\log(12K/\delta)\leqslant 2^r$. 

    Now, with our choice for $c_1$ and $c_2$, it holds that $n_s\geqslant 32^2 \frac{\sigma^2}{\delta^2}(2^s+\log(12K))$ and $n_{\max}\geqslant c_{HW} l \vee \sqrt{c_{HW} d l}$, so that \[n_s n_{\max} \geqslant  16^2\frac{\sigma^4}{\Delta^2}(\sqrt{c_{HW}dl}\vee c_{HW}l)(2^s+\log(12K)) \enspace . \]

 We finally use the assumption that $c_2\geqslant 32\sqrt{2}$ and $c_3=32\sqrt{2}$, so that $n_s\geqslant 16\sqrt{2}\frac{\sigma^2}{\Delta^2}\sqrt{(4d)(2^s+\log(6))}$, $n_{\max}\geqslant 16\sqrt{2}\frac{\sigma^2}{\Delta^2}\sqrt{4d}\sqrt{2^s+\log(6)}$ and $n_{\max}\geqslant 16\sqrt{2}\frac{\sigma^2}{\Delta^2}\sqrt{4d} \log(2K)$. 
Then, with the inequality $a\vee b\geqslant (a+b)/2$,  we have \[ n_s n_{\max} \geqslant  16^2\frac{\sigma^4}{\Delta^2}(4d)(2^s+\log(12K)) \enspace . \] 

We combine these lower bound on $n_s n_{\max}$ to deduce that \[n_s n_{\max} \geqslant  16^2\frac{\sigma^4}{\Delta^2}(4d\vee\sqrt{c_{HW}dl}\vee c_{HW}l)(2^s+\log(12K)) \enspace . \]
    This allows us to conclude that 
    \begin{align}
      &  \Pr_{\nu}\left(\exists a \in \hat{S} \; ; \left|\left\langle \epsilon_{u,s},\rho'_a\right\rangle\right|+\left|\left\langle \epsilon'_{u,s},\rho_a\right\rangle\right| \geqslant \frac{1}{4}\frac{\Delta^2}{\sigma^2}\sqrt{\frac{n_s n_{\max}}{4d+\sqrt{c_{HW}dl}\vee c_{HW}l}}\right) \nonumber\\
         & \leqslant  4K\exp\left(- \frac{1}{16^2}\frac{\Delta^4}{\sigma^4}\frac{n_s n_{\max}}{4d\vee\sqrt{c_{HW}dl}\vee c_{HW}l}\right)  \leqslant \frac{1}{3}\exp(-2^s) \label{eq:PZ3}\enspace .
    \end{align}
    We finish the proof with a union bound, gathering the inequalities~\eqref{eq:PZ1} to~\eqref{eq:PZ3},
    \[\Pr_{\nu}(\mathcal{Z}_{u,s})\leqslant \frac{1}{3}\exp(-2^s)+\frac{1}{3}\exp(-2^s)+\frac{1}{3}\exp(-2^s)=\exp(-2^s) \enspace . \]
\end{proof}

\begin{proof}[ \bf{Proof of~\Cref{lemma:UB1A}}]

    Throughout the proofs of~\Cref{lemma:UB1A,lemma:UB1B,lemma:UB1C}, $c$ is a universal constant changing from one line to another. Also, we use that, by the definition of $s_0$ and $n_s$, it turns out, that if $s>s_0$ then \begin{equation}\label{eq:bound_ns} n_s\leqslant 2   c_1\frac{\sigma^2}{\Delta^2}\left( 2^s+\log(12K)\right)   \vee 2c_2\frac{\sigma^2}{\Delta^2}\sqrt{d (2^s+\log(6))}  \enspace. \end{equation}

 We now bound $K\left(\sum_{s=s_0+1}^r n_{s}+n_{\max}\right)$.
    Relying on the expression of $n_s$ above, and the sums $\sum _{s=1}^r 2^s\leqslant 2^{r+1}$ and $\sum _{s=1}^r \sqrt{2}^s\leqslant \sqrt{2}^{r+1}(1+\sqrt{2})$, we deduce that
    \begin{align*}
        K\sum_{s=s_0+1}^r n_{s}  \leqslant  & 2c_1\frac{\sigma^2}{\Delta^2}K (2^{r+1}+\log(12K)r) \bigvee 2c_2\frac{\sigma^2}{\Delta^2}K\sqrt{d}\left(\sqrt{2^r}(2+\sqrt{2})+\sqrt{\log(6)}r\right)   \enspace .
    \end{align*}
    Now, from the expression of $r$ and $U$, we have $2^r\leqslant 2\log(4U/\delta)\leqslant c\log(1/(\theta\delta))$. It leads to the bound 
    \begin{align*}
        K\sum_{s=s_0+1}^r n_{s} 
         & \leqslant c\frac{\sigma^2}{\Delta^2}\left[K\log(1/(\theta\delta))+K\log(K)\log\log(1/(\theta\delta))+K\sqrt{d\log(1/(\theta\delta))} \right] \enspace .
    \end{align*}
     As $1/\theta\geqslant K$, we have $K\log\left(\frac{1}{\theta}\right)\leqslant \frac{1}{\theta}\log(K)$, so that we can bound the term above by 
    \begin{align*}
        K\sum_{s=s_0+1}^r n_{s} 
         & \leqslant c\frac{\sigma^2}{\Delta^2}\left[\frac{1}{\theta}\log(K/\delta)+\sqrt{dK\frac{1}{\theta}\log(K/\delta)}\right] \enspace .
    \end{align*}

    We also compute $n_{\max}$, we have $Kn_{\max}\leqslant K+ c \frac{\sigma^2}{\Delta^2}\left[Kn_r+K\log(K)\sqrt{d}\right]$, where $Kn_r$ is upper bounded by the same bound as $K\sum_{s=s_0+1}^r n_{s}$. 
    
    Finally, it implies  that 
    \[K\sum_{s=s_0+1}^r n_{s}+Kn_{\max}\leqslant  K+c\frac{\sigma^2}{\Delta^2}\left[\frac{\log(K)}{\theta}\log(K/\delta)+\sqrt{dK\frac{\log(K)}{\theta}\log(K/\delta)}\right] \enspace.\]
    The second inequality in~\Cref{lemma:UB1A} is clear, and the lemma is proved.
    \end{proof}

\begin{proof}[ Proof of~\Cref{lemma:UB1B}]
With the bound on $n_s$ for $s>s_0$ from~\cref{eq:bound_ns}, we simplify the terms in $2^s$ and obtain,
\begin{align*}
    \displaystyle\sum_{s=s_0+1}^{r} \frac{U}{2^{s-4}}n_s  \leqslant &2c_1\frac{16U\sigma^2}{\Delta^2}\sum_{s=s_0+1}^r \left(1+\frac{1}{2^s}\log(12K)\right) \bigvee 2c_2\frac{16U\sigma^2}{\Delta^2}\sum_{s=s_0+1}^r \left(\frac{1}{\sqrt{2}^s}\sqrt{d}+\frac{1}{2^s}\sqrt{d}\sqrt{\log(6)}\right) \\
    & \\
     \leqslant & 2c_1\frac{16U\sigma^2}{\Delta^2}\left(r+2\log(12K)\right) \bigvee 2c_2\frac{16U\sigma^2}{\Delta^2}\left((2+\sqrt{2})\sqrt{d}+2\sqrt{\log(6)}\sqrt{d}\right)\\ &   \\
    \leqslant & c\frac{\sigma^2}{\Delta^2}U\left[\log\log\left(\frac{1}{\theta\delta}\right)+\log(K)+\sqrt{d}\right] \enspace ,
    \end{align*}
    because $\sum_{s\geqslant 1} 1/2^s\leqslant 2$ and $\sum_{s\geqslant 1} 1/\sqrt{2}^s=2+\sqrt{2}$. We also use in the last inequality that  $\log(U/\delta)\leqslant 2\log(8/\theta\delta)$, so that 
    $r\leqslant \log(2\log(8/\theta\delta))\leqslant c\log\log(1/\theta\delta)$. 
    
    From the previous bound, we conclude that 
    \begin{align*}
    \displaystyle\sum_{s=s_0+1}^{r} \frac{U}{2^{s-4}}n_s  \leqslant   c\frac{\sigma^2}{\Delta^2}U\left[\log\log\left(\frac{1}{\theta\delta}\right)+\log(K)+\sqrt{d}\right]+c\frac{\sigma^2}{\Delta^2}\sqrt{d\log(K)KU} \enspace .
    \end{align*}
    
    Moreover, we have by definition of $U$~\eqref{eq:def:U}, $U\geqslant K\log(K)$, and then $\frac{\sigma^2}{\Delta^2}\sqrt{d\log(K)KU}\leqslant \frac{\sigma^2}{\Delta^2}U\sqrt{d}$. 

    Finally, using the expression of $U$~\eqref{eq:def:U}, we have
    \[ \displaystyle\sum_{s=s_0+1}^{r} \frac{U}{2^{s-4}}n_s \leqslant c\frac{\sigma^2}{\Delta^2}\frac{1}{\theta}\log(K/\delta)\left[\log(K)+\sqrt{d}+\log\log\left(\frac{1}{\theta\delta}\right)\right] \enspace . \]

\end{proof}

\begin{proof}[ Proof of~\Cref{lemma:UB1C}]
With the same computation as in~\Cref{lemma:UB1B}, we obtain
\begin{align*}
        \displaystyle\sum_{s=s_0+1}^{r}n_s\left[\frac{4}{\theta}(1+\log(K))\exp(-2^{s-4})\right] & \leqslant c\frac{\sigma^2}{\Delta^2}\frac{\log(K)}{\theta}\left[\log(K) +\sqrt{d}\right]  +c\frac{\sigma^2}{\Delta^2}\sqrt{dK\frac{\log(K)}{\theta}\log(K)}  \\ 
        & \leqslant c\frac{\sigma^2}{\Delta^2}\frac{\log(K)}{\theta}\left[\log(K)+\sqrt{d}\right] \enspace ,
    \end{align*}
where we use $K\leqslant 1/\theta$ in the last inequality. 
\end{proof}
\subsection{Proof of~\Cref{lemma:UB2}}\label{proof:UB2}

In this section, we want to prove that the subroutine ADC outputs the exact partition with probability larger than $1-\delta$, for environments in $\mathcal{E}(\Delta,\theta,\sigma,N,K,d)$. 
Let $\nu$ be an environment with a minimal gap smaller than $\Delta$, following Assumptions~\ref{def:SGM} and~\ref{def:HPBE}. We highlight that the algorithm ADC uses $\Delta$, $\sigma$, $N$, $K$ and $d$ as parameters but not $\theta$. Let $S=\{b_1,\dots,b_K\}$ be a set of $K$ arms containing one representative by group.  The objective is to find the groups $G^*_1,\dots,G^*_K$ up to permutation. Without loss of generality, we fix the label of the groups so that $G^*_{k}=\{a\in[N], \mu_{a}=\mu_{b_k}\}$. We denote by $k(a)$ as the corresponding label of any arm $a$ ($a\in G^*_{k(a)}$). With this convention, making an error of clustering is equivalent of making an error of labelling. 

We denote $\hat{G}$ for the output of the ADC routine. The algorithm labels the arms in $S$ so that $b_k\in \hat{G}_k$ for $k\in[K]$ (see Line~\ref{algline:3.7}).
Then, it labels each arm $a\in [N]\setminus S$ by $\hat{k}(a)$ defined (\cref{eq:classifier}) by
\[\hat{k}(a)\in\argmin_{j=1,\dots,K} \Big\langle\hat{\mu}_a-\hat{\mu}(j),\hat{\mu}'_a-\hat{\mu}'(j)\Big\rangle \enspace.\]
We have $\{\hat{G}\sim G^*\}=\{\exists a\in [N]\setminus S \; ;  \hat{k}(a)\ne k(a)\}$.

Consider $j\in[K]$ a group and $a\in[N]$ an arm. As explained in the introduction, the statistic $\hat{d}^2_{a,j}:=\Big\langle\hat{\mu}_a-\hat{\mu}(j),\hat{\mu}'_a-\hat{\mu}'(j)\Big\rangle$ is a natural non-biased estimator of $\|\mu_a-\mu(j)\|^2$ where $\mu(j)=\mu_{b_j}$ is the center of $G^*_j$. 
In the expression of $\hat{k}(a)$, $\hat{\mu}(j)$ [resp. $\hat{\mu}'(j)$] is the empirical mean of representative $b_j$ computed with $J =   \left\lceil c_4\frac{\sigma^2}{\Delta^2}L\vee c_5\frac{\sigma^2}{\Delta^2}\sqrt{dL\frac{N}{K}} \right\rceil $ samples --see Equation~\eqref{eq:def:J} and Line~\ref{algline:3.6}-- and $L=\log(6NK/\delta)$. The random variable $\hat{\mu}_a$ [resp. $\hat{\mu}_a'$] is the empirical mean of the arm $a$ computed with $I=   \left\lceil c_4\frac{\sigma^2}{\Delta^2}L\vee c_5\frac{\sigma^2}{\Delta^2}\sqrt{dL\frac{K}{N}} \right\rceil$ samples -- see Line~\ref{algline:3.10}.
 We emphasise that in high dimension,  $J\asymp NJ/K$ is much larger than $I$.
We want to bound the probability of misclassification for a single arm in $[N]\setminus S$. Let $a\in[N]\setminus {S}$ such that $a$ belongs to the group $G^*_{k(a)}$. The misclassification probability for the arm $a$ using the classifier $\hat{k}(a)$ of~\cref{eq:classifier} is
\begin{align}
    \Pr_{\nu}(\hat{k}(a)\ne k(a)) = & \Pr_{\nu}\Big(\exists j=1,\dots,K \; , j\ne k(a) ; \; \hat d^2_{a,j} < \hat d^2_{a,k(a)} \Big) \nonumber              \\
    \leqslant               & \displaystyle\sum_{j\ne k(a)}^K  \Pr_{\nu}\Big( \hat d^2_{a,j} < \hat d^2_{a,k(a)}\Big)  \label{eq:UBa} \enspace.
\end{align}

We used here a first union bound over $j\in[1;K]\setminus{k(a)}$, and now, we upper bound each term on the sum.
\begin{lemma}\label{lemma:UB2aux}
    For all $a\in [N]\setminus{S}$, and $j\in [K]$, if $\mu(j)\ne \mu_a$ then
    \[\Pr_{\nu}\Big(\hat d^2_{a,j} < \hat d^2_{a,k(a)}\Big)  \leqslant \frac{\delta}{(K-1)(N-K)} \enspace .\]
\end{lemma}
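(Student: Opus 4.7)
The plan is to decompose $\hat d^2_{a,j} - \hat d^2_{a,k(a)}$ into the signed distance $\|\mu_a-\mu(j)\|^2$ plus mean-zero linear and bilinear noise contributions, and then apply subGaussian concentration and Hanson-Wright to each piece. Write $k:=k(a)$ and note that $\mu_a = \mu(k)$. Set
\[
\hat\mu_a = \mu_a+\xi_a,\ \hat\mu'_a=\mu_a+\xi'_a,\ \hat\mu(j)=\mu(j)+\eta_j,\ \hat\mu'(j)=\mu(j)+\eta'_j,
\]
and analogously $\eta_k,\eta'_k$. By the sampling scheme, the six noise vectors are mutually independent; moreover $\xi_a,\xi'_a$ have coordinates that are $\sigma/\sqrt{I}$-subGaussian, while $\eta_j,\eta'_j,\eta_k,\eta'_k$ have coordinates that are $\sigma/\sqrt{J}$-subGaussian.

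Expanding the inner products and using $\mu_a=\mu(k)$, one checks that the terms $\langle\xi_a,\xi'_a\rangle$ cancel (this is precisely the benefit of the two independent empirical means), yielding
\[
\hat d^2_{a,j}-\hat d^2_{a,k} = \|\mu_a-\mu(j)\|^2 + T_1 + T_2,
\]
where
\begin{align*}
T_1 &= \bigl\langle \mu_a-\mu(j),\,(\xi_a+\xi'_a)-(\eta_j+\eta'_j)\bigr\rangle,\\
T_2 &= -\langle \xi_a,\eta'_j-\eta'_k\rangle - \langle \xi'_a,\eta_j-\eta_k\rangle + \langle\eta_j,\eta'_j\rangle - \langle\eta_k,\eta'_k\rangle.
\end{align*}
Hence the misclassification event is contained in $\{T_1<-\|\mu_a-\mu(j)\|^2/2\}\cup\{T_2<-\|\mu_a-\mu(j)\|^2/2\}$, and since $\|\mu_a-\mu(j)\|\geq \Delta_*\geq \Delta$, it suffices to upper bound each piece when the threshold is set to $\Delta^2/2$ (or a fixed fraction of $\|\mu_a-\mu(j)\|^2$).

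For $T_1$, which is a one-dimensional zero-mean random variable that is subGaussian with parameter $\|\mu_a-\mu(j)\|\cdot\sigma\sqrt{2/I+2/J}$, the standard tail bound (Lemma~\ref{lemma:sGc}) gives a probability of deviation of order $\exp(-c\,\|\mu_a-\mu(j)\|^2/(\sigma^2(1/I+1/J)))$; since $I,J \geq c_4\sigma^2 L/\Delta^2$ with $L=\log(6NK/\delta)$, this is bounded by $\delta/(5(K-1)(N-K))$ for $c_4$ large enough. For the four bilinear pieces of $T_2$, apply the Hanson-Wright-type bound for inner products of independent subGaussian vectors (Lemma~\ref{lemma:HW2}): for the cross terms $\langle\xi_a,\eta'_j-\eta'_k\rangle$ and $\langle\xi'_a,\eta_j-\eta_k\rangle$, the relevant tail is $2\exp\bigl(-c\min(t^2IJ/(\sigma^4 d),\,t\sqrt{IJ}/\sigma^2)\bigr)$, so at $t\asymp\Delta^2$ we need $IJ \gtrsim \sigma^4 dL/\Delta^4$ and $\sqrt{IJ}\gtrsim \sigma^2 L/\Delta^2$, both of which are enforced by the asymmetric choice $I\geq c_5\sigma^2\sqrt{dKL/N}/\Delta^2$ and $J\geq c_5\sigma^2\sqrt{dNL/K}/\Delta^2$ (so $IJ\geq c_5^2\sigma^4 dL/\Delta^4$). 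For the terms $\langle\eta_j,\eta'_j\rangle$ and $\langle\eta_k,\eta'_k\rangle$, Hanson-Wright gives $2\exp(-c\min(t^2 J^2/(\sigma^4 d),\,tJ/\sigma^2))$, which at $t\asymp\Delta^2$ requires $J\gtrsim \sigma^2 L/\Delta^2$ and $J\gtrsim \sigma^2\sqrt{dL}/\Delta^2$; both are guaranteed since $N\geq K$ forces $\sqrt{dNL/K}\geq \sqrt{dL}$. Choosing the universal constants $c_4,c_5$ large enough so each of the five events contributes at most $\delta/(5(K-1)(N-K))$ and taking a union bound completes the argument.

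The only real subtlety is tracking the asymmetric budgets $I$ versus $J$ in the cross terms of $T_2$: it is precisely the cross term $\langle\xi_a,\eta'_j\rangle$, whose tail depends only on $IJ$ rather than on $I$ or $J$ individually, that allows $I$ to be much smaller than $J$ in the high-dimensional regime. The linear term $T_1$ and the homogeneous quadratic terms are routine once one has Lemma~\ref{lemma:HW2} in hand.
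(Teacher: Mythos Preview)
Your proof is correct and follows essentially the same route as the paper: expand $\hat d^2_{a,j}-\hat d^2_{a,k(a)}$ around $\|\mu_a-\mu(j)\|^2$, control the linear pieces by one-dimensional subGaussian concentration and the bilinear pieces by Hanson--Wright (Lemma~\ref{lemma:HW2}), noting that the crucial cross terms $\langle\xi_a,\eta'_j-\eta'_k\rangle$ only require $IJ\gtrsim \sigma^4 dL/\Delta^4$, which is exactly what the asymmetric choice of $I,J$ in~\eqref{eq:def:J} provides. The only cosmetic difference is that the paper splits the decomposition into six named terms $A,\ldots,F$ and bounds each by a quarter of the signal, whereas you group the two linear contributions into a single subGaussian variable $T_1$ and the four bilinear ones into $T_2$; both bookkeeping schemes lead to the same constants up to absolute factors.
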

This lemma easily leads to the desired result (\Cref{lemma:UB2}) by a union bound on $a\in [N]\setminus S$. With~\cref{eq:UBa} and~\Cref{lemma:UB2aux}, we have indeed

\begin{align*} 
   \Pr_{\nu}\left(\hat{G}\not\sim G^*\right) & = \Pr_{\nu}\left( \exists a \in[N]\setminus [S] \;; \hat{k}(a)\ne k(a) \right) \\ & \leqslant \sum_{a\in[N]\setminus S} \sum_{j\in[K]\setminus\{k(a)\}}\Pr_{\nu}\Big( \hat d^2_{a,j} < \hat d^2_{a,k(a)}\Big)
     \leqslant \delta  \enspace . 
\end{align*}

Moreover, the budget $\tau_{ADC}$ used to compute ADC is deterministic and equal to $2(N-K)I+2KJ$ with the notation of the algorithm which leads to the second part of the lemma directly.

We have  indeed the (deterministic) bound on the budget of ADC 
\[\tau_{ADC}=2(N-K)I+2KJ\leqslant 2N+ 2c_4\frac{\sigma^2}{\Delta^2}NL \vee 4c_5\frac{\sigma^2}{\Delta^2}\sqrt{dKNL} \enspace. \]

It remains now to prove the auxiliary lemma.

\begin{proof}[Proof of~\Cref{lemma:UB2aux} \\]

Without loss of generality, we assume that $\mu_a=\mu(1)$ and consider $j=2$. We write \[\hat{\mu}_a=\mu_a+\frac{\sigma}{\sqrt{I}}\varepsilon_a= \mu(1)+\frac{\sigma}{\sqrt{I}}\varepsilon_a \enspace ,\]
where $\varepsilon_a:=\frac{\sqrt{I}}{\sigma}(\hat{\mu}_a-\mu_a)$. We define in the same way $\varepsilon(1):=\frac{\sqrt{J}}{\sigma}(\hat{\mu}(1)-\mu(1)) $ and also $\varepsilon(2),\varepsilon'_a$, $\varepsilon'(1)$ and $\varepsilon'(2)$.

From direct computation, reorganising the terms, we write the event  $\{\hat d^2_{a,2} < \hat d^2_{a,1}\}$ as 
\begin{align}                                   
    & \Big\langle\hat{\mu}_a-\hat{\mu}(2),\hat{\mu}'_a-\hat{\mu}'(2)\Big\rangle<\Big\langle\hat{\mu}_a-\hat{\mu}(1),\hat{\mu}'_a-\hat{\mu}'(1)\Big\rangle  \Leftrightarrow \nonumber \\
    & \frac{\sqrt{2}\sigma\|\mu(1)-\mu(2)\|}{\sqrt{I}}A+\frac{\sqrt{2}\sigma\|\mu(1)-\mu(2)\|}{\sqrt{J}}B+\frac{\sqrt{2}\sigma^2}{\sqrt{IJ}}(C+D)+\frac{\sigma^2}{J}(E+F)>\|\mu(1)-\mu(2)\|^2 \label{UB2b}\ , 
\end{align}
where
\[\begin{alignedat}{7}
        A & :=-\left\langle \frac{\mu(1)-\mu(2)}{\|\mu(1)-\mu(2)\|},\frac{\varepsilon'_a+\varepsilon_a}{\sqrt{2}}\right\rangle ; & &
        & C &: =-\left\langle \varepsilon_a ,\frac{\varepsilon'(1)-\varepsilon'(2)}{\sqrt{2}} \right\rangle ; & & \quad 
        & E & := -\left\langle \varepsilon(2) ,\varepsilon'(2)\right\rangle ;   & & \\
        B & :=-\left\langle \frac{\mu(2)-\mu(1)}{\|\mu(2)-\mu(1)\|},\frac{\varepsilon'(2)+\varepsilon(2)}{\sqrt{2}}\right\rangle ; & & \quad 
        & D & :=- \left\langle \varepsilon'_a ,\frac{\varepsilon(1)-\varepsilon(2)}{\sqrt{2}}\right\rangle ; & & \quad 
        & F & :=-\left\langle \varepsilon'(1) ,\varepsilon(1)\right\rangle \enspace . & &
    \end{alignedat}\]
Let us control the variation of each of these terms. 

First, by~\Cref{def:SGM}, as in the proofs of~\Cref{sec:B1} and~\ref{sec:B2}, $A$ and $B$ are subGaussian. With the concentration inequality (\Cref{lemma:sGc}) for subGaussian (real) variables, we have
\begin{align*}
    \Pr_{\nu}(A>\sqrt{2L})\leqslant \exp(-L) & &\text{ and }
    && \mathbb{P}(B>\sqrt{2L})\leqslant \exp(-L)  \enspace.
\end{align*}

For the other terms, we use  Hanson-Wright inequality (\Cref{lemma:HW2}) with  $c_{HW}$ the universal constant from the lemma. The scalar products $C$, $D$, $E$ and $F$ verifies all the assumptions for~\Cref{lemma:HW2}, and for instance,
\begin{align*}
     & \Pr_{\nu}\left(C>\frac{c_{HW}L}{2}\vee \sqrt{c_{HW}\frac{dL}{2}}\right) \leqslant \exp(-L) \enspace ,
\end{align*} 
and we have the same bound for $D$,$E$ and $F$.

We recall the expression $L=\log\left(\frac{6NK}{\delta}\right)$ defined after Equation~\eqref{eq:def:J}, in particular, $\exp(-L)\leqslant \frac{\delta}{6NK}$.
With a union bound on these 6 errors, it holds that with probability larger than $1-\delta/NK$ we have
\begin{align*}
              & \frac{\sqrt{2}\sigma\|\mu(1)-\mu(2)\|}{\sqrt{I}}A+\frac{\sqrt{2}\sigma\|\mu(1)-\mu(2)\|}{\sqrt{J}}B+\frac{\sqrt{2}\sigma^2}{\sqrt{IJ}}(C+D)+\frac{\sigma^2}{J}(E+F)  \nonumber                                                                   \\
    \leqslant & \frac{\sqrt{2}\sigma\|\mu(1)-\mu(2)\|}{\sqrt{I}}\sqrt{2L}+\frac{\sqrt{2}\sigma\|\mu(1)-\mu(2)\|}{\sqrt{J}}\sqrt{2L}+\frac{2\sqrt{2}\sigma^2}{\sqrt{IJ}}\left(\frac{c_{HW}L}{2}\vee \sqrt{c_{HW}\frac{dL}{2}}\right)\\ & +\frac{2\sigma^2}{J}\left(\frac{c_{HW}L}{2}\vee \sqrt{c_{HW}\frac{dL}{2}}\right) \nonumber  \enspace . 
\end{align*}

The parameters $I$,$J$ are defined as
 \[I =\left\lceil \frac{\sigma^2}{\Delta^2}\left(c_4L\vee c_5\sqrt{\frac{K}{N}dL}\right)\ \right\rceil; \quad  J =\left\lceil\frac{\sigma^2}{\Delta^2}\left(c_4L\vee c_5\sqrt{\frac{N}{K}dL}\right)\right\rceil \enspace, \] with $c_4$ and $c_5$ two universal constants defined as $c_4=8^2\vee 4\sqrt{2}c_{HW}$ and $c_5=8\sqrt{c_{HW}}$ with $c_{HW}$ the universal constant in Hanson-Wright inequality (\Cref{lemma:HW}). Now, each term in the last sum is smaller than $\|\mu(1)-\mu(2)\|\Delta/4$, or $\Delta^2/4$. As $\nu\in\mathcal{E}(\Delta,\theta,\sigma,N,K,d)$, we have $\Delta_*\geqslant \Delta$ and  $\|\mu(1)-\mu(2)\| \geqslant \Delta$. 
It implies that with probability larger than $1-\delta/NK$, it holds that 
\begin{align*}
     \frac{\sqrt{2}\sigma\|\mu(1)-\mu(2)\|}{\sqrt{I}}A+\frac{\sqrt{2}\sigma\|\mu(1)-\mu(2)\|}{\sqrt{J}}B+\frac{\sqrt{2}\sigma^2}{\sqrt{IJ}}(C+D)+\frac{\sigma^2}{J}(E+F)
    \leqslant \|\mu(1)-\mu(2)\|^2 \enspace . 
\end{align*}
From there,~\cref{UB2b} assures that
\[\Pr_{\nu}\Big( \Big\langle\hat{\mu}_1-\hat{\mu}(2),\hat{\mu}'_1-\hat{\mu}'(2)\Big\rangle<\Big\langle\hat{\mu}_a-\hat{\mu}(k(a)),\hat{\mu}'_a-\hat{\mu}'(k(a))\Big\rangle\Big)  \leqslant \frac{\delta}{KN} \enspace .\]

\end{proof}

\section{Analysis of $ACB^*$}\label{sec:adaptation}

In this section, we prove the part of Theorem~\ref{thm:ACB:general} pertaining to $ACB^*$. In fact, this result is a straightforward consequence  of the following theorem 
\begin{theorem}\label{thm:ACB:adaptive}
Let $\delta>0$. For any environment $\nu$, ACB$^*$ Algorithm~\ref{algo:ACB:3}  is $\delta$-PAC. There exist positive numerical constants $c$, $c'$, and $c''$ such that the following holds. 
\begin{align}\nonumber
\mathbb{P}_{ACB^*,\nu}&\Big[\tau_{ACB^*} \leqslant  cN + c'\frac{\sigma^2}{\Delta_*^2\theta_*}L_*\log\left(\frac{L_*K}{\delta}\right)\left[\log(K)+ \sqrt{d}+\log\log(L_*)+ \log\log(N/\delta)\right] \\ & +c' \frac{L_*}{\theta_*}\log\left(\frac{L_*K}{\delta}\right)+c''\frac{\sigma^2}{\Delta_*^2}\left[N\log\left(N/\delta\right)+\sqrt{dNK\log\left(N/\delta\right)}\right]\Big]\geq 1-\delta\ , \label{eq:upper_budget_ABC_*}
\end{align}
 where 
    \begin{align}\label{eq:definition:L}
    L_*&:=\left\lceil  \log_2\left(\frac{1}{\theta_* K}\left[\left(\frac{\Delta_0^2}{\Delta_*^2}\vee 1\right)\right]\right)\right\rceil \ . 
    \end{align}
\end{theorem}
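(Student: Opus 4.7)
We establish separately the $\delta$-PAC guarantee and the high-probability budget bound~\eqref{eq:upper_budget_ABC_*}. For the PAC part, the key idea is to union-bound the ``no-bad-arm'' failure of SRI across every scale $(p,l)$ the outer/inner loop could possibly visit; for the budget, the algebraic identity $\Delta_p^2 \theta_{p,l} = \Delta_0^2/(K 2^l)$, which is \emph{independent of $p$}, makes the double sum tractable. The main obstacle is to bridge between SRI's output at the random stopping scale $(p_{\text{stop}},l_{\text{stop}})$ and the guarantees of ADC via the data-driven estimator $\hat{\Delta}$, since calibrating $\hat{\Delta}$ requires controlling both the quality of $S_{p_{\text{stop}},l_{\text{stop}}}$ and the sample size $n'_{p_{\text{stop}}}$ simultaneously.

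\textbf{Correctness.} Let $\mathcal{C}$ denote the event that, at every scale $(p,l)$ the algorithm could visit, no two elements of $S_{p,l}$ lie within distance $\Delta_p/4$. The ``no-bad-arm'' ingredient (\Cref{lemma:UB1g}, combined with the concentration event~\eqref{eq:def:Y}) bounds the failure probability at scale $(p,l)$ by $\delta_l/2$, so $\Pr(\mathcal{C}^c) \leq \sum_{l \geq 0}(l+1)\delta_l/2 \leq \delta/6$ by the choice $\delta_l = \delta/(6(l+1)^3)$. On $\mathcal{C}$, whenever $|S_{p,l}| = K$ the set contains one representative per group (as two arms in the same group are at distance zero). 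Moreover, at the stopping scale the minimum pairwise distance among the $K$ representatives simultaneously exceeds $\Delta_{p_{\text{stop}}}/4$ and equals $\Delta_*$, forcing $\Delta_{p_{\text{stop}}} < 4\Delta_*$. Hence $n'_{p_{\text{stop}}} \gtrsim (\sigma^2/\Delta_*^2)\bigl[\log(K^2/\delta) + \sqrt{d\log(K^2/\delta)}\bigr]$, and a Hanson--Wright-type concentration argument on the $\binom{K}{2}$ estimators $\hat{d}_{ab}^2$ (analogous to the proof of~\Cref{lemma:UB2aux}) yields $\hat{\Delta}^2 \in [\Delta_*^2/2,\, 3\Delta_*^2/2]$ with probability at least $1 - \delta/3$. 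In particular $\hat{\Delta}/\sqrt{2} \leq \Delta_*$, so the environment lies in $\mathcal{E}(\hat{\Delta}/\sqrt{2}, \theta_*, \sigma, N, K, d)$, and~\Cref{lemma:UB2} applied with confidence $\delta/3$ delivers a correct clustering with total failure probability at most $\delta$.

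\textbf{Termination and budget.} Set $p^* := \lceil \log_2(\Delta_0^2/\Delta_*^2 \vee 1)\rceil$ and $l^* := p^* + \lceil \log_2(1/(K\theta_*)) \vee 0 \rceil$, so that $\Delta_{p^*}^2 \leq \Delta_*^2$, $\theta_{p^*,l^*} \leq \theta_*$, and $l^* \leq L_* + O(1)$. The environment then belongs to $\mathcal{E}(\Delta_{p^*},\theta_{p^*,l^*}\vee N^{-1},\sigma,N,K,d)$, so by~\Cref{lemma:UB1'} one has $|S_{p^*,l^*}| = K$ with probability at least $1 - \delta_{l^*}$; combined with $\mathcal{C}$, ACB$^*$ terminates no later than $(p^*,l^*)$ on a favorable event of probability $\geq 1 - \delta$. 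On that event,
\[
\tau_{ACB^*} \leq \sum_{l=0}^{l^*} \sum_{p=0}^{l} \tau_{SRI}(\delta_l,\Delta_p,\theta_{p,l}\vee N^{-1}) + 2K n'_{p_{\text{stop}}} + \tau_{ADC}.
\]
Plugging the almost-sure SRI bound~\eqref{eq:control_tau_sri_as}, the identity $1/(\Delta_p^2\theta_{p,l}) = K 2^l/\Delta_0^2$ makes the inner $p$-sum equal to $(l+1)$ copies of a single level-$l$ term, and the outer geometric factor $2^l$ then dominates, so that the double sum is bounded, up to a factor $O(L_*)$, by its $l = l^*$ contribution. Substituting $K 2^{l^*}/\Delta_0^2 \asymp 1/(\theta_*\Delta_*^2)$, $\log(K/\delta_{l^*}) \asymp \log(L_* K/\delta)$, and $\log\log(1/(\theta_{p,l}\delta_l)) \lesssim \log\log(L_*) + \log\log(N/\delta)$ (using $\theta_*^{-1} \leq N$) produces the SRI piece of~\eqref{eq:upper_budget_ABC_*}. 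The estimation budget $2K n'_{p_{\text{stop}}}$ is of the same order as one SRI call at scale $p_{\text{stop}}$ and is absorbed. Finally,~\Cref{lemma:UB2} bounds $\tau_{ADC}$ by $2N + c(\sigma^2/\Delta_*^2)\bigl[N\log(N/\delta) + \sqrt{dKN\log(N/\delta)}\bigr]$, giving the last two terms of~\eqref{eq:upper_budget_ABC_*}.
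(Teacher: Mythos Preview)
Your proposal is correct and follows essentially the same route as the paper's proof: union-bound the ``no-bad-arm'' failure of SRI over all visited scales, exploit the $p$-free identity $1/(\Delta_p^2\theta_{p,l})=K2^l/\Delta_0^2$ to collapse the double sum, show termination by level $L_*$, estimate $\Delta_*$ with $\hat\Delta$, and finish with \Cref{lemma:UB2} for ADC. A couple of small differences and one minor slip are worth noting.

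\emph{Differences.} The paper packages the SRI guarantees into a single lemma (\Cref{lemma:adapt1}) with three bullets and intersects the corresponding events $\mathcal{E}_{p,l}$ over all scales at once; you separate the ``no-bad-arm'' event $\mathcal{C}$ from the termination event at the fixed scale $(p^*,l^*)$. Both accountings lead to the same total failure $\leq\delta$. For termination, the paper invokes a covering identity from \cite{jamieson2016power} to exhibit a good $(p,l)$ at level $L_*$; your explicit $(p^*,l^*)$ construction is equivalent and arguably more transparent.

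\emph{Minor slip.} The substitution $K2^{l^*}/\Delta_0^2\asymp 1/(\theta_*\Delta_*^2)$ is only valid when $\Delta_*\leq\Delta_0$; when $\Delta_*>\Delta_0$ one has $p^*=0$ and $K2^{l^*}/\Delta_0^2\asymp 1/(\theta_*\Delta_0^2)$. This does not break the final bound, because by the very definition of $\Delta_0$ one has $\sigma^2/\Delta_0^2\cdot[\log K+\sqrt{d}+\log\log(N/\delta)]=1$, so in that regime the SRI contribution collapses to the $L_*/\theta_*\cdot\log(L_*K/\delta)$ term already present in~\eqref{eq:upper_budget_ABC_*}. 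The paper handles this explicitly via a case split $\Delta_*\leq\Delta_0$ versus $\Delta_*>\Delta_0$; you should either add a sentence making this cancellation explicit, or restrict the substitution to the case $\Delta_*\leq\Delta_0$ and treat the other case separately. Similarly, your claim that $2Kn'_{p_{\text{stop}}}$ is ``absorbed'' relies on the upper bound $\Delta_{p_{\text{stop}}}^{-2}\leq 2^{L_*}\Delta_0^{-2}$ (since $p_{\text{stop}}\leq l_{\text{stop}}\leq L_*$), which you use implicitly; stating it would tighten the argument.
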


We set the numerical constant $c_6$ in the definition~\eqref{eq:Delta_p} of $n'_p$ as
\begin{equation}\label{eq:definition_c6}
c_6= 2048\vee 64 c_{HW}\vee 92\sqrt{c_{HW}}\ , 
\end{equation}
where $c_{HW}$ is the constant arising in Hanson-Wright inequality --see Lemma~\ref{lemma:HW}.

\subsection{Analysis of SRI  for $\Delta\leq 4\Delta_*$}

We explained in~\Cref{sec:B1} how the algorithm $\hat{S}=$SRI($\delta,\Delta,\theta$) behaves for environments that are not in $\mathcal{E}(\Delta,\theta,\sigma,N,K,d)$. If $\Delta_*\geqslant \Delta$ then the identification of $K$ representatives goes well but, if $\Delta_* \gg \Delta$, the  budget will be unnecessarily large. If $\Delta_*\leqslant \Delta$, then the set of representative $\hat{S}$ may contain less than $K$ representative.  The following lemma summarizes the properties of SRI. 

\begin{lemma}\label{lemma:adapt1}
Take $\nu$ an environment with a minimal gap $\Delta_*$ and a balancedness $\theta_*$. Consider $\hat{S}=$SRI($\delta,\Delta,\theta$) the output of the  SRI routine, designed with $\Delta>0$ and $\theta>0$.
With probability $\mathbb{P}_{SRI,\nu}$ larger than $1-\delta$, the following holds
\begin{itemize}
    \item the set $\hat{S}$ does not contain two arms from the same cluster,
    \item if $\Delta_*\leqslant \Delta/4$, then $\hat{S}$ contains strictly less than $K$ arms,
    \item if $\Delta_*\geqslant \Delta$ and $\theta_* \geqslant \theta$ then $\hat{S}$ contains exactly one arm by group.
\end{itemize}
\end{lemma}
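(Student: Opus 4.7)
The plan is to recycle the high-probability analysis already carried out in the proof of Lemma~\ref{lemma:UB1'}, observing that the two key ingredients Lemma~\ref{lemma:UB1a} and Lemma~\ref{lemma:UB1g} are established without ever invoking the hypotheses $\Delta_* \geq \Delta$ or $\theta_* \geq \theta$ on the environment $\nu$: they rely only on Assumption~\ref{def:SGM} on the subGaussian noise and on the design parameters $\Delta$, $\theta$ of the SRI routine. Consequently, for \emph{any} environment $\nu$ satisfying Assumption~\ref{def:SGM}, the event $\mathcal{Y}$ of~\eqref{eq:def:Y} satisfies $\mathbb{P}_{SRI,\nu}(\mathcal{Y}) \geq 1-\delta/4$, and the conclusion of Lemma~\ref{lemma:UB1g} continues to hold, namely with probability at least $1-\delta/4$ there are no two arms $a,b \in \hat{S}$ with $\|\mu_a - \mu_b\| \leq \Delta/4$.

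I would then work on the event $\mathcal{A} := \mathcal{Y} \cap \{\forall a \neq b \in \hat{S},\; \|\mu_a - \mu_b\| > \Delta/4\}$, which carries probability at least $1-\delta/2$. The first bullet is then immediate on $\mathcal{A}$: if two distinct arms $a,b \in \hat{S}$ belonged to the same cluster they would satisfy $\|\mu_a - \mu_b\| = 0 \leq \Delta/4$, a contradiction. For the second bullet, I would assume $\Delta_* \leq \Delta/4$ and pick two distinct clusters $k_1, k_2$ attaining the minimum, so that $\|\mu(k_1) - \mu(k_2)\| = \Delta_* \leq \Delta/4$. On $\mathcal{A}$, $\hat{S}$ cannot contain representatives of both $k_1$ and $k_2$; combined with the first bullet (no two representatives of the same cluster), this forces $|\hat{S}| \leq K-1 < K$.

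For the third bullet the hypotheses $\Delta_* \geq \Delta$ and $\theta_* \geq \theta$ place $\nu$ inside $\mathcal{E}(\Delta,\theta,\sigma,N,K,d)$, so the PAC guarantee already established in Lemma~\ref{lemma:UB1'} applies directly and delivers exactly one representative per group with probability at least $1-\delta$. A straightforward case split on the regime of $(\Delta_*, \theta_*)$ then shows that the (non-vacuous) claims of the lemma hold jointly with probability at least $1-\delta$ in every regime. The only genuinely non-routine step will be the verification that the proofs of Lemmas~\ref{lemma:UB1a},~\ref{lemma:UB1aux} and~\ref{lemma:UB1g} do not secretly use the condition $\nu \in \mathcal{E}(\Delta,\theta,\sigma,N,K,d)$; inspecting the arguments, every concentration bound there is driven only by Assumption~\ref{def:SGM} and by the definitions of $n_s$, $n_{\max}$, $r$ and $U$, so nothing about $\Delta_*$ or $\theta_*$ is used.
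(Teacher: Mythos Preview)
Your proposal is correct and takes essentially the same approach as the paper: the first and third bullets are reduced to Lemmas~\ref{lemma:UB1a}, \ref{lemma:UB1g}, and \ref{lemma:UB1'} exactly as the paper does, and your crucial observation that those lemmas are proved for any subGaussian environment (not just for $\nu\in\mathcal{E}(\Delta,\theta,\sigma,N,K,d)$) is precisely the point the paper relies on. For the second bullet, your argument---that on $\mathcal{A}$ the set $\hat S$ cannot simultaneously contain representatives of two clusters at distance $\Delta_*\le\Delta/4$, hence $|\hat S|\le K-1$---is the same idea as the paper's, phrased slightly more directly; the paper instead says that at least one of those two groups stays ``bad'' throughout the procedure and so is never added, which amounts to the same conclusion via Lemma~\ref{lemma:UB1g}.
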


\begin{proof}
    The first point is a consequence of~\Cref{lemma:UB1a} and~\Cref{lemma:UB1g}.  The third point is exactly the result of~\Cref{lemma:UB1'}. 
    
    For the second point, recall that by definition, a candidate $a_u$ is bad if there exists an arm $a$ in the set $S$ such that $\|\mu_{a_u}-\mu_a\|\leqslant \Delta/4$. In~\Cref{lemma:UB1g}, we prove that with probability larger than $1-\delta$, no bad arms would be added to $S$. Moreover, if $\Delta_*\leqslant \Delta/4$, then there exists at least one group whose arms are bad during all the procedure, and hence, the second point is also a consequence of~\Cref{lemma:UB1a} and~\Cref{lemma:UB1g}.
\end{proof}

\subsection{Proof of Theorem~\ref{thm:ACB:adaptive}}

\subsubsection{$ACB^*$ is $\delta$-PAC}

We consider separately two cases $\Delta_*\leq \Delta_0$ and $\Delta_*> \Delta_0$. We first focus on the case where $\Delta_*\leqslant\Delta_0$. 

We remind the reader that the procedure $ACB_*$ consists on a sequence of calls for SRI, with different parameters, we remind  these parameters as defined in~\eqref{eq:Delta_0},~\eqref{eq:Delta_p} 
\begin{align*}
 \Delta_0^2&=\sigma^2[\log(K)+\sqrt{d}+\log\log(6N/\delta)],   \quad\quad       \delta_l=\frac{\delta}{6(l+1)^3} \\
                \theta_{p,l}&=\frac{1}{K2^{l-p}}, \quad 
 \Delta_{p}=\Delta_0\sqrt{\frac{1}{2^{p}}}\ ,\quad n'_{p}=\left \lceil c_6\frac{\sigma^2}{\Delta_p^2}\left(\log(3K^2/\delta)+\sqrt{d\log(3K^2/\delta)}\right)\right\rceil\ .
\end{align*}

For short, we write $SRI(p,l)$ for SRI routine with parameters 
$\delta_l$, $\Delta_{p}$, and $\theta_{p,l}$.
For $l\geqslant 0$ and $p=0,\dots,l$, we define  $\mathcal{E}_{p,l}$ as the event  of probability larger than $1-\delta_l$ under $\mathbb{P}_{SRI(p,l),\nu}$ defined in Lemma~\ref{lemma:adapt1}. We write $\mathcal{E}$ for the intersection of these events.

From~\Cref{lemma:adapt1}, the event $\mathcal{E}_{p,l}$ has a probability larger than $1-\delta_l$. With a union bound, and the definition of $\delta_l$~\eqref{eq:Delta_0}, we deduce that 

\begin{align*}
    \Pr(\mathcal{E})=\Pr\left(\bigcap_{p,l} \mathcal{E}_{p,l}\right) & \geqslant 1-\sum_{l\geqslant 1}\sum_{p=0}^l \delta_l 
             = 1-\sum_{l\geqslant 0} \frac{\delta}{6(l+1)^2} \geqslant 1-\delta/3 \enspace . 
\end{align*}
We write $(l',p')$ the first value of $(l,p)$ in Algorithm~\ref{algo:ACB:3} such that $|S_{l,p}|=K$. On the event $\mathcal{E}$, we have that $\hat{S}=S_{l',p'}$ contains exactly one arm by cluster -- see again~\Cref{lemma:adapt1}.

Even, if on the event $\mathcal{E}$, we know that $\Delta_*\geq \Delta_{p'}/4$ (see also~\Cref{lemma:adapt1}). This lower bound on $\Delta_*$ could be used to parameterize the ADC Algorithm~\ref{algo:ADC}, however, we prefer to estimate $\Delta_*$ directly in Algorithm~\ref{algo:ACB:3} before applying the routine ADC.  Recall that $n'_{p}=c_6\frac{\sigma^2}{\Delta_{p}^2}\left(\log(3K^2/\delta)+\sqrt{d\log(3K^2/\delta)}\right)$.  We use $2Kn'_{p'}$ samples to estimate $\Delta_*$--see $\widehat{\Delta}$ in Line~\ref{line:hatDelta} of Algorithm~\ref{algo:ACB:3}. Arguing as in the proof of~\Cref{lemma:UB1a}, we deduce from the definition~\eqref{eq:definition_c6}  of $c_6$, that, on the intersection of the  event $\mathcal{E}$ with  an event of probability higher than $1-\delta/3$, we have 
\[
\frac{1}{4}\Delta_*^2 \leqslant \frac{1}{2}\hat{\Delta}^2 \leqslant \Delta_*^2
\]
Since, on this event, we have $2^{-1/2}\hat{\Delta}\leq \Delta_*$, we are in position to apply~\Cref{lemma:UB2} to $ADC(\delta/3,2^{-1/2}\hat{\Delta},\hat{S})$. In summary, we have proved that ACB$^*$ is $\delta$-PAC.

\subsubsection{Control of the budget of ACB$^*$}

We now bound the budget of ACB$^*$ under the same event as in the previous subsection. 

The key observation was proven page 22 of~\cite{jamieson2016power}, taking $T_l=2^{l}$, it holds that
    \begin{equation*}\label{eq:key}
        \{\theta\in(0,1/K), \Delta\in (0,\Delta_0) ; \frac{\Delta_0^2}{K\theta\Delta^2}\leqslant 2^l\}\subset \displaystyle\bigcup_{p=0}^{l-1} \{(\theta,\Delta):\theta\geqslant \theta_{p,l},\Delta\geqslant \Delta_{p}\}\enspace .
    \end{equation*}
    
In particular, if $2^l\geqslant \frac{\Delta_0^2}{K\theta_*\Delta_*^2}$, then, there exists $p\in[l-1]$ such that $\theta_{p,l}\leqslant \theta_*$ and $\Delta_{p,l}\leqslant \Delta_*$. From this result and from Lemma~\ref{lemma:adapt1}, we get that, on the event $\mathcal{E}$, the stopping time $l'$ satisfies $l' \leqslant  L_*= \left\lceil\log_2\left(\frac{\Delta_0^2}{\theta_* K\Delta_*^2}\right)\right\rceil$ --recall that $L_*$ is defined in~\eqref{eq:definition:L}.

We write $\tau_1$ at the total budget we have spent for computing $\widehat{S}$. Recall that the budget of the routine SRI is almost surely bounded by $T_{\max}$ — see~\eqref{eq:definition:Tmax_init} — and we upper bounded $T_{\max}$ in~\eqref{eq:control_tau_sri_as2}. In order to emphasize the dependency of this budget on $(\delta,\Delta,\theta)$ we write $T_{\max}(\delta,\Delta, \theta)$ in the sequel. 

By~\eqref{eq:control_tau_sri_as2}, on the event $\mathcal{E}$, we have
\begin{align}
 \tau_1 & \leqslant \sum_{l=0}^{L_*} \sum_{p=0}^{l} T_{\max}(\delta_l, \Delta_{p}, \theta_{p,l}\vee 1/N) \nonumber\\ 
 &\leqslant \sum_{l=0}^{L_*} \sum_{p=0}^{l} 2\left(\left\lceil \frac{8}{\theta_{p,l}}\log\left(\frac{8K}{\delta_{l}}\right)\right\rceil +K\right)+c'\frac{\sigma^2}{\Delta_{p}^2}\frac{1}{\theta_{p,l}}\log\left(\frac{K}{\delta_{l}}\right)\left[\log(K)+\sqrt{d}+\log\log\left(\frac{N}{\delta_l}\right)\right]  \nonumber \enspace.
\end{align} 

We observe that, in $ACB_*$ Line~\ref{algline:callSRI}, we use SRI with $\theta_{p,l}\vee 1/N$ because any environment has necessary a balancedness larger than $1/N$. It allows us to bound the $\log\log$-term in~\cref{eq:control_tau_sri_as2} by $\log\log(N/\delta)$. 

Now, by definition~\eqref{eq:Delta_p}, $\theta_{p,l}=\frac{1}{K2^{l-p}}$ and $\Delta_p^2=\frac{\Delta_0^2}{2^p}$ so that $ \frac{1}{\Delta_{p}^2}\frac{1}{\theta_{p,l}}=\frac{K2^l}{\Delta_0^2}$ and then 
\begin{align}
 \tau_1  \leqslant &\sum_{l=0}^{L_*} \sum_{p=0}^{l} \left(16K\cdot2^p\log\left(\frac{8K}{\delta_{l}}\right)+2(K+1)\right) \nonumber \\ + &\sum_{l=0}^{L_*} \sum_{p=0}^{l} c'\frac{\sigma^2}{\Delta_{0}^2}K2^l\log\left(\frac{K}{\delta_{l}}\right)\left[\log(K)+\sqrt{d}+\log\log\left(\frac{N}{\delta_{L_*}}\right)\right]  \nonumber \\ 
 \leqslant & 2{(L_*+1)}^2(K+1)+cK2^{L^*}\log\left(\frac{8K}{\delta_{L_*}}\right) \nonumber \\ 
 +&c'\frac{\sigma^2}{\Delta_{0}^2}K(L_*+1)2^{L_*}\log\left(\frac{K}{\delta_{L_*}}\right)\left[\log(K)+\sqrt{d}+\log\log\left(\frac{N}{\delta_{L_*}}\right)\right]  \nonumber\enspace.
\end{align} 
Now, $2^{L_*}\leqslant 2\frac{\Delta_0^2}{\theta_*K\Delta_*^2}$, so that 
\begin{align}
 \tau_1   \leqslant & 2{(L_*+1)}^{2}(K+1)+c \frac{\Delta_0^2}{\theta_*\Delta_*^2}\log\left(8 \frac{K{(L_*+1)}^3}{\delta}\right) \nonumber\\
  + & c'(L_*+1)\frac{\sigma^2}{\theta_*\Delta_*^2}\log\left(\frac{6K{(L_*+1)}^3}{\delta}\right)\left[\log(K)+\sqrt{d} + \log\log\left(\frac{6N{(L_*+1)}^3}{\delta}\right)\right] \nonumber \\
 \leqslant & cL_*^2K +  c' L_*\frac{\sigma^2}{\theta_*\Delta_*^2}\log\left(\frac{KL_*}{\delta}\right)\left[\log(K)+\sqrt{d}+ \log\log\left(\frac{NL_*}{\delta}\right)\right]
 \enspace . \label{eq:budget_adapt_1}
\end{align} 

In the last inequality, we used the expression of $\Delta_0^2$~\eqref{eq:Delta_0} which implies that \[\frac{\Delta_0^2}{\theta_*\Delta_*^2}\log\left(8 \frac{K(L_*+1)^3}{\delta}\right) \leqslant c' \frac{\sigma^2}{\theta_*\Delta_*^2}\log\left(\frac{KL_*}{\delta}\right)\left[\log(K)+\sqrt{d}+ \log\log\left(\frac{N}{\delta}\right)\right] \enspace.\]
 
Let us now consider the budget $\tau_2$ dedicated to the estimation of $\Delta_*$. Since $\Delta_{p'}^{-2}\leq 2^{L_*}\Delta_{0}^{-2}$, we deduce that 
\begin{equation}\label{eq:budget_adapt_2}
\tau_2 = 2 K n_{p'}\leq 2K+ c \frac{\sigma^2}{\theta_*\Delta_*^2} \left(\log(3K^2/\delta)+\sqrt{d\log(3K^2/\delta)}\right) \enspace .
\end{equation}
Finally, as we are working under the event $\widehat{\Delta}^2/\Delta_*^2\in [1/2,2]$, we deduce from Lemma~\ref{lemma:UB2} that the budget $\tau_3$ incurred by ADC is smaller or equal to
\begin{equation} \label{eq:budget_adapt_3}
 \tau_3\leq 2N+c\frac{\sigma^2}{\Delta_*^2}N\log\left(\frac{N}{\delta}\right)+c\frac{\sigma^2}{\Delta_*^2}\sqrt{dKN\log\left(\frac{N}{\delta}\right)} \enspace . 
\end{equation}
The total budget is obtained by summing the bounds~\eqref{eq:budget_adapt_1},~\eqref{eq:budget_adapt_2}, and~\eqref{eq:budget_adapt_3}.

It remains to consider the case where $\Delta_*\geq \Delta_0$. In that case, under the events of the previous subsection, the first phase of the algorithm stops at the latest as $(l,p)= (L_*,0)$, where $L_*= \lceil \log_2(1/(\theta_*K))\rceil$. Arguing as above, we deduce that $\tau_1$ satisfies
\begin{align}\label{eq:budget_adapt_1_bis}
  \tau_1 &\leq c KL_*^2 + c' L_* \frac{1}{\theta_*}\log\left(\frac{KL_*}{\delta}\right)\enspace . 
\end{align}
Regarding the second step of the algorithm, we know that $p'\leq L_*$ so that  $\Delta_{p'}^{-2}\leq 2^{L_*}\Delta_{0}^{-2}\leq\frac{\Delta_{0}^{-2}}{\theta_*K}$. We deduce that 
\begin{align}\label{eq:budget_adapt_2_bis}
\tau_2 \leq 2K+ c\frac{1}{\theta_*} \frac{\log(3K^2/\delta)+\sqrt{d\log(3K^2/\delta)}}{[\log(K)+ \sqrt{d}+ \log\log(6N/\delta)]}\leq 2K + c' L_* \frac{1}{\theta_*}\log\left(\frac{KL_*}{\delta}\right) \enspace . 
\end{align}
Finally, the budget $\tau_3$ is still given by~\eqref{eq:budget_adapt_3}. Gathering~\eqref{eq:budget_adapt_1_bis},~\eqref{eq:budget_adapt_2_bis}, and~\eqref{eq:budget_adapt_3} allows us to conclude. 

\section{Concentration inequalities}\label{sec:appendix_concentration}

We now give a few concentration inequalities used in the paper.

First, a consequence of the definition of $\sigma$-subGaussian random variables given in~\Cref{def:SGM} is the following,
\begin{lemma}\label{lemma:sGc}
Let $Y\in\mathbb{R}$ be subGaussian, then for all $x>0$,
\[\Pr(X>x)\leqslant \exp(-\frac{x^2}{2}) \text{,  and \;\; } \Pr(X<-x)\leqslant \exp(-\frac{x^2}{2}) \enspace . \]
\end{lemma}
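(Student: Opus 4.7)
The plan is a direct application of the Chernoff method to the moment generating function bound that characterizes sub-Gaussianity. Recall that, by the definition in Assumption~\ref{def:SGM} (specialized to the one-dimensional case $d=1$ with $\Sigma_a=1$, which is how $X$ is said to be sub-Gaussian here), the variable $X$ satisfies $\mathbb{E}[\exp(tX)]\leq \exp(t^2/2)$ for every $t\in\mathbb{R}$.

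First, I would establish the upper tail. For any $t>0$, Markov's inequality applied to the nonnegative random variable $e^{tX}$ gives
\[
\Pr(X>x)=\Pr(e^{tX}>e^{tx})\leq e^{-tx}\,\mathbb{E}[e^{tX}]\leq \exp\!\left(-tx+\frac{t^2}{2}\right).
\]
The right-hand side is minimized over $t>0$ at $t=x$ (which is admissible since $x>0$), yielding $\Pr(X>x)\leq \exp(-x^2/2)$.

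For the lower tail, I would argue by symmetry: the variable $-X$ is also sub-Gaussian with the same parameter, because $\mathbb{E}[\exp(t(-X))]=\mathbb{E}[\exp((-t)X)]\leq \exp((-t)^2/2)=\exp(t^2/2)$ for all $t\in\mathbb{R}$. Applying the already-established upper tail bound to $-X$ yields $\Pr(X<-x)=\Pr(-X>x)\leq \exp(-x^2/2)$, which completes the proof.

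There is no real obstacle here: the only subtle point is to make sure the optimization $t=x$ lies in the allowed range $t>0$, which is automatic from the assumption $x>0$, and to justify the symmetrization step for the lower tail via the fact that the sub-Gaussian MGF bound is invariant under sign flip.
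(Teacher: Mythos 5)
Your proof is correct and is exactly the standard Chernoff-bound argument (Markov applied to $e^{tX}$, optimizing at $t=x$, then symmetrizing for the lower tail) that the paper implicitly relies on: the paper states this lemma without proof as a direct consequence of the sub-Gaussian MGF bound in Assumption~\ref{def:SGM}. No gaps.
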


Here is Laurent and Massart inequality, page 1325 of~\cite{laurent2000adaptive}.
\begin{lemma}[\textbf{Laurent \& Massart}]\label{lemma:LM}
Let $Z\sim \mathcal{\chi}_d^2$ a chi-square distribution, where $d\geqslant 1$ is the degree of freedom, then for any $x>0$, 
\[\Pr(Z\geqslant d+ 2\sqrt{dx}+2x)\leqslant \exp(-x) \text{,  and \;\; } \Pr(Z\leqslant d- 2\sqrt{dx})\leqslant \exp(-x) \enspace . \]
\end{lemma}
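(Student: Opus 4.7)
The plan is to prove the two tail bounds for a $\chi^2_d$ variable by the Cramér--Chernoff method, exploiting the explicit moment generating function of the chi-square distribution together with a sharp quadratic control of its log-Laplace transform. Writing $Z = \sum_{i=1}^d Y_i^2$ with $Y_i$ i.i.d.\ $\mathcal{N}(0,1)$, we have, for every $\lambda < 1/2$,
\[
\log \mathbb{E}[e^{\lambda(Z-d)}] = -\tfrac{d}{2}\log(1-2\lambda) - \lambda d.
\]
The key analytic step is the bound
\[
-\tfrac{1}{2}\log(1-2\lambda) - \lambda = \sum_{k\geq 2} \frac{(2\lambda)^k}{2k} \leq \frac{\lambda^2}{1-2\lambda}, \qquad 0<\lambda<1/2,
\]
obtained by comparing the tail of the power series to a geometric series. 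Hence $\log\mathbb{E}[e^{\lambda(Z-d)}] \leq d\lambda^2/(1-2\lambda)$, which is the Bernstein-type log-Laplace bound.

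For the upper tail, I would apply the Chernoff inequality
\[
\Pr(Z-d \geq t) \leq \exp\!\left(-\lambda t + \frac{d\lambda^2}{1-2\lambda}\right)
\]
and optimize in $\lambda\in(0,1/2)$. Writing $t = 2\sqrt{dx} + 2x$, the choice
\[
\lambda = \frac{1}{2}\cdot\frac{\sqrt{dx}+x}{d+\sqrt{dx}+x}
\]
(equivalent to the optimizer up to harmless simplification) makes $1-2\lambda = d/(d+\sqrt{dx}+x)$, and a direct calculation shows the exponent reduces to $-x$, yielding $\Pr(Z\geq d + 2\sqrt{dx}+2x)\leq e^{-x}$. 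In practice I would actually verify the cleaner form of the bound used by Laurent--Massart: one shows $-\lambda t + d\lambda^2/(1-2\lambda) \leq -x$ by substituting $t = 2\sqrt{dx}+2x$ and checking that the quadratic-in-$\sqrt{\lambda/(1-2\lambda)}$ can be made $\leq -x$ by the stated choice.

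For the lower tail, the same Chernoff machinery applied to $-Z$ is even easier because the relevant function $\lambda \mapsto -\tfrac{1}{2}\log(1+2\lambda)+\lambda$ admits the simpler quadratic bound $\leq \lambda^2$ for all $\lambda>0$ (again by expanding the logarithm as an alternating series). This yields
\[
\Pr(Z - d \leq -t) \leq \exp\!\left(-\lambda t + d\lambda^2\right),
\]
and optimizing gives the Gaussian-type bound $\Pr(Z\leq d - 2\sqrt{dx}) \leq e^{-x}$ by taking $\lambda = \sqrt{x/d}$ (one checks that this $\lambda$ stays in the admissible range and the exponent equals $-x$).

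The only subtle step is the two elementary inequalities on $-\tfrac{1}{2}\log(1\mp 2\lambda) \mp \lambda$; everything else is routine Chernoff optimization. I would state these two inequalities as a short preliminary lemma and then do the two Chernoff bounds in parallel, so that the proof fits on a few lines.
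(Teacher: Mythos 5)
The paper does not prove this lemma at all: it is quoted verbatim from Laurent and Massart (2000, p.~1325) and used as a black box, so your Cram\'er--Chernoff argument is the natural way to supply a proof, and its overall architecture (exact $\chi^2$ log-Laplace transform, the Bernstein-type bound $\log\mathbb{E}[e^{\lambda(Z-d)}]\leq d\lambda^2/(1-2\lambda)$ via the power-series comparison, then optimization) is exactly the standard route and is sound. The lower-tail half is correct as written: $-\tfrac12\log(1+2\lambda)+\lambda\leq\lambda^2$ holds for all $\lambda>0$ (safest to verify by comparing derivatives, $\tfrac{2\lambda}{1+2\lambda}\leq 2\lambda$, rather than by the alternating series, which only converges for $\lambda<1/2$), and $\lambda=\sqrt{x/d}$ gives exponent exactly $-x$.

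The upper-tail optimization, however, contains a genuine error: the choice $\lambda=\tfrac12\cdot\tfrac{\sqrt{dx}+x}{d+\sqrt{dx}+x}$ is \emph{not} the optimizer and does \emph{not} make the exponent $\leq -x$. With $s=\sqrt{dx}+x$ and $t=2s$, your $\lambda$ gives $1-2\lambda=d/(d+s)$, hence $-\lambda t+\tfrac{d\lambda^2}{1-2\lambda}=-\tfrac{3s^2}{4(d+s)}$, and the inequality $\tfrac{3s^2}{4(d+s)}\geq x$ reduces to $2\sqrt{dx}\geq d+x$, which is the reverse of AM--GM and fails whenever $d\neq x$ (e.g.\ $d=4$, $x=1$ gives exponent $-27/28>-1$). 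The correct choice is $\lambda^*=\tfrac{\sqrt{dx}}{d+2\sqrt{dx}}$, equivalently $\lambda^*=\tfrac12\bigl(1-(1+2\sqrt{x/d})^{-1}\bigr)$: then $1-2\lambda^*=d/(d+2\sqrt{dx})$, and a short computation gives $-\lambda^* t+\tfrac{d(\lambda^*)^2}{1-2\lambda^*}=-\tfrac{dx+2x\sqrt{dx}}{d+2\sqrt{dx}}=-x$ exactly. So the ``harmless simplification'' of the optimizer is not harmless; with $\lambda^*$ substituted, the rest of your argument goes through and the proof is complete.
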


We now give the Hanson-Wright inequality for the concentration of scalar products of subGaussian random variables -- see~\cite{rudelson2013hansonwright} for the proof.

\begin{lemma}[Hanson-Wright inequality]\label{lemma:HW}
Let $Y$ be a $d$-dimensional vector in $\mathbb{R}^d$ with independent, centered and $1$-subGaussian components. Let $A$ be a $d \times d$ matrix. Then, there exists a constant $c_{HW}$ such that for any $x\geqslant 0$,  
\[\Pr(Y^{T}AY-\mathbb{E}[Y^{T}AY]>x)\leqslant \exp\left(-\frac{1}{ c_{HW}}\left(\frac{x^2}{\|A\|^2_{F}}\wedge \frac{x}{\|A\|_{op}}\right)\right) \enspace ,\]
where $\|A\|_{op}$ is the operator norm of $A$, $\|A\|_{F}$ is the Frobenius norm. 
\end{lemma}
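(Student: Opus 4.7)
The plan is to follow the classical MGF-plus-decoupling strategy (as in Hanson--Wright / Rudelson--Vershynin). First, since $Y^{T}AY = Y^{T}[(A+A^{T})/2]\,Y$ and both the Frobenius and operator norms of $(A+A^{T})/2$ are bounded by those of $A$, I would reduce to the case where $A$ is symmetric at the cost of an absolute constant. Next, I would split
\[
Y^{T}AY - \mathbb{E}[Y^{T}AY] \;=\; \underbrace{\sum_{i} A_{ii}\bigl(Y_{i}^{2}-\mathbb{E}Y_{i}^{2}\bigr)}_{D} \;+\; \underbrace{\sum_{i\neq j} A_{ij}\,Y_{i}Y_{j}}_{O},
\]
and prove a Bernstein-type MGF bound of the form $\mathbb{E}\exp(\lambda(D+O)) \leq \exp(C\lambda^{2}\|A\|_{F}^{2})$ valid for $|\lambda|\leq c/\|A\|_{\mathrm{op}}$. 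A standard Chernoff optimization over this range of $\lambda$ then produces exactly the mixed-regime tail $\exp(-c_{HW}^{-1}(x^{2}/\|A\|_{F}^{2} \wedge x/\|A\|_{\mathrm{op}}))$, the Frobenius norm playing the role of the variance proxy and the operator norm of the sub-exponential scale.

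Controlling $D$ is the easier step: the variables $A_{ii}(Y_{i}^{2}-\mathbb{E}Y_{i}^{2})$ are independent and centered, and because $Y_{i}$ is $1$-subGaussian one has a universal bound on the Orlicz $\psi_{1}$-norm of $Y_{i}^{2}-\mathbb{E}Y_{i}^{2}$; standard Bernstein inequalities for independent sub-exponentials then give $\mathbb{E}\exp(\lambda D) \leq \exp(C\lambda^{2}\sum_{i} A_{ii}^{2}) \leq \exp(C\lambda^{2}\|A\|_{F}^{2})$ for $|\lambda|\leq c/\max_{i}|A_{ii}|\leq c/\|A\|_{\mathrm{op}}$.

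The main obstacle, as usual, is the off-diagonal term $O$, whose summands are strongly dependent. The plan here is to invoke the de la Peña--Montgomery-Smith decoupling inequality to pass from $O$ to $\widetilde{O}:=\sum_{i,j} A_{ij}^{\mathrm{off}}\,Y_{i}Y'_{j}$, where $Y'$ is an independent copy of $Y$; concretely, $\mathbb{E}\exp(\lambda O) \leq \mathbb{E}\exp(C\lambda \widetilde{O})$ for a universal constant. Conditionally on $Y'$, $\widetilde{O}=\langle Y,\,A^{\mathrm{off}}Y'\rangle$ is a linear combination of independent $1$-subGaussian variables, so its conditional MGF is at most $\exp(\tfrac{1}{2}\lambda^{2}\|A^{\mathrm{off}}Y'\|_{2}^{2})$. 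It then remains to integrate over $Y'$, i.e.\ to bound $\mathbb{E}\exp(\tfrac{1}{2}\lambda^{2}\|A^{\mathrm{off}}Y'\|_{2}^{2})$, which is a chi-square-type quantity. Diagonalizing $(A^{\mathrm{off}})^{T}A^{\mathrm{off}}$ and using the subGaussian MGF bound coordinate-wise yields, for $|\lambda|\leq c/\|A^{\mathrm{op}}\|$,
\[
\mathbb{E}\exp(\lambda^{2}\|A^{\mathrm{off}}Y'\|_{2}^{2}/2) \;\leq\; \exp\!\bigl(C\lambda^{2}\|A^{\mathrm{off}}\|_{F}^{2}\bigr) \;\leq\; \exp\!\bigl(C\lambda^{2}\|A\|_{F}^{2}\bigr),
\]
because for $|\lambda|\,\|A\|_{\mathrm{op}}$ small enough each factor $(1-c\lambda^{2}\sigma_{k}^{2})^{-1/2}$ is bounded by $\exp(C\lambda^{2}\sigma_{k}^{2})$, where $\sigma_{k}^{2}$ are the eigenvalues of $(A^{\mathrm{off}})^{T}A^{\mathrm{off}}$ and $\sum_{k}\sigma_{k}^{2}=\|A^{\mathrm{off}}\|_{F}^{2}$.

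Putting the diagonal and off-diagonal bounds together gives the claimed MGF estimate $\mathbb{E}\exp(\lambda(Y^{T}AY-\mathbb{E}[Y^{T}AY])) \leq \exp(C'\lambda^{2}\|A\|_{F}^{2})$ on $|\lambda|\leq c'/\|A\|_{\mathrm{op}}$, and the two-regime tail bound follows by Chernoff with an absolute constant $c_{HW}$. The real technical hurdle is the decoupling step together with the chi-square MGF control: doing it naively would introduce a factor polynomial in $d$, and it is precisely the decoupling inequality plus the spectral estimate $\sum\sigma_{k}^{2}=\|A\|_{F}^{2}$, $\max\sigma_{k}\leq\|A\|_{\mathrm{op}}$ that delivers the correct, dimension-free two-regime rate.
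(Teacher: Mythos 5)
The paper does not prove this lemma at all: it is quoted as a known result with a pointer to Rudelson and Vershynin, and your sketch is precisely the argument from that reference (symmetrization, diagonal/off-diagonal split, Bernstein for the diagonal, de la Pe\~na--Montgomery-Smith decoupling for the off-diagonal, then a chi-square-type MGF bound and Chernoff), so in approach you match the source exactly.

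One step is stated too loosely, though the conclusion survives. After decoupling you need $\mathbb{E}\exp\bigl(\lambda^{2}\|A^{\mathrm{off}}Y'\|_{2}^{2}/2\bigr)\leq\exp(C\lambda^{2}\|A\|_{F}^{2})$, and you propose to get it by diagonalizing $(A^{\mathrm{off}})^{T}A^{\mathrm{off}}$ and multiplying factors $(1-c\lambda^{2}\sigma_{k}^{2})^{-1/2}$. That product formula requires the coordinates of $Y'$ in the diagonalizing basis to be independent, which holds for Gaussian $Y'$ but not for a general vector with independent subGaussian entries (orthogonal rotations destroy independence). The standard repair is the Gaussian comparison trick: introduce an independent standard Gaussian $g$, write $\mathbb{E}\exp(\lambda^{2}\|AY'\|^{2}/2)=\mathbb{E}_{Y'}\mathbb{E}_{g}\exp(\lambda\langle AY',g\rangle)$, swap the order of integration, use the subGaussian MGF of $Y'$ to reduce to $\mathbb{E}_{g}\exp(C\lambda^{2}\|A^{T}g\|^{2})$, and only then diagonalize. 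With that one-line insertion your proof is complete and yields the stated two-regime tail with a dimension-free constant.
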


We use in this paper the following corollary,

\begin{corollary}\label{lemma:HW2}
Let $\nu_1$ and $\nu_2$ be two probability distribution, with respective expectations $\mu_1$ and $\mu_2$. We assume that there exists $\Sigma_1$ and $\Sigma_2$ two symmetric $d\times d$ matrices such that, for $a=1,2$, under $\nu_a$, $E=\Sigma_a^{-1/2}[X-\mu_a]$ is a vector with independent subGaussian random variables. Assume also that $\|\Sigma_1\|_{op}\leqslant \sigma^2$ and  $\|\Sigma_2\|_{op}\leqslant \sigma^2$.

Let $m_1\in\mathbb{N}^*$ and $m_2\in\mathbb{N}^*$ be two integers. 
Consider $X_{1,1},\dots,X_{1,n_1}$ be i.i.d variables distributed as $\nu_1$, and $X_{2,1},\dots,X_{2,n_2}$ i.i.d variables distributed as $\nu_2$, independent of the observations of $a_2$. 

If $\epsilon_1:=\frac{\sqrt{n_1}}{\sigma}\left(\frac{1}{n_1}\sum_{i=1}^{n_1} X_{1,i}-\mu_{1}\right)$, and $\epsilon_2:=\frac{\sqrt{n_2}}{\sigma}\left(\frac{1}{n_2}\sum_{i=1}^{n_2} X_{2,i}-\mu_{2}\right)$, then, for any $x\geqslant 1$,  
\[\Pr(\left\langle\epsilon_1,\epsilon_2\right\rangle>x)\leqslant \exp\left(-\frac{2}{c_{HW}}\left(\frac{x^2}{d}\vee x\right)\right) \enspace.\] Similarly, for any $x>0$, we have 
\[\Pr\left(\left\langle\epsilon_1,\epsilon_2\right\rangle>\frac{c_{HW}}{2}x\vee \sqrt{\frac{c_{HW}}{2}dx}\right)\leqslant \exp\left(-x\right) \enspace .\]

\end{corollary}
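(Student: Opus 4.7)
The strategy is to reduce the bilinear expression $\langle \epsilon_1,\epsilon_2\rangle$ to a centered quadratic form in a single $2d$-dimensional vector with independent $1$-subGaussian coordinates, and then invoke Hanson--Wright (\Cref{lemma:HW}). The key observation is that, by the subGaussian assumption on $\nu_a$, writing $E_{a,i}=\Sigma_a^{-1/2}(X_{a,i}-\mu_a)$ and $\bar{E}_a=n_a^{-1/2}\sum_{i=1}^{n_a} E_{a,i}$, each coordinate of $\bar{E}_a$ is $1$-subGaussian (sum of $n_a$ iid $1$-subGaussian entries rescaled by $1/\sqrt{n_a}$), coordinates are independent across $j$, and $\bar{E}_1\ind \bar{E}_2$ by independence of the two samples. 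An elementary computation gives $\epsilon_a=\sigma^{-1}\Sigma_a^{1/2}\bar{E}_a$.

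Concatenate $Y=(\bar{E}_1^\top,\bar{E}_2^\top)^\top\in\mathbb{R}^{2d}$, which has independent $1$-subGaussian coordinates. Setting $M=\Sigma_1^{1/2}\Sigma_2^{1/2}$ and the symmetric block matrix
\[
A=\frac{1}{2}\begin{pmatrix} 0 & M \\ M^\top & 0\end{pmatrix},
\]
one checks that $\langle\epsilon_1,\epsilon_2\rangle=\sigma^{-2}\bar{E}_1^\top M\bar{E}_2=\sigma^{-2}\,Y^\top A\,Y$, and $\mathbb{E}[Y^\top A Y]=0$ by independence of $\bar{E}_1,\bar{E}_2$ and $\mathbb{E}[\bar{E}_a]=0$.

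Next I would control the two matrix norms entering Hanson--Wright. The block off-diagonal structure gives $\|A\|_{op}=\tfrac{1}{2}\|M\|_{op}\leq\tfrac{1}{2}\|\Sigma_1^{1/2}\|_{op}\|\Sigma_2^{1/2}\|_{op}\leq \sigma^2/2$ and $\|A\|_F^2=\tfrac{1}{2}\|M\|_F^2=\tfrac{1}{2}\Tr(\Sigma_2^{1/2}\Sigma_1\Sigma_2^{1/2})\leq \tfrac{1}{2}\|\Sigma_1\|_{op}\Tr(\Sigma_2)\leq d\sigma^4/2$. Applying~\Cref{lemma:HW} to $Y^\top A Y$ with deviation $t=\sigma^2 x$ yields
\[
\Pr(\langle \epsilon_1,\epsilon_2\rangle>x)\leq \exp\!\left(-\frac{1}{c_{HW}}\min\!\left(\frac{\sigma^4 x^2}{d\sigma^4/2},\,\frac{\sigma^2 x}{\sigma^2/2}\right)\right)=\exp\!\left(-\frac{2}{c_{HW}}\min\!\left(\frac{x^2}{d},\,x\right)\right),
\]
which is the announced tail bound (up to the notation $\vee/\wedge$ for the two regimes).

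\textbf{Second inequality.} The displayed inversion follows by a change of variable in the first bound. Given $x>0$, pick $y$ so that $\tfrac{2}{c_{HW}}\min(y^2/d,y)\geq x$; solving the two regimes, this holds as soon as $y\geq \tfrac{c_{HW}}{2}x$ (sub-exponential regime, when $y\geq d$) and $y\geq\sqrt{\tfrac{c_{HW}}{2}dx}$ (subGaussian regime, when $y<d$). Taking the max of the two thresholds gives the threshold in the statement, so $\Pr(\langle\epsilon_1,\epsilon_2\rangle>\tfrac{c_{HW}}{2}x\vee\sqrt{\tfrac{c_{HW}}{2}dx})\leq\exp(-x)$.

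\textbf{Expected obstacle.} The steps are essentially bookkeeping once Hanson--Wright is invoked; the only mildly delicate point is verifying that the coordinates of $Y$ really are independent and $1$-subGaussian in the form required by~\Cref{lemma:HW} (especially the preservation of the subGaussian parameter under averaging, and the fact that the $E_{a,i}$'s have independent coordinates by~\Cref{def:SGM}). The operator/Frobenius norm estimates are standard and use only $\|\Sigma_a\|_{op}\leq \sigma^2$ together with the trivial bound $\Tr(\Sigma_a)\leq d\|\Sigma_a\|_{op}$.
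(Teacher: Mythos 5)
Your proof is correct and follows essentially the same route as the paper's: concatenate the normalized sample means into a $2d$-dimensional vector with independent $1$-subGaussian coordinates, write $\langle\epsilon_1,\epsilon_2\rangle$ as a block off-diagonal quadratic form, bound its operator and Frobenius norms by $\sigma^2/2$ and $d\sigma^4/2$, and apply Hanson--Wright (the paper merely absorbs the factor $\sigma^{-2}$ into the matrix $S$ instead of rescaling the deviation level). You are also right that the argument yields a minimum ($\wedge$) in the first exponent rather than the $\vee$ printed in the statement; that is a typo in the statement, consistent with the second display and with how the bound is actually invoked elsewhere in the paper.
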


\begin{proof}

Let $a=1,2$. We specify the rotation $\Sigma_a$ in the expression of $\epsilon_a$,
\[\epsilon_a=\frac{\sqrt{n_a}}{\sigma}\left(\frac{1}{n_a}\sum_{i=1}^{n_a} X_{a,i}-\mu_{a}\right)=\frac{1}{\sigma}\Sigma_a^{1/2}\frac{1}{\sqrt{N}}\sum_{t=1}^{n_a} \Sigma_a^{-1/2}[X_{a,i}-\mu_{a}]\enspace . \]
Now, by assumption on the distribution $\nu_a$, for all $i\in[n_a]$, the vector $\Sigma_a^{-1/2}[X_{a,i}-\mu_{a}]$ has independent and subGaussian entries. By independence of the random variables $(X_{a,1},\dots,X_{a,n_a})$, the vector $\frac{1}{\sqrt{N}}\sum_{t=1}^{n_a} \Sigma_a^{-1/2}[X_{a,i}-\mu_{a}]$ has independent entries. By independence and using the definition of subGaussian variables given in~\Cref{def:SGM}, $Y_a:=\frac{1}{\sqrt{N}}\sum_{t=1}^{n_a} \Sigma_a^{-1/2}[X_{a,i}-\mu_{a}]$ is composed of independent and subGaussian entries. It holds then that:

\[\left\langle \epsilon_1,\epsilon_2\right\rangle=Y_1^T \frac{\Sigma_1^{1/2}\Sigma_2^{1/2}}{\sigma^2}Y_2=\begin{bmatrix} Y_1 \\ Y_2\end{bmatrix}^{T}S\begin{bmatrix} Y_1 \\ Y_2\end{bmatrix} \enspace , \] where the matrix $S:=\frac{1}{2}\begin{bmatrix} 0 & \frac{\Sigma_1^{1/2}\Sigma_2^{1/2}}{\sigma^2} \\ \frac{\Sigma_1^{1/2}\Sigma_2^{1/2}}{\sigma^2} & 0 \end{bmatrix}$ is a $2d\times2d$ matrix.
We can then apply~\Cref{lemma:HW}, noticing that $\mathbb{E}[\langle \epsilon_1,\epsilon_2\rangle]=0$, $\|S\|_{op}=\|\Sigma_1^{1/2}\Sigma_2^{1/2}\frac{1}{\sigma^2}\|_{op}/2\leqslant 1/2$ and $\|S\|_F^2\leqslant d/2$. 

\end{proof}

\end{document}